\documentclass[english]{pucrs-ppgcc}

\usepackage{graphicx}
\usepackage{hyperref}
\usepackage{epsfig}
\usepackage{boxedminipage}
\usepackage{fancyvrb}
\usepackage{epstopdf}
\usepackage{amssymb}
\usepackage{amsthm}
\usepackage{amsfonts}
\usepackage{amstext}
\usepackage{amsmath}
\usepackage{pdflscape}
\usepackage{afterpage}
\usepackage{capt-of}
\usepackage{booktabs}
\usepackage{subcaption}

\usepackage{mathtools}

\usepackage[mathscr]{euscript}
 \let\mathscr\relax

\newcommand{\matr}[1]{\ensuremath{\mathbf{#1}}}
\newcommand{\thereals}{\ensuremath{\mathbb{R}}}

\usepackage[
	]{todonotes}


\usepackage{multirow}
\usepackage{nicefrac}
\usepackage{algorithm}
\usepackage{algorithmicx}
\usepackage{algpseudocode}

\newtheorem{proposition}{Proposition}
\newtheorem{definition}{Definition}
\newtheorem{corollary}{Corollary}
\newtheorem{theorem}{Theorem}
\newtheorem{example}{Example}[chapter]

\algtext*{EndWhile}
\algtext*{EndIf}
\algtext*{EndFor}
\algtext*{EndProcedure}
\algtext*{EndFunction}

\interfootnotelinepenalty=10000
\usepackage{listings}
\lstset{
    numberblanklines=false,
    captionpos=b,
    breaklines=true,
    frameshape={RYRYN}{ny}{yn}{RYRYN},
    literate={~} {$\sim$}{1},
	escapeinside={(*}{*)}
}

\newcommand\pred[1]{\texttt{\selectfont{#1}}}

\def\gets{:=}

\setlength{\marginparwidth}{1.8cm}
\usepackage[normalem]{ulem}
\usepackage{todonotes}

\newcommand{\idest}{{i.e.}}
\newcommand{\exemp}{{e.g.}}

\newcommand{\etal}{{et al.}}

\newcommand{\incp}[1]{\widetilde{\mathcal{#1}}}
\newcommand{\inc}[1]{\widetilde{#1}}

\newcommand{\nominalmirroring}{$\eta$\textsc{Mirroring}}

\usepackage{footmisc}

%

\author{Ramon Fraga Pereira}

\title{Reconhecimento de Objetivos em Dom{\'{\i}}nios Imperfeitos}
      {Goal Recognition over Imperfect Domain Models}

\tipotrabalho{\tese}         

\grau{\doutor}

\orientador{Prof. Dr. Felipe Meneguzzi (PUCRS)}
\coorientador{Dr. Miquel Ramírez (University of Melbourne)}


\begin{document}
	
\fontfamily{cmr}\selectfont	

%


%


\begin{abstract}{Goal Recognition, Plan Recognition, Automated Planning, Incomplete Discrete Domain Models, Approximate Continuous Domain Models, Landmarks}

Goal recognition is the problem of recognizing the intended goal of autonomous agents or humans by observing their behavior in an environment. 
Over the past years, most existing approaches to goal and plan recognition have been ignoring the need to deal with imperfections regarding the domain model that formalizes the environment where autonomous agents behave. In this thesis, we introduce the problem of goal recognition over \textit{imperfect domain models}, and develop solution approaches that explicitly deal with two distinct types of \textit{imperfect domains models}: (1) \textit{incomplete discrete domain models} that have \textit{possible}, rather than \textit{known}, preconditions and effects in action descriptions; and (2) \textit{approximate continuous domain models}, where the transition function is \textit{approximated} from past observations and not well-defined. 
We develop novel goal recognition approaches over \textit{imperfect domains models} by leveraging and adapting existing recognition approaches from the literature. Experiments and evaluation over these two types of \textit{imperfect domains models} show that our novel goal recognition approaches are accurate in comparison to baseline approaches from the literature, at several levels of observability and imperfections.

\end{abstract}

\tableofcontents     


%
%
\chapter{Introduction}\label{chapter:Introduction}


\emph{Goal Recognition} is the problem of discerning the intentions of autonomous agents or humans, given a sequence of observations as evidence of their behavior in an environment, and a domain model describing how the observed agents generate such behavior to achieve their goals~\cite{SchmidtSG_78}. Recognizing goals is important in several applications, especially for monitoring and anticipating agent behavior in an environment, including 
crime detection and prevention~\cite{GeibPlanRecognitionIntrusionDect_DARPA2001}, monitoring activities in elder-care~\cite{ProblemsWithElderCare_AAAI2002}, recognizing plans in educational environments~\cite{UzanDSG_PR_2015} and exploratory domains~\cite{PR_EXP_Mirsky2017}, and traffic monitoring~\cite{WellmanTraffic_2013}, among others~\cite{GeibPlanRecognitionIntrusionDect_DARPA2001,Granada2017,Mirsky_UISP17,Amado2018}.

Existing approaches to solving goal and plan recognition problems vary on the type of domain model used to describe the behavior of the observed agents, or plan generation, as well as the level of observability and noise in the observations used as evidence for recognizing goals and plans~\cite[Chapter 1]{ActivityIntentPlanRecogition_Book2014}. 
According to the literature of goal and plan recognition~\cite[Chapter 1]{ActivityIntentPlanRecogition_Book2014}, the most used types of domain models for recognizing goals and plans are \textit{plan--libraries} and \textit{planning domain theories}. 
Plan--library based approaches to goal and plan recognition have shown to be very fast and accurate in several levels of observability \cite{AvrahamiZilberbrandK_IJCAI2005,DoritGalAAAI07,PR_Mirsky_2016,MIRSKY_2018_AIJ}. However, formalizing plan--libraries is usually laborious and time-consuming, and requires a substantial amount of domain knowledge to represent the set of possible plans for achieving the goals. 
In contrast, recognition approaches that employ the use of planning domain theory and planning techniques have gradually relaxed such requirements~\cite{RamirezG_IJCAI2009,RamirezG_AAAI2010}, resulting in approaches that are very efficient and require much less domain knowledge~\cite{GoalRecognitionDesign_Keren2014,NASA_GoalRecognition_IJCAI2015,Sohrabi_IJCAI2016,PereiraNirMeneguzzi_AAAI2017}. 

A key limitation of most existing approaches to goal and plan recognition is that they fail to deal with incomplete and/or inaccurate domain information available in the domain model. 
Regardless of the type of domain model formalism used to describe the observed agent's behavior, most recognition approaches assume that the domain model is complete and correct, restricting their direct application to more realistic scenarios in which both the domain model and the observations have imperfect information. 
Specifically, realistic scenarios have two potential sources of imperfect information. 
The first stems from imperfections in domain models, especially when such models come from learning processes, approximated from data, or have unknown properties in the action descriptions, resulting in domain models that are not fully accurate due to such imperfections in their description, \idest, \textit{imperfect domain models}. 
The second stems from ambiguity with respect to the observations, namely, on how imperfect sensor data report features and properties of the world (\exemp, state properties), and on how actions performed by observed agents are realized in the environment.

While there are few research efforts on goal and plan recognition for dealing with \textit{imperfect domain models}, over the past few years the planning community has been addressing this problem in at least two fronts. 
First, with respect to planning over \textit{incomplete discrete domain models}, in which the planning domain model has annotations to specify what is \textit{unknown} in the model. 
Weber and Bryce~\cite{WeberBryce_ICAPS_2011}, and Nguyen~\etal~\cite{PlanningIncomplete_NguyenK_2014,Nguyen_AIJ_2017} have addressed this problem by developing efficient heuristic approaches for use with well-known and new automated planners. 
As for the second, we single out the task of planning over \textit{approximate continuous domain models}. 
Most recently, Say~\etal~\cite{SayWZS:ijcai17} developed an automated planner that can cope with \textit{approximate hybrid mixed discrete-continuous domain models} that are learned from data. 
Subsequently, Wu, Say, and Sanner~\cite{WuSS:nips17} developed a planner that relies on modern learning techniques over the same settings proposed by Say~\etal~\cite{SayWZS:ijcai17}. 

Notwithstanding these developments in \textit{Automated Planning} algorithms, comparatively little effort has been made in the goal and plan recognition community to address this particular problem. However, these developments motivate the key research questions of this thesis, specifically: \emph{Is it possible to recognize goals both quickly and accurately over imperfect domain models?} \emph{Is it possible to recognize goals over domain models that are approximated from data?} \emph{Are the current recognition approaches able to remain accurate without any modification/adaption for recognizing goals over imperfect domains?} 
In this thesis, we aim to address these questions by bringing the problem of goal recognition closer to more realistic scenarios. To do so, we introduce the problem of goal recognition over \textit{imperfect domain models}. 
To solve this problem, we develop novel goal recognition approaches that can cope with \textit{imperfect domain models}. 
More specifically, the approaches we develop in this thesis deal explicitly with two types of \textit{imperfect domain models}: \textit{incomplete discrete domain models} and \textit{approximate continuous domain models}. 
Thus, the main contributions of this thesis are twofold.

\noindent \textbf{Recognizing goals over incomplete discrete domain models}: The first contribution of this thesis is regarding the task of goal recognition over \textit{incomplete discrete domain models}. 
We formalize the problem of goal recognition in \textit{incomplete discrete domain models} by combining the standard formalization of Ramírez and Geffner~\cite{RamirezG_IJCAI2009,RamirezG_AAAI2010} for goal recognition as planning, and that of Nguyen~\etal~\cite{PlanningIncomplete_NguyenK_2014,Nguyen_AIJ_2017} for planning in incomplete domain models. 
The formalization of incomplete domains introduced in~\cite{PlanningIncomplete_NguyenK_2014,Nguyen_AIJ_2017} allows the use of annotations in the domain model to specify what is \textit{incomplete} and \textit{unknown} in the model. 
Such formalization specifies the incomplete part of the domain model by using \textit{possible} preconditions and effects in the description actions to specify what is unknown in the domain model. 
For recognizing goals over incomplete domain models, we develop recognition heuristics that refrain from the use of automated planners during the recognition process. Specifically, we develop novel approaches by enhancing well-known recognition heuristics from the literature~\cite{PereiraNirMeneguzzi_AAAI2017} that rely on the concept of \textit{landmarks}, and deal explicitly with incomplete domain models. 
In \textit{Automated Planning}, landmarks are states (or actions) that must be achieved (or executed) to achieve a goal from an initial state~\cite{Hoffmann2004_OrderedLandmarks}. 
To extract landmarks in incomplete domains and enhance such heuristics, we introduce new notions of landmarks, and we develop a landmark extraction algorithm, adapted from~\cite{Hoffmann2004_OrderedLandmarks}. 
We evaluate our enhanced recognition heuristics using new datasets constructed by modifying an existing dataset~\cite{Pereira_Meneguzzi_PRDatasets_2017} of planning--based goal recognition problems. 
We have built these new datasets by removing information from the complete domain model and annotating them with \textit{possible} preconditions and effects, which comprise the incomplete part of the domain model. 
We perform an ablation study to show and understand the effect of the new notions of landmarks on recognition performance. 
Experiments and evaluation show that our enhanced recognition approaches are fast and accurate for recognizing goals in large and non-trivial incomplete domain models at most levels of domain incompleteness when compared to the non-enhanced (original) recognition approaches in~\cite{PereiraNirMeneguzzi_AAAI2017}.

\noindent \textbf{Recognizing goals over approximate continuous domain models}: As for the second contribution of this thesis, we introduce the problem of goal recognition over \textit{approximate continuous domain models}. 
We develop novel recognition approaches over this setting by leveraging existing work on domain model acquisition via learning techniques for \textit{Hybrid Planning}~\cite{SayWZS:ijcai17}, and adapt well-known probabilistic approaches to goal recognition~\cite{RamirezG_AAAI2010,Kaminka_18_AAAI} in order to analyze how prediction errors from the acquired model impact on recognition. 
Specifically, we use the learning technique proposed by Say~\etal~\cite{SayWZS:ijcai17} to approximate the transition function of continuous domain models, and obtain \textit{approximate continuous domain models}, also named as \emph{nominal models}~\cite{ljung1998system} by the literature of \textit{Control}~\cite{borrelli:17:predictive}. 
We evaluate the recognition approaches over \textit{nominal models} empirically in complex recognition datasets that we built by using three benchmark domains based on the \emph{constrained} Linear–Quadratic Regulator (LQR) problem~\cite{bemporad:2002:lqr}, and two non-linear navigation domains proposed by Say~\etal~in~\cite{SayWZS:ijcai17}, with increasing dimensions of state and action spaces. 

\sigla{AUV}{Autonomous Underwater Vehicle}
The contributions above can be used to build different types applications in realistic scenarios, such as: online and offline goal recognition applied video streams~\cite{Granada2017}, in which the domain model can be generated automatically from the video frames~\cite{Amado2018}; learn the behavior of Autonomous Underwater Vehicles (AUVs) from data~\cite{FernandezGonzalez18_Scotty}, and recognize their intended scientific missions based on their interactions in the environment; among others. 
Thus, the work developed in this thesis brings the task of goal recognition closer to more realistic scenarios, not only by recognizing goals even when the available discrete models are imperfect, but also by performing the recognition task over continuous domains with approximate transition functions.

\section{Overview of Research Contribution}\label{section:Publications}


Throughout the course of our Ph.D. research over the last four years, we have published our contributions as we developed and evaluated them. Particularly, the main contributions that underpin this thesis have been published in five conferences and one journal, as follows.

\sigla{AAAI}{Association for the Advancement of Artificial Intelligence}
\sigla{IJCAI}{International Joint Conference on Artificial Intelligence}
\sigla{ECAI}{European Conference on Artificial Intelligence}
\sigla{ICAPS}{International Conference on Automated Planning and Scheduling}
\begin{itemize}
	\item Ramon Fraga Pereira, Nir Oren, and Felipe Meneguzzi. \emph{Landmark-Based Approaches for Goal Recognition as Planning}~\cite{PereiraOM_AIJ_2020}. In Artificial Intelligence, Volume 279, 2020;
	\item Ramon Fraga Pereira, Mor Vered, Felipe Meneguzzi, and Miquel Ramírez. \textit{Online Probabilistic Goal Recognition over Nominal Models}~\cite{PereiraVMR_IJCAI19}. In Proceedings of the 28th International Joint Conferences on Artificial Intelligence (IJCAI), 2019;
	\item Ramon Fraga Pereira, André Grahl Pereira, and Felipe Meneguzzi. \textit{Landmark-Enhanced Heuristics for Goal Recognition in Incomplete Domain Models}~\cite{PereiraPM_ICAPS_19}. In Proceedings of the 29th International Conference on Automated Planning and Scheduling (ICAPS), 2019;
	\item Ramon Fraga Pereira and Felipe Meneguzzi. \textit{Goal Recognition in Incomplete Domain Models}~\cite{AAAI2018_PereiraMeneguzzi}. In Proceedings of the 32nd Association for the Advancement of Artificial Intelligence (AAAI)\footnote{This paper has been published as a student abstract at AAAI in 2018, and was  among the top-ten best student papers and selected as finalist for the 3-minute presentation contest.}, 2018;
	\item Ramon Fraga Pereira, Nir Oren, and Felipe Meneguzzi. \textit{Landmark-Based Heuristics for Goal Recognition}~\cite{PereiraNirMeneguzzi_AAAI2017}. In Proceedings of the 31st Association for the Advancement of Artificial Intelligence (AAAI), 2017; and
	\item Ramon Fraga Pereira and Felipe Meneguzzi. \textit{Landmark-Based Plan Recognition}~\cite{PereiraMeneguzzi_ECAI2016}. In Proceedings of the 22nd European Conference on Artificial Intelligence (ECAI), 2016.		
\end{itemize}

We have also published contributions which, while not directly claimed as part of this thesis, are nevertheless closely related to our contributions.

\sigla{AAMAS}{International Conference on Autonomous Agents and Multi-Agent Systems}
\sigla{IJCNN}{International Joint Conference on Neural Networks}
\sigla{PAIR}{Workshop on Plan, Activity, and Intent Recognition}
\begin{itemize}
	\item Ramon Fraga Pereira, Nir Oren, and Felipe Meneguzzi. \textit{Using Sub-Optimal Plan Detection to Identify Commitment Abandonment in Discrete Environments}~\cite{PereiraOM_TIST_2020}. In ACM Transactions on Intelligent Systems and Technology, Volume 11, 2020;
	\item Leonardo Amado, Ramon Fraga Pereira, Joao Paulo Aires, Mauricio Cecílio Magnaguagno, Roger Granada, Gabriel Paludo Licks, and Felipe Meneguzzi. \textit{LatRec: Recognizing Goals in Latent Space}~\cite{Amado_Demo_ICAPS_2019}. Demonstration at the 29th International Conference on Automated Planning and Scheduling (ICAPS), 2019; 	
	\item Leonardo Amado, Ramon Fraga Pereira, Joao Paulo Aires, Mauricio Cecílio Magnaguagno, Roger Granada, and Felipe Meneguzzi. \textit{Goal Recognition in Latent Space}~\cite{Amado2018}. In Proceedings of the International Joint Conference on Neural Networks (IJCNN), 2018;	
	\item Mor Vered, Ramon Fraga Pereira, Mauricio Cecílio Magnaguagno, Gal A. Kaminka, and Felipe Meneguzzi. \emph{Towards Online Goal Recognition Combining Goal Mirroring and Landmarks}~\cite{MorEtAl_AAMAS18}. In Proceedings of the 17th International Conference on Autonomous Agents and Multi-Agent Systems (AAMAS), 2018;
	\item Ramon Fraga Pereira and Felipe Meneguzzi. \textit{Goal Recognition in Incomplete STRIPS Domain Models}~\cite{PAIR18_PereiraMeneguzzi}. In the AAAI 2018 workshop on Plan, Activity, and Intent Recognition (PAIR), 2017;
	\item Roger Granada, Ramon Fraga Pereira, Juarez Monteiro, Rodrigo Barros, Duncan Ruiz, and Felipe Meneguzzi. Hybrid Activity and Plan Recognition for Video Streams~\cite{Granada2017}. In the AAAI 2017 workshop on Plan, Activity, and Intent Recognition (PAIR), 2017;	
	\item Ramon Fraga Pereira, Nir Oren, and Felipe Meneguzzi. \textit{Monitoring Plan Optimality using Landmarks and Domain-Independent Heuristics}~\cite{PAIR17_PereiraOrenMeneguzzi}. In the AAAI 2017 workshop on Plan, Activity, and Intent Recognition (PAIR), 2017; and
	\item Ramon Fraga Pereira, Nir Oren, and Felipe Meneguzzi. \textit{Detecting Commitment Abandonment by Monitoring Sub-Optimal Steps during Plan Execution}~\cite{PereiraOrenMeneguzzi_AAMAS2017}. In Proceedings of the 16th International Conference on Autonomous Agents and Multi-Agent Systems (AAMAS), 2017.
\end{itemize}

\section{Thesis Outline}\label{section:Outline}

We organized this thesis as follows. In Chapter~\ref{chapter:Background}, we provide the relevant background to this thesis\footnote{Thus, the reader familiar with \textit{Planning} and \textit{Goal Recognition} may safely skip Chapter~\ref{chapter:Background}.}, revisiting key concepts and terminologies that are essential to understand our contributions. 
After that, in Chapter~\ref{chapter:GR_IncompleteDomains}, we describe a new problem formulation for recognizing goals over incomplete discrete domain models, a landmark extraction algorithm over incomplete domains, along with a set of new notions of landmarks, and develop two landmark--enhanced heuristics that can cope with incomplete domain models. In Chapter~\ref{chapter:GR_NominalModels}, we introduce a novel problem formulation for goal recognition over \textit{nominal models}, and describe our solution approaches over this recognition setting. In Chapter~\ref{chapter:RelatedWork}, we survey the literature and present the related work on recognition under incomplete information, goal and plan recognition as planning, and recent work on planning over \textit{imperfect domain models}. Finally, in Chapter~\ref{chapter:Conclusions}, we conclude this thesis by addressing our main contributions, open issues and limitations of our proposed recognition approaches, as well as future avenues regarding the proposed approaches in this thesis.

%
%

\chapter{Background and Representation}\label{chapter:Background}

This thesis stands at the intersection of two fields of \textit{Artificial Intelligence} (AI), specifically, \textit{Automated Planning} and \textit{Goal Recognition}, and addresses some of the basic notations and concepts of \textit{Control Theory}.
Thus, in this chapter, we present the essential background for understanding the contributions of this thesis. 
In Section~\ref{section:Planning}, we review the background on \textit{Classical Planning} terminology. 
In Section~\ref{section:IncompletePlanning}, we present the terminology of planning over incomplete domain models. 
After, in Section~\ref{section:Landmarks}, we describe the concept of \textit{landmarks} in \textit{Automated Planning}, and how we exploit and build some of our recognition approaches using landmarks. 
Then, in Section~\ref{section:GoalRecognition}, we describe the task of \textit{Goal and Plan Recognition as Planning} (PRAP). Finally, we conclude this chapter, in Section~\ref{section:OptimalControl}, by presenting the terminology of \textit{Control Theory} we use to formalize \textit{Optimal Control} problems.

\sigla{AI}{Artificial Intelligence}
\sigla{PRAP}{Plan Recognition as Planning}

\section{Classical Planning}\label{section:Planning}

\textit{Planning} is the problem of finding a sequence of actions (\idest, plan) that achieves a particular goal state from an initial state~\cite{AIModernApproachRussell_2009}. 
Such problem can be seen as a directed graph, whose nodes represent states, edges represent the transition between states (caused by applying actions), and the solution is a path between two particular nodes (i.e., initial state and goal state) in this directed graph.
In this thesis, we adopt the terminology from Ghallab~\etal~\cite{AutomatedPlanning_Book2016} to represent states and actions in planning domains and problems. 
First, we define a \textit{state and its predicates} in the environment as Definition~\ref{def:State}. 

\begin{definition} [\textbf{Predicates and State}]\label{def:State}
A predicate is denoted by an n-ary predicate symbol $p$ applied to a sequence of zero or more terms ($\tau_1$, $\tau_2$, ..., $\tau_n$) -- terms are either constants or variables.
We refer to grounded predicates that represent logical values according to some interpretation as facts, which are divided into two types: positive and negated facts, as well as constants for truth ($\top$) and falsehood ($\bot$).
A state $S$ is a finite set of positive facts $f$ that follows the closed world assumption so that if $f \in S$, then $f$ is true in $S$. 
We assume a simple inference relation $\models$ such that $S \models f$ iff $f \in S$, $S \not\models f$ iff $f \not\in S$, and $S \models f_1 \land ... \land f_n$ iff $\{f_1, ..., f_n\} \subseteq S$.
\end{definition}
\simbolo{$S$}{State}

A \textit{planning domain model} aims to describe the environment dynamics through the specification of operators, using a limited first-order logic representation (as we defined above in Definition~\ref{def:State}) to define schemata for state-modification actions, as follows in Definition~\ref{def:Operator}.

\begin{definition} [\textbf{Operator and Action}]\label{def:Operator}
An operator $op$ is represented by a triple $\langle name(op), \\ \mathit{pre}(op), \mathit{eff}(op)\rangle$: $name(op)$ represents the description or signature of $op$; $\mathit{pre}(op)$ describes the preconditions of $op$, a set of predicates that must exist in the current state for $op$ to be executed; $\mathit{eff}(op)$ represents the effects of $op$. 
These effects are divided into $\mathit{eff}^{+}(op)$ (\idest, an add-list of positive predicates) and $\mathit{eff}^{-}(op)$ (\idest, a delete-list of negated predicates).
An action $a$ is a ground operator instantiated over its free variables. 
\end{definition}

We say that an action $a$ is applicable to a state $S$ if and only if $S \models \mathit{pre}(a)$, and generates a new state $S'$ such that:
\begin{align}
\label{eq:state_transition}
S' \gets (S \cup \mathit{eff}^{+}(a))/\mathit{eff}^{-}(a)
\end{align}
 
Thus, by following the notation of \textit{predicates}, \textit{states}, \textit{operator}, and \textit{actions} in Definitions~\ref{def:State}~and~\ref{def:Operator}, we formally define a \textit{planning domain model} in Definition~\ref{def:PlanningDomain}.

\simbolo{$\Xi$}{Planning Domain}
\simbolo{$\mathcal{O}$}{Operator definition}
\simbolo{$\mathcal{R}$}{Predicate definition}
\simbolo{$\mathcal{F}$}{Finite set of Facts of a Planning Domain}
\simbolo{$\mathcal{A}$}{Finite set of Actions of a Planning Domain}
\simbolo{$\mathit{pre}$}{Precondition}
\simbolo{$\mathit{eff}^{+}$}{Add Effect}
\simbolo{$\mathit{eff}^{-}$}{Delete Effect}
\begin{definition}[\textbf{Planning Domain}]\label{def:PlanningDomain}
A planning domain definition $\Xi$ is represented by a pair $\langle \mathcal{F}, \mathcal{A} \rangle$, which specifies the knowledge of the domain, and consists of:
\begin{itemize}
	\item A finite set of facts $\mathcal{F}$, \idest, a set of ground instantiated predicates, defining the environment state properties; and
	\item A finite set of actions $\mathcal{A}$, which is technically a set of ground instantiated operators, representing the actions that can be performed in the environment.
\end{itemize}	
\end{definition}

A \textit{planning instance}, comprises both a \textit{planning domain} and the elements of a \textit{planning problem}, describing a finite set of \textit{objects} of the environment, the \textit{initial state}, and the \textit{goal state} which an agent wishes to achieve, as formalized in Definition~\ref{def:PlanningInstance}.

\simbolo{$\Pi$}{Planning Instance}
\simbolo{$\mathcal{I}$}{Initial State}
\simbolo{$G$}{Goal State}
\begin{definition} [\textbf{Planning Instance}]\label{def:PlanningInstance}
A planning instance $\Pi$ is represented by a triple $\langle \Xi, \mathcal{I}, G\rangle$, and consists of:
\begin{itemize}
	\item $\Xi =  \langle \mathcal{F}, \mathcal{A}\rangle$ is the domain definition, where $\mathcal{F}$ is the set of facts, and $\mathcal{A}$ is the set of actions;
	\item $\mathcal{I} \subseteq \mathcal{F}$ is the initial state specification, which is defined by specifying the value for all facts in the initial state; and 
	\item $G \subseteq \mathcal{F}$ is the goal state specification, which represents a desired state to be achieved.
\end{itemize}
\end{definition}

A \textit{plan} is the solution of a \textit{planning instance}, as formalized in Definition~\ref{def:Plan}.


\simbolo{$\pi$}{Plan}
\begin{definition} [\textbf{Plan}]\label{def:Plan}
A plan $\pi$ for a planning instance $\Pi = \langle \Xi, \mathcal{I}, G\rangle$ is a sequence of actions $\langle$$a_1$, $a_2$, ..., $a_n$$\rangle$ that modifies the initial state $\mathcal{I}$ into a state $S\models G$ in which the goal state $G$ holds by the successive execution of actions in a plan $\pi$. 
A plan $\pi^{*}$ with length $|\pi^{*}|$ is optimal if there exists no other plan $\pi'$ for $\Pi$ such that $\pi' < \pi^{*}$.
\end{definition}

While instantiated actions have an associated cost, we take the assumption from \textit{Classical Planning} that this cost is 1 for all instantiated actions. 
Therefore, a plan $\pi$ is considered \textit{optimal} if its cost, and thus length, is \textit{minimal}.

Modern classical planners use a variety of heuristics to efficiently explore the search space of planning domains by estimating the cost to achieve a specific goal~\cite{AutomatedPlanning_Book2016}.  
In \textit{Classical Planning}, this estimate is often the number of actions to achieve the goal state from a particular state. 
For ease of explanation, we describe our planning--based techniques assuming a uniform action cost $c(a) = 1$ for all $a \in A$, but this is easily generalizable.  
Thus, the cost for a plan $\pi =[ a_1, a_2, ..., a_n ]$ is $c(\pi) = \Sigma c(a_{i})$.

Heuristics provide no guarantees about the accuracy of their estimations, however, when a heuristic never overestimates the cost to achieve a goal, it is called \textit{admissible} and guarantees optimal plans for certain search algorithms. 
A heuristic $h(s)$ is admissible if $h(s)$ $\leq$ $h^{*}(s)$ for all states, where $h^{*}(s)$ is the optimal cost to the goal state from state $s$. Heuristics that overestimate the cost to achieve a goal are called \textit{inadmissible}. 

\subsection{STRIPS Domain Models}\label{subsection:STRIPS}


\sigla{STRIPS}{Stanford Research Institute Problem Solver}
\sigla{PDDL}{Planning Domain Definition Language}
\simbolo{$\mathcal{D}$}{STRIPS Domain Model}
Classical planning representations often separate the definition of the initial state ($\mathcal{I}$) and goals state ($G$) as part of a planning problem to be used together with a planning domain model $\Xi$ (Definition~\ref{def:PlanningDomain}), such as STRIPS~\cite{STRIPSFikes1971} and PDDL~\cite{PDDLMcdermott1998}.
We define a STRIPS domain model over typed variables as $\mathcal{D} = \langle \mathcal{R}, \mathcal{O} \rangle$, where: $\mathcal{R}$ is a set of predicates with typed variables.
Grounded predicates represent logical values according to some interpretation as facts, which are divided into two types: positive and negated facts, as well as constants for truth ($\top$) and falsehood ($\bot$); $\mathcal{O}$ is a set of operators $op = \langle \mathit{pre}(op), \mathit{eff}(op) \rangle$, where $\mathit{eff}(op)$ can be divided into positive effects $\mathit{eff}^{+}(op)$ (the add list) and negative effects $\mathit{eff}^{-}(op)$ (the delete list). 
An operator $op$ with all variables bound is called an action and allows state change. 
An action $a$ instantiated from an operator $op$ is applicable to a state $S$ iff $S \models \mathit{pre}(a)$ and results in a new state $S'$ such that $S' \gets (S / \mathit{eff}^{-}(a)) \cup \mathit{eff}^{+}(a)$.

\simbolo{$\mathcal{P}$}{STRIPS Planning Problem}
\simbolo{$Z$}{Set of Typed Objects}
A STRIPS planning problem within $\mathcal{D}$ over a set of typed objects $Z$ is defined as $\mathcal{P} = \langle \mathcal{F}, \mathcal{A}, \mathcal{I}, G \rangle$, where: $\mathcal{F}$ is a set of facts (instantiated predicates from $\mathcal{R}$ and $Z$); $\mathcal{A}$ is a set of instantiated actions from $\mathcal{O}$ and $Z$; $\mathcal{I}$ is the initial state ($\mathcal{I} \subseteq \mathcal{F}$); and $G$ is the goal state, which represents a desired state to be achieved. 
A plan $\pi$ for a planning problem $\mathcal{P}$ is a sequence of actions $\langle a_1, a_2, ..., a_n \rangle$ that modifies the initial state $\mathcal{I}$ into a state $S\models G$ in which the goal state $G$ holds by the successive execution of actions in a plan $\pi$.

\section{Planning in Incomplete Domain Models}
\label{section:IncompletePlanning}


Most planning algorithms and heuristics assume that the domain model is complete and correct, relaxing the need to deal with incomplete domain information. 
We argue that this assumption may be too strong for dealing with more realistic domains. 
Moreover, the effort of domain knowledge engineering that is required to model a complete and correct planning domain model can be laborious and substantial because of human error and/or lack of domain knowledge from the modeler~\cite{Kambhampati_AAAI07}. 


\textit{Planning in incomplete domain models} is similar to the concept of \textit{Classical Planning}, except that some actions are not completely specified and there are annotations specifying \textit{possible preconditions} and \textit{effects} of some actions in the domain definition~\cite{GarlandLesh_AAAI2002,WeberBryce_ICAPS_2011,PlanningIncomplete_NguyenK_2014,Nguyen_AIJ_2017}. 
To deal with incomplete information in domain models, Garland and Lesh~\cite{GarlandLesh_AAAI2002} developed the first planning approach in the literature that allows annotations about incompleteness in the domain definition. After, in~\cite{WeberBryce_ICAPS_2011}, Weber and Bryce developed a set of approaches to planning and acting in incomplete domain models, allowing annotations with regard to the incompleteness of actions in the domain model. Most recently, Nguyen~\etal~\cite{PlanningIncomplete_NguyenK_2014,Nguyen_AIJ_2017} use the same formalism from~\cite{WeberBryce_ICAPS_2011} to develop more modern approaches to planning in incomplete domain models. 
In this thesis, we follow the formalism of incomplete domain models from~\cite{PlanningIncomplete_NguyenK_2014,Nguyen_AIJ_2017} and use incomplete STRIPS domain models for modeling incomplete domains.

\subsection{Incomplete STRIPS Domain Models}

\simbolo{$\widetilde{\mathcal{D}}$}{Incomplete STRIPS Domain Model}
\simbolo{$\incp{P}$}{Incomplete STRIPS Planning Problem}
\simbolo{$\widetilde{\mathcal{O}}$}{Incomplete Operator definition}
\simbolo{$\widetilde{\mathit{pre}}$}{Possible Precondition}
\simbolo{$\widetilde{\mathit{eff}}^{+}$}{Possible Add Effect}
\simbolo{$\widetilde{\mathit{eff}}^{-}$}{Possible Delete Effect}
To represent incomplete domain models, we define an incomplete STRIPS domain model by following the formalism from~\cite{PlanningIncomplete_NguyenK_2014,Nguyen_AIJ_2017}, defined as $\widetilde{\mathcal{D}} = \langle \mathcal{R}, \widetilde{\mathcal{O}} \rangle$. 
Here, $\widetilde{\mathcal{O}}$ contains the definition of incomplete operators comprised of a six-tuple $\inc{op} = \langle \mathit{pre}(\inc{op}), \widetilde{\mathit{pre}}(\inc{op}), \mathit{eff}^{+}(\inc{op}), \mathit{eff}^{-}(\inc{op}),\\ \widetilde{\mathit{eff}}^{+}(\inc{op}), \widetilde{\mathit{eff}}^{-}(\inc{op}) \rangle$, where: $\mathit{pre}(\inc{op})$ and $\mathit{eff}(\inc{op})$ have the same semantics as in the STRIPS domain models; and \textit{possible} preconditions $\widetilde{\mathit{pre}}(\inc{op}) \subseteq \mathcal{R}$ that \textit{might} be required as preconditions, as well as $\widetilde{\mathit{eff}}^{+}(\inc{op}) \subseteq \mathcal{R}$ and $\widetilde{\mathit{eff}}^{-}(\inc{op}) \subseteq \mathcal{R}$ that \textit{might} be generated as \textit{possible} effects either as add or delete effects. An incomplete domain $\widetilde{\mathcal{D}}$ has a \textit{completion set} $\langle\langle \widetilde{\mathcal{D}} \rangle\rangle$ comprising all possible domain models derivable from an incomplete one. 
Namely, the number of all possible domain models is $\langle\langle \widetilde{\mathcal{D}} \rangle\rangle = 2^K$, where $K = \sum_{\inc{op} \in \widetilde{\mathcal{O}}}(|\widetilde{\mathit{pre}}(\inc{op})| + |\widetilde{\mathit{eff}}^{+}(\inc{op})| + |\widetilde{\mathit{eff}}^{-}(\inc{op})|)$.
An incomplete STRIPS planning problem derived from an incomplete STRIPS domain $\incp{D}$ and a set of typed objects $Z$ is defined as $\incp{P} = \langle \mathcal{F}, \incp{\mathcal{A}}, \mathcal{I}, G \rangle$, where: $\mathcal{F}$ is the set of facts (instantiated predicates from $Z$), $\incp{\mathcal{A}}$ is the set of incomplete instantiated actions from $\inc{\mathcal{O}}$ with objects from $Z$, $\mathcal{I} \subseteq \mathcal{F}$ is the initial state, and $G \subseteq \mathcal{F}$ is the goal state.

Most approaches for planning in incomplete domains~\cite{WeberBryce_ICAPS_2011,PlanningIncomplete_NguyenK_2014,Nguyen_AIJ_2017} assume that plans succeed under the \textit{most optimistic} conditions, which are: 

\begin{itemize}
	\item Possible preconditions $\widetilde{\mathit{pre}}$ do not need to be satisfied in a state $S$; 
	\item Possible add effects $\widetilde{\mathit{eff}}^{+}$ are always assumed to occur in the resulting state $S'$;
	\item Delete effects $\widetilde{\mathit{eff}}^{-}$ are ignored in the resulting state $S'$. 
\end{itemize}

Therefore, formally, an incomplete action $\inc{a}$ instantiated from an incomplete operator $\inc{op}$ is applicable to a state $S$ iff $S \models \mathit{pre}(\inc{a})$ and results in a new state $S'$ such that $S' \gets (S / \mathit{eff}^{-}(a)) \cup (\widetilde{\mathit{eff}}^{+}(\inc{a}) \cup \mathit{eff}^{+}(a))$. 
Thus, a valid plan $\pi$ that achieves a goal $G$ from $\mathcal{I}$ in an incomplete planning problem $\incp{P}$ is a sequence of actions that induces an \textit{optimistic} sequence of states. 
Example~\ref{example:Abstract} from Weber and Bryce~\cite{WeberBryce_ICAPS_2011} illustrates an abstract incomplete planning problem and a valid plan for it.

\begin{example}\label{example:Abstract}
Consider the following incomplete planning problem $\incp{P}$, where: 

\begin{itemize}
	\item $\mathcal{F} = \lbrace p,q,r,g \rbrace$; 
	\item $\incp{\mathcal{A}} = \lbrace \inc{a},\inc{b},\inc{c} \rbrace$, where: 
	\begin{itemize}
		\item $\mathit{pre}(\inc{a}) = \lbrace p,q \rbrace, \widetilde{\mathit{pre}}(\inc{a}) = \lbrace r \rbrace, \widetilde{\mathit{eff}}^{+}(\inc{a}) = \lbrace r \rbrace, \widetilde{\mathit{eff}}^{-}(\inc{a}) = \lbrace p \rbrace$
		\item $\mathit{pre}(\inc{b}) = \lbrace p \rbrace, \mathit{eff}^{+}(\inc{b}) = \lbrace r \rbrace, \mathit{eff}^{-}(\inc{b}) = \lbrace p \rbrace, \widetilde{\mathit{eff}}^{-}(\inc{b}) = \lbrace q \rbrace$
		\item $\mathit{pre}(\inc{c}) = \lbrace r \rbrace, \widetilde{\mathit{pre}}(\inc{c}) = \lbrace q \rbrace, \mathit{eff}^{+}(\inc{c}) = \lbrace g \rbrace$
	\end{itemize}
	\item $\mathcal{I} = \lbrace p,q \rbrace$; and
	\item $G = \lbrace g \rbrace$.
\end{itemize}

The $[\inc{a},\inc{b},\inc{c}]$ sequence of actions is a valid plan to achieve goal state $\lbrace g \rbrace$ from the initial state $\lbrace p,q \rbrace$.
It corresponds to the \textit{optimistic} state sequence: $s_{0} =  \lbrace p, q \rbrace, s_{1} = \lbrace p,q,r \rbrace, s_{2} = \lbrace q,r \rbrace, s_{3} = \lbrace q, r, g \rbrace$. 
The number of completions for this example is $|\langle\langle \widetilde{\mathcal{D}} \rangle\rangle| = 2^{5}$ (2 possible preconditions and 3 possible effects, \idest, 1 possible add effect and 2 possible delete effects).
\end{example}

\section{Landmarks}\label{section:Landmarks}

In the \textit{Planning} literature~\cite{Hoffmann2004_OrderedLandmarks,Vicent_ActionLandmarks_2005,LandmarksRichter_2008}, \textit{landmarks} are defined as \textit{necessary} properties (alternatively, actions) that must be true (alternatively, executed) at some point in every valid plan (see Definition~\ref{def:Plan}) to achieve a particular goal from an initial state, being often partially ordered following the sequence in which they must be achieved. 
Hoffman~\etal~\cite{Hoffmann2004_OrderedLandmarks} define \emph{fact landmarks} (Definitions~\ref{def:FactLandmark}) as follows:

\begin{definition}[\textbf{Fact Landmark}]\label{def:FactLandmark}
Given a planning instance $\Pi = \langle \Xi, \mathcal{I}, G\rangle$, a formula $L$ is a landmark in $\Pi$ iff $L$ is true at some point along all valid plans that achieve $G$ from $\mathcal{I}$. 
In other words, a landmark is a type of formula (\exemp, conjunctive formula or disjunctive formula) over a set of facts that must be satisfied (or achieved) at some point along all valid plan executions.
\end{definition}

Vidal and Geffner~\cite{Vicent_ActionLandmarks_2005} define \textit{action landmarks} (Definition~\ref{def:ActionLandmark}) as necessary actions that must be executed at some point along all valid plans that achieve a goal state $G$ from an initial state $\mathcal{I}$. In this thesis, we do not explicitly use the concept of action landmarks, but rather use fact landmarks to build our planning--based approaches to goal recognition.

\begin{definition}[\textbf{Action Landmark}]\label{def:ActionLandmark}
Given a planning instance $\Pi = \langle \Xi, \mathcal{I}, G\rangle$, an action $A$ is a landmark in $\Pi$ iff $A$ is a necessary action that must be executed at some point along all valid plans that achieve $G$ from $\mathcal{I}$.
\end{definition}

\begin{figure}[h!]
  \centering
  \includegraphics[width=0.68\linewidth]{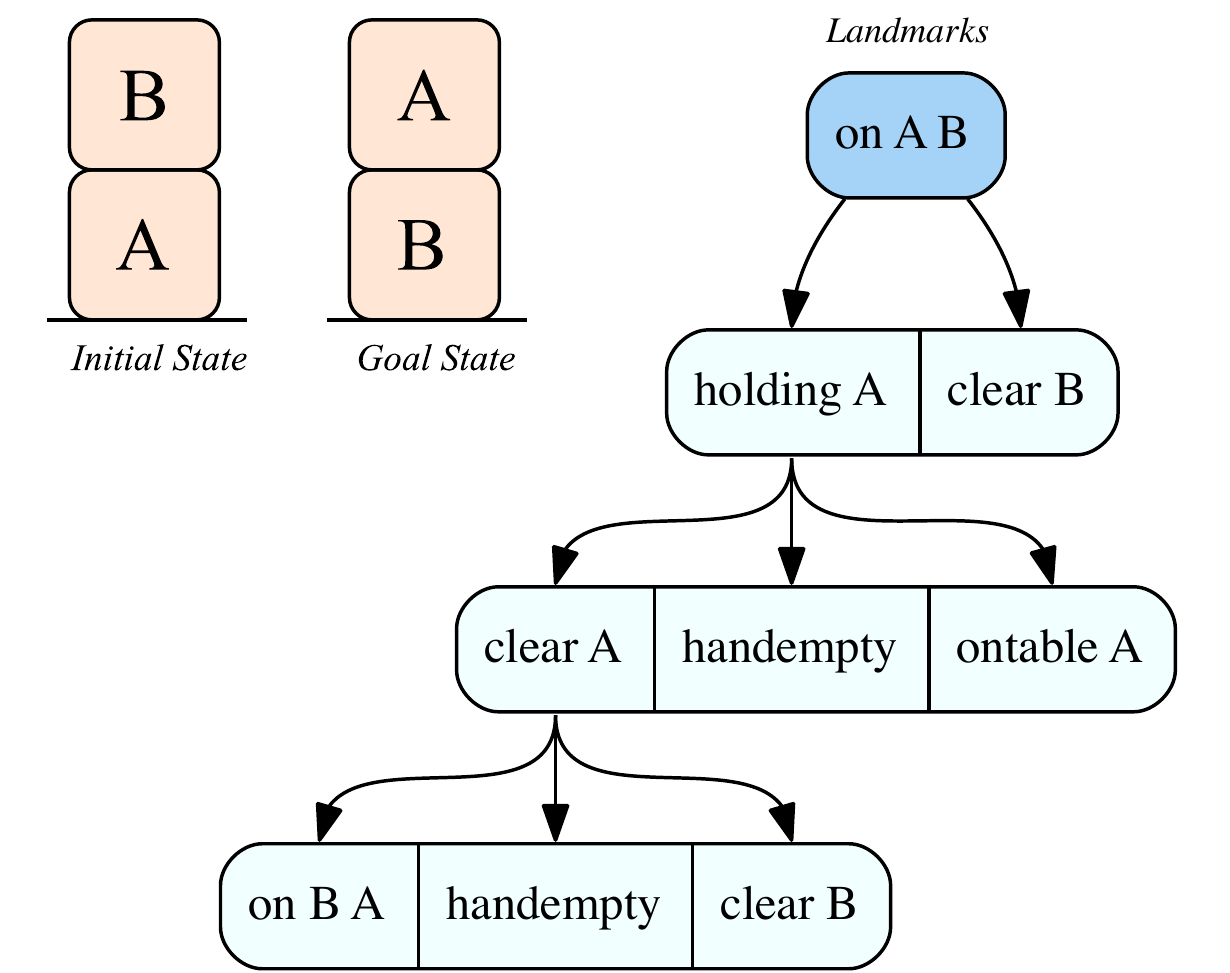}
  \caption{Ordered Landmarks for a \textsc{Blocks-World} problem instance. Connected boxes represent conjunctive landmarks.}
  \label{fig:Landmarks-BlocksWorld}
\end{figure}

From the concept of fact landmarks, Hoffmann~\etal~\cite{Hoffmann2004_OrderedLandmarks} introduce two types of landmarks as formulas: \textit{conjunctive} and \textit{disjunctive landmarks}. 
A \textit{conjunctive landmark} is a set of facts that must be true \textit{together} at some point in every valid plan to achieve a goal. 
A \textit{disjunctive landmark} is a set of facts such that \emph{at least one} of the facts must be true at some point in every valid plan to achieve a goal. Figure~\ref{fig:Landmarks-BlocksWorld} shows an example that illustrates a set of landmarks for a \textsc{Block-World}\footnote{\textsc{Blocks-World} is a \textit{Classical Planning} domain where a set of stackable blocks must be re-assembled on a table~\cite[Chapter~2, Page~50]{AutomatedPlanning_Book2011}.} problem instance. 
This example shows a set of conjunctive ordered landmarks (connected boxes) that must be true to achieve the goal state \pred{(on A B)}. 
For instance, to achieve the fact landmark \pred{(on A B)} which is also the goal state, the conjunctive landmark \pred{(and (holding A) (clear B))} must be true immediately before, and so on, as shown in Figure~\ref{fig:Landmarks-BlocksWorld}.

Whereas in \textit{Planning} the concept of landmarks is used to build heuristics~\cite{LandmarksRichter_2008} and planning algorithms~\cite{RichterLPG_2010}, in this thesis, much like our previous work~\cite{PereiraMeneguzzi_ECAI2016,PereiraNirMeneguzzi_AAAI2017}, we exploit the concept of landmarks to reason about agents' plan execution, and attempt to recognize the goals that such agent aims to achieve. 
Intuitively, we use landmarks as waypoints (or stepping stones) in order to monitor what an observed agent cannot avoid to achieve its goals. 
In Chapter~\ref{chapter:GR_IncompleteDomains}, we introduce new notions of landmarks and develop an algorithm for extracting landmarks over incomplete domain models.

\section{Goal and Plan Recognition as Planning}
\label{section:GoalRecognition}

\emph{Goal Recognition} is the task of recognizing which goal an observed agent aims to achieve by observing its interactions in an environment~\cite[Chapter~1, Page~3]{ActivityIntentPlanRecogition_Book2014}. \emph{Plan Recognition} can be seen as a superset of goal recognition (Figure~\ref{fig:Goal_Plan_Recognition}), namely, it is the task of recognizing which plan is being executed by an observed agent by observing its interactions in an environment~\cite[Chapter~3, Page~57]{ActivityIntentPlanRecogition_Book2014}.
In \textit{Goal and Plan Recognition}, such observed interactions (\idest, observations) are used as available evidence to recognize goals and plans. Observed interactions can be observed events performed by an agent in an environment, as well as actions (\exemp, a simple movement, cook, drive), and changing properties in an environment (\exemp, at home, at work, resting).
Approaches to goal and plan recognition are characterized according to the role that the observed agent performs during the recognition process in an environment~\cite{Armentano_AIJ_2007,ActivityIntentPlanRecogition_Book2014}, as follows.

\begin{itemize}
	\item \textbf{Intended Recognition} is the recognition process in which the observed agent is aware of the process of recognition. Therefore, in this kind of recognition process the observed agent usually cooperates with the process by notifying the recognizer about its interactions in the environment;
	\item \textbf{Keyhole Recognition\footnote{We note that the recognition approaches proposed in this thesis are limited to keyhole recognition, in which we make the assumption that the observed agents are either not aware that they are being observed or do not care about the recognition process.}} is defined as the recognition process in which the observed agent is unaware of the process of recognition, namely, the interactions performed by the observed agent are partially observable inputs to the recognition process; and
	\item \textbf{Obstructed Recognition} is the recognition in which the observed agent is aware of the process of recognition and obstructs purposely the process. In other words, the agent intentionally does not cooperate with the recognition process.
\end{itemize}	

\begin{figure}[h!]
  \centering
  \includegraphics[width=0.45\linewidth]{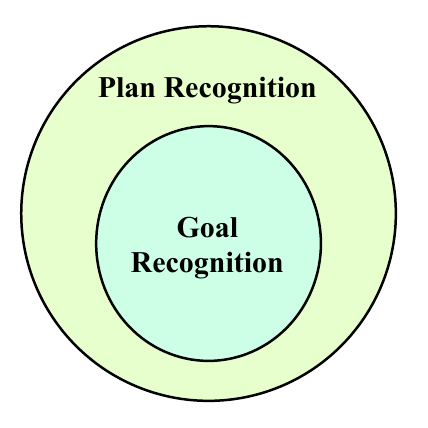}
  \caption{Plan Recognition as a superset of Goal Recognition~\cite{PattisonGoalRecognition_2010}.}
  \label{fig:Goal_Plan_Recognition}
\end{figure} 

To recognize goals and plans from agents' observations, we need a model that describes the agents' behavior and the environment, more specifically, a model that describes how observed agents act to achieve their goals in an environment. 
In the literature, the most commonly used types of models for goal and plan recognition are: \textit{plan--libraries} and \textit{planning domain definition}. 
A \textit{plan--library} can be seen as a domain-specific model that defines a pre-defined and static set of plans to achieve a set of goals, \idest, a know-how for achieving goals in a particular domain. 
While the use of plan--libraries has been proved to be computationally efficient~\cite{AvrahamiZilberbrandK_IJCAI2005,DoritGalAAAI07,Geib_PPR_AIJ2009,PR_Mirsky_2016,PR_EXP_Mirsky2017}, the process of modeling a plan--library requires substantial domain knowledge because the space of all possible plans for achieving goals must be encoded beforehand for every agent. 
Alternatively, goal and plan recognition approaches that use a model based on a \textit{planning domain definition} are called as \textit{Plan Recognition as Planning}~\cite{RamirezG_IJCAI2009,RamirezG_AAAI2010}.
The use of \textit{planning domain definition} for goal and plan recognition relaxes the need of modeling all plans to achieve a set of goals, by using only a planning domain-theory to describe facts and actions of the environment as domain knowledge. 
\textit{Plan Recognition as Planning} brings the process of goal and plan recognition closer to planning algorithms, allowing the use of automated planners~\cite{RamirezG_IJCAI2009,RamirezG_AAAI2010,Sohrabi_IJCAI2016} and planning concepts~\cite{PattisonGoalRecognition_2010,NASA_GoalRecognition_IJCAI2015,PereiraMeneguzzi_ECAI2016,PereiraNirMeneguzzi_AAAI2017} for recognizing goals and plans. 
Ram{\'{\i}}rez and Geffner~\cite{RamirezG_IJCAI2009} claim that plan recognition can be seen as planning in reverse, namely, in plan recognition we search for goals and plans that better explain the observations, in planning we search for a sequence of actions (\idest, a plan) to achieve a particular goal.

In this thesis, we define the problem of goal recognition as planning by following the formalism proposed by Ram{\'{\i}}rez and Geffner~in~\cite{RamirezG_IJCAI2009,RamirezG_AAAI2010}, as formalized Definition~\ref{def:GoalRecognition}.

\simbolo{$\mathcal{T}$}{Goal Recognition Problem as Planning}
\simbolo{$\mathcal{G}$}{Set of Hypothetical Candidate Goals}
\simbolo{$G^{*}$}{Correct Hidden Goal}
\simbolo{$Obs$}{Sequence of Observations}
\begin{definition}[\textbf{Goal Recognition Problem as Planning}]\label{def:GoalRecognition}
A goal recognition problem over a planning domain definition is a four-tuple $\mathcal{T} = \langle\Xi,\mathcal{I} ,\mathcal{G}, Obs\rangle$, where: 
\begin{itemize}
	\item $\Xi = \langle\mathcal{F}, \mathcal{A}\rangle$ is a planning domain definition; 
	\item $\mathcal{I}$ is the initial state; 
	\item $\mathcal{G}$ is the set of hypothetical candidate goals, which include a correct hidden goal $G^{*}$ (\idest, $G^{*} \in \mathcal{G}$); and 
	\item $Obs = \langle o_1, o_2, ..., o_n\rangle$ is an observation sequence of executed actions, with each observation $o_i \in \mathcal{A}$.
\end{itemize}
\end{definition}

We now formally define an observation sequence (Definition~\ref{def:ObservationSequence}) by following the formalism proposed by Ram{\'{\i}}rez and Geffner in~\cite{RamirezG_IJCAI2009,RamirezG_AAAI2010}, which defines an observation sequence as an action sequence.

\begin{definition}[\textbf{Observation Sequence}]\label{def:ObservationSequence}
An observation sequence $Obs = \langle o_1, o_2, ..., o_n\rangle$ is said to be satisfied by a plan $\pi = \langle a_1, a_2, ..., a_m\rangle$, if there is a monotonic function $f$ that maps the observation indices $j = 1, ..., n$ into action indices $i = 1, ..., n$, such that $a_{f(j)} = o_{j}$.
\end{definition}

Thus, the ideal solution for a goal recognition problem as planning is finding the single correct hidden goal $G^{*} \in \mathcal{G}$ that the observation sequence $Obs$ of a plan execution achieves. 
Most approaches to goal and plan recognition return either a probability distribution over the goals~\cite{RamirezG_IJCAI2009,RamirezG_AAAI2010,Sohrabi_IJCAI2016,NASA_GoalRecognition_IJCAI2015}, or a score associated to the set of hypothetical candidate goals~\cite{PereiraMeneguzzi_ECAI2016,PereiraNirMeneguzzi_AAAI2017}. In Chapters~\ref{chapter:GR_IncompleteDomains}~and~\ref{chapter:GR_NominalModels}, we explain how our approaches to goal recognition compute scores and probability distributions for estimating the correct hidden goal from the observation sequence.
As an example of how the goal recognition process works, consider the Example~\ref{exemp:GoalRecognition}, as follows.

\begin{example}\label{exemp:GoalRecognition}
To exemplify the goal recognition process, let us consider the \textsc{Blocks-World} example in {\normalfont Figure~\ref{fig:GoalRecognition-BlocksWorld}}. 
The initial state represents an initial configuration of stackable blocks, while the set of candidate goals is composed by the following stacked ``words'': {\normalfont \pred{RED}, \pred{BED}, \textit{and} \pred{SAD}}. 
Consider an observation sequence for a hidden goal {\normalfont \pred{RED}} consisting of the following action sequence: 
{\normalfont $[$\pred{(unstack D B)}, \pred{(putdown D)}, \pred{(unstack E A)}, \pred{(stack E D)}, \pred{(pickup R)}, \pred{(stack R E)}$]$}. 
By following the full observation sequence, we can easily infer that the hidden goal is indeed {\normalfont \pred{RED}}. 
However, if the we cannot observe the action {\normalfont \pred{(stack R E)}}, it is not trivial to infer that {\normalfont \pred{RED}} is indeed the goal the observation sequence aims to achieve. 
Thus, we could infer that more than one candidate goal could be pursuit by the observations.
\end{example}

\begin{figure}[h!]
  \centering
  \includegraphics[width=0.7\linewidth]{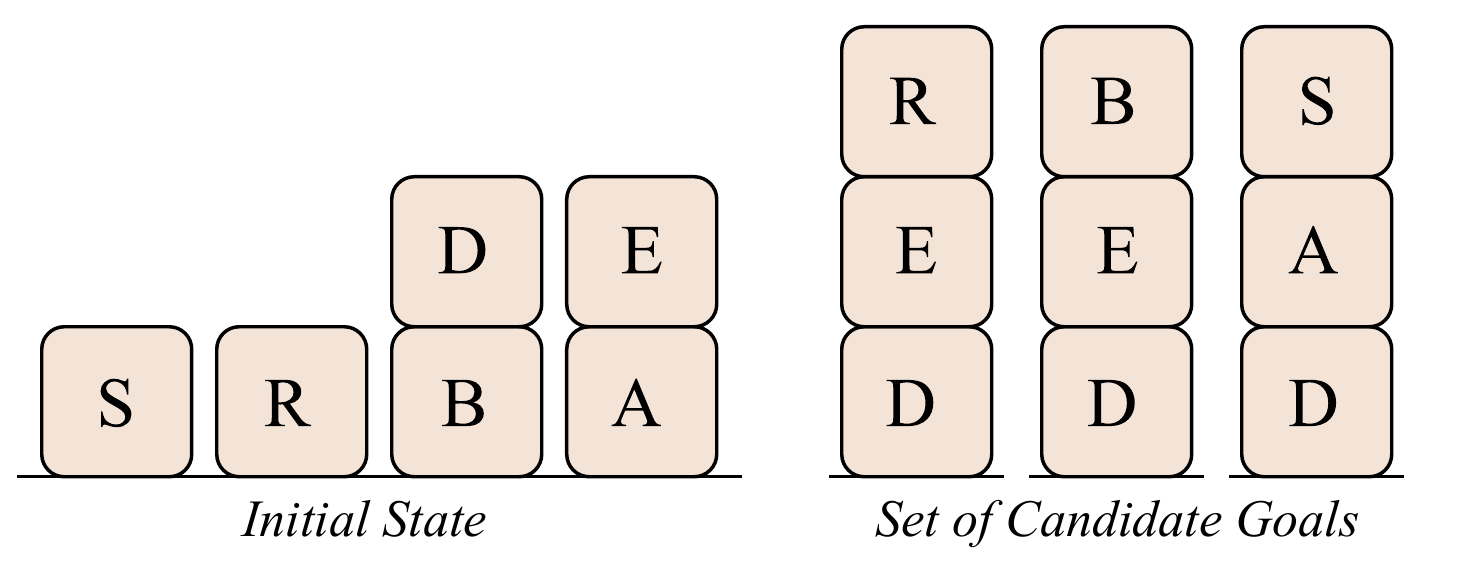}
  \caption{Goal Recognition example for \textsc{Blocks-World} domain.}
  \label{fig:GoalRecognition-BlocksWorld}
\end{figure}

The observation sequence in Example~\ref{exemp:GoalRecognition} represents a full observation sequence, \idest, all actions of an agent's plan are observed. 
In a partial observation sequence, we observe only a sub-sequence of actions of a plan that achieves a particular goal because some actions are missing or obfuscated. 
A noisy observation sequence contains one or more actions (or a set of facts) that might not be part of a plan that achieves a particular goal, \exemp, when a sensor fails and generates abnormal or spurious readings~\cite{Sohrabi_IJCAI2016}. 
In this thesis, we deal with partial and full observation sequences during the goal recognition process. We formally define a missing observation in Definition~\ref{def:MissingObservation}. Example~\ref{exemp:MissingObservationSequence} shows an example of a partial observation sequence containing misses observed actions.

\begin{definition}[\textbf{Missing Observation Sequence}]\label{def:MissingObservation}
Let $\Pi = \langle \Xi, \mathcal{I}, G\rangle$ be a planning instance, $\pi$ a valid plan that achieves $G$ from $\mathcal{I}$, and $Obs$ an observation sequence that satisfies $\pi$. 
An observation sequence $Obs$ misses observations (is a partial observation sequence) with respect to the plan $\pi$ that achieves the goal $G$ from $\mathcal{I}$, if at least one of the observations $o \in Obs$ is part of a strict subset of ordered actions $\pi$.
\end{definition}

\begin{example}\label{exemp:MissingObservationSequence}
Let us consider that a valid plan to achieve a goal $G$ is $\pi = [a,b,c,d,e]$. Consider the following observation sequences $O_{m1}$, $O_{m2}$, and $O_{m3}$:
\begin{itemize}
	\item $O_{m1} = [a,d]$;
	\item $O_{m2} = [b,e]$; and
	\item $O_{m3} = [d,a,c]$
\end{itemize} 
Observation sequences $O_{m1}$ and $O_{m2}$ satisfy {\normalfont Definition~\ref{def:MissingObservation}}, and therefore, they are partial observation sequences and contain missing observed actions. 
$O_{m3}$ is not a partial observation sequence because it does not satisfy {\normalfont Definition~\ref{def:MissingObservation}} as the observation sequence $[d,a,c]$ is not a strict subset of ordered actions of the plan $\pi$. 
\end{example}

\section{Optimal Control}\label{section:OptimalControl}

In the fields of \textit{Engineering} and \textit{Mathematics}, \textit{Control Theory} is a subfield of study that deals with the behavior of dynamical systems~\cite{Bertsekas_DP_17}.
A \textit{dynamical system} can be seen as a mathematical model that aims to describe the behavior of a system. In a sense, a dynamical system is a system that describes mathematically the changes that occur over time in a physical system with geometrical space. This geometrical space is a set of possible states of the system, whereas the dynamics can be formalized as a function that transforms a state into another state. Thus, the aiming of the \textit{Control Theory} is to control a dynamical system such that its output follows a desired value, which may be a fixed or changing value.

To model the range of possible agent behavior in \textit{Control}, we use one of the branches of \textit{Control Theory}, namely, \textit{Optimal Control}. The objective of \textit{Optimal Control} is either to minimize or maximize a particular measure regarding the behavior of a dynamical system over time~\cite{Bertsekas_DP_17,Sutton_2018_RLI}. In this thesis, we model agent behavior using \textit{Finite-Horizon Optimal Control} problems, denoted as FHOC.

\subsection{Finite Horizon Optimal Control Problems}\label{section:Background:FHOC}

\sigla{FHOC}{Finite-Horizon Optimal Control}
We formalize FHOC problems by following the terminology proposed by Bertsekas in~\cite{Bertsekas_DP_17}, incorporating and combining some elements typically used by the literature
on \textit{Control}~\cite{borrelli:17:predictive} and \textit{Planning}~\cite{Bonet_Planning_13} to account for constraints and goals\footnote{Referred to as \emph{target regions} in \textit{Control Theory}.}.
Transitions between states are described by a \textit{stationary}, \textit{discrete--time} dynamical system

\begin{align}
\label{eq:dynamics}
x_{k+1} = f(x_k, u_k, w_k)
\end{align}

\noindent where for each time point $k \in [0,N]$, $x_k$ is the \textit{state}, $u_k$ is the \textit{control input} and
$w_k$ is a \textit{random variable} with a probability distribution that does not depend on past $w_j$, $j<k$.
For now, we make no further assumptions on the specific way states, inputs and perturbations interact. States $x_k$, controls
$u_k$, and disturbances $w_k$ are required to be part of spaces $S \subset \mathbb{R}^d$, $C
\subset \mathbb{R}^p$, and $D \subseteq \mathbb{R}^{d+p}$.
Controls $u_k$ are further required to belong to the set $U(x_k) \subset C$, for each state $x_k$ and time step $k$.
We note that the latter accounts for both the notion of \textit{preconditions} and \textit{bounds} on inputs.
Observed agents seek to transform initial states $x_0$ into states $x_N$ with specific properties. 
These properties are given as logical formulas over the components of states $x_k$, and the set of states $S_{G} \subseteq S$ are those where the desired property $G$, or goal, holds. The preferences of observed agents to pursue specific trajectories are accounted for with cost functions of the form

\begin{align}
\label{eq:objective_fhoc}
J(x_0) =  \mathbb{E} \{ g(x_{N}) + \sum_{k=0}^{N-1} g(x_k,u_k,w_k) \}
\end{align}

\noindent $g(x_{N})$ is the \textit{terminal cost}, $g(x_k,u_k,w_k)$ is the \textit{stage cost}, and $\mathbb{E}$ is the expectation operator with respect to the random variable $w_k$.
Thus, we define FHOC problems as an optimization problem whose solutions describe the range of possible optimal behaviors of observed agents

\begin{align}
\label{eq:rhc_objective}
& \min_{\pi \in \Pi}  \bigl\{ J_{\pi} \bigl( x_{0} \bigr) \bigr\}& & \\
\nonumber
subject\  to & \\
& u_k = \mu_k(x_k) & \\
\label{eq:rhc_transition}
& x_{k+1} =\ f(x_{k},u_{k},w_{k}) & \\
\label{eq:constraints}
& u_{k} \in\ U(x_k),\ x_{k} \in\ S,\ x_{N} \in\ S_G &
\end{align}


\noindent where ${\cal I}$, the \textit{initial state}, is an arbitrary element of the set of states $S$. Solutions to
Equations~\ref{eq:rhc_objective}--\ref{eq:constraints} are \textit{policies} $\pi$
\begin{align}
\label{eq:policy}
\pi = \{ \mu_0, \mu_1, \ldots, \mu_k, \ldots, \mu_{N-1} \}
\end{align}
\noindent and $\mu_k$ is a function mapping states $x_k$ into controls $u_k$ $\in$ $C$. When
$\pi$ is such that $\mu_i$ $=$ $\mu_j$ for every $i$, $j$ $\in$ $[0,N]$, we say $\pi$ is
\textit{stationary}. We note the abuse of notation, and use the $\pi$ symbol to represent both \textit{plans} and \textit{policies} throughout the text and clarify them when this is not obvious from context. We also note that terminal constraints $x_N \in S_G$ can be dropped,
replacing them by terms in $g(x_k,u_k,w_k)$ that encode some measure of distance to $S_G$.
Costs $g(x_{N})$ are typically set to $0$ when terminal constraints are enforced, yet
this is a convention, and establishing preferences for specific states in $S_G$ over others is perfectly possible.

\subsection{Actual and Nominal Control Models}

System Identification is the task of building and approximating mathematical models of dynamical systems from past collected data~\cite{ljung1998system} (\exemp, observed state transitions) and prior system knowledge. 
Existing work on the \textit{Control Theory} literature~\cite{Sirdi_2007_CSC,Kong_Tomi_NominalM_2013} usually refers to approximate models as \textit{nominal models}, whereas correct true (idealized) models are referred as \textit{actual models}. 
We formally define \textit{actual} and \textit{nominal models} as follows, in Definitions~\ref{def:ActualModel}~and~\ref{def:NominalModel}, respectively.


\begin{definition}[\textbf{Actual Model}]\label{def:ActualModel}
An actual model is a model in which the transition function ${x}_{k+1} = {f}(x_k, u_k, w_k)$ is known and well defined.
\end{definition}

\begin{definition}[\textbf{Nominal Model}]\label{def:NominalModel}
A nominal model is a model in which the transition function $\hat{x}_{k+1} = \hat{f}(x_k, u_k, w_k)$ is approximate and acquired based on observed data. 
\end{definition}

In this thesis, we study the implications of using \textit{nominal models} for goal recognition over continuous actions and state spaces. In Chapter~\ref{chapter:GR_NominalModels}, we show how we acquire \textit{nominal models} from past observed state transitions by leveraging existing work that combines \textit{Deep Learning} and \textit{Planning}~\cite{SayWZS:ijcai17,WuSS:nips17}.

%
%

\chapter{Goal Recognition over Incomplete Domain Models}\label{chapter:GR_IncompleteDomains}

Traditional work on \textit{Goal and Plan Recognition as Planning} relies on complete and correct domain models, relaxing the need to deal with incomplete domain information in relation to methods based on plan--libraries.
In this chapter, we present novel heuristic approaches to goal recognition that explicitly deal with \textit{incomplete domain information over discrete models}. 
In Section~\ref{subsection:GR_IncompleteDomains_Formalism}, we introduce the problem of goal recognition over \textit{incomplete domain models}. 
After, in Section~\ref{section:ExtractingLandmarksInIncompleteDomains}, we present new notions of \textit{landmarks} for \textit{incomplete domain models}, and how we modify an algorithm from the literature to extract such landmarks from incomplete domain information. 
Then, in Section~\ref{section:HeuristicGoalRecognitionIncompleteDomains}, we build goal recognition heuristics based on these new notions of landmarks over \textit{incomplete domain models}. 
Finally, in Section~\ref{section:GR_IncompleteDomains_ExperimentsEvaluation}, we show empirically that our heuristic approaches effectively deal with the incomplete part of the domain model to substantially improve recognition accuracy against a baseline.

\section{Problem Formulation}\label{subsection:GR_IncompleteDomains_Formalism}

Most planning--based approaches to goal and plan recognition assume that a complete domain model is available to perform the recognition process~\cite{RamirezG_IJCAI2009,RamirezG_AAAI2010,GoalRecognitionDesign_Keren2014,NASA_GoalRecognition_IJCAI2015,Sohrabi_IJCAI2016,PereiraMeneguzzi_ECAI2016,PereiraNirMeneguzzi_AAAI2017}, relying on conventional models in which the domain information is considered to be \textit{known}, \textit{correct}, and \textit{well-defined}. 
In this thesis, we propose the problem of goal recognition over \textit{incomplete domain models}. 
Here, we assume that the \textit{observer} (goal recognizer) uses an \textit{incomplete domain model} to perform the recognition task, while the \textit{observed agent} is planning and acting in the environment with a \textit{complete} domain model. Besides that, like the planning approaches over incomplete domain models~\cite{WeberBryce_ICAPS_2011,Nguyen_AIJ_2017}~(recall from Section~\ref{section:IncompletePlanning}), we consider that the goal recognizer reasons about possible plans with incomplete actions (observations) by assuming that they succeed under the \textit{most optimistic} conditions, \idest, ignoring all \textit{possible} preconditions and delete effects (known and possible), and all effects are assumed to occur (known and possible).

To account for such incompleteness in the domain model, the incomplete domain model available to the observer contains annotations specifying \textit{possible preconditions} and \textit{effects} of some actions in the incomplete domain definition, much like the incomplete domain models from previous planning approaches~\cite{WeberBryce_ICAPS_2011,Nguyen_AIJ_2017}.  Figure~\ref{fig:GoalRecognition_IncompleteDomains} shows an overview of how we define the problem of goal recognition over \textit{incomplete domain models} from the perspective of both the observer and the observed agent.

\begin{figure}[h!]
  \centering
  \includegraphics[width=0.85\linewidth]{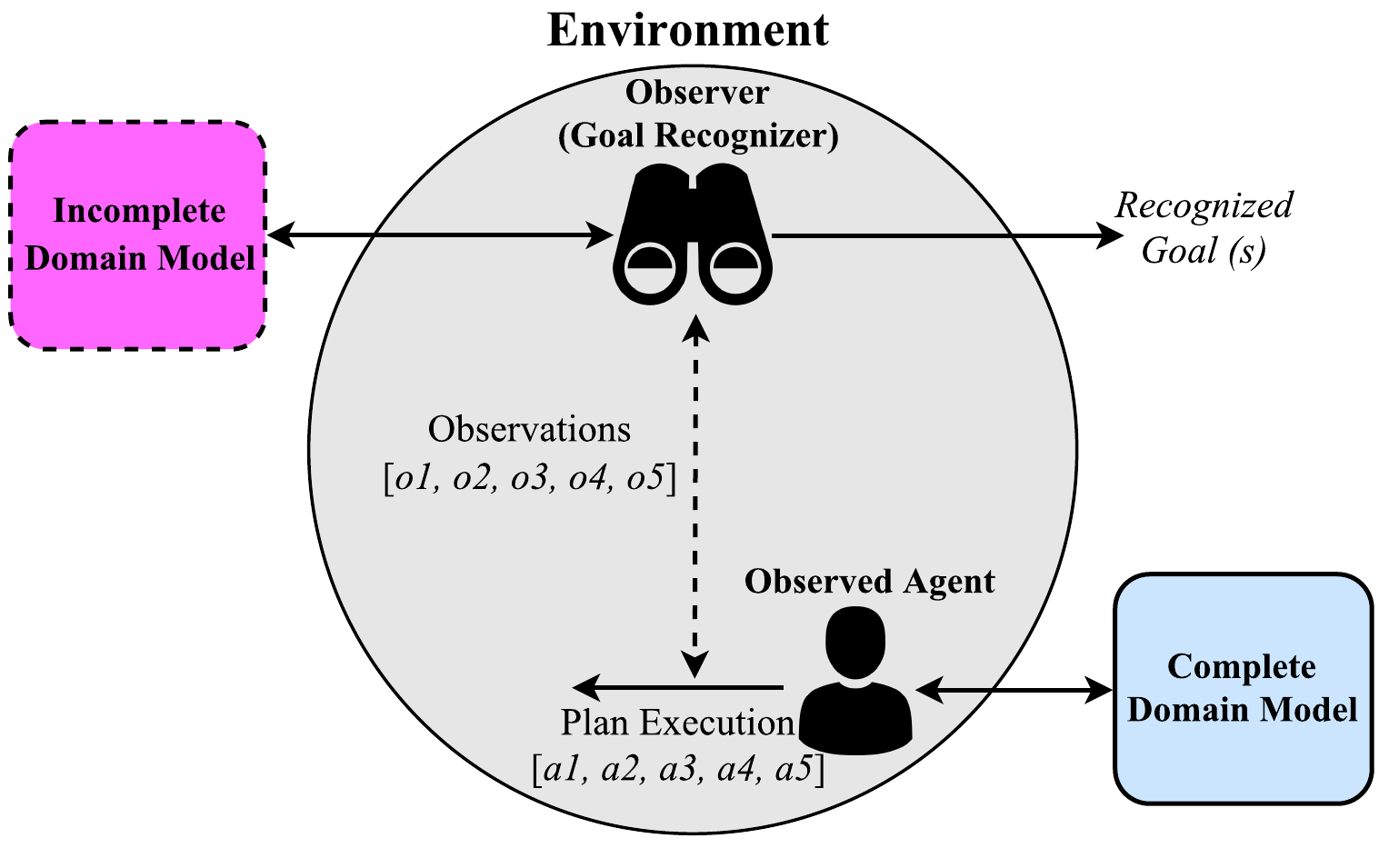}
  \caption{Problem overview to goal recognition in incomplete domain models.}
  \label{fig:GoalRecognition_IncompleteDomains}
\end{figure}

By combining the various notions of planning in incomplete domains~\cite{WeberBryce_ICAPS_2011,Nguyen_AIJ_2017} and observation sequence (Definition~\ref{def:ObservationSequence}), we now formally define the \textit{goal recognition problem} over an \textit{incomplete planning domain} in Definition~\ref{def:GoalRecognitionIncompleteDomains}, following the formalism of Ram{\'{\i}}rez and Geffner~\cite{RamirezG_IJCAI2009,RamirezG_AAAI2010} (Definition~\ref{def:GoalRecognition}).

\simbolo{$\incp{T}$}{Goal Recognition Problem over an Incomplete Domain}
\begin{definition}[\textbf{Goal Recognition Problem over an Incomplete Domain Model}]\label{def:GoalRecognitionIncompleteDomains}
A goal recognition problem over an incomplete domain model is a quintuple $\incp{T} = \langle \incp{D}, Z, \mathcal{I}, \mathcal{G}, Obs \rangle$, 
where: 
\begin{itemize}
	\item $\incp{D} = \langle \mathcal{R}, \widetilde{\mathcal{O}} \rangle$ is an incomplete domain model (with possible preconditions and effects). 
	$Z$ is the set of typed objects in the environment, in which $\mathcal{F}$ is the set instantiated predicates (\idest, a finite set of facts) from $\mathcal{R}$ with objects from $Z$, and $\incp{\mathcal{A}}$ is the set of incomplete instantiated actions from $\inc{\mathcal{O}}$ with objects from $Z$;
	\item $\mathcal{I} \in \mathcal{F}$ is the initial state;
	\item $\mathcal{G}$ is the set of hypothetical candidate goals, which include a correct hidden goal $G^{*}$ (\idest, $G^{*}$ $\in$ $\mathcal{G}$); and
	\item $Obs = \langle o_1, o_2, ..., o_n\rangle$ is an observation sequence of executed actions $o_i \in \incp{\mathcal{A}}$. $Obs$ corresponds to a sequence of actions (\idest, a plan) that achieves the correct hidden goal $G^{*}$ from the initial state $\mathcal{I}$ over a single complete model $\mathcal{D}^{*} \in \langle\langle \widetilde{\mathcal{D}} \rangle\rangle$ that is known to the observed agent, but not to the recognizer.
\end{itemize}
\end{definition}

Ideally, a solution for a goal recognition problem in incomplete domain models $\incp{T}$ is to find the single correct hidden goal $G^{*} \in \mathcal{G}$ that the observation sequence $Obs$ of a plan execution achieves. 
As most goal recognition approaches, observations consist of the action signatures of the underlying plan\footnote{Our approaches are not limited to using just actions as observations, and it can also deal with logical facts as observations, much like the approach from~\cite{Sohrabi_IJCAI2016}.}, more specifically, we observe incomplete actions with possible precondition and effects, in which some of the preconditions might be required and some effects might change the environment.
A full (or complete) observation sequence contains all of the action signatures of the plan executed by the observed agent, whereas a partial observation sequence contains only a sub-sequence of actions of a plan, and thus misses some of the actions actually executed in the environment (Definition~\ref{def:MissingObservation}). In summary, the problem we address in this thesis is \textit{keyhole goal recognition} under partial observability~\cite[Chapter 1]{ActivityIntentPlanRecogition_Book2014}.

\section{Extracting Landmarks in Incomplete Domain Models}
\label{section:ExtractingLandmarksInIncompleteDomains}

Recall from Section~\ref{section:Landmarks} that landmarks are necessary conditions to achieve a goal from an initial state in a given planning problem (Definitions~\ref{def:FactLandmark}~and~\ref{def:ActionLandmark}). In the \textit{Planning} literature, most landmark--based heuristics extract landmarks from complete and correct domain models.
In this thesis, we introduce new notions of fact landmarks in \textit{incomplete domain models}, and extend the landmark extraction algorithm proposed by Hoffmann~\etal~in~\cite{Hoffmann2004_OrderedLandmarks} to extract landmarks from incomplete STRIPS domain models.

To represent landmarks and their ordering, the algorithm of Hoffmann~\etal~\cite{Hoffmann2004_OrderedLandmarks} uses a tree in which nodes represent landmarks and edges represent necessary prerequisites between landmarks. 
Each node in the tree represents a conjunction of facts that must be true simultaneously at some point during plan execution, and the root node is a landmark representing the goal state. 
For extracting landmarks, this landmark extraction algorithm~\cite{Hoffmann2004_OrderedLandmarks} uses a Relaxed Planning Graph (\textsc{RPG})~\cite{ReachabilityBryceK_2007}, which is a leveled graph that ignores the delete-list effects of all actions, thus containing no mutex relations~\cite{FFHoffmann_2001}, as formalized in Definition~\ref{definition:RPG}. Algorithm~\ref{alg:BuildRPG} shows a pseudo-code to build a full RPG structure from a STRIPS planning problem $\mathcal{P}$.

\simbolo{\textsc{RPG}}{Relaxed Planning Graph.}
\begin{definition} [\textbf{Relaxed Planning Graph}]\label{definition:RPG}
An RPG is a leveled graph structure that ignores the delete-list effects of all actions, and in this way, there are no mutex relation in this graph structure. Thus, as a leveled graph, the graph levels are structured as follows: $\mathcal{F}_{0}$, $\mathcal{A}_{0}$, $\mathcal{F}_{1}$, $\mathcal{A}_{1}$, ..., $\mathcal{F}_{m-1}$, $\mathcal{A}_{m-1}$, $\mathcal{F}_{m}$ of fact sets $\mathcal{F}_{i}$ (fact levels) and action sets $\mathcal{A}_{i}$ (action levels). 
Fact level $\mathcal{F}_{0}$ contains the facts that are true in the initial state, the action level $\mathcal{A}_{0}$ contains those actions whose preconditions are reached from $\mathcal{F}_{0}$, subsequently, $\mathcal{F}_{1}$ contains $\mathcal{F}_{0}$ plus the add effects of the actions in $\mathcal{A}_{0}$. The graph is built until reaching the goal state in the last fact level, or when the building process fails, if at some point before reaching the goals no new facts are inserted in the graph. Thus, the RPG is composed of $\mathcal{F}_{i} \subseteq \mathcal{F}_{i+1}$ and $\mathcal{A}_{i} \subseteq \mathcal{A}_{i+1}$ for all $i$.
\end{definition}

\floatname{algorithm}{Algorithm}
\begin{algorithm}
    \caption{Build a Relaxed Planning Graph (\textsc{RPG}).}
    \textbf{Input:} $\mathcal{P} = \langle \mathcal{F}, \mathcal{A}, \mathcal{I}, G \rangle$ STRIPS \textit{planning problem}, \textit{in which} $\mathcal{F}$ \textit{is a finite set of facts and} $\mathcal{A}$ \textit{is a finite set of actions}, $\mathcal{I}$ \textit{is the initial state}, \textit{and} $G$ \textit{is the goal state}.
    \\\textbf{Output:} \textsc{RPG} \textit{relaxed planning graph}.
	\label{alg:BuildRPG}
    \begin{algorithmic}[1]
        \Function{BuildFullRPG}{$\mathcal{F}, \mathcal{A}, \mathcal{I}, G$}
			\State $i \gets 0$ 
			\State \textsc{RPG.$\mathcal{F}_{0}$} $\gets \mathcal{I}$ \Comment{\textit{Initialize the} \textsc{RPG} \textit{with facts of the initial state.}}
			\While{$G$ $\nsubseteq$ \textsc{RPG.$\mathcal{F}_{i}$}}
				\State \textsc{RPG.$\mathcal{A}_{i}$} $\gets$ all action $\mathit{a} \in \mathcal{A}$ such that $\mathit{pre}(\mathit{a}) \in$ \textsc{RPG.$\mathcal{F}_{i}$}
				\State \textsc{RPG.$\mathcal{F}_{i+1}$} $\gets$ \textsc{RPG.$\mathcal{F}_{i}$} $\cup$ $(\mathit{eff}^{+}(a), \forall a \in$ \textsc{RPG.$\mathcal{A}_{i})$}
				\If{\textsc{RPG.$\mathcal{F}_{i+1}$} $\equiv$ \textsc{RPG.$\mathcal{F}_{i}$}}
					\State \textbf{return} $\textsc{Failure To Build RPG}$ \Comment{\textit{The algorithm fails whether at some point before reaching the facts of the goal no new fact level is added in the graph.}}
				\EndIf
        			\State $i \gets i + 1$
        		\EndWhile
            \State \textbf{return} \textsc{RPG}
        \EndFunction
    \end{algorithmic}
\end{algorithm}

After building the \textsc{RPG}, the algorithm of Hoffmann~\etal~\cite{Hoffmann2004_OrderedLandmarks} extracts a set of \textit{landmark candidates} by back-chaining from the \textsc{RPG} level in which all facts of the goal state $G$ are possible, and, for each sub-goal (fact) $g$ in $G$, it checks which facts must be true until the first level of the \textsc{RPG}. 
For example, if a fact $LB$ is a landmark and all actions that achieve $LB$ share $LA$ as a precondition, then $LA$ is a landmark candidate. 
To confirm that a landmark candidate is indeed a necessary fact to achieve the goal state $G$, and thus an actual landmark, the algorithm builds a new \textsc{RPG} removing all actions that achieve the landmark candidate and checks the solvability over this modified problem. 
If the modified problem is \textit{unsolvable}, then such landmark candidate is a necessary fact to achieve $G$, and therefore, a fact landmark. 
This means that the actions that achieve the landmark candidate are necessary to solve the original planning problem. According to Blum and Furst~\cite{BlumFastPlanning_95}, deciding the solvability of a relaxed planning problem using an \textsc{RPG} structure can be done in polynomial time.

\subsection{Optimistic Relaxed Planning Graph}

\simbolo{\textsc{ORPG}}{Optimistic Relaxed Planning Graph.}
In order to extract landmarks in \textit{incomplete domain models}, we develop and formalize an Optimistic Relaxed Planning Graph (\textsc{ORPG}). 
An \textsc{ORPG} is leveled graph that deals with incomplete domain models by assuming the most \textit{optimistic} conditions.
Thus, besides ignoring the delete-effects of all actions, this graph also ignores possible preconditions and possible delete-effects, whereas we use all possible and known add effects to build the optimistic graph. We formally define an ORPG in Definition~\ref{definition:ORPG}.

\begin{definition} [\textbf{Optimistic Relaxed Planning Graph}]\label{definition:ORPG}
An ORPG is a leveled graph and similar to an RPG structure, however, it considers the optimistic assumption, in which possible preconditions do not need to be satisfied in a state, possible add effects are always assumed to occur, and delete effects are always ignored. As a leveled graph, the graph levels in the ORPG are structured as follows: $\incp{\mathcal{F}}_{0}$, $\incp{\mathcal{A}}_{0}$, $\incp{\mathcal{F}}_{1}$, $\incp{\mathcal{A}}_{1}$, ..., $\incp{\mathcal{F}}_{m-1}$, $\incp{\mathcal{A}}_{m-1}$, $\incp{\mathcal{F}}_{m}$ of fact sets $\incp{\mathcal{F}}_{i}$ (fact levels) and action sets $\incp{\mathcal{A}}_{i}$ (incomplete action levels). 
Fact level $\incp{\mathcal{F}}_{0}$ contains the facts that are true in the initial state, while the incomplete action level $\incp{\mathcal{A}}_{0}$ contains those actions whose preconditions are reached from $\incp{\mathcal{F}}_{0}$ (possible preconditions are ignored). $\incp{\mathcal{F}}_{1}$ contains $\incp{\mathcal{F}}_{0}$, plus the facts in the add effects and possible add effects of the incomplete actions in $\incp{\mathcal{A}}_{0}$, and so on until reaching the goal state in the last fact level, or when the building process fails, if at some point before reaching the goals no new facts are inserted in the graph. 
Thus, the ORPG structure is composed of $\incp{\mathcal{F}}_{i} \subseteq \incp{\mathcal{F}}_{i+1}$
and $\incp{\mathcal{A}}_{i} \subseteq \incp{\mathcal{A}}_{i+1}$ for all $i$.
\end{definition}

In practice, to build an \textsc{ORPG}, we must modify four steps in the original \textsc{RPG} algorithm (formally described in Algorithm~\ref{alg:BuildRPG}) resulting in a function called \textsc{BuildFullORPG}. Below we show the necessary modifications that must be done in Algorithm~\ref{alg:BuildRPG} to build a full \textsc{ORPG} for a given incomplete STRIPS planning problem. 

\begin{enumerate}
	\item The input must be an incomplete STRIPS planning problem $\incp{P} = \langle \mathcal{F}, \incp{\mathcal{A}}, \mathcal{I}, G \rangle$;
	\item Build (Line 3) and output an \textsc{ORPG} instead of an \textsc{RPG};
	\item In Line 5: create a new action level by selecting all incomplete instantiated action $\inc{a} \in \incp{\mathcal{A}}$ such that $\mathit{pre}(\inc{a}) \in$ \textsc{ORPG.$\incp{\mathcal{F}}_{i}$}; and
	\item In Line 6: create a new fact level with \textsc{ORPG.$\incp{\mathcal{F}}_{i}$} $\cup$ $(\mathit{eff}^{+}(\inc{a}) \cup \widetilde{\mathit{eff}}^{+}(\inc{a}), \forall \inc{a} \in$ \textsc{ORPG.$\incp{\mathcal{A}}_{i})$};
\end{enumerate}

Replacing an \textsc{RPG} for an \textsc{ORPG} allows us to detect new types of landmarks over incomplete domain models, and thus, extract more landmarks than the original algorithm of Hoffmann~\etal~\cite{Hoffmann2004_OrderedLandmarks}. Next, we introduce new notions of landmarks and a novel landmark extraction algorithm that accounts for the incompleteness in incomplete STRIPS domain models.

\subsection{Landmark Extraction Algorithm for Incomplete Domains}

For extracting landmarks in incomplete domains models, we adapt the extraction algorithm developed by Hoffman~\etal~\cite{Hoffmann2004_OrderedLandmarks} to extract \textit{definite} and \textit{possible} landmarks from incomplete planning problems by building an \textsc{ORPG} instead of the original \textsc{RPG} and making other modifications. We formally define the notions of \textit{definite} and \textit{possible} landmarks in Definitions~\ref{def:DefiniteLandmark} and~\ref{def:PossibleLandmark}, respectively.

\begin{definition}[\textbf{Definite Landmark}]\label{def:DefiniteLandmark}
A definite landmark $L_{D}$ is a fact (landmark) that is extracted from a known add effect $\mathit{eff}^{+}(a)$ of an achiever\footnote{An achiever is an action at the level before a candidate landmark in the RPG that can be used to achieve this candidate landmark.} $a$ (action) in the ORPG.
\end{definition}

\begin{definition}[\textbf{Possible Landmark}]\label{def:PossibleLandmark}
A possible landmark $L_{P}$ is a fact (landmark) that is extracted from a possible add effect $\widetilde{\mathit{eff}}^{+}(a)$ of an achiever $a$ (action) in the ORPG and is such that $L_{P} \cap L_{D} = \emptyset$.
\end{definition} 

Unlike the algorithm proposed by Hoffman~\etal~\cite{Hoffmann2004_OrderedLandmarks} that besides extracting the set of landmarks also approximates the order that they must be achieved, our extraction algorithm only extracts the set of landmarks (\idest, a set of individual facts, not a set of formulas, as defined in Definition~\ref{def:FactLandmark}) for a given incomplete planning problem, avoiding the need to compute the order relation between the extracted landmarks.
The landmark extraction algorithm for incomplete domain models we develop is formally described in Algorithm~\ref{alg:LandmarksExtractionIncompleteDomains}. 
Our landmark extraction algorithm takes as input an incomplete domain definition $\incp{D} = \langle \mathcal{R}, \widetilde{\mathcal{O}} \rangle$, a set of typed objects $Z$, in which  $\mathcal{F}$ is the set of facts instantiated from $\mathcal{R}$ and $Z$, $\incp{\mathcal{A}}$ is the set of incomplete actions instantiated from $\widetilde{\mathcal{O}}$ and $Z$, $\mathcal{I}$ is the initial state, and $G$ is the goal state. Initially, the algorithm builds a full \textsc{ORPG} structure from an initial state to goal state (Line~\ref{alg:LandmarksExtraction:BuildORPG}). 
If the goal state is not reachable in the \textsc{ORPG}, then landmarks are not extracted, whereas if the goal state is reachable, a set of landmark candidates $C$ is initialized with the facts from the goal state (Line~\ref{alg:LandmarksExtraction:Candidates}). 
Subsequently, the algorithm iterates over the set of landmark candidates in $C$ (Line~\ref{alg:LandmarksExtraction:While}), and, for each landmark $l$, the algorithm identifies fact landmarks that must be true immediately before $l$. This iteration ends when the set of landmark candidates $C$ is empty. 
Then, in Line~\ref{alg:LandmarksExtraction:ActionsORPG}, from landmark candidate $l$, the set of actions $A'$ is extracted from the \textsc{ORPG}, comprising all actions in the \textsc{ORPG} at the action level immediately before $l$. 
In other words, $A'$ represents those actions that achieve the facts in the level in which $l$ is in the \textsc{ORPG}, these actions are called \textit{achievers}. 
In Line~\ref{alg:LandmarksExtraction:ForActions}, a for iteration filters the set $A'$ for those actions such that can achieve $l$, \idest, actions that contain $l$ in their (known or possible) effects, formally, $\forall \inc{a} \in A'$ such that $l \in (\mathit{eff}^{+}(\inc{a}) \cup \widetilde{\mathit{eff}}^{+}(\inc{a}))$. 
From the filtered actions $A'$, our algorithm takes as new landmark candidates those facts that are in the preconditions of every action $\inc{a}$ in $A'$ (Line~\ref{alg:LandmarksExtraction:ForPreconds}). 
In Line~\ref{alg:LandmarksExtraction:isLandmark}, the algorithm checks if a landmark candidate $f$ is indeed a landmark using a function called \textsc{IsLandmark}. 
This function evaluates whether a landmark candidate is a necessary condition to achieve a goal~\cite{Hoffmann2004_OrderedLandmarks}. 
For example, consider an \textsc{ORPG}$'$ structure built from a planning problem $\mathcal{I}$ and $G$ in which every action level in this \textsc{ORPG}$'$ does not contain actions that achieve the landmark candidate $l'$. 
Given this modified \textsc{ORPG}$'$ structure, we test the solvability of the planning problem $\mathcal{I}$ and $G$. So, if this problem is unsolvable, then the landmark candidate $l'$ is indeed a landmark. 
More specifically, it means that the actions that achieve the fact $l'$ must be part of the \textsc{ORPG}$'$ to solve this planning problem. Namely, the actions that contain the fact $l'$ in their effects are necessary to solve the problem.
Thus, in Line~\ref{alg:LandmarksExtraction:SelectLandmarks}, the algorithm selects the facts which are indeed landmarks, and will be used extract other landmarks. 
The algorithm stores facts that must be true together (Line~\ref{alg:LandmarksExtraction:LandmarksTogether}). 
In Lines~\ref{alg:LandmarksExtraction:AddingDefiniteLandmarks} and~\ref{alg:LandmarksExtraction:AddingPossibleLandmarks}, the algorithm stores fact landmarks according to their types, as formalized in Definitions~\ref{def:DefiniteLandmark} and~\ref{def:PossibleLandmark}. 
Finally, in Line~\ref{alg:LandmarksExtraction:Return}, our algorithm returns the set of extracted \textit{definite} and \textit{possible} landmarks, respectively, $\mathcal{L}_{G}$ and $\mathcal{\widetilde{L}}_{G}$.

Based on the incomplete planning problem formalized in Example~\ref{example:Abstract}, we now exemplify how our adapted extraction algorithm extracts landmarks, as follows in Example~\ref{example:LandmarkExtraction}.

\begin{example}\label{example:LandmarkExtraction}
Recalling that for Example~\ref{example:Abstract}, the set of facts is $\mathcal{F} = \lbrace p,q,r,g \rbrace$, the set of incomplete actions is $\incp{\mathcal{A}} = \lbrace \inc{a},\inc{b},\inc{c} \rbrace$, and the initial and goal states are $\mathcal{I} = \lbrace p,q \rbrace$ and $G = \lbrace g \rbrace$.
Therefore, for this example, the set of \textit{definite} and \textit{possible} landmarks when using Algorithm~\ref{alg:LandmarksExtractionIncompleteDomains} is $\lbrace p,q,r,g \rbrace$. 
The set of \textit{definite} landmarks is $\lbrace p,r,g \rbrace$ (Light-Blue in Figure~\ref{fig:ORPG_Example}), and the set of \textit{possible} landmarks is $\lbrace q \rbrace$ (Light-Yellow in Figure~\ref{fig:ORPG_Example}). 
The original landmark extraction algorithm proposed by~Hoffmann~\etal~in~\cite{Hoffmann2004_OrderedLandmarks} (without the most \textit{optimistic} conditions), returns $\lbrace p,r,g \rbrace$ as landmarks. 
The tradditional landmark extraction algorithm does not extract $q$ as a fact landmark because it does not assume the most \textit{optimistic} condition that possible add effects always occur, therefore, the action $a$ was not considered as a possible achiever (action). 
Thus, by using our new extraction algorithm along with an \textsc{ORPG} instead an RPG we can extract not only \textit{definite} landmarks but also \textit{possible} landmarks, obtaining more landmarks than the original algorithm of~Hoffmann~\etal~\cite{Hoffmann2004_OrderedLandmarks}. Figure~\ref{fig:ORPG_Example} shows an \textsc{ORPG} for the incomplete planning problem illustrated in Example~\ref{example:Abstract}.
\end{example}

\floatname{algorithm}{Algorithm}
\begin{algorithm}[h!]
    \caption{Landmark Extraction Algorithm for Incomplete Domain Models.}
    \textbf{Input:} $\incp{D} = \langle \mathcal{R}, \widetilde{\mathcal{O}} \rangle$ \textit{is the incomplete domain definition}, $Z$ \textit{is a set of typed objects}, where $\mathcal{F}$ \textit{is the set of facts instantiated from} $\mathcal{R}$ \textit{and} $Z$, $\incp{\mathcal{A}}$ \textit{is the set of incomplete actions instantiated from} $\widetilde{\mathcal{O}}$ \textit{and} $Z$, $\mathcal{I}$ \textit{is the initial state, and} $G$ \textit{is the goal state.}
    \\\textbf{Output:} 
	$\mathcal{L}_{G}$ \textit{set of definite landmarks} \textit{and} $\mathcal{\widetilde{L}}_{G}$ \textit{set of possible landmarks}.
	\label{alg:LandmarksExtractionIncompleteDomains}
    \begin{algorithmic}[1]
        \Function{ExtractLandmarks}{$\incp{D}, Z, \mathcal{I}, G$}
		\State \textsc{ORPG} $\gets \textsc{BuildFullORPG}(\mathcal{F}, \incp{\mathcal{A}}, \mathcal{I}, G)$ \label{alg:LandmarksExtraction:BuildORPG}
		\If{$G$ $\nsubseteq$ \textsc{ORPG}.\textsc{LastFactLevel}}
			\State \textbf{return} $\textsc{Failure To Extract Landmarks}$
		\EndIf
		\State $\mathcal{L}_{G}, \mathcal{\widetilde{L}}_{G} \gets \langle$ $\rangle$ \Comment{\textit{Set of extracted (definite and possible) landmarks.}}
		\State $\mathcal{L}_{G} \gets G$ \Comment{\textit{Trivially, the facts in the goal state are (definite) landmarks.}} 
		\State $C \gets G$ \Comment{\textit{Set of landmark candidates.}} \label{alg:LandmarksExtraction:Candidates}
		\While{$C \neq \emptyset$} \label{alg:LandmarksExtraction:While}
			\State $l \gets C$.\textsc{pop} \Comment{\textit{Candidate landmark.}}
			\State $A' \gets$ \textsc{ORPG.$\incp{\mathcal{A}}_{(\textsc{ORPG}.\textsc{level}(l)-1)}$\label{alg:LandmarksExtraction:ActionsORPG} \Comment{\textit{All actions in the ORPG at the action level immediately before the candidate fact landmark} $l$.}}
			\For{each incomplete action $\inc{a}$ in $A'$ such that $l \in (\mathit{eff}^{+}(\inc{a}) \cup \widetilde{\mathit{eff}}^{+}(\inc{a}))$} \label{alg:LandmarksExtraction:ForActions}
				\State $L_{Together} \gets \langle$ $\rangle$ \Comment{\textit{Set of extracted fact landmarks that must be true together.}}
				\For{each fact $f$ in $\mathit{pre}(\inc{a})$} \label{alg:LandmarksExtraction:ForPreconds}
					\If{\textsc{IsLandmark($f$, $\mathcal{I}$, $G$)}} \label{alg:LandmarksExtraction:isLandmark}
						\State $C \gets C$ $\cup$ $f$ \label{alg:LandmarksExtraction:SelectLandmarks}
						\State $L_{Together} \gets L_{Together}$ $\cup$ $f$ \label{alg:LandmarksExtraction:LandmarksTogether}
					\EndIf
				\EndFor
				\If{$L_{Together} \neq \emptyset$ and $l \in \mathit{eff}^{+}(\inc{a})$} \Comment{\textit{Check if} $l$ \textit{is a definite landmark.}}
					\State $\mathcal{L}_{G} \gets \mathcal{L}_{G}$ $\cup$ $L_{Together}$ \label{alg:LandmarksExtraction:AddingDefiniteLandmarks} \Comment{\textit{Add definite landmarks into} $\mathcal{L}_{G}$.}
				\ElsIf{$L_{Together} \neq \emptyset$ and $l \in \widetilde{\mathit{eff}}^{+}(\inc{a})$} \Comment{\textit{Check if} $l$ \textit{is a possible landmark.}}
					\State $\mathcal{\widetilde{L}}_{G}$ := $\mathcal{\widetilde{L}}_{G}$ $\cup$ $L_{Together}$ \label{alg:LandmarksExtraction:AddingPossibleLandmarks} \Comment{\textit{Add possible landmarks into} $\mathcal{\widetilde{L}}_{G}$.}
				\EndIf
			\EndFor
		\EndWhile \label{alg:LandmarksExtraction:Endwhile}
		\State \textbf{return} $\mathcal{L}_{G}, \mathcal{\widetilde{L}}_{G}$ \label{alg:LandmarksExtraction:Return}
        \EndFunction
    \end{algorithmic}
\end{algorithm}

\begin{figure}[h!]
  \centering
  \includegraphics[width=0.6\linewidth]{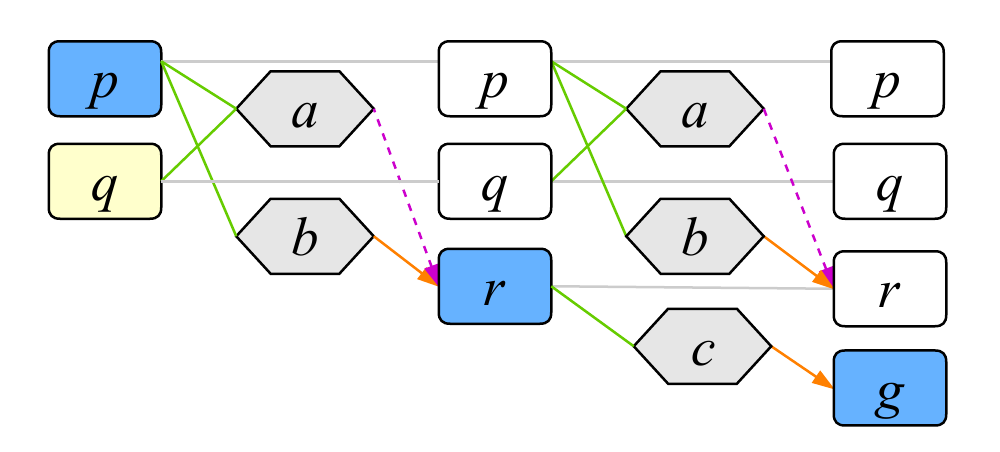}
  \caption{\textsc{ORPG} for the abstract incomplete planning problem presented in Example~\ref{example:Abstract}. Green arrows represent preconditions, Orange arrows represent add effects, and Purple dashed arrows represent possible add effects. Light-Blue boxes represent the set of \textit{definite} landmarks and Light yellow boxes represent the set of \textit{possible} landmarks. Grey hexagons represent actions.}
  \label{fig:ORPG_Example}
\end{figure}

We note that in this thesis, we use this new landmark extraction algorithm to build heuristics to goal recognition over \textit{incomplete domain models}. Although we use these new notations of landmarks for goal recognition, we argue that these new notions could be easily used to build heuristics for planning in incomplete domain models.

\newpage
\section{Goal Recognition Heuristics over Incomplete Domain Models}\label{section:HeuristicGoalRecognitionIncompleteDomains}

We now develop novel goal recognition heuristics that rely on landmarks over \textit{incomplete domain models}\footnote{The heuristics we present in this section are enhanced heuristics based on recognition heuristics over \textit{complete and correct domain models}. In~\ref{appendixA:goalrecognition_heuristics}, we develop the original landmark--based heuristics for goal recognition over complete domains models, presenting practical examples and theoretical properties of such heuristics.}. More specifically, we enhance recognition heuristics by exploiting the new notions of landmarks in \textit{incomplete domain models}.
Key to our enhanced heuristic approaches to goal recognition over \textit{incomplete domain models} is collecting evidence of achieved landmarks during observations to recognize which goal is more consistent with the observations in a plan execution. 
Figure~\ref{fig:GR_IncompleteDomains_Landmarks} illustrates the basic idea of our heuristic approaches to goal recognition over \textit{incomplete domain models}. 
Basically, we start extracting landmarks for every goal in the set of candidate goals from the initial state. 
After that, we check which landmarks have been achieved in the observations for every candidate goal. 
Then, we compute a score for the goals based on the ratio of the achieved landmarks and the total amount of landmarks, and finally, we consider as the most likely goal the goals with the highest computed score. 
Because landmarks are necessary conditions on plans that achieve a goal, such heuristics can underpin the task of goal recognition when observing either optimal or sub-optimal behaviors. 
That is, since to achieve a goal from an initial state, an observed agent has to achieve all landmarks, regardless of whether the observed agent is executing an optimal or a sub-optimal plan.

\begin{figure}[h!]
  \centering
  \includegraphics[width=0.8\linewidth]{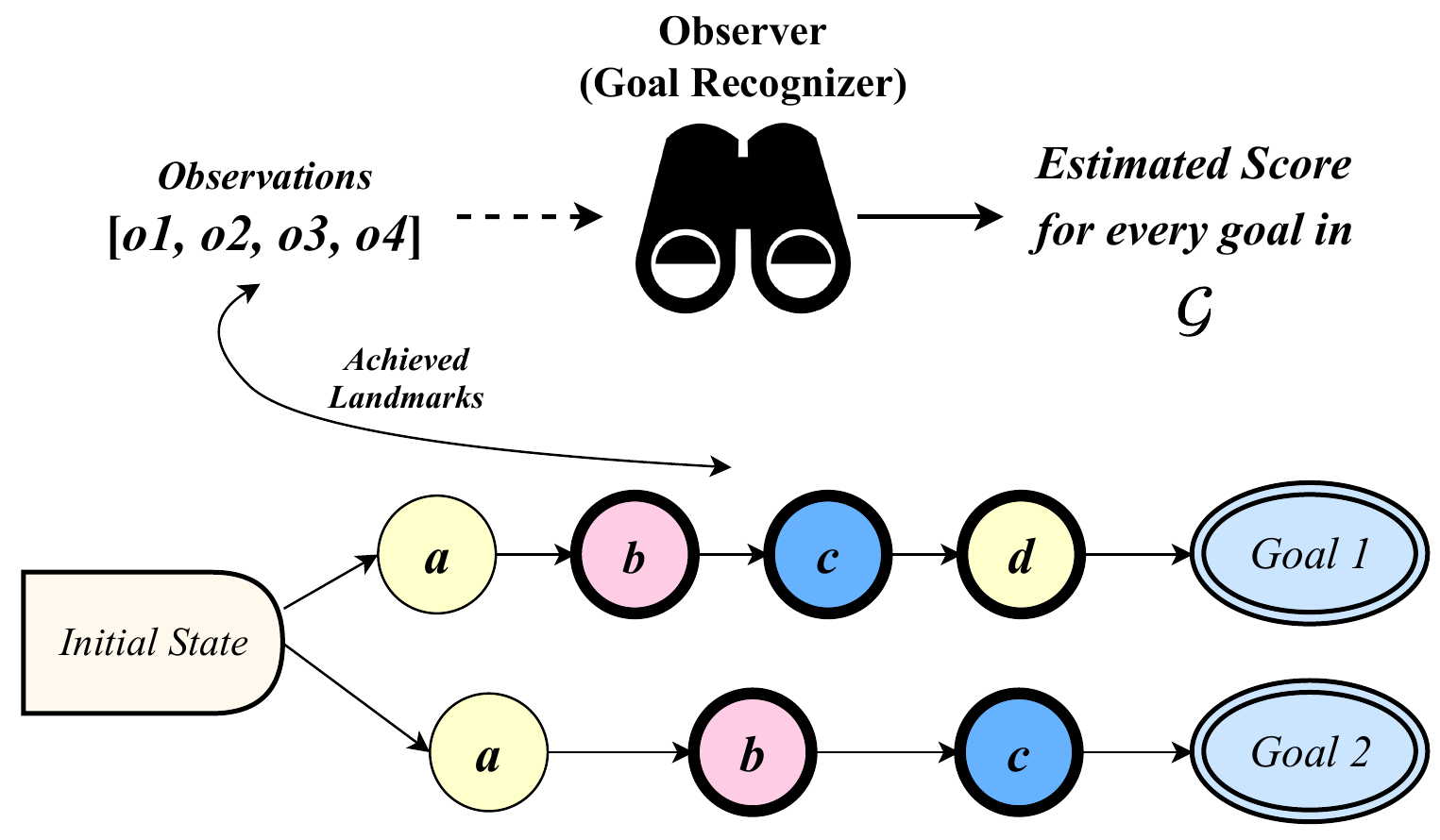}
  \caption{Heuristic Goal Recognition over Incomplete Domain Models.}
  \label{fig:GR_IncompleteDomains_Landmarks}
\end{figure}

To build our heuristic approaches to goal recognition over \textit{incomplete domain models}, we exploit and combine the concepts of \textit{definite} and \textit{possible} landmarks, with that of \textit{overlooked} landmarks (Definition~\ref{def:OverlookedLandmark}). 

\begin{definition}[\textbf{Overlooked Landmark}]\label{def:OverlookedLandmark}
An overlooked landmark $L_{O}$ is an actual landmark, a necessary fact for all valid plans towards a goal from an initial state, that was not detected by approximate landmark extraction algorithms.
\end{definition}

Most landmark extraction algorithms extract only a sub-set of landmarks for a given planning problem~\cite{Landmarks_Zhugivan_2003,Hoffmann2004_OrderedLandmarks,LandmarksRichter_2008,HM_Landmarks_2010}, and to overcome this issue and obtain more information to build heuristic approaches, we aim to extract \textit{overlooked} landmarks by analyzing preconditions and effects in the observed actions of an observation sequence. 
Since we are dealing with incomplete domain models, and it is possible that some incomplete planning problems have few (or no) \textit{definite} and/or \textit{possible} landmarks, we extract \textit{overlooked} landmarks from the evidence in the observations as we process them in order to enhance the set of landmarks useable by our goal recognition heuristics. 

We extract \textit{overlooked} landmarks \textit{on the fly} (\idest, during the goal recognition process), and this \textit{on the fly} extraction checks the facts in the preconditions and effects of the observations to extract landmarks that have not been extracted by our extraction algorithm developed in Section~\ref{section:ExtractingLandmarksInIncompleteDomains}. 
Specifically, our \textit{on the fly} extraction checks if the facts in the known preconditions and known possible add effects are not in the set of extracted \textit{definite} and \textit{possible} landmarks, and if they are not, we check if these facts are \textit{overlooked} landmarks. 
In order to do that, we use the \textsc{isLandmark} function that builds a new \textsc{ORPG} by removing actions that achieve a fact (\idest, a potentially \textit{overlooked} landmark) and checks the solvability of this modified problem. 
If the modified problem is indeed unsolvable, then this fact is an \textit{overlooked} landmark. 
We check every candidate goal $G$ in $\mathcal{G}$ using this function to extract additional (\textit{overlooked}) landmarks.
Example~\ref{example:Overlooked} illustrates how we extract \textit{overlooked} landmarks \textit{on the fly} for a given candidate goal and an observed action.

\begin{example}\label{example:Overlooked}
To exemplify how we extract overlooked landmarks during the recognition process, let us consider the goal state $\lbrace g \rbrace$ defined in Example~\ref{example:Abstract} as a candidate goal $G$, for a recognition problem with initial state $\mathcal{I} = \lbrace p,q \rbrace$, and a sequence of observations $Obs = a$. 
Assume that a landmark extraction algorithm extracts $\mathcal{L} = \lbrace p,q,g \rbrace$ for initial state $\mathcal{I}$ and candidate goal $G$.  
Given the observed action $a$, we check if the facts in the preconditions and known/possible effects of $a$ are in $\mathcal{L}$, and these facts are: $\lbrace p,q,r \rbrace$. 
Since $r$ is not in $\mathcal{L}$, $r$ can be an overlooked landmark. 
To check if $r$ is an overlooked landmark, we build ORPG removing all actions that achieve $r$ (\idest, actions $a$ and $b$) and check whether $G$ remains solvable in this ORPG. 
In this case, the goal atom $g$ is unreachable (and $G$ is unsolvable) because $r$ is a necessary fact to achieve $g$. 
Figure~\ref{fig:ORPG_Example} illustrates that $g$ is unachievable without actions $a$ and $b$, and consequently without $r$, because there is no action that adds this fact. 
Thus, $r$ is an overlooked landmark that was not extracted by the extraction algorithm, but it was extracted on the fly from an observed action during plan execution. 
\end{example}

After presenting the basic idea of how our recognition heuristics work, next, we develop two goal recognition heuristics over incomplete domain models that rely on the notions of landmarks we introduced before. 
In Section~\ref{subsec:goal_completion_heuristic}, we develop a recognition heuristic that estimates \textit{goal completion} by calculating the ratio between achieved landmarks and total amount of landmarks for a goal, and this heuristic is called \textit{Enhanced Goal Completion Heuristic}. 
We then develop in Section~\ref{subsec:uniqueness_heuristic} a recognition heuristic that uses the concept of \textit{landmark uniqueness value}, representing the information value of the landmark for some specific candidate goal when compared to landmarks for all candidate goals, and this heuristic is called \textit{Enhanced Uniqueness Heuristic}.

\subsection{Enhanced Goal Completion Heuristic for Incomplete Domain Models}\label{subsec:goal_completion_heuristic}

With the new notions of landmarks for incomplete domains in place, we now present a novel recognition heuristic for recognizing goals in incomplete domain models, called \textit{Enhanced Goal Completion Heuristic}, denoted as $h_{\widetilde{GC}}$. 
We enhance the \emph{Goal Completion Heuristic} ($h_{gc}$) presented in \ref{appendixA:goalrecognition_heuristics} to cope with incomplete domain models.
Our enhanced heuristic combines the notions of \textit{definite}, \textit{possible}, and \textit{overlooked} landmarks, and estimates the correct intended goal in the set of candidate goals by calculating the ratio between achieved \textit{definite} ($\mathcal{AL}_{G}$), \textit{possible} ($\mathcal{\widetilde{AL}}_{G}$), and \textit{overlooked} ($\mathcal{ANL}_{G}$) landmarks and the amount of \textit{definite} ($\mathcal{L}_{G}$), \textit{possible} ($\mathcal{\widetilde{L}}_{G}$), and \textit{overlooked} ($\mathcal{NL}_{G}$) landmarks. 
This estimate, formalized in Equation~\ref{eq:GoalCompletionHeuristic}, represents the percentage of achieved landmarks for a candidate goal from observations, more specifically, it represents the percentage of completion of a goal based on the achieved landmarks and the total number of extracted landmarks. 

\simbolo{$h_{\widetilde{GC}}$}{Goal Completion Heuristic.}
\begin{equation}
\label{eq:GoalCompletionHeuristic}
h_{\widetilde{GC}}(G) = \left(\frac{\mathcal{AL}_{G} + \mathcal{\widetilde{AL}}_{G} + \mathcal{ANL}_{G}}{\mathcal{L}_{G} + \mathcal{\widetilde{L}}_{G} + \mathcal{NL}_{G}}\right)
\end{equation}

Even when a candidate goal has achieved all its landmarks during the recognition process for this heuristic, it is possible that such candidate goal is not the correct intended goal based on the observations. 
Such errors were possible in our approach for complete domains when recognizing goals from suboptimal plans.
This limitation is exacerbated when we consider the set of \textit{possible} landmarks ($\mathcal{\widetilde{L}}_{G}$) in the heuristic computation of $h_{\widetilde{GC}}$. 
This type of landmark is extracted from the incomplete part of the domain model, so they can be considered as ``uncertain'' landmarks, \idest, landmarks that might not be necessary conditions to achieve a goal from an initial state. 
In Section~\ref{section:GR_AblationStudy}, we perform an ablation study and show the real impact of the new notations of landmarks on the recognition performance over our enhanced heuristics.

Example~\ref{example:GoalCompletion} illustrates how our enhanced goal completion heuristic works for recognizing goals over incomplete domains models.

\begin{example}\label{example:GoalCompletion}
To exemplify the recognition process using $h_{\widetilde{GC}}$, let us consider the goal recognition problem and the sets of landmarks presented in Figure~\ref{fig:GR_IncompleteDomains_Landmarks}. In this example, we have two candidate goals, Goal 1 (correct intended goal) and Goal 2, and the sets of landmarks for these goals are: Goal 1 $ = \lbrace a,b,c,d \rbrace$ and Goal 2 $ = \lbrace a,b,c \rbrace$. Blue circles represent definite landmarks (c), yellow circles represent possible landmarks (a,d), and pink circles represent overlooked landmarks (b). The achieved landmarks for the goals Goal 1 and Goal 2 are respectively: $\lbrace b,c,d \rbrace$ and $\lbrace b,c \rbrace$. By using our proposed enhanced goal completion heuristic $h_{\widetilde{GC}}$ to estimate which goal is the intended one in this example, we have the following heuristic estimation for Goal 1 and Goal 2:

\begin{itemize}
	\item $h_{\widetilde{GC}}$(Goal 1) = $\left(\frac{\mathcal{AL}_{G} + \mathcal{\widetilde{AL}}_{G} + \mathcal{ANL}_{G}}{\mathcal{L}_{G} + \mathcal{\widetilde{L}}_{G} + \mathcal{NL}_{G}}\right)$ = $\left(\frac{1 + 1 + 1}{1 +2 +1}\right)$ = $\left(\frac{3}{4}\right)$ = 0.75
	\item $h_{\widetilde{GC}}$(Goal 2) = $\left(\frac{\mathcal{AL}_{G} + \mathcal{\widetilde{AL}}_{G} + \mathcal{ANL}_{G}}{\mathcal{L}_{G} + \mathcal{\widetilde{L}}_{G} + \mathcal{NL}_{G}}\right)$ = $\left(\frac{1 + 0 + 1}{1 + 1 +1}\right)$ = $\left(\frac{2}{3}\right)$ = 0.66
\end{itemize}
Thus, according to the scores computed by $h_{\widetilde{GC}}$, the goal with the highest score is Goal 1.
\end{example}

We now present the computational complexity of the \textit{Enhanced Goal Completion Heuristic} $h_{\widetilde{GC}}$.
Besides extracting \textit{definite} and \textit{possible} landmarks for every candidate goal ($EL$), this heuristic iterates over the set of candidate goals $\mathcal{G}$, and the observations sequence $Obs$. During the iteration over the observation sequence $Obs$, we extract \textit{overlooked} landmarks from the observations $Obs$ ($EOL$). The heuristic also iterates over the extracted landmarks $(\mathcal{L}_{G} + \mathcal{\widetilde{L}}_{G} + \mathcal{NL}_{G})$ to compute the achieved landmarks.
The heuristic computation of $h_{\widetilde{GC}}$ ($HC$) is linear on the number of landmarks. Thus, the complexity of this heuristic approach is: $O(EL + |\mathcal{G}|\cdot|Obs|\cdot|EOL|\cdot|\mathcal{L}_{G} + \mathcal{\widetilde{L}}_{G} + \mathcal{NL}_{G}| + HC)$. 

\subsection{Enhanced Uniqueness Heuristic for Incomplete Domain Models}\label{subsec:uniqueness_heuristic}

Most goal recognition problems contain multiple candidate goals that share common fact landmarks, generating ambiguity to recognize correctly the intended goal. 
Evidently, landmarks that are common to multiple candidate goals are less useful for recognizing a goal than landmarks that exist for only a single goal. 
Thus, computing how unique (and thus informative) each landmark is can help disambiguate similar goals for a set of candidate goals. 
We now develop a second goal recognition heuristic based on this intuition. 
To develop this heuristic, we leverage the concept of \textit{landmark uniqueness} (introduced in~\ref{appendixA:goalrecognition_heuristics}), which is the inverse frequency of a landmark among the landmarks found in a set of candidate goals. 
For example, consider a landmark $L$ that occurs only for a single goal within a set of candidate goals; the uniqueness value for such a landmark is intuitively the maximum value of 1. 
Equation~\ref{eq:LandmarksUniqueness_Incomplete} formalizes this intuition, describing how the \textit{landmark uniqueness value} is computed for a landmark $L$ and a set of landmarks for all candidate goals $K_{\mathcal{G}}$.

\begin{equation}
\label{eq:LandmarksUniqueness_Incomplete}
L_{\mathit{Uniq}}(L, K_{\mathcal{G}}) = \left(\frac{1}{\displaystyle\sum_{\mathcal{L} \in K_{\mathcal{G}}} |\{L |L \in \mathcal{L}\}|}\right)
\end{equation}

We use the concept of \textit{landmark uniqueness value} to estimate which candidate goal is the intended one by summing the uniqueness values of the landmarks achieved in the observations. 
Unlike our previous heuristic (Equation~\ref{eq:GoalCompletionHeuristic}), which estimates progress towards goal completion by analyzing just the set of achieved landmarks, the landmark--based uniqueness heuristic estimates the goal completion of a candidate goal $G$ by calculating the ratio between the sum of the uniqueness value of the achieved landmarks of $G$ and the sum of the uniqueness value of all landmarks of a goal $G$. To build the uniqueness heuristic over \textit{incomplete domain models}, we enhance the original uniqueness heuristics ($h_{uniq}$) presented in~\ref{appendixA:goalrecognition_heuristics} by using the concepts of \textit{definite}, \textit{possible}, and \textit{overlooked} landmarks. We store the set of \textit{definite} and \textit{possible} landmarks of a goal $G$ separately into $\mathcal{L}_{G}$ and $\mathcal{\widetilde{L}}_{G}$, and the set of \textit{overlooked} landmarks into $\mathcal{NL}_{G}$. 
Thus, the uniqueness heuristic effectively weighs the completion value of a goal by the informational value of a landmark so that unique landmarks have the highest weight. 
To estimate goal completion using the \textit{landmark uniqueness value}, we calculate the uniqueness value for every extracted (\textit{definite}, \textit{possible}, and \textit{overlooked}) landmark in the set of landmarks of the candidate goals using Equation~\ref{eq:LandmarksUniqueness_Incomplete}.
Since we use three types of landmarks and they are stored in three different sets, we compute the landmark uniqueness value separately for them, storing the landmark uniqueness value of \textit{definite} landmarks $\mathcal{L}_{G}$ into $\Upsilon_{\mathcal{L}}$, the landmark uniqueness value of \textit{possible} landmarks $\mathcal{\widetilde{L}}_{G}$ into $\Upsilon_{\mathcal{\widetilde{L}}}$, and the landmark uniqueness value of \textit{overlooked} landmarks $\mathcal{NL}_{G}$ into $\Upsilon_{\mathcal{NL}_{G}}$.
Our \textit{Enhanced Uniqueness Heuristic} is denoted as $h_{\widetilde{UNIQ}}$ and formally defined in Equation~\ref{eq:HeuristicLandmarksUniqueness_Incomplete}.

\simbolo{$h_{\widetilde{UNIQ}}$}{Uniqueness Heuristic.}
\begin{equation}
\label{eq:HeuristicLandmarksUniqueness_Incomplete}
h_{\widetilde{UNIQ}}(G) = 
	\left(\frac{
		{\displaystyle\sum_{\mathcal{A}_{L} \in \mathcal{AL}_{G}}\Upsilon_{\mathcal{L}}(\mathcal{A}_{L})} + 
		{\displaystyle\sum_{\widetilde{\mathcal{A}_{L}} \in \mathcal{\widetilde{AL}}_{G}}\Upsilon_{\mathcal{\widetilde{L}}}(\widetilde{\mathcal{A}_{L}})} + 
				{\displaystyle\sum_{\mathcal{ANL} \in \mathcal{ANL}_{G}}\Upsilon_{\mathcal{NL}_{G}}(\mathcal{ANL})}
		}
		{
		{\displaystyle\sum_{L \in \mathcal{L}_{G}}\Upsilon_{\mathcal{L}}(L)} + 
		{\displaystyle\sum_{\tilde{L} \in \mathcal{\widetilde{L}}_{G}}\Upsilon_{\mathcal{\widetilde{L}}}(\tilde{L})} + 
		{\displaystyle\sum_{\mathcal{NL} \in \mathcal{NL}_{G}}\Upsilon_{\mathcal{NL}_{G}}(\mathcal{NL})}
	}\right)
\end{equation}

To exemplify how our enhanced uniqueness heuristic $h_{\widetilde{UNIQ}}$ works for recognizing goals, we use and follow the same example we used before for our previous heuristics, as shown in Example~\ref{example:Uniqueness}.

\begin{example}\label{example:Uniqueness}
Considering the goal recognition problem and the sets of landmarks presented in Figure~\ref{fig:GR_IncompleteDomains_Landmarks}, we have the following landmark uniqueness values for the landmarks:

\begin{itemize}
	\item Based on the set of landmarks for all goals $K_{\mathcal{G}} = \lbrace a,b,c,d \rbrace$, the landmark uniqueness values are:
	\begin{itemize}
		\item $L_{\mathit{Uniq}}(a, K_{\mathcal{G}})$ = 1/2 = 0.5
		\item $L_{\mathit{Uniq}}(b, K_{\mathcal{G}})$ = 1/2 = 0.5
		\item $L_{\mathit{Uniq}}(c, K_{\mathcal{G}})$ = 1/2 = 0.5
		\item $L_{\mathit{Uniq}}(d, K_{\mathcal{G}})$ = 1/1 = 1
	\end{itemize}
	\item After computing the landmark uniqueness value for the set of landmarks, we now can compute the scores for Goal 1 and Goal 2 using $h_{\widetilde{UNIQ}}$, as follows:
	\begin{itemize}
		\item $h_{\widetilde{UNIQ}}$(Goal 1) = $\left(\frac{0.5 + 1 + 0.5}{0.5 + 1.5 + 0.5}\right)$ = $\left(\frac{2}{2.5}\right)$ = 0.80
		\item $h_{\widetilde{UNIQ}}$(Goal 2) = $\left(\frac{0.5 + 0 + 0.5}{0.5 + 0.5 + 0.5}\right)$ = $\left(\frac{1}{1.5}\right)$ = 0.66
	\end{itemize}
\end{itemize}
Thus, according to the scores computed by $h_{\widetilde{UNIQ}}$, the goal with the highest score is Goal 1.
\end{example}

We now formalize the computational complexity of the \textit{Enhanced Uniqueness Heuristic} $h_{\widetilde{UNIQ}}$. Similar to the previous enhanced heuristic approach, this heuristic first extracts \textit{definite} and \textit{possible} landmarks for every candidate goal ($EL$), and then iterates over the set of candidate goals $\mathcal{G}$, and the observations sequence $Obs$. \textit{Overlooked} landmarks are extracted ($EOL$) during the iteration over the observation sequence $Obs$. This heuristic also iterates over the extracted landmarks $(\mathcal{L}_{G} + \mathcal{\widetilde{L}}_{G} + \mathcal{NL}_{G})$ to compute the achieved landmarks.
Different than the other enhanced heuristic, in this heuristic, we weight each landmark by how common this landmark is across all goal hypotheses. 
We call this weight the \textit{uniqueness value} ($CLUniq$), and its computation is linear on the number of fact landmarks. 
The heuristic computation of $h_{\widetilde{UNIQ}}$ ($HC$) is also linear on the number of landmarks. 
Thus, the complexity of this heuristic approach is: $O(EL + |\mathcal{G}|\cdot|Obs|\cdot|EOL|\cdot|\mathcal{L}_{G} + \mathcal{\widetilde{L}}_{G} + \mathcal{NL}_{G}| + CLUniq + HC)$.

\section{Experiments and Evaluation}\label{section:GR_IncompleteDomains_ExperimentsEvaluation}

In this section, we describe the experiments carried out to evaluate our goal recognition heuristics over incomplete domain models, describing how we have built and modified datasets from literature (Sections~\ref{subsec:GR_IncompleteDomains_Domains} and~\ref{subsec:GR_IncompleteDomains_Datasets_Setup}), as well as describing the metrics we used for evaluation (Section~\ref{subsec:GR_IncompleteDomains_Metrics}). In Section~\ref{subsec:GR_IncompleteDomains_NumberCompleteDomains}, we present the average number of possible complete domains models over the datasets we built with incomplete domains models, showing the complexity of recognizing goals over incomplete domain models.
We compare the recognition performance of our enhanced heuristics (Sections~\ref{section:GR_AblationStudy}~and~\ref{subsec:GR_IncompleteDomains_ROCSpace}) against the two original landmark--based heuristic approaches~\ref{appendixA:goalrecognition_heuristics}, which we use as baselines. We also perform an ablation study that evaluates the effect and impact of the various types of landmarks on the recognition performance of our heuristic approaches. 

\subsection{Domains}\label{subsec:GR_IncompleteDomains_Domains}

We empirically evaluated our goal recognition heuristics by using fifth-ten domains from the planning literature\footnote{\texttt{http://ipc.icaps-conference.org}}.
Six of these planning domains have been also used in the evaluation of other goal and plan recognition approaches~\cite{RamirezG_IJCAI2009,RamirezG_AAAI2010,NASA_GoalRecognition_IJCAI2015,Sohrabi_IJCAI2016,PereiraNirMeneguzzi_AAAI2017}. Specifically, we evaluate our heuristics over a variety of different types of planning domains, \idest, navigation domains, logistic-based domains, plan-library adapted domains, among others. We summarize these domains as follows. 

\begin{itemize}
	\item \textsc{Blocks-World} (\textsc{Blocks}) is a domain that consists of a set of blocks, a table, and a robot hand. Blocks can be stacked on top of other blocks or on the table. A block that has nothing on it is clear. The robot hand can hold one block or be empty. The goal is to find a sequence of actions that achieves a final configuration of blocks;
	
	\item \textsc{Campus} is a domain that consists of finding what activity is being performed by a student from his observations on a campus environment;

	\item \textsc{Depots} is a domain that combines transportation and stacking. For transportation, packages can be moved between depots by loading them on trucks. For stacking, hoists can stack packages on palettes or other packages. The goal is to move and stack packages by using trucks and hoists between depots;	
	
	\item \textsc{Driver-Log} (\textsc{Driver}) is a domain that consists of drivers that can walk between locations and trucks that can drive between locations. Walking from locations requires traversal of different paths. Trucks can be loaded with or unloaded of packages. Goals in this domain consists of transporting packages between locations;
	
	\item \textsc{Dock-Worker-Robots (DWR)} is a domain that involves a number of cranes, locations, robots, containers, and piles, in which goals involve transporting containers to a final destination according to a desired order;
	
	\item \textsc{IPC-Grid} domain is a domain consists of an agent that moves in a grid from connected cells to others by transporting keys in order to open locked locations;
	
	\item \textsc{Ferry} is a domain that consists of set of cars that must be moved to desired locations using a ferry that can carry only one car at a time;
	
	\item \textsc{Intrusion-Detection} (\textsc{Intrusion}) represents a domain where a hacker tries to access, vandalize, steal information, or  perform a combination of these attacks on a set of servers;
	
	\item \textsc{Kitchen} is a domain that consists of home-activities, in which the goals can be preparing dinner, breakfast, among others;
	
	\item \textsc{Logistics} is a domain which models cities, and each city contains locations. These locations are airports. For transporting packages between locations, there are trucks and airplanes. Trucks can drive between cities. Airplanes can fly between airports. The goal is to get and transport packages from locations to other locations;
	
	\item \textsc{Miconic} is a domain that involves transporting a number of passengers using an elevator to reach destination floors;
	
	\item \textsc{Rovers} is a domain that consists of a set of rovers that navigate on a planet surface in order to find samples and communicate experiments;
	
	\item \textsc{Satellite} is a domain that involves using one or more satellites to make observations, by collecting data and down-linking the data to a desired ground station;

	\item \textsc{Sokoban} is a domain that involves an agent whose goal is to push a set of boxes into specified goal locations in a grid with walls; and

	\item \textsc{Zeno-Travel} (\textsc{Zeno}) is a domain where passengers can embark and disembark onto aircraft that can fly at two alternative speeds between locations.
\end{itemize}

\subsection{Datasets and Setup}\label{subsec:GR_IncompleteDomains_Datasets_Setup}

For experiments and evaluation, we used and modified openly available goal and plan recognition datasets \cite{Pereira_Meneguzzi_PRDatasets_2017}\footnote{\scriptsize\url{https://doi.org/10.5281/zenodo.825878}}, which contain thousands of recognition problems. 
These datasets contain large and non-trivial planning problems (with optimal and sub-optimal plans as observations, \idest, optimal and sub-optimal behaviors) for the fifth-ten planning domains described in the previous section, including domains and problems from datasets that were developed by Ram{\'{\i}}rez and Geffner~\cite{RamirezG_IJCAI2009,RamirezG_AAAI2010}\footnote{\scriptsize\url{https://sites.google.com/site/prasplanning}}. 
All planning domains in these datasets are encoded using the STRIPS fragment of PDDL \cite{PDDLMcdermott1998}. 
Each goal and plan recognition problem in these datasets contains a (complete) domain definition, an initial state, a set of candidate goals, a correct hidden goal in the set of candidate goals, and an observation sequence. 
An observation sequence contains actions that represent an optimal plan or sub-optimal plan that achieves a correct hidden goal, and this observation sequence can be full or partial. 
A full observation sequence represents the whole plan that achieves the hidden goal, \idest, 100\% of the actions having been observed. 
A partial observation sequence represents a plan for the hidden goal, varying in 10\%, 30\%, 50\%, or 70\% of its actions having been observed. 
To evaluate our goal recognition approaches over incomplete domain models, we modify the (complete) domain models of these datasets by following the formalism of incomplete STRIPS, adding annotated possible preconditions and effects (add and delete lists). 
Thus, the only modification to the original datasets is the generation of new, incomplete, domain models for each recognition problem, varying the percentage of incompleteness (possible preconditions and effects) in these domains. 

To build goal recognition datasets with incomplete domain models, we vary the percentage of incompleteness of a domain from 20 to 80 percent (\idest, 20\%, 40\%, 60\%, and 80\%). 
For example, consider that a complete domain has, for all its actions, a total of 10 preconditions, 10 add effects, and 10 delete effects. 
A derived model with 20\% of incompleteness needs to have 2 possible preconditions (8 known preconditions), 2 possible add effects (8 known add effects), and 2 possible delete effects (8 known delete effects), and so on for other percentages of incompleteness.  Like~\cite{PlanningIncomplete_NguyenK_2014,Nguyen_AIJ_2017}, we used the following conditions to generate incomplete domain models with possible preconditions, possible add effects, and possible delete effects: 

\begin{enumerate}
	\item We randomly move a percentage of known preconditions and effects into possible lists of preconditions and effects;
	\item We randomly add possible preconditions from delete effects that are not preconditions of a corresponding operator; and 
	\item We randomly add into possible lists (of preconditions, add effects, or delete effects) predicates whose parameters fit into the operator signatures and are not precondition or effects of the operator.
\end{enumerate}

By following all these three conditions, we generated three different incomplete STRIPS domain models from a complete STRIPS domain model, since the lists of preconditions and effects are generated randomly. 
Thus, each percentage of domain incompleteness has three domain models with different possible lists of preconditions and effects.

We ran all sets of experiments using a single core of a 12 core Intel(R) Xeon(R) CPU E5-2620 v3 @ 2.40GHz with 16GB of RAM in a Linux environment using Java. The JavaVM ran experiments with a 2GB memory limit and a 2-minute time limit. 

\subsection{Evaluation Metrics}\label{subsec:GR_IncompleteDomains_Metrics}

We evaluate our heuristic approaches using the standard metrics of \textit{Precision} (ratio of correct positive predictions among all predictions) and \textit{Recall} (ratio between true positive results and total true positive and false negative results). 
In order to present a unified metric, we report the \textit{F1-score} (harmonic mean) of \textit{Precision} and \textit{Recall}. To perform our ablation study, we use the \textit{Correlation} ($C$) between the averages of \textit{F1-score} ($F_{1}$) and the absolute number of the various types of landmarks (\textit{definite} $D$, \textit{possible} $P$, and \textit{overlooked} $O$ landmarks) over all domains and problems, and \textit{Spread in} $\mathcal{G}$ as $S$, representing the average number of returned (recognized) goals. 
We decided to use the \textit{Correlation} in order to show the impact of each type of landmark in the \textit{F1-score} over the evaluated goal recognition problems. 
More specifically, we aim to show the association (or relationship) between the \textit{F1-score} and the number of extracted landmarks. 
\textit{Correlation} is a real value in $[-1,1]$ such that $-1$ represents an \textit{anti-correlation} between the landmarks and the \textit{F1-score}, $0$ represents no correlation between the landmarks and the \textit{F1-score}, whereas a value of $1$ represents that more landmarks correlate to a higher \textit{F1-score}. 

Besides these metrics, we use a graphical plot to evaluate accuracy performance of our heuristic approaches over incomplete domain models. 
To do so, we adapt the \textit{Receiver Operating Characteristic} (ROC) curve metric to highlight the trade-off between true positive and false positive results.
A ROC curve is often used to compare not only true positive predictions, but also to compare the false positive predictions of the experimented approaches. 
Here, each prediction result of our goal recognition approaches represents one point in the space, and thus, instead of a curve, our graphs show the spread of our results over ROC space. 
In the ROC space, the diagonal line represents a random guess to recognize a goal from observations. 
This diagonal line divides the ROC space in such a way that points above the diagonal represent good classification results (better than random guess), whereas points below the line represent poor results (worse than random guess). 
The best possible (perfect) prediction for recognizing goals are points in the upper left corner (\idest, coordinate x = 0 and y = 100) in ROC space. 

\subsection{Experimental Results: The Average Number of Possible Complete Domains}\label{subsec:GR_IncompleteDomains_NumberCompleteDomains}

Our heuristic approaches recognize goals at very low recognition time for most incomplete planning domains and problems, taking at most 2.7 seconds, including the process of extracting landmarks, among all goal recognition problems, apart from \textsc{IPC-Grid} and \textsc{Sokoban}, which took substantial recognition time (for more detail, please see \ref{appendixB:goalrecognition_incompletedomains}). 
More specifically, only 1092 (20\% of domain incompleteness) out of 4368 problems for \textsc{IPC-Grid} and \textsc{Sokoban} do not exceed the time limit of 2 minutes (for both our approaches and the baselines).
\textsc{Sokoban} exceeds the time limit of 2 minutes for most goal recognition problems because this dataset contains large problems with a huge number of objects, leading to an even larger number of instantiated predicates and actions. 
For example, as domain incompleteness increases (\idest, the ratio of possible and definite preconditions and effects), the number of possible actions (moving between cells and pushing boxes) increases substantially in a grid with 9x9 cells and 5 boxes as there are very few known preconditions for several possible preconditions. 
As a basis of comparison, state-of-the-art planners~\cite{PlanningIncomplete_NguyenK_2014,Nguyen_AIJ_2017} for incomplete domain models take substantially more time than 2-minute timeout to generate a single plan for domains that our heuristic approaches recognize goals in less than 2 seconds. 
For example, CPISA~\cite{Nguyen_AIJ_2017} takes $\approx$ 300 seconds to find a plan with 25 steps in domains (\exemp, \textsc{Satellite}) with 2 possible preconditions and 3 possible add effects, whereas our dataset contains much more complex incomplete domains and problems.
The average number of possible complete domain models $|\langle\langle \widetilde{\mathcal{D}} \rangle\rangle|$ is huge for several domains, showing that the task of goal recognition over incomplete domain models can be quite difficult and complex if we take into account the number of possible complete domain models. 
For instance, the average number of possible complete domains in this dataset varies between 9.18 (\textsc{Sokoban} with 20\% of domain incompleteness) and $7.84^{15}$ (\textsc{Rovers} with 80\% of domain incompleteness). 
The average number of possible complete domain models $|\langle\langle \widetilde{\mathcal{D}} \rangle\rangle|$ is huge for several domains (\textsc{Campus}, \textsc{DWR}, \textsc{Kitchen}, and \textsc{Rovers}), showing that the task of goal recognition in incomplete domains models is quite difficult and complex. 
Table~\ref{tab:NumberOfPossibleDomains} shows the average number of possible complete domain models for all domains we use in our experiments.

\begin{table}[h!]
\fontfamily{cmr}\selectfont
\small
\begin{tabular}{ccccc}
\toprule
\hline
{\textit{Incompleteness of} $\mathcal{\widetilde{D}}$ (\%)} & 20\% & 40\% & 60\% & 80\% \\ \hline
\#             & $|\langle\langle \widetilde{\mathcal{D}} \rangle\rangle|$    & $|\langle\langle \widetilde{\mathcal{D}} \rangle\rangle|$    & $|\langle\langle \widetilde{\mathcal{D}} \rangle\rangle|$    & $|\langle\langle \widetilde{\mathcal{D}} \rangle\rangle|$    \\ \hline
\textsc{Blocks}   &  42.22    &  1782.89    &  75281.09    &  3178688.03    \\
\textsc{Campus}   &  7131.55    &  50859008.46    &  
3.63E+11    &  2.59E+15    \\ 
\textsc{Depots}   &  168.89    &  28526.20    &   4817990.10   &   813744135.40   \\ 
\textsc{Driver}   &  48.50    &  2352.53   &  114104.80    &  5534417.30    \\ 
\textsc{DWR}   &   512.00   &   262144.00   &   134217728.00   &  6.88E+10    \\ 
\textsc{Ferry}   &   8.00   &  64.00    &   512.00   &   4096.00   \\ 
\textsc{Intrusion}   & 16.00     &  256.00    &  4096.00    &  65536.00    \\ 
\textsc{IPC-Grid}   & 10.55     &  111.43    &  1176.26    &  12416.75    \\ 
\textsc{Kitchen}   &  2767208.65    & 7.66E+12     &  2.11E+19    &  5.86E+25    \\ 
\textsc{Logistics}   &  27.85    &  776.04    &   21618.81   &  602248.76    \\ 
\textsc{Miconic}   & 9.18     &   84.44   &   776.05   &  7131.55    \\ 
\textsc{Rovers}   &  9410.14    & 88550676.93     &  8.34E+11    & 7.84E+15     \\ 
\textsc{Satellite}   &  27.85    &  776.04    &  21618.81    &  602248.76    \\ 
\textsc{Sokoban}   &  9.18    &   84.44   &   776.04   & 7131.55     \\ 
\textsc{Zeno}   &  48.50    &  2352.53    &  114104.80    &  5534417.30    \\ 
\bottomrule
\end{tabular}
\centering
\caption{The average number of possible complete domain models $|\langle\langle \widetilde{\mathcal{D}} \rangle\rangle|$ for all domains we use in our experiments.}
\label{tab:NumberOfPossibleDomains}
\end{table}

\newpage
\subsection{Experimental Results: An Ablation Study of The Impact of New Notions of Landmarks on The Recognition Performance}\label{section:GR_AblationStudy}

Since the key contribution of our heuristic approaches to goal recognition over incomplete domains are based on the new types of landmarks (\textit{definite}, \textit{possible}, and \textit{overlooked}), as opposed to the traditional landmarks from \textit{Classical Planning}, in this section, we want to objectively measure the effect of these new types of landmark on the recognition performance of our heuristics. 
Thus, we now present an ablation study that consists of measuring the performance of our heuristic approaches using some possible combinations of landmark types. 

Table~\ref{tab:AblationStudy} summarizes the results of our ablation study by aggregating the average results over the datasets we generated for incomplete domains considering all levels of domain incompleteness (20\%, 40\%, 60\%, and 80\%) and observability (10\%, 30\%, 50\%, 70\%, and 100\%). In \ref{appendixB:goalrecognition_incompletedomains}, we show in detail the results for all fifth-ten domains by varying the level of domain incompleteness and observability.
We denote the original landmark--based heuristic approaches (\ref{appendixA:goalrecognition_heuristics}) as Baseline ($h_{gc}$) and Baseline ($h_{uniq}$). 
We run the experiments for the baselines ignoring the incomplete part of the domain model, \idest, \textit{all possible preconditions and effects}. This allows us to evaluate the effectiveness of the new types of landmarks over incomplete domains in our enhanced heuristics.
We denote our enhanced heuristics as $\widetilde{GC}$ and $\widetilde{UNIQ}$, and denote the combination over the various types of landmarks using \textit{D + P + O} as the combination of \textit{definite, possible}, and \textit{overlooked} landmarks, and other four combinations of these landmark types (\textit{D + O}, \textit{P + O}, \textit{P}, and \textit{O}). 
For this ablation study, we report the results by evaluating our enhanced heuristic approaches and the baselines using the \textit{F1-score} metric ($F_{1}$), the average number of recognized goals \textit{Spread in} $\mathcal{G}$ ($S$), the \textit{Correlation} between the averages of landmarks and \textit{F1-scores} (\textit{Correlation} of \textit{definite} landmarks $CD$, \textit{Correlation} of \textit{possible} landmarks $CP$, and \textit{Correlation} of \textit{overlooked} landmarks $CO$), as well as the average number of extracted landmarks for all types ($D$, $P$, and $O$).

We compute the \textit{Correlation} between the averages of landmarks and \textit{F1-scores} over all domains and degrees of incompleteness, columns $CD$, $CP$ and $CO$ in Table~\ref{tab:AblationStudy}, and plot these correlations as a function of the level of incompleteness in Figures~\ref{fig:gc-definite_possible_overlooked}--\ref{fig:uniq-possible_overlooked} as opposed to how traditional landmarks affect performance on the baseline approaches (\textit{Correlation} varying between 0.32 and 0.67 for a lower \textit{F1-score}, Table~\ref{tab:AblationStudy}, lines 1 and 2). 
Figures~\ref{fig:gc-definite_possible_overlooked} and~\ref{fig:uniq-definite_possible_overlooked} show how all types of landmark correlate to the performance of heuristic approaches over incomplete domain models (represented by $\widetilde{GC}$ (\textit{D+P+O}) and $\widetilde{UNIQ}$ (\textit{D+P+O})). 
At low levels of incompleteness (20\% and 40\%) we have larger \textit{F1-scores} and larger numbers of \textit{definite} landmarks, leaving a smaller number of \textit{overlooked} landmarks to be inferred \textit{on the fly}. 
Under these conditions, \textit{overlooked} landmarks start off at a slight anti-correlation with performance. 
As the level of incompleteness of the domain description increases, the number of \textit{definite} landmarks decreases, but their \textit{Correlation} to performance increases. 
This suggests that an increase in the number of inferred landmarks leads to better performance. The number of \textit{overlooked} landmarks remains broadly the same over time, as they are tied to the amount of information in the observations more than they are tied to the information in the domain description, and their \textit{Correlation} to performance monotonically increases as the incompleteness increases. 
This indicates that \textit{overlooked} landmarks play an increasingly important role in the recognition performance as incompleteness increases, giving more information to our enhanced heuristics. 
The number of \textit{possible} landmarks also varies with domain incompleteness, initially increasing as the number of possible effects increases, to subsequently decrease as the number of possible effects leads to less bottlenecks in the state-space to yield landmarks. 
As the number of possible effects increases, so does their unreliability as sources of landmarks, which is reflected in their decreasing \textit{Correlation} to performance. 

As we ablate landmarks, performance drops most substantially when we remove either \textit{definite} or \textit{possible} landmarks from the enhanced heuristics (Figures~\ref{fig:gc-possible_overlooked} and \ref{fig:uniq-possible_overlooked}), indicating their importance to recognition accuracy. 
We can also see that when using \textit{overlooked} landmarks exclusively, in $\widetilde{GC}$ (\textit{O}) and $\widetilde{UNIQ}$ (\textit{O}), it provides a close approximation of the performance of the technique using all landmark types. Therefore, this is strong evidence that \textit{overlooked} landmarks are one of the most important contributions of this thesis.

Figure~\ref{fig:F1-score_comparison} compares all evaluated approaches with respect to \textit{F1-score} averages, varying the domain incompleteness from 20\% to 80\%. 
Thicker lines represent the recognition approaches that have higher \textit{F1-scores}. 
Note that our enhanced heuristics ($\widetilde{GC}$ and $\widetilde{UNIQ}$) that combine the use of the new types of landmarks, \idest, \textit{D+O}, \textit{D+P+O}, and $O$ are the approaches that have the higher \textit{F1-scores} over the datasets, showing that using \textit{overlooked} landmarks substantially improves goal recognition accuracy. 

\afterpage{
\begin{landscape}
\begin{table*}[tb]
\centering
\setlength\tabcolsep{1.7pt}
\fontsize{9}{12}\selectfont
\fontfamily{cmr}\selectfont

\begin{tabular}{ccccccclcccccclcccccclcccccc}
\toprule
		 \cline{2-7} \cline{9-14} \cline{16-21} \cline{23-28}
         & \multicolumn{6}{c}{Domain Incompleteness 20\%} &
		 & \multicolumn{6}{c}{Domain Incompleteness 40\%} &
		 & \multicolumn{6}{c}{Domain Incompleteness 60\%} &
		 & \multicolumn{6}{c}{Domain Incompleteness 80\%}
		 \\ \cline{2-7} \cline{9-14} \cline{16-21} \cline{23-28}
         & $|D|$    & $|P|$    & $|O|$	&  $|S|$  & $|F_{1}|$    & $CD/CP/CO$    &  & $|D|$    & $|P|$    & $|O|$    &  $|S|$  &  $|F_{1}|$    & $CD/CP/CO$  &  & $|D|$    & $|P|$    & $|O|$    &  $|S|$  & $|F_{1}|$    & $CD/CP/CO$  &  & $|D|$    & $|P|$    & $|O|$    &  $|S|$  &  $|F_{1}|$    & $CD/CP/CO$  \\ \cline{1-7} \cline{9-14} \cline{16-21} \cline{23-28}
Baseline ($\mathit{h_{gc}}$) 	& 11.5 & 0 & 0 & 1.53 & 0.44 & 0.35/0/0  &
							& 4.9 & 0 & 0 & 2.27 & 0.36 & 0.42/0/0 &
							& 3.5 & 0 & 0 & 3.32 & 0.31 & 0.65/0/0 &
							& 2.9 & 0 & 0 & 4.85 & 0.34 & 0.67/0/0 \\

Baseline ($\mathit{h_{uniq}}$) 	& 11.5 & 0 & 0 & 1.48 & 0.44 & 0.32/0/0 &
							& 4.9 & 0 & 0 & 1.95 & 0.34 & 0.36/0/0 &
							& 3.5 & 0 & 0 & 2.83 & 0.32 & 0.33/0/0 &
							& 2.9 & 0 & 0 & 4.12 & 0.33 & 0.40/0/0 \\

$\widetilde{GC}$ (\textit{D+P+O})	& 11.6 & 1.4 & 10.1 & 1.34 & 0.75 & 0.22/0.13/-0.51 &
						 			& 9.6 & 2.1 & 9.6 & 1.46 & 0.74 & 0.56/0.23/-0.20 &
									& 7.1 & 2.3 & 9.4 & 1.75 & 0.68 & 0.73/0.41/0.16 &
									& 6.8 & 1.2 & 9.3 & 1.98 & 0.65 & 0.61/0.02/0.33 \\

$\widetilde{GC}$ (\textit{D+O}) & 11.6 & 0 & 12.3 & 1.37 & 0.74 & 0.23/0/-0.42 &
								& 9.6 & 0 & 11.9 & 1.54 & 0.73 & 0.57/0/-0.11 &
								& 7.1 & 0 & 12.8 & 1.88 & 0.68 & 0.76/0/0.37 &
								& 6.8 & 0 & 10.9 & 2.10 & 0.64 & 0.62/0/0.35 \\

$\widetilde{GC}$ (\textit{P+O}) & 0 & 1.4 & 20.7 & 5.22 & 0.31 & 0/-0.13/-0.44 &
								& 0 & 2.1 & 16.8 & 4.77 & 0.34 & 0/-0.06/-0.30 &
								& 0 & 2.3 & 14.2 & 4.13 & 0.33 & 0/0.25/-0.22 &
								& 0 & 1.2 & 13.3 & 4.87 & 0.27 & 0/-0.20/-0.03 \\

$\widetilde{GC}$ (\textit{P}) 	& 0 & 1.4 & 0 & 2.14 & 0.22 & 0/0.29/0 &
								& 0 & 2.1 & 0 & 2.12 & 0.28 & 0/0.07/0 &
								& 0 & 2.3 & 0 & 2.09 & 0.26 & 0/0.56/0 &
								& 0 & 1.2 & 0 & 1.85 & 0.18 & 0/0.06/0 \\

$\widetilde{GC}$ (\textit{O}) 	& 0 & 0 & 23.1 & 1.88 & 0.71 & 0/0/-0.21 &
								& 0 & 0 & 20.0 & 2.02 & 0.70 & 0/0/0.05 &
								& 0 & 0 & 17.4 & 2.30 & 0.65 & 0/0/0.42 &
								& 0 & 0 & 15.1 & 2.46 & 0.63 & 0/0/0.34 \\

$\widetilde{UNIQ}$ (\textit{D+P+O}) & 11.6 & 1.4 & 10.1 & 1.32 & 0.71 & 0.21/0.23/-0.37 &
									& 9.6 & 2.1 & 9.6 & 1.41 & 0.70 & 0.28/0.21/-0.10 &
									& 7.1 & 2.3 & 9.4 & 1.65 & 0.64 & 0.82/0.59/0.31 &
									& 6.8 & 1.2 & 9.3 & 1.90 & 0.62 & 0.61/0.034/0.41 \\

$\widetilde{UNIQ}$ (\textit{D+O}) 	& 11.6 & 0 & 12.3 & 1.34 & 0.68 & 0.35/0/-0.26 &
									& 9.6 & 0 & 11.9 & 1.52 & 0.65 & 0.38/0/0.11 &
									& 7.1 & 0 & 12.8 & 1.81 & 0.61 & 0.66/0/0.41 &
									& 6.8 & 0 & 10.9 & 2.10 & 0.62 & 0.63/0/0.34 \\

$\widetilde{UNIQ}$ (\textit{P+O}) 	& 0 & 1.4 & 20.7 & 5.23 & 0.32 & 0/-0.14/-0.52 &
									& 0 & 2.1 & 16.8 & 4.96 & 0.35 & 0/-0.19/-0.53 &
									& 0 & 2.3 & 14.2 & 4.80 & 0.33 & 0/0.10/-0.33 &
									& 0 & 1.2 & 13.3 & 5.31 & 0.29 & 0/-0.57/-0.45 \\

$\widetilde{UNIQ}$ (\textit{P})		& 0 & 1.4 & 0 & 2.05 & 0.28 & 0/-0.27/0 &
									& 0 & 2.1 & 0 & 2.00 & 0.27 & 0/-0.42/0 &
									& 0 & 2.3 & 0 & 1.99 & 0.27 & 0/-0.19/0 &
									& 0 & 1.2 & 0 & 1.81 & 0.27 & 0/-0.55/0 \\

$\widetilde{UNIQ}$ (\textit{O}) 	& 0 & 0 & 23.1 & 1.75 & 0.69 & 0/0/-0.11 &
									& 0 & 0 & 20.0 & 1.91 & 0.77 & 0/0/0.03 &
									& 0 & 0 & 17.4 & 2.16 & 0.63 & 0/0/0.39 &
									& 0 & 0 & 15.1 & 2.35 & 0.61 & 0/0/0.33 \\
\hline
\bottomrule
\end{tabular}
\caption{Experimental results for our ablation study, comparing the baseline approaches $h_{gc}$ and $h_{uniq}$ (\ref{appendixA:goalrecognition_heuristics}) against our enhanced heuristics using various combinations of landmark types. This table aggregates the average results for all metrics over all datasets for all levels of observability.}
\label{tab:AblationStudy}
\end{table*}
\end{landscape}
}

\newpage

In \ref{appendixB:goalrecognition_incompletedomains} we report detailed results for all evaluated approaches and domains, varying not only domain incompleteness but also the percentage of observability of the observation sequence, showing the averages for all types of landmarks, \textit{F1-score}, and \textit{Correlation}. Namely, each inner table in Tables~\ref{tab:AblationStudy_Observability}~and~\ref{tab:AblationStudy_Observability_70_100} summarize the results for each percentage of observability (10\%, 30\%, 50\%, 70\%, and 100\%) over the evaluated datasets. Thus, it is possible to see that the combination of all notions of landmarks (\textit{D+P+O}) when applied for both enhanced heuristics outperforms the other combinations, including the baseline approaches (\ref{appendixA:goalrecognition_heuristics}), in all variations of domain incompleteness and observability. 

\begin{figure}[h!]
	\centering
\vspace{-3mm}	
	\begin{subfigure}[b]{0.41\linewidth}
		\centering
 	   \includegraphics[width=1\linewidth]{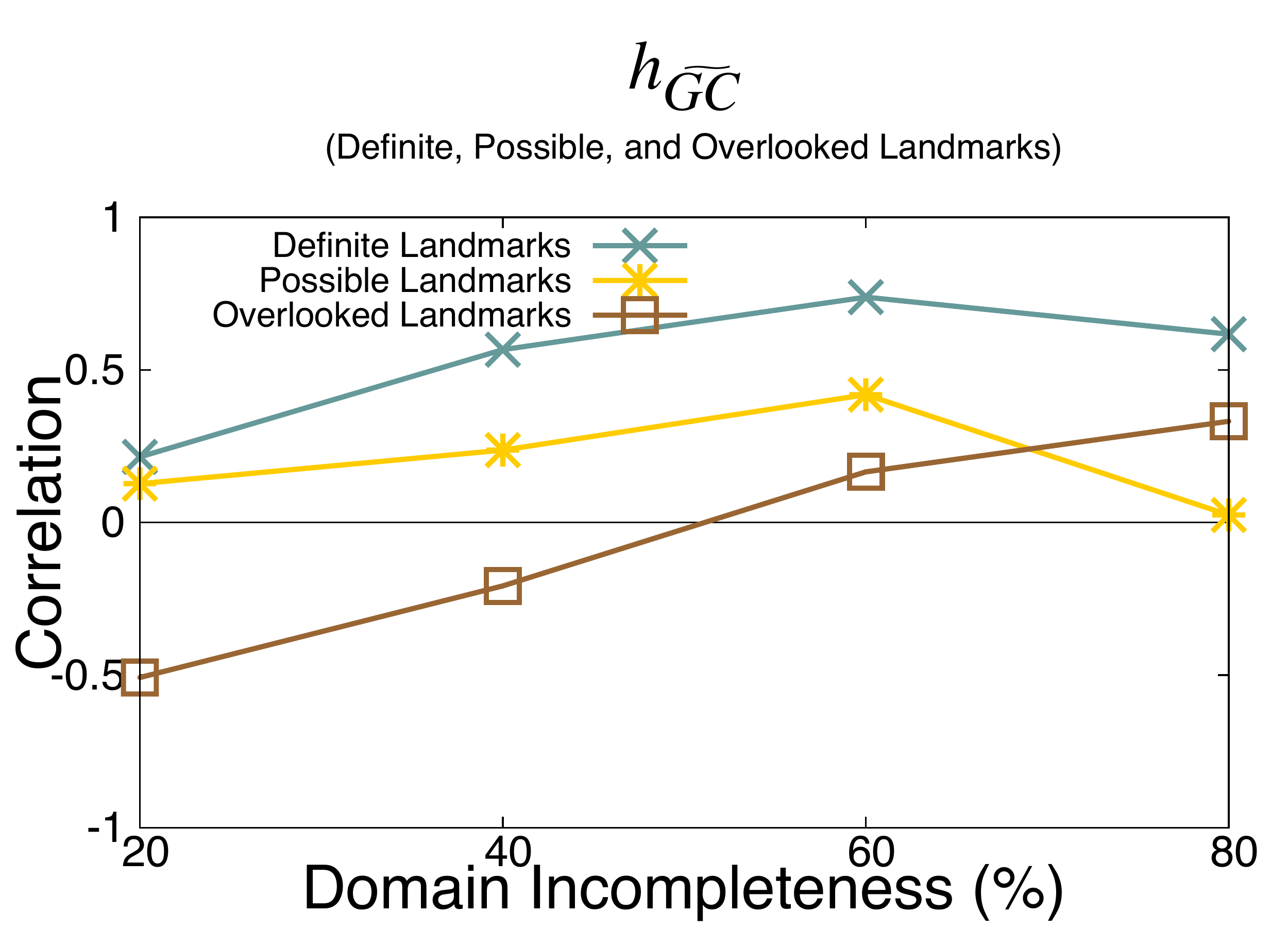}
	    \caption{$\mathit{h_{\widetilde{GC}}}$ (\textit{D+P+O}).}
	    \label{fig:gc-definite_possible_overlooked}
	\end{subfigure}
	\begin{subfigure}[b]{0.41\linewidth}
		\centering
	    \includegraphics[width=1\linewidth]{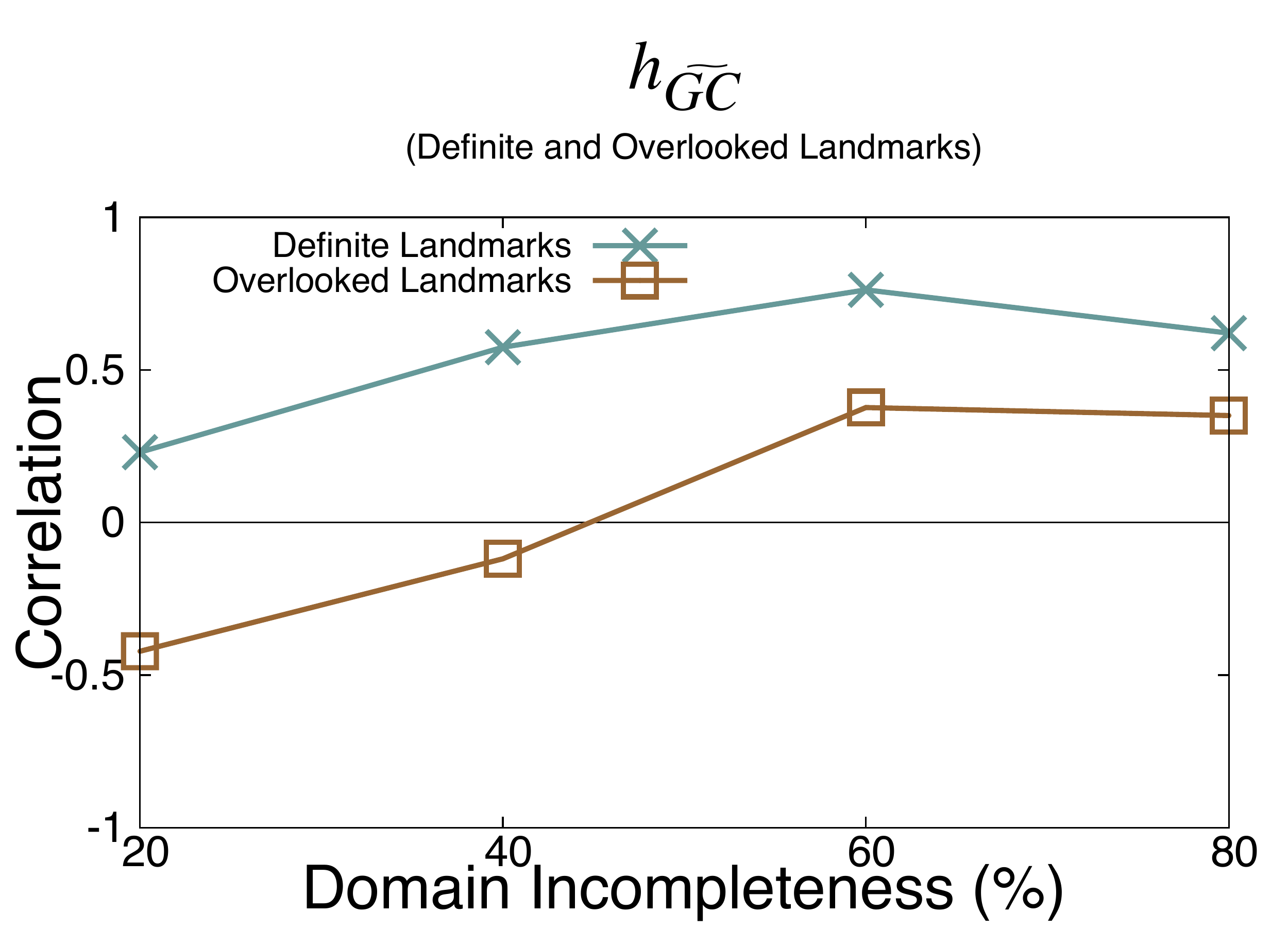}
	    \caption{$\mathit{h_{\widetilde{GC}}}$ (\textit{D+O}).}
	    \label{fig:gc-definite_overlooked}
	\end{subfigure}

	\begin{subfigure}[b]{0.41\linewidth}
		\centering
	    \includegraphics[width=1\linewidth]{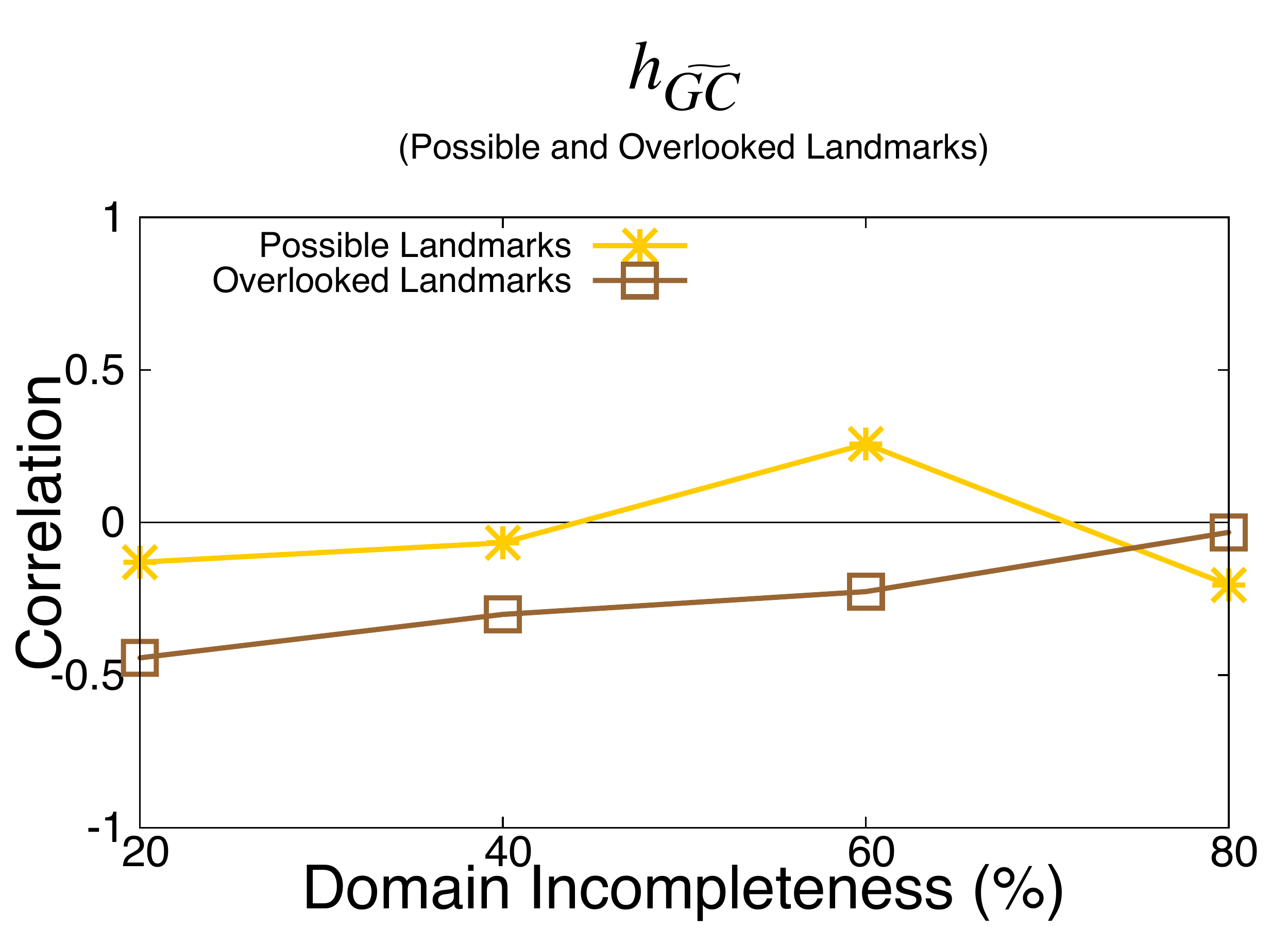}
	    \caption{$\mathit{h_{\widetilde{GC}}}$ (\textit{P+O}).}
	    \label{fig:gc-possible_overlooked}
	\end{subfigure}	
	\begin{subfigure}[b]{0.41\linewidth}
		\centering
	    \includegraphics[width=1\linewidth]{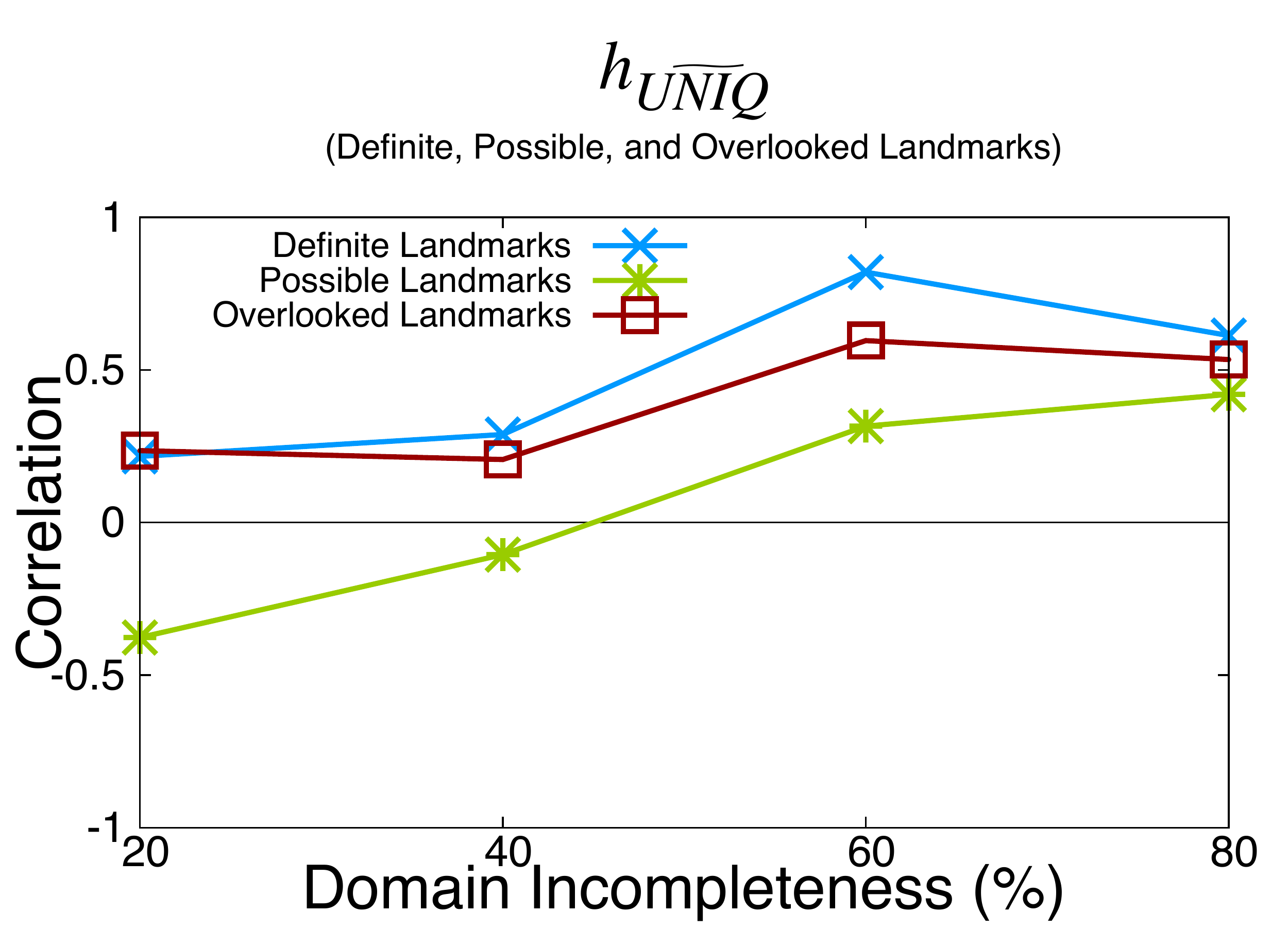}
	    \caption{$\mathit{h_{\widetilde{UNIQ}}}$ (\textit{D+P+O}).}
	    \label{fig:uniq-definite_possible_overlooked}
	\end{subfigure} 

	\begin{subfigure}[b]{0.4\linewidth}
		\centering
	    \includegraphics[width=1\linewidth]{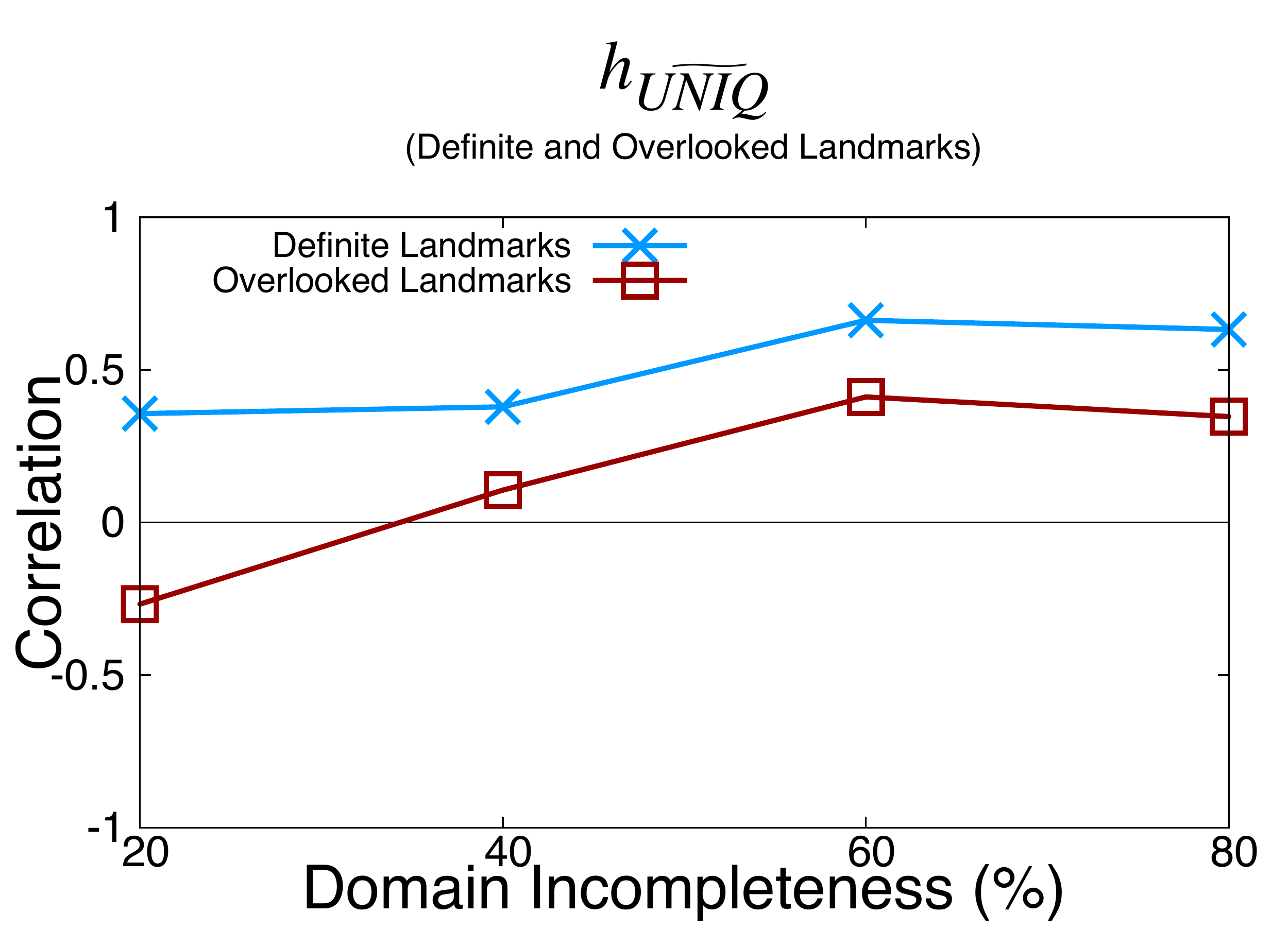}
	    \caption{$\mathit{h_{\widetilde{UNIQ}}}$ (\textit{D+O}).}
	    \label{fig:uniq-definite_overlooked}
	\end{subfigure}
	\begin{subfigure}[b]{0.4\linewidth}
		\centering
	    \includegraphics[width=1\linewidth]{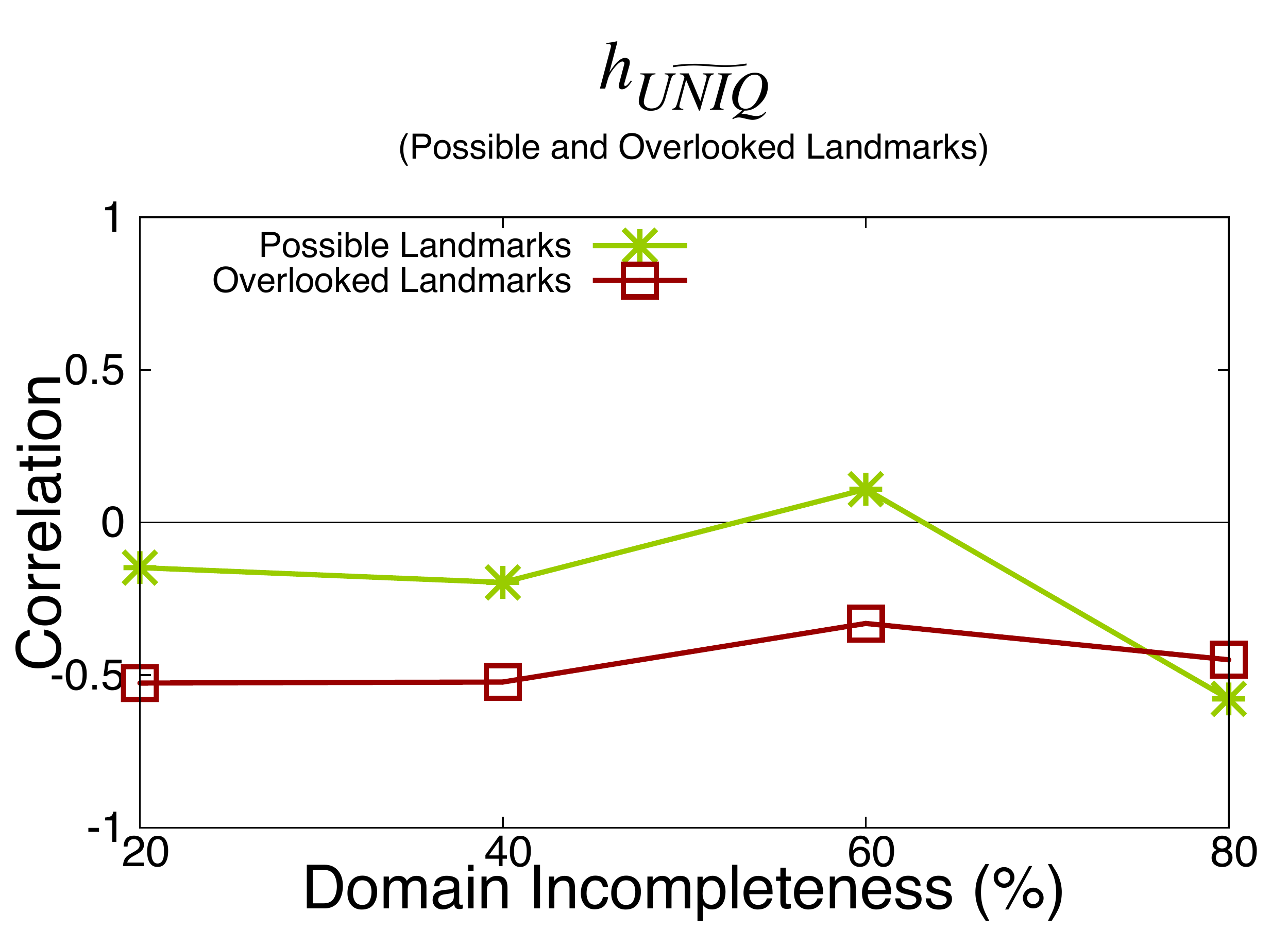}
	    \caption{$\mathit{h_{\widetilde{UNIQ}}}$ (\textit{P+O}).}
	    \label{fig:uniq-possible_overlooked}
	\end{subfigure} 
	\caption{\textit{Correlation} of landmarks to performance (\textit{F1-score}). .}
	\label{fig:correlation}
\end{figure}
%
\begin{figure}[h!]
	\centering
	\includegraphics[width=0.65\linewidth]{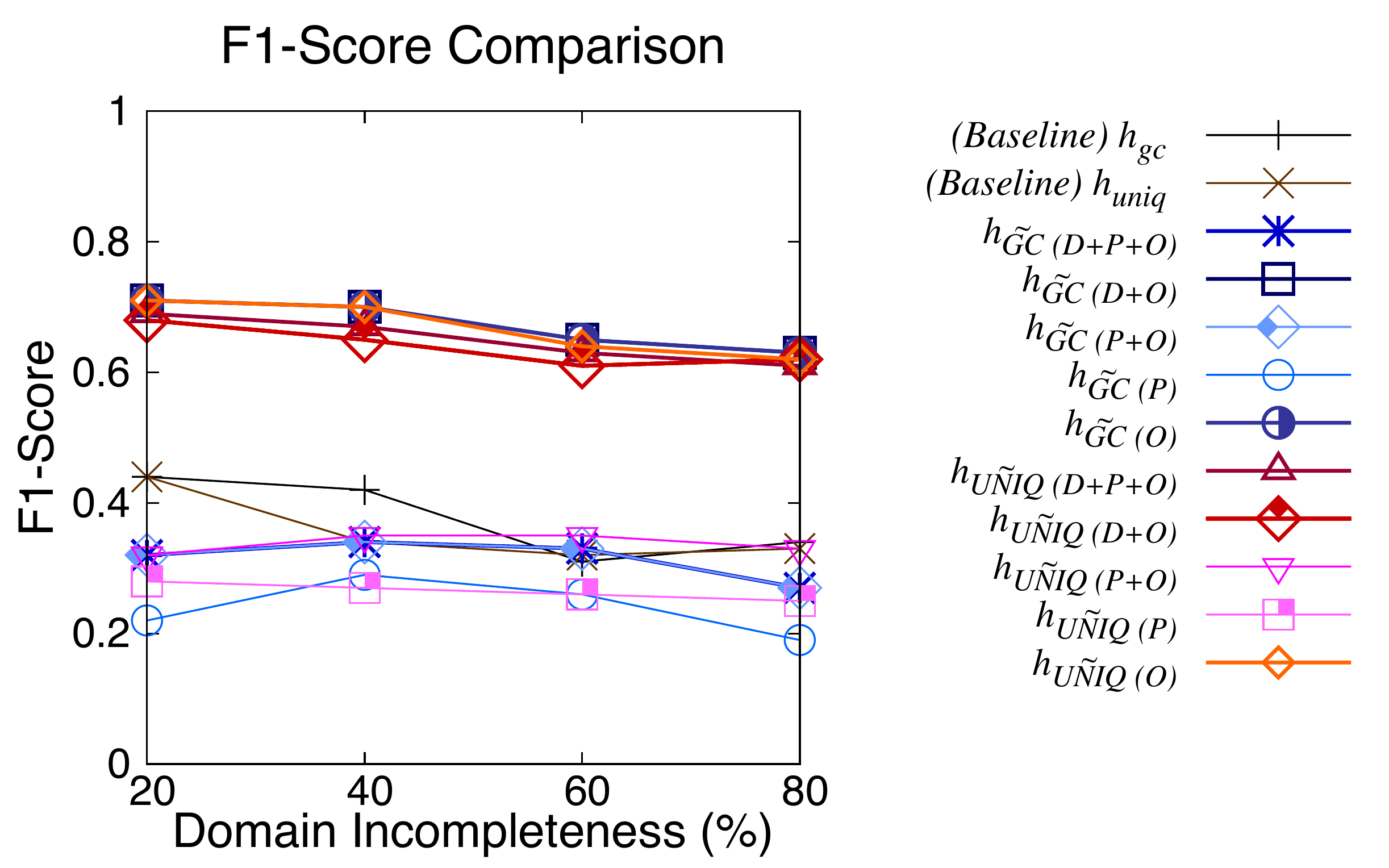}
	\caption{\textit{F1-score} average of all evaluated approaches over the datasets varying the domain incompleteness.}
	\label{fig:F1-score_comparison}
\end{figure}

\subsection{Experimental Results: ROC Space Analysis}\label{subsec:GR_IncompleteDomains_ROCSpace}

We now present our second set of experiments, comparing the results of our enhanced heuristic approaches against the baselines($h_{gc}$ and $h_{uniq}$) (\ref{appendixA:goalrecognition_heuristics}) using ROC space, which shows the trade-off between true positive and false positive results. 
The use of ROC space allows us to see graphically what approach is more accurate for recognizing goals over the datasets we used. Therefore, the approach that has more points in the upper left corner (\idest, true positive rate equals to 100\%, and false positive rate equals to 0\%) is the most accurate approach over the used datasets.
Figure~\ref{fig:ROC_Curve_Uniq} shows ROC space graphs corresponding to recognition performance over the four percentages of domain incompleteness we used in our experiments. 
We aggregate multiple recognition problems for all domains and plot these results in ROC space varying the percentage of domain incompleteness. 

We report the results of our enhanced heuristics ($\mathit{h_{\widetilde{GC}}}$ and $\mathit{h_{\widetilde{UNIQ}}}$) in Figure~\ref{fig:ROC_Curve_Uniq} by using the combination of landmarks that has the best results when applied to our heuristics (\idest, using \textit{definite}, \textit{possible}, and \textit{overlooked} landmarks, as shown in the previous section), against the baselines $\mathit{h_{gc}}$ and $\mathit{h_{uniq}}$ (\ref{appendixA:goalrecognition_heuristics}), that uses just the landmarks extracted by a traditional landmark extraction algorithm, \idest, ignoring the incomplete part of the domain model (possible preconditions and effects).
Although the true positive rate is high for most recognition problems at most percentages of domain incompleteness, as the percentage of domain incompleteness increases, the false positive rate also increases, leading to several problems being recognized with a performance close to the random guess line. 
This happens because the number of extracted landmarks decreases significantly as the number of known preconditions and effects diminishes, and consequently, all candidate goals have few (if any) landmarks.
For example, in several cases in which domain incompleteness is 60\% and 80\%, the set of landmarks is quite similar, leading our enhanced heuristics to return more than one candidate goal as the correct one. 
Thus, there are more returned goals during the recognition process as incompleteness increases. 
These results show that our enhanced heuristics perform better and are more accurate than the baselines. It is possible to see that both our enhanced heuristics aggregate most points in the left corner, while the points for the baseline approaches are closer to (and sometimes below) the random guess line.

\begin{figure*}[t]
\centering
\begin{minipage}[t]{0.24\linewidth}
	\centering
    \includegraphics[width=1\linewidth]{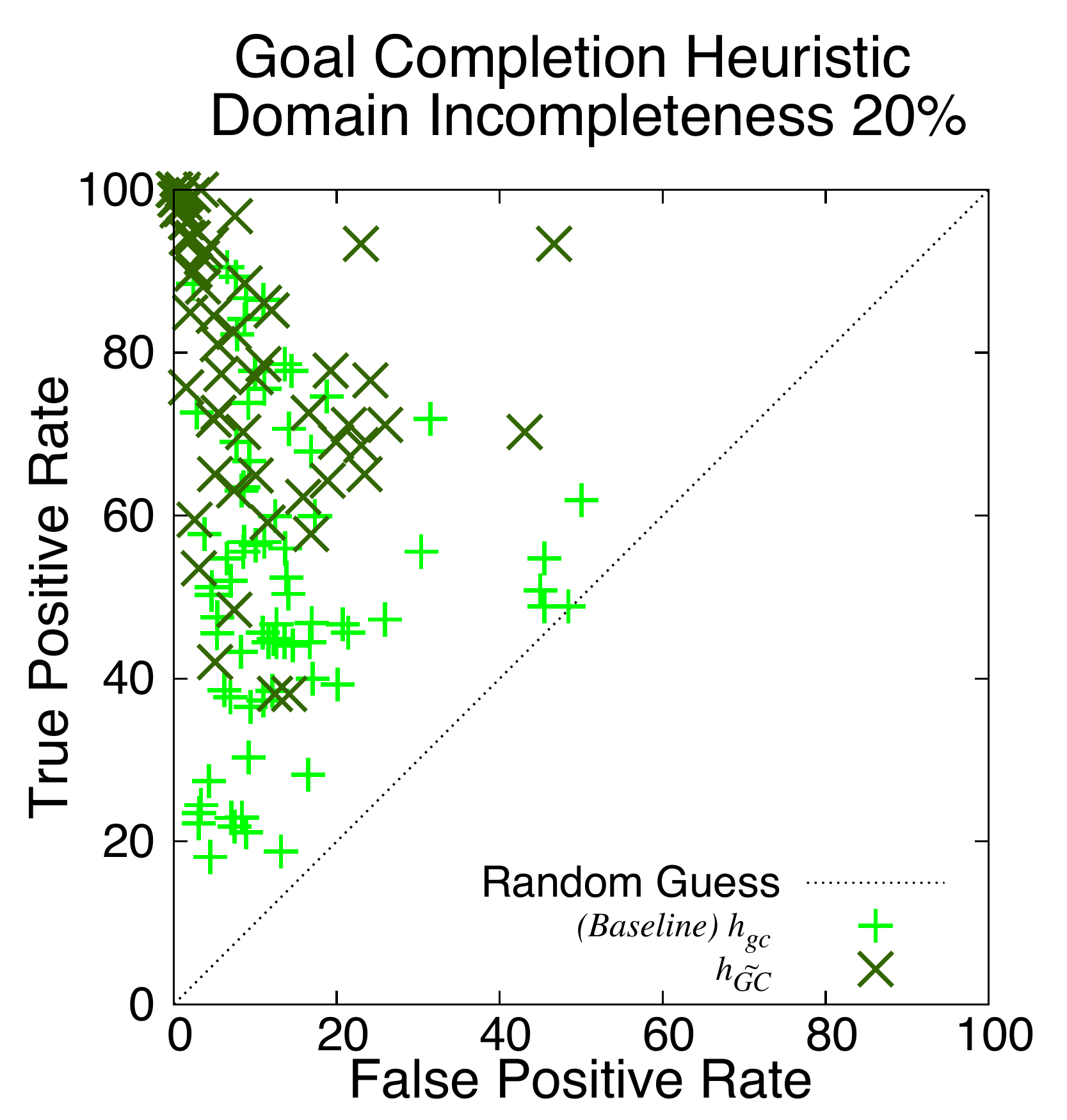}
	\label{fig:ROC_Curve_20_GC}
\end{minipage}
\begin{minipage}[t]{0.24\linewidth}
	\centering
    \includegraphics[width=1\linewidth]{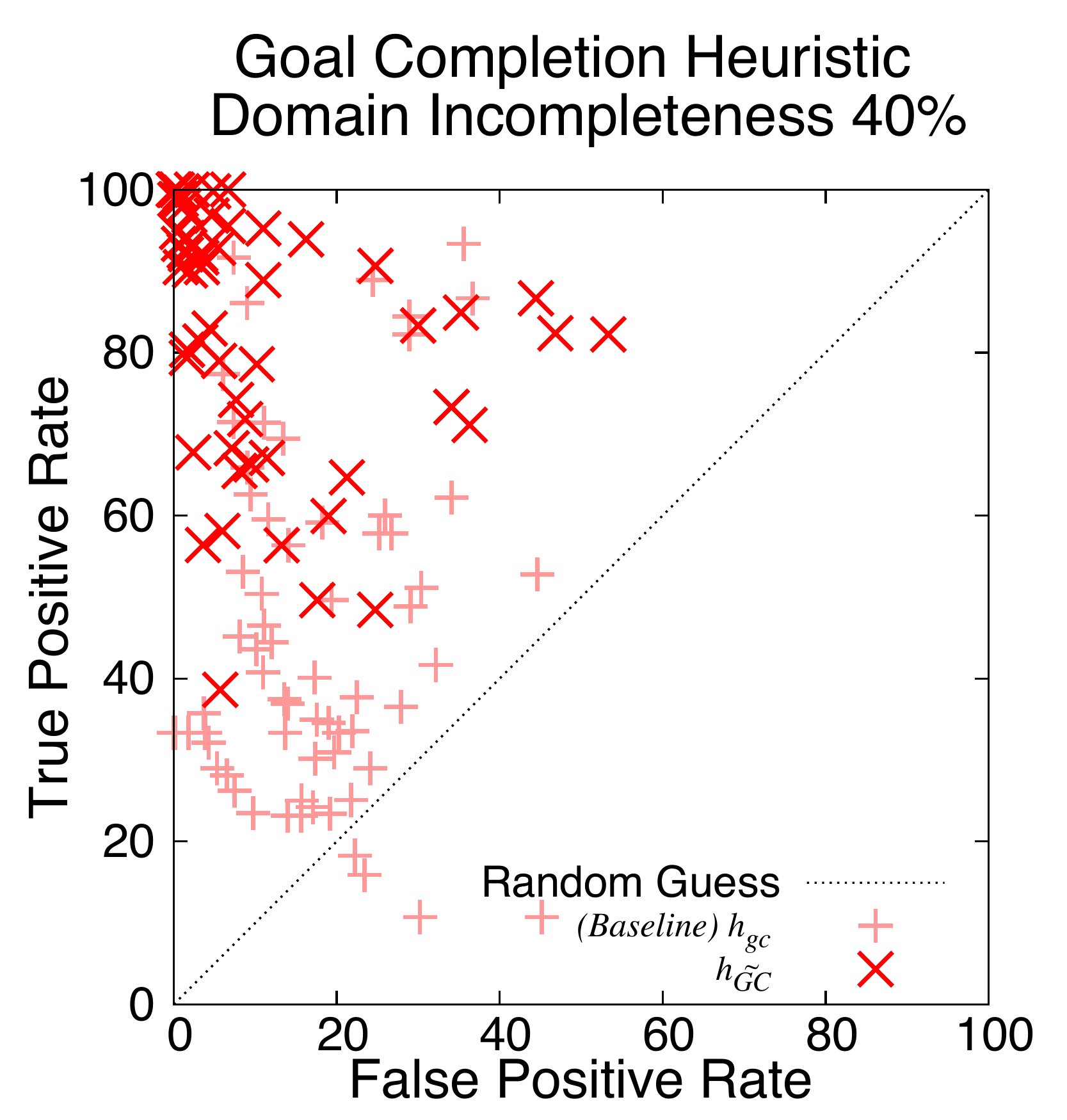}
	\label{fig:ROC_Curve_40_GC}
\end{minipage}
\begin{minipage}[t]{0.24\linewidth}
	\centering
    \includegraphics[width=1\linewidth]{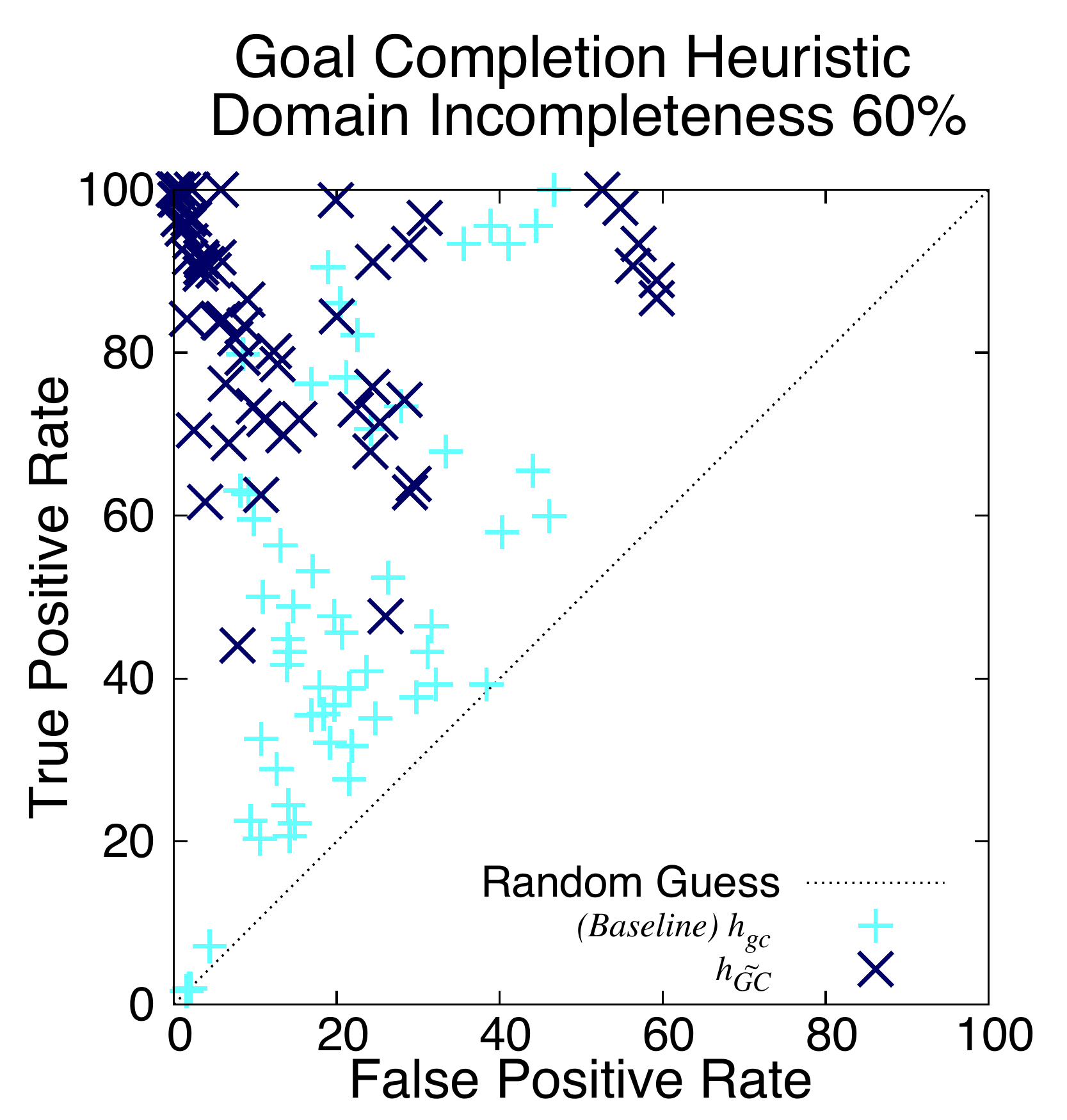}
	\label{fig:ROC_Curve_60_GC}
\end{minipage}
\begin{minipage}[t]{0.24\linewidth}
	\centering
    \includegraphics[width=1\linewidth]{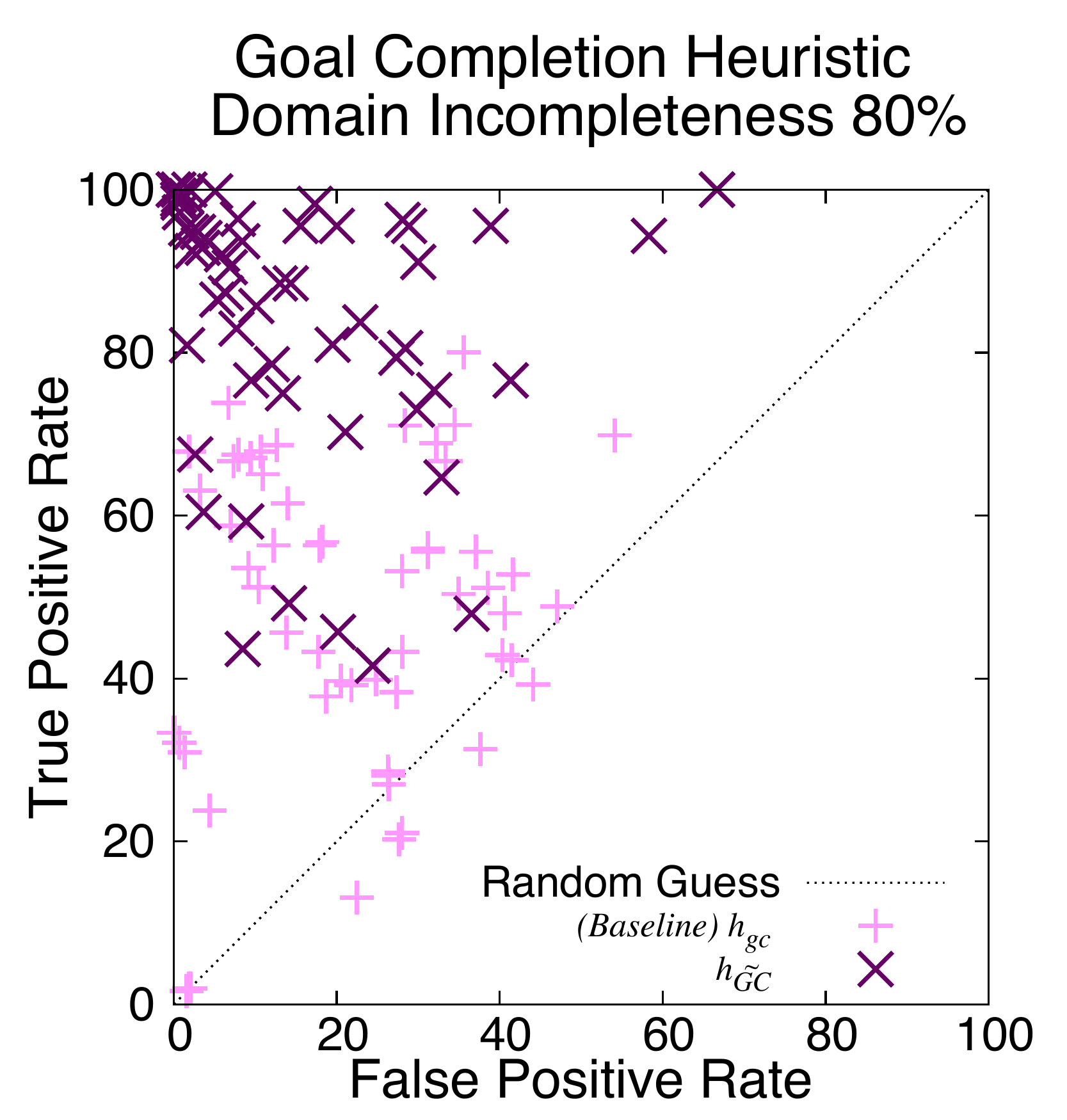}
	\label{fig:ROC_Curve_80_GC}
\end{minipage}

\centering
\begin{minipage}[h]{0.24\linewidth}
	\centering
    \includegraphics[width=1\linewidth]{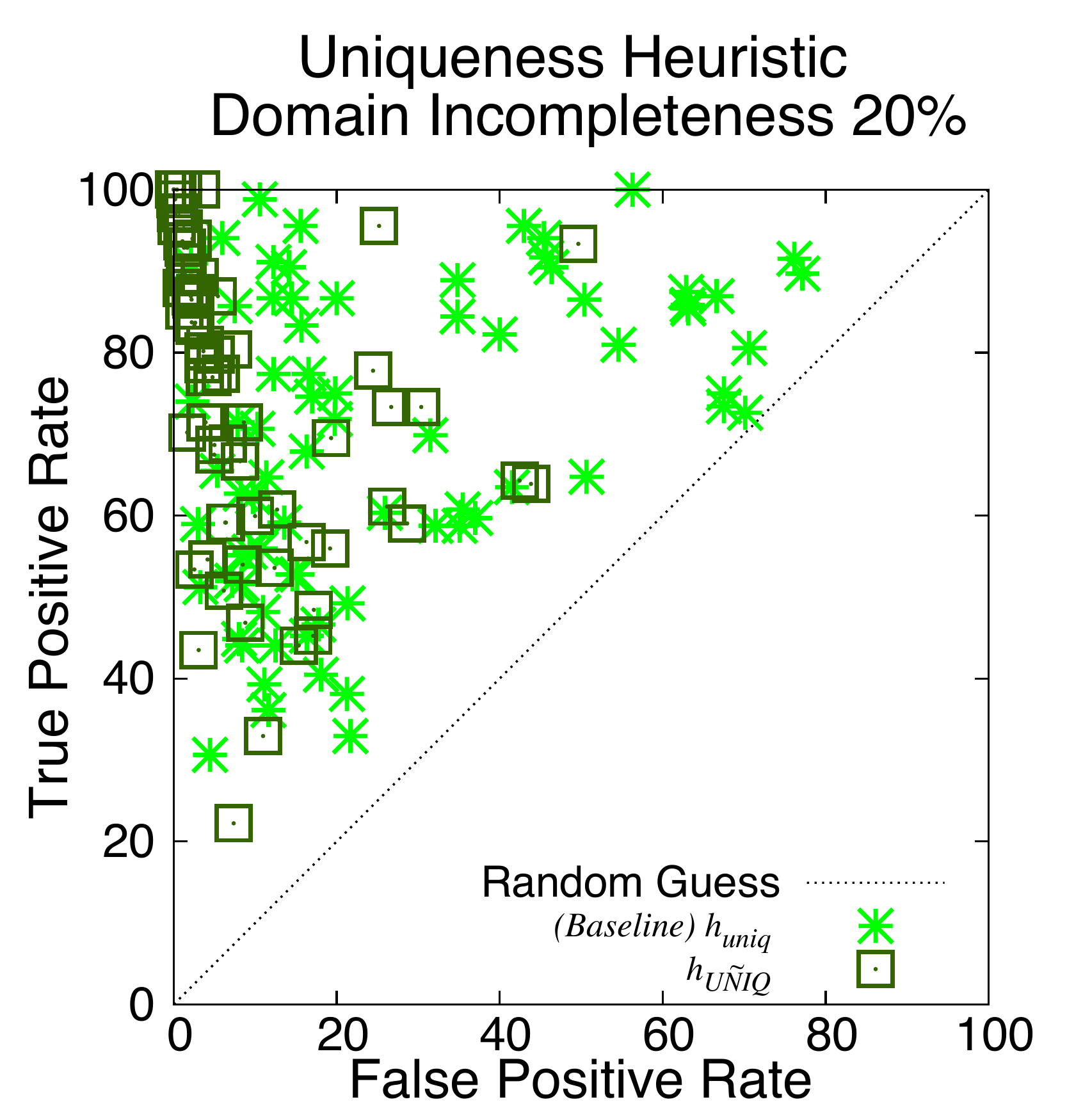}
	\label{fig:ROC_Curve_20_Uniq}
\end{minipage}
\begin{minipage}[h]{0.24\linewidth}
	\centering
    \includegraphics[width=1\linewidth]{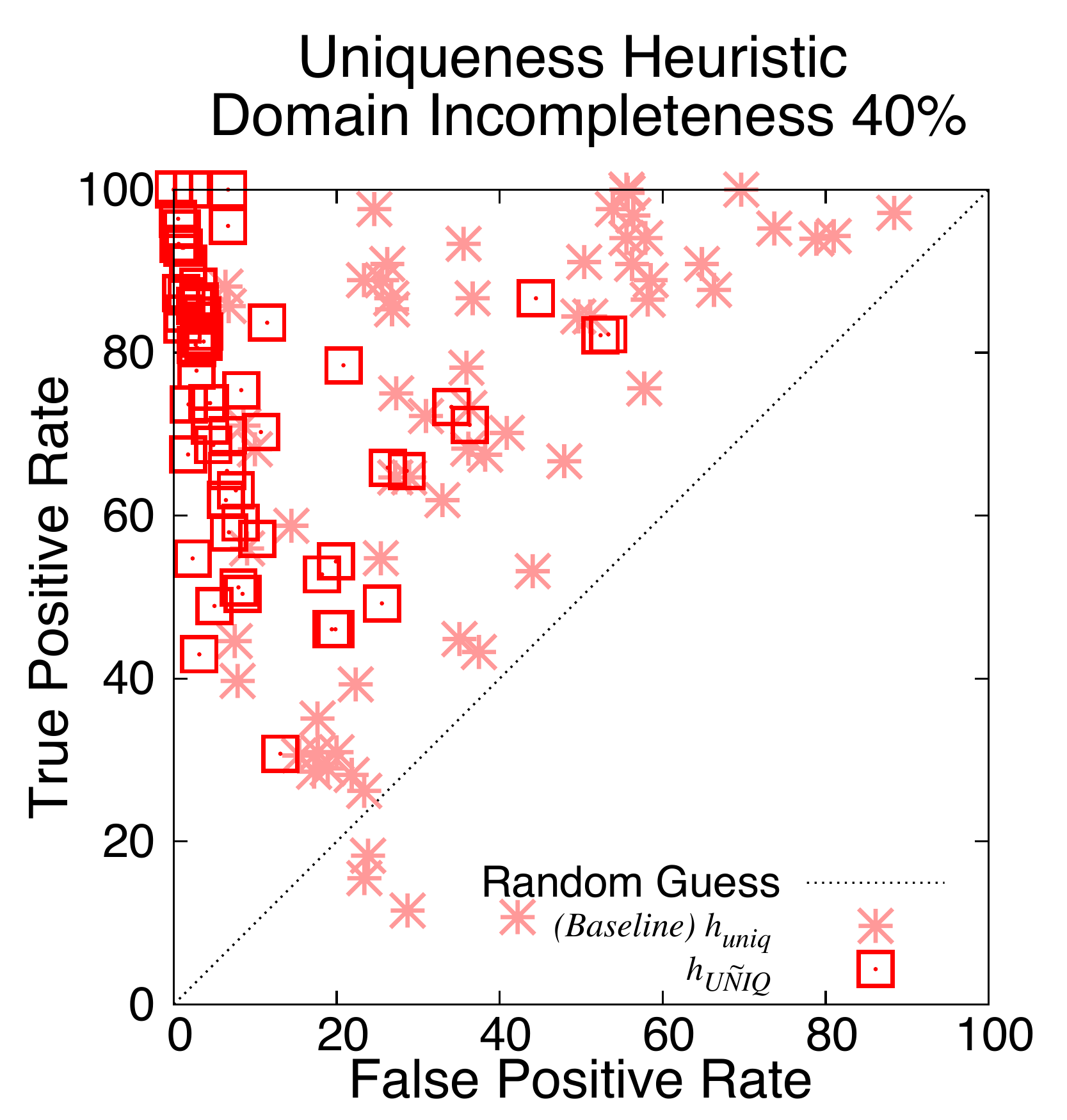}
	\label{fig:ROC_Curve_40_Uniq}
\end{minipage}
\begin{minipage}[h]{0.24\linewidth}
	\centering
    \includegraphics[width=1\linewidth]{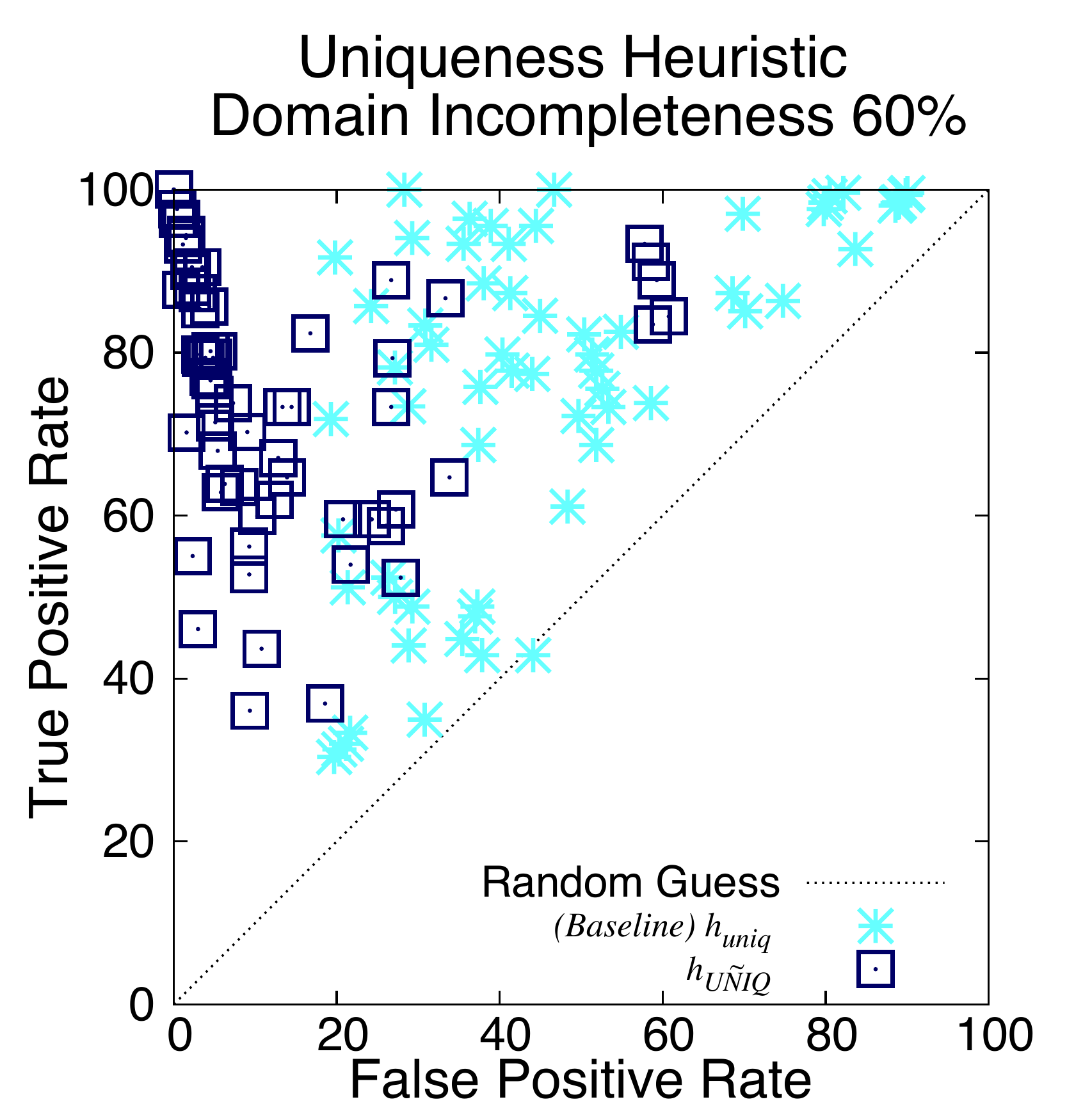}
	\label{fig:ROC_Curve_60_Uniq}
\end{minipage}
\begin{minipage}[h]{0.24\linewidth}
	\centering
    \includegraphics[width=1\linewidth]{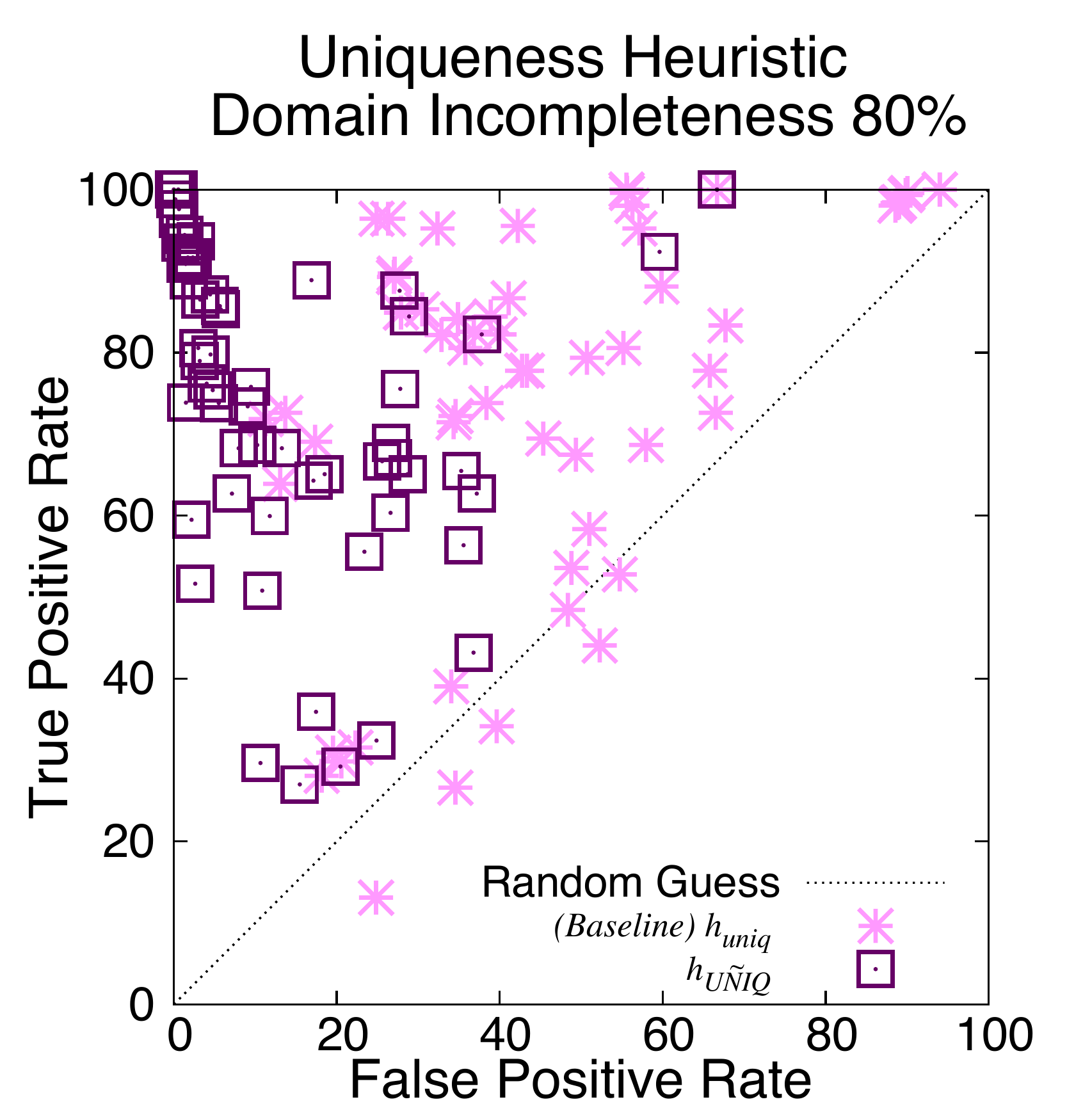}
	\label{fig:ROC_Curve_80_Uniq}
\end{minipage}
\caption{ROC space analysis for all four percentage of domain incompleteness, comparing our enhanced heuristic approaches against the baselines $h_{gc}$ and $h_{uniq}$ (\ref{appendixA:goalrecognition_heuristics}).}
\label{fig:ROC_Curve_Uniq}
\end{figure*}

\newpage
\section{Chapter Remarks}\label{section:GR_IncompleteDomains_Remarks}

In this chapter, we developed novel goal recognition heuristic approaches that cope with incomplete domain models that represent possible preconditions and effects besides traditional complete models where such information is assumed to be known. 
We developed novel recognition heuristics by exploiting the new notions of landmarks over incomplete domain models.
Our new notions of landmarks include that of \textit{possible} landmarks for incomplete domains as well as \textit{overlooked} landmarks that allow us to compensate fast but non-exhaustive landmark extraction algorithms, the latter of which can also be employed to improve existing goal and plan recognition approaches~\cite{PereiraMeneguzzi_ECAI2016,PereiraNirMeneguzzi_AAAI2017}. 
Experiments over thousands of goal recognition problems in fifth-ten planning domain models show two key results of our enhanced heuristics. 
First, these enhanced heuristics are fast and accurate when dealing with incomplete domains at all variations of observability and domain incompleteness. 
The use of novel heuristics frees us from using full-fledged incomplete-domain planners as part of the recognition process. 
Approaches that use planners for goal recognition are already very expensive for complete domains and are even more so in incomplete domains, since they often generate plans taking into consideration many of the possible models, and even then they often fail to generate robust plans for these domains. 
Second, our ablation study shows that our new notions of landmarks have a substantial impact on the accuracy of our heuristics over simply ignoring the uncertain information from the domain model, as we use in the baseline approaches. 
Importantly, the ablation study shows that overlooked landmarks contribute substantially to the accuracy of our approach. 
As future work, we envision such techniques to be instrumental in using learned planning models~\cite{AAAI2018_MasataroAsai_PlanningLatSpace} for goal recognition~\cite{Amado2018}. 

In summary, we have presented in this chapter the following contribution to the goal and plan recognition community.

\begin{enumerate}
    \item We extended the goal and plan recognition problem introduced by Ramírez and Geffner in~\cite{RamirezG_IJCAI2009,RamirezG_AAAI2010}, and defined a new problem for recognizing goals over incomplete domains;
    \item We introduced new notions of landmarks for incomplete domains models, \idest, \textit{definite}, \textit{possible}, and \textit{overlooked} landmarks;
    \item We developed a novel algorithm to extract these new notions of landmarks over incomplete domains; and
    \item We enhanced landmark--based heuristics from the literature~\cite{PereiraNirMeneguzzi_AAAI2017} to cope with incomplete domains and use our notions of landmarks, and showed that these notions of landmarks have a significant impact on the recognition performance over incomplete domain models.
\end{enumerate}    
%
\chapter{Goal Recognition over Nominal Models}\label{chapter:GR_NominalModels}

Existing model--based approaches to goal and plan recognition rely on expert knowledge to produce 
symbolic descriptions of the dynamic constraints domain objects are subject to, and these are assumed to yield correct predictions. 
In this chapter, we develop goal recognition approaches (Sections~\ref{section:GR_NominalModels_Mirroring} and~\ref{section:GR_NominalModels_Counterfactual}) that drop this assumption, and consider the use of \textit{nominal models} that we can be \textit{learned} from observations on transitions from systems with unknown dynamics. 
Leveraging existing work on the acquisition of domain models via \textit{Deep Learning} for \textit{Hybrid Planning}~\cite{SayWZS:ijcai17} we adapt and evaluate existing goal recognition approaches~\cite{RamirezG_AAAI2010,Mor_ACS_16,Kaminka_18_AAAI} to analyze how prediction
error, inherent to system dynamics identification and model learning techniques, have an impact over recognition error rates. 
We evaluate the proposed recognition approaches over \textit{nominal models} empirically in Section~\ref{section:GR_NominalModels_ExperimentsEvaluation}, using three benchmark domains based on the \emph{constrained} Linear–Quadratic Regulator (LQR) problem~\cite{bemporad:2002:lqr}, with increasing dimensions of state and action spaces, and two variations of a non--linear navigation domain proposed by Say~\etal~in~\cite{SayWZS:ijcai17}.
\sigla{LQR}{Linear–Quadratic Regulator}

\section{Problem Formulation}\label{section:GR_NominalModels_Formalism}

Assuming the availability of complete and correct models is considered a strong assumption by the literature of \textit{Control} and \textit{Robotics}~\cite{mitrovic:10:adaptive}, especially when dealing with real-world and practical applications~\cite{LQR_1998,LQR_2013,Kong_Tomi_NominalM_2013,borrelli:17:predictive}, where \textit{actual model} parameters are usually unknown, and sometimes these parameters may change over time due to wear and tear of the physical components of a robot or autonomous vehicles. The state transition of the underlying system dynamics can be obtained from observations on the behavior of other agents~\cite{borrelli:17:predictive}, random excitation, or the simulation of plans and control trajectories derived from \textit{actual models} (Definition~\ref{def:ActualModel}). 
In this thesis, we adopt this stance to define the task of goal recognition over \textit{nominal models}, which models that are estimated (or learned) from past observed state transitions (Definition~\ref{def:NominalModel}).

We formally define, in Definition~\ref{def:goal_recognition_nominalmodels}, the task of goal recognition over Finite-Horizon Optimal Control (FHOC) problems (Section~\ref{section:Background:FHOC}) and \textit{nominal models} (Definition~\ref{def:NominalModel}) by following the formalism of Ramírez and Geffner~\cite{RamirezG_IJCAI2009,RamirezG_AAAI2010}, as follows.

\begin{definition}[\textbf{Goal Recognition Problem over a Nominal Model}]\label{def:goal_recognition_nominalmodels}
A goal recognition problem over a nominal model is given by:
\begin{itemize}
	\item An estimated transition function $\hat{f}(x_k, u_k, w_k)$, such that, $\hat{f}(x_k, u_k, w_k)$ $=$ $\hat{x}_{k+1}$, where $x_{k}$ is a state, $u_{k}$ a control input, and $w_k$ is a random variable;
	\item A cost function $J$; 
	\item An initial state $\mathcal{I}$, \idest, an arbitrary element of the set of states $S$; 
	\item A set of hypothetical candidate goals $\mathcal{G}$, including a correct hidden goal $G^{*}$ (\idest, $G^{*}$ $\in$ $\mathcal{G}$); 
	\item A sequence of observations $Obs = \langle o_1, o_2, ..., o_m\rangle$; and 
	\item A horizon $H$ (\idest, a fixed number of steps).
\end{itemize}	
\end{definition}

For recognizing goals over \textit{nominal} models, we define the sequence of observations $Obs$ to be a partial trajectory of \textit{states} $x \in S$ induced by a policy
$\pi$ (Equation~\ref{eq:policy}) that minimizes the cost function $J$. 
In general, a finite but indeterminate number of
intermediate states may be missing between any two observations $o_i$,
$o_{i+1}$ $\in$ $Obs$. 



In this thesis, we perform the task of goal recognition over \textit{nominal models} in two forms: \textit{online} and \textit{offline} recognition. More specifically, we draw a distinction between
\emph{online}~\cite{baker:09:cognition,Mor_ACS_16} and
\emph{offline} goal recognition, in which the former is a sequence of $m$
goal recognition problems where the observation sequence $Obs$ is obtained incrementally, while in the later the observation sequence $Obs$ is available immediately. 
We borrow the term \textit{judgment point} from Baker et al.~\cite{baker:09:cognition} to refer to the act of solving each of the $m$ goal recognition problems that follow from the arrival of each new observation.

Informally, solving a goal recognition problem requires us to select a \emph{candidate} goal $\hat{G} \in {\cal G}$ such that $\hat{G}=G^*$, on the basis of how well $\hat{G}$ predicts or explains the observation sequence $Obs$~\cite{baker:09:cognition,RamirezG_AAAI2010}. 
Typically, this cannot be done exactly, but it is possible to produce a probability distribution~\cite{RamirezG_IJCAI2009,RamirezG_AAAI2010,NASA_GoalRecognition_IJCAI2015,Sohrabi_IJCAI2016} over the set of hypothetical candidate goals $G$ $\in$ ${\cal G}$ and $Obs$, where the goals that best explain $Obs$ are the most probable ones. 
We illustrate the goal recognition process over \textit{nominal models} in Figure~\ref{fig:GoalRecognition_NominalModels}, following Definition~\ref{def:goal_recognition_nominalmodels}. 
Note that in Figure~\ref{fig:GoalRecognition_NominalModels} we refer to the approximate transition function $\hat{f}$ as ``black box'', since it is not directly accessible in our setting, and represented with a ``black box'' neural network. 
Next, in Section~\ref{section:DNN_NominalModels}, we show how we use neural networks as \textit{nominal models}.

\begin{figure}[h!]
  \centering
  \includegraphics[width=0.65\linewidth]{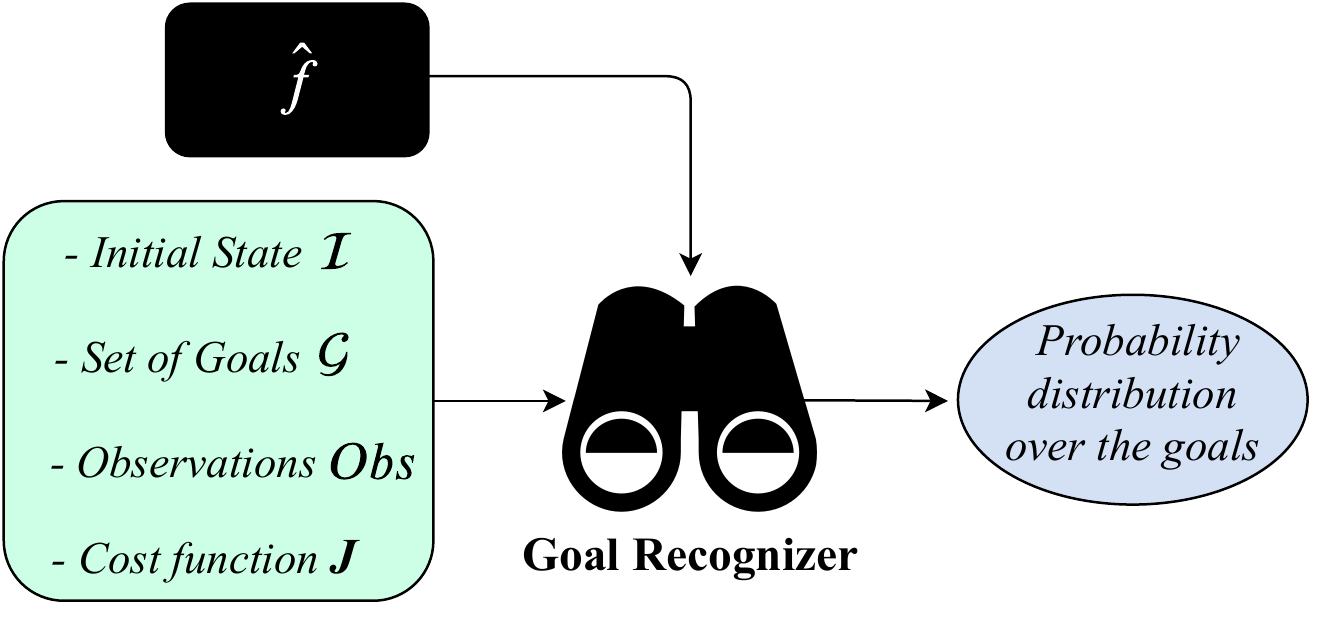}
  \caption{Problem overview to goal recognition over \textit{nominal models}.}
  \label{fig:GoalRecognition_NominalModels}
\end{figure}

\section{DNNs as Nominal Models}\label{section:DNN_NominalModels}
\sigla{DNN}{Deep Neural Network}
\sigla{ReLU}{Rectified Linear Unit}


Artificial Neural Networks have shown to be very effective at learning and approximating linear and non--linear functions from data~\cite{DDNs_Functions_2000}. 
To learn and approximate the dynamics constraints in FHOC problems, and therefore, acquire \textit{nominal models}, we leverage existing work in \textit{Automated Planning} and \textit{Machine Learning} that uses Deep Neural Networks (DNNs) to approximate (linear and non--linear) functions~\cite{SayWZS:ijcai17,WuSS:nips17,SayS:ijcai18}.

Learning the dynamics or the transition between states of a domain model from data can be formalized as the problem of finding the parameters $\theta$ for a function
\[
	\hat{f}(x_k,u_k,w_k; \theta)
\]
\noindent that minimize a given \emph{loss function} ${\cal L}({\cal K}, \theta)$ over a dataset ${\cal K}$ $=$
$\{$ $(x$, $u$, $y)$ $|$ $y = f(x,u,w)$, $y \in S$ $\}$, where $x$ is the state, $u$ is the control input, $w$ represents a random variable, and $y$ is the resulting state after applying $f(x,u,w)$. 
In this thesis, we use the procedure and neural architecture reported by Say~\etal~\cite{SayWZS:ijcai17} to acquire $\hat{f}$, namely, a DNN using Rectified Linear Units (ReLUs)~\cite{nair:10:icml} as the activation function,
given by $h(x)= max(x,0)$. A DNN is densely connected, and consists of $L$ layers, $\theta$ $=$ $(\matr{W},$
$\matr{b})$, where $\matr{W} \in \thereals^{d \times d \times L}$ and
$\matr{b}$ $\in$ $\thereals^{1\times L}$.
We use the loss function proposed by Say et al.~\cite{SayWZS:ijcai17}, as follows:
\[
\sum_{j}^{|{\cal D}|} || \hat{y}_j - y_j|| + \lambda \sum_{l}^{L} ||\matr{W}_l||^2
\]
\noindent where $\hat{y}_j$ $=$ $\hat{f}(x_j,u_j;\theta)$ and $\lambda$ is
a Tikhonov $L^2$-regularization hyper-parameter~\cite{goodfellow:16:dl}. 
In the
context of optimization for Machine Learning, using this regularization technique
induces the optimization algorithm to overestimate the variance of the dataset,
so weights associated with unimportant directions of the gradient of ${\cal L}$ decay
away during training.
As noted by Goodfellow et al.~\cite{goodfellow:16:dl},
ReLU networks represent very succinctly a number of linear approximation surfaces that is exponential in the number of layers $L$~\cite{montufar:14:nips}. 
This strongly suggests that ReLU networks can displace Gaussian process estimation~\cite{rasmussen:02:gaussian} as a good initial choice to approximate complex non--linear stationary random processes, such as those in Equation~\ref{eq:dynamics}, with the further advantage that, as demonstrated in~\cite{yamaguchi:16:icra,SayWZS:ijcai17}, DNNs can be directly used in
Equation~\ref{eq:rhc_transition}, so existing optimization algorithms can be used off--the--shelf. 
Figure~\ref{fig:DNN_NominalModel} illustrates the neural architecture reported by Say~\etal~\cite{SayWZS:ijcai17}, in which we use to represent \textit{nominal models}. 

\begin{figure}[h!]
  \centering
  \includegraphics[width=0.7\linewidth]{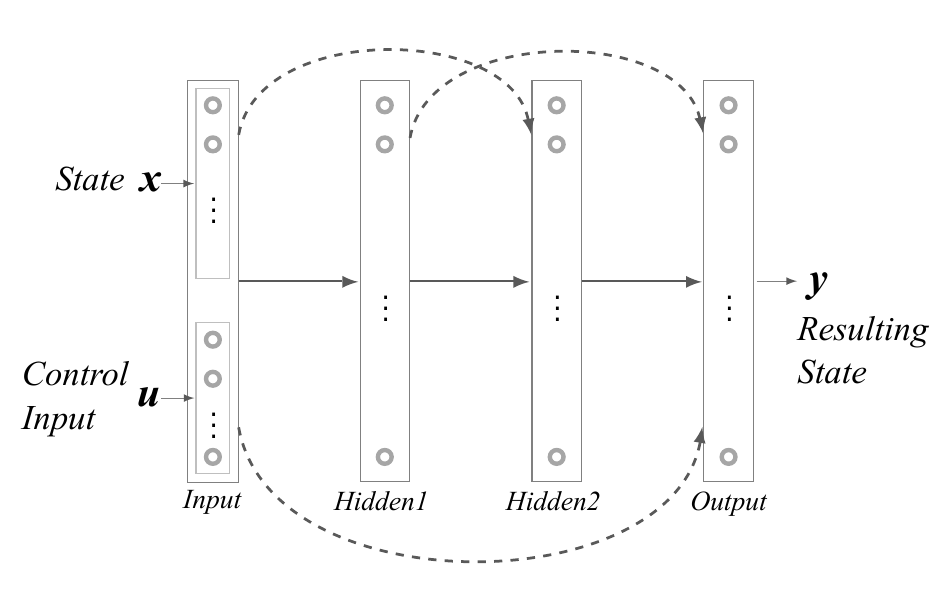}
  \caption{An example of a neural network with 4 layers (input layer, 2 hidden layers, and output layer), adapted from~\cite{SayWZS:ijcai17}. Arrows represent full connections between the input and output nodes. Every hidden layer output unit is passed through a ReLU unit. Dashed arrows are optional dense connections.}
  \label{fig:DNN_NominalModel}
\end{figure}

\section{Probabilistic Goal Recognition}\label{subsection:GR_NominalModels_Probs}

We follow Ram{\'{\i}}rez and Geffner~\cite{RamirezG_AAAI2010} (R\&G10) and adopt the modern probabilistic
interpretation of Dennet's \textit{principle of rationality}~\cite{dennett:1983},
the so--called \textit{Bayesian Theory of Mind}, as introduced
by a series of ground-breaking cognitive science studies by Baker
et al.~\cite{baker:09:cognition}\cite[Chapter 7]{ActivityIntentPlanRecogition_Book2014}.
R\&G10
set the probability distribution over the set of hypothetical goals ${\cal G}$ and the observation sequence $Obs$ introduced above to be the Bayesian posterior conditional probability


\begin{align}
\label{eq:posterior}
P(G \mid Obs) =  \alpha P(Obs \mid G) P(G)
\end{align}
\noindent where $P(G)$ is a \textit{prior} probability assigned to goal $G$, $\alpha$ is a normalization factor inversely proportional to the probability of $Obs$, and $P(Obs \mid G)$ is
\begin{align}
P(Obs \mid G) = \sum_{\pi} P(Obs \mid \pi) P(\pi \mid G)
\label{eq:likelihood}
\end{align}

\noindent $P(Obs \mid \pi)$ is the probability of observing $Obs$ by executing a policy (or a plan) $\pi$ and $P(\pi \mid G)$ is the probability of an agent choosing plan $\pi$ to achieve the goal $G$.
Crucially, this later probability is defined to be a function that
compares a measure of the efficiency of $\pi$ with some suitably defined baseline of rationality, ideally the optimal plan or policy for $G$.
In Sections~\ref{section:GR_NominalModels_Mirroring}~and~\ref{section:GR_NominalModels_Counterfactual}, we discuss two well--known existing approaches to approximate Equation~\ref{eq:likelihood}, both reasoning over \textit{counterfactuals}~\cite{pearl:09:causality} in different ways.
These approaches frame in a probabilistic setting the so--called \emph{but-for} test of causality~\cite{halpern:16:causality}.
That is, if a candidate goal $G$ is to be considered the \textit{cause} for observations $Obs$ to happen, evidence of $G$ being \textit{necessary} for $Obs$ to happen is required. 
The changes to existing approaches are motivated by us wanting to retain the ability to \textit{compute} counterfactual trajectories when transition functions cannot be directly manipulated.

\section{Goal Recognition as Nominal Mirroring}\label{section:GR_NominalModels_Mirroring}

\textit{Mirroring}~\cite{Mor_ACS_16} is an online goal recognition approach that works on both continuous and discrete domain models. 
For each of the candidate goal $G$ in ${\cal G}$, Halpern's \emph{but-for} test is implemented by comparing two plans: an \textit{ideal} plan and the \emph{observation-matching} plan ($O$-plan). 
Ideal plans are optimal plans computed for every candidate goal $G$ in ${\cal G}$ from the initial state $\mathcal{I}$, which are \textit{pre-computed} before the recognition process starts. 
The $O$-plan is also computed for every pair $({\cal I}, G)$ and it is required to visit every state in the observation sequence $Obs$. 
$O$-plans are made of a \emph{prefix}, that results from concatenating the $O$-plans computed for previous judgment points~\cite{baker:09:cognition}, and a \emph{suffix}, a plan computed from the last observed state to each candidate goal $G$.
The \emph{but-for} test is implemented by making of use of Theorem 7 in~\cite{RamirezG_IJCAI2009}, that amounts to considering a candidate $G$ to be \emph{necessary} for $Obs$ to happen, if the cost of optimal plans and
those consistent with the observation $Obs$ are the same. 
Vered et al.~\cite{Mor_ACS_16} show that $O$-plans are indeed consistent with $Obs$ so Ram{\'{\i}}rez and Geffner's results apply. 
The test was later cast in a probabilistic framework by Kaminka et al.~\cite{Kaminka_18_AAAI}, with Equation~\ref{eq:likelihood} becoming

\begin{align}
\label{eq:o_pi_matching_error}
P(Obs \mid G) = [1 + \epsilon(\pi_{Obs,G},\pi_G)]^{-1}
\end{align}

\noindent $\epsilon(\pi_{Obs,G},\pi_G)$ above is the \textit{matching error} of $\pi_G$, the ideal plan for $G$
with regard to $\pi_{Obs,G}$, the $O$-plan for the observations.
Under the assumption that $w$ is a random variable $w$ $\sim$ ${\cal N}(0, \sigma)$ with values given by
a Gaussian distribution with mean $0$ and standard deviation $\sigma$,
$\epsilon$ can be used to account for the influence of $w$ as long as 
$\sigma$ remains an order
of magnitude smaller than the values given by $f(x,u,0)$. Kaminka et al.~\cite{Kaminka_18_AAAI} define the \textit{matching error} $\epsilon$ as the sum of the squared errors between states in the trajectory of $\pi_G$,
and those found along the trajectory of $\pi_{Obs,G}$. Under the second assumption that the selected
ideal plans for a goal $G$ are the most likely too, Kaminka et al. $\epsilon$ is an unbiased
estimator for the likelihood of $\pi_{Obs,G}$.

Having established the suitability of Kaminka et al.~\cite{Kaminka_18_AAAI} means to bring about the \emph{but--for}
test to FHOC problems, we now describe how we depart from their method to obtain $O$-plans
$\pi_{Obs,G}$. In this thesis, for online goal recognition we construct $\pi_{Obs,G}$ by calling a planner once for
each \emph{new} observation $o$ added to $Obs$ \emph{and} candidate goal $G$, rather than just once per candidate goal $G$ as proposed by Kaminka et al.~\cite{Kaminka_18_AAAI}.
In doing so, it allows us to enforce consistency with observations $Obs$, since the couplings between states, inputs and perturbation in $\hat{f}(x,u,w)$ are no longer available so we can influence them with additional constraints, but are rather ``hidden'' in the network parameters.
As the first observation $o_1$ is obtained, we call a planner to solve Equations~\ref{eq:rhc_objective}--\ref{eq:constraints} (from Chapter~\ref{chapter:Background}), setting the initial state $x_0$ to ${\cal I}$ and $x_N$ to $x_{o_1}$, the state embedded in $o_1$. 
The resulting trajectory $m^{1}_{Obs}$ is then used to initialize $\pi_{Obs,G}^{-}$ $=$ $\langle m^{1}_{Obs} \rangle$. 
We then invoke the planner again, this time setting $x_0=x_{o_1}$ and some suitably defined constraints such that $x_N \in S_G$ for every candidate goal $G$. 
The resulting trajectories $m^{1}_G$ are used to define the $O$-plans $\pi_{Obs,G}$ $=$ $\pi_{Obs,G}^{-}$ $\oplus$ $m^{1}_G$, which are compared with the pre-computed ideal plans $\pi_G$ to evaluate $P(Obs|G)$, according to Equation~\ref{eq:o_pi_matching_error}.
As further observations $o_i$, $i>1$, are received, we obtain trajectories $m^{i}_{Obs}$ as above but setting $x_0=x_{o_{i-1}}$ and $x_N$ to $x_{o_i}$, which are used to update $\pi_{Obs,G}^{-}$ setting $\pi_{Obs,G}^{-}$ $=$ $\pi_{Obs,G}^{-}$ $\oplus$ $m^{i}_{Obs}$. 
Trajectories $m^{i}_G$ are obtained by setting initial states to $x_{o_i}$, and concatenated to the updated $\pi_{Obs,G}^{-}$ to obtain the $O$-plan for the $i$-th judgment point. 
As an extension of the original \textit{Mirroring} approach developed by Vered~\etal~\cite{Mor_ACS_16}, and an adaptation of the probabilistic framework of Kaminka et al.~\cite{Kaminka_18_AAAI}, we call this approach as \textit{Nominal Mirroring}, and denote it as \nominalmirroring.

Thus, given the plans $\pi_{Obs,G}$ and $\pi_G$ for every goal $G \in \mathcal{G}$, we can then try to maximize $P(Obs \mid \pi)$, calculating the \textit{matching error} $\epsilon(\pi_{Obs,G},\pi_G)$ for these two plans. 
To calculate the \textit{matching error} $\epsilon$, we use a state-distance metric $E$ over the plans $\pi_{Obs,G}$ and $\pi_G$. 
Here, we use the Euclidean distance as the state-distance metric $E$, like Kaminka et al. have used for goal recognition over continuous domains~in~\cite{Kaminka_18_AAAI}. 
As a result, the best matching error for $\epsilon(\pi_{Obs,G},\pi_G)$ is 0 when these plans are identical (i.e., state-variables $x_{k}$ for both plans with exactly the same values) according to the state-distance metric $E$. 
Therefore, if $\epsilon(\pi_{Obs,G},\pi_G) = 0$, then $P(Obs \mid \pi)$ is equal to $1$ (Equation~\ref{eq:o_pi_matching_error}). 

Figure~\ref{fig:GR_NominalMirroring} illustrates graphically how we calculate the \textit{matching error} $\epsilon$ between ideal plans and $O$-plans for recognizing goals over \textit{nominal models}. 
For example, consider that the \textit{matching error} $\epsilon$ between the $O$-plan $\pi_{Obs,G}$ and the ideal $\pi_G$ in Figure~\ref{fig:GR_NominalMirroring} is 0.5, we can see that the plans are similar, and thus, we can use $\epsilon$ to compute $P(Obs|\pi)$ (Equation~\ref{eq:o_pi_matching_error}), as follows: $P(Obs \mid \pi) = [1 + 0.5]^{-1} = 0.66$.

\begin{figure}[h!]
  \centering
  \includegraphics[width=0.7\linewidth]{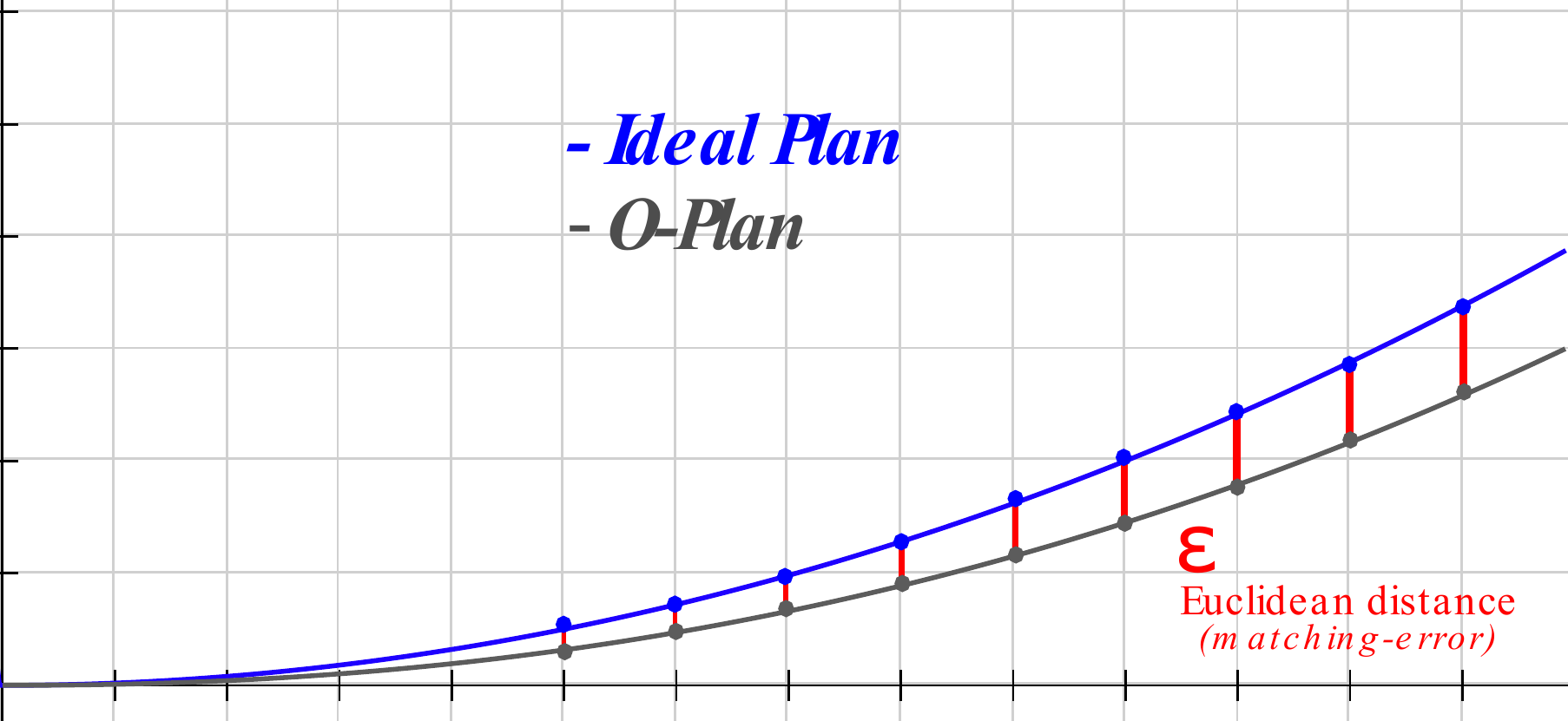}
  \caption{\textit{Nominal Mirroring} approach example.}
  \label{fig:GR_NominalMirroring}
\end{figure}

To analyze the computational complexity of \nominalmirroring, we use as a baseline Vered et al.'s~\cite{Mor_ACS_16}
original \textit{Mirroring} approach, which requires $|\mathcal{G}|$ calls to a planner per observation, and $|\mathcal{G}| + (|\mathcal{G}||Obs|)$ calls overall.
In contrast, our \nominalmirroring~requires  $|\mathcal{G}|+1$ calls to a planner
per observation, and $|\mathcal{G}| + |Obs| + (|\mathcal{G}||Obs|)$ calls overall.
This is a slight overhead which, on the basis of the results in Section~\ref{section:GR_NominalModels_ExperimentsEvaluation}, seems to be amortized enough by the accuracy and robustness of our new method.

\section{Goal Recognition Based on Cost Differences}\label{section:GR_NominalModels_Counterfactual}


We now develop a novel goal recognition approach over \textit{nominal models} based on cost differences, inspired by the probabilistic framework of R\&G10. 
In~\cite{RamirezG_AAAI2010}, R\&G10 implement Halpern's \textit{but-for} test~\cite{halpern:16:causality} by determining whether plans exist that, while achieving $G$, either guarantee that $Obs$ happens, or prevent it from happening, the later being the \textit{counterfactual} plan~\cite{pearl:09:causality}. 
When no such plans exist, we obtain proof of $G$ not being \textit{sufficient} cause for $Obs$. 
Typically though, goals remain feasible, yet costs of plans change, making $G$ less likely to be the cause of $Obs$ whenever the cost of achieving $G$ is smaller when $Obs$ does not take place. 
We retain this definition of the test, yet we do not obtain such plans from the solution of a suitably modified version of $\hat{f}(x,u,w)$, as R\&G10 do, by manipulating preconditions and effects of actions. 
That is not possible in \textit{nominal models} setting, since couplings between state variables, actions and perturbation are not represented explicitly. 
Instead, to overcome such domain characteristic, we modify the cost function $J$ by introducing \emph{artificial potential fields}~\cite{warren:89:icra} centered on each observation $o$ in $Obs$ that increase or decrease costs for valid trajectories. 

Let $J_{\pi}(x_0; G)$ be the cost functions for each of the hypothetical candidate goals $G$ $\in$ ${\cal G}$. 
For a given observation sequence $Obs$ $=$ $(o_1,$ $\ldots$, $o_m)$, we define two different cost functions: $J^{+}$ that aims to \textit{increase} the costs along the trajectories by \textit{complying with the observation sequence} $Obs$; and $J^{-}$ that aims to \textit{decrease} the costs by \textit{avoiding the observation sequence} $Obs$. 
Formally, $J^{+}$ and $J^{-}$ are as follows:
\begin{align}
\label{eq:Jplus}
J_{\pi}^{+}(x_0; G, Obs) = 
 g(x_N)  + &   \sum_{k=0}^{N-1} \biggl(g(x_k, \pi(x_k)) + \sum_{j=1}^{m} h(x_k, o_j)\biggr)
\end{align}
\begin{align}
\label{eq:Jminus}
J_{\pi}^{-}(x_0; G, Obs) = 
g(x_N) + & \sum_{k=0}^{N-1} \biggl(g(x_k, \pi(x_k)) - \sum_{j=1}^{m} h(x_k, o_j)\biggr)
\end{align}
\noindent where $h(x,o)$ is a potential field function
\begin{align}
	\label{eq:potential}
	h(x,o) = 1 - exp\{-\gamma\ \ell(x-o)\}
\end{align}
\noindent where the exponent is given as some suitably defined function over the
difference of vectors $x$ and $o$. 
Recall that both $x$ and $o$ $\in$ $\mathbb{R}^d$.
For this thesis, we have chosen the sum \textit{smooth abs} functions
\[
\ell(u) = \sum_{i}^{d} \sqrt{u_{i}^{2} + p^2} + p
\]
where $u_i$ is the $i$-th component of the vector $x-o$ and $p$ is a parameter we set to $1$. 
These functions have been reported by Tassa et al.~\cite{tassa:12:iros} to avoid numeric issues in trajectory optimization over long horizons.
The potential field is used in Equation~\ref{eq:Jplus} to increase, with respect to $J_{\pi}(x_0;G)$, the cost of those trajectories that stay away from $Obs$. 
Conversely, in Equation~\ref{eq:Jminus} it reduces the cost for trajectories that avoid $Obs$. 
Let $T^+$ and $T^-$ be sets of $r$ best trajectories $t_{i}^{+}$, $t_{i}^{-}$ for either cost function, we introduce a \textit{cost difference function}, denoted as $\Delta$, as follows:
\begin{align}
	\label{eq:delta_O_G}
	\Delta(Obs,G) = \frac{1}{r} \sum_{i}^r J_{t^{-}_i}(x_0; G) - J_{t^{+}_i}(x_0; G)
\end{align}
\noindent where $J_{t^{-}_i}(x_0; G)$, and respectively $J_{t^{+}_i}(x_0; G)$, is the result
of evaluating the \emph{original} cost function $J_\pi(x_0;G)$ setting $\pi$ to
be the deterministic policy that follows from trajectories $t_{i}^{+}$ and $t_{i}^{-}$.
We define the likelihood of $Obs$ given $G$ as R\&G10~\cite{RamirezG_AAAI2010} do, as follows:
\begin{align}
	\label{eq:likelihood_delta}
	P(Obs \mid G) = [1 + \exp\{-\beta\ \Delta(Obs,G)\}]^{-1}
\end{align}
\noindent with the proviso that $\beta$ needs to be adjusted so as to be the inverse
of the order of magnitude of $\Delta(Obs,G)$.
Figure~\ref{fig:GR_CostDifference} graphically illustrates the \textit{cost difference} computation between $J^{+}$ and $J^{-}$. 
Note that $J^{+}$ computes trajectories that comply with the observation sequence $Obs$, while $J^{-}$ computes trajectories that aim to avoid achieving the observed states in $Obs$.

\begin{figure}[h!]
  \centering
  \includegraphics[width=0.45\linewidth]{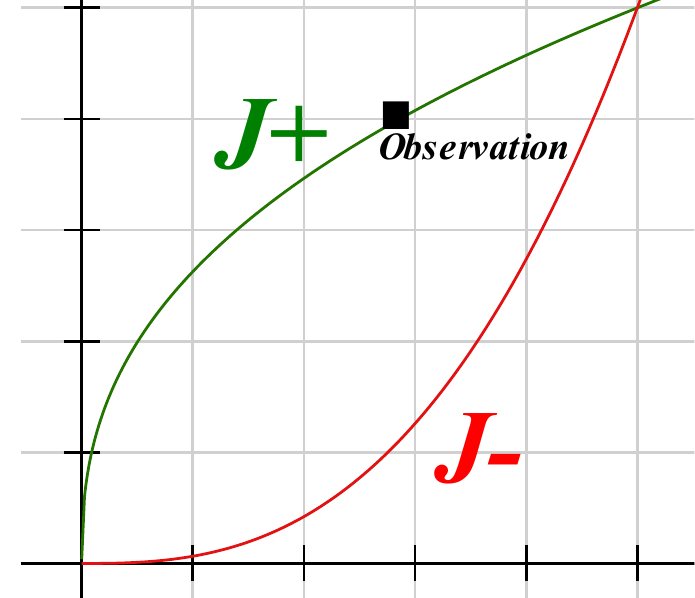}
  \caption{\textit{Cost Difference} approach example.}
  \label{fig:GR_CostDifference}
\end{figure}

In comparison to our previous approach (\nominalmirroring), the computational complexity of our cost difference approach is slightly different, it requires $2|\mathcal{G}|$ calls to a planner per observation, and $2 |\mathcal{G}||Obs|$ calls overall.
While all methods require a number of calls linear on $|\mathcal{G}||Obs|$, evaluating Equation~\ref{eq:likelihood_delta} tends to be more expensive, as $J^{+}$ and $J^{-}$ contain several non--linear terms and their derivatives are also costlier to compute. 
This is relevant as most if not all of the optimization algorithms that we can use to solve Equations~\ref{eq:rhc_objective}--\ref{eq:constraints} rely on gradient--based techniques~\cite{calafiore:14:optimization}.

\section{Experiments and Evaluation}\label{section:GR_NominalModels_ExperimentsEvaluation}

We now present the experiments and empirical evaluations we carried out of the goal recognition approaches proposed in the previous sections.
Sections \ref{subsection:Domains_NM} and \ref{subsection:Datasets_NM} introduce the benchmark domains we used and describe how we generated the datasets for learning the transition function and the goal recognition tasks.
In Sections~\ref{subsection:Learning_Results} and \ref{subsection:GoalRecognition_Results}, we report the quality of \textit{nominal models} obtained and the performance of our goal recognition approaches over both \textit{actual} and \textit{nominal models} with \textit{linear} and \textit{non--linear} system dynamics.

\subsection{Domains}\label{subsection:Domains_NM}

For experiments and evaluation, we use both \textit{linear} and \textit{non--linear} domain models. 
For \textit{linear} domains, we use three benchmark domains based on the constrained Linear–Quadratic Regulator (LQR) problem~\cite{bemporad:2002:lqr}, a general and well--understood class of \textit{Optimal Control} problems with countless practical applications, such as automotive active suspension control systems~\cite{LQR_1998}, optimal control systems for Unmanned Air Vehicle (UAV)~\cite{LQR_2013}, among others~\cite{LQR_BETETO2018422}.
As for \textit{non--linear} domains, we use two other benchmark domains based on a navigation domain that has a highly \textit{non--linear} transition function, also used in \cite{SayWZS:ijcai17,WuSS:nips17} to evaluate planning approaches over learned and approximate domain models.

To represent the transition between states for the LQR--based domains, we use a
discrete-time \emph{deterministic}, \emph{linear} dynamical system
\begin{align}
    \label{eq:lqr}
    x_{{k+1}} = Ax_{k} + Bu_{k}
\end{align}
\noindent and trajectories must minimize the \emph{quadratic} cost function
\begin{align}
    \label{eq:lqr_J}
    {\displaystyle
    J = x_{N}^T Qx_{N}^T + \sum \limits_{k=0}^{N-1}\left(x_{k} Q x_{k}^{T} + u_{k} R u_{k}^{T}\right)}
\end{align}
\noindent where $Q \in \mathbb{R}^{d\times d}$ and $R \in \mathbb{R}^{p\times p}$.
All matrices are set to $I$ of appropriate dimensions, but $R$ which is set to $10^{-2}I$.
Action inputs $u_k$ are subject to simple ``box'' constraints of the form
$lb(u)$ $\leq$ $u_k$ $\leq$ $ub(u)$. 
We note that he unconstrained LQR problem has an analytical solution~\cite{Bertsekas_DP_17} as the cost function is globally convex and dynamics are linear. 
This simplifies the analysis of the  behavior of optimization algorithms for training DNNs and computing trajectories. 
We consider two types of tasks. 
In the first LQR--based domain, which we call \textbf{1D--LQR--Navigation}, states $x_k \in \mathbb{R}^2$ represent the position and velocity of a particle, control inputs $u_k \in \mathbb{R}$ represent instant acceleration. 
Goal states require reaching a given position, yet leave terminal velocities unconstrained. 
The second LQR--based domain, \textbf{2D--LQR--Navigation}, has higher dimensionality as states $x_k \in \mathbb{R}^{4n}$ represent position and velocities of $n$ vehicles on a plane, and control inputs $u_k \in \mathbb{R}^{2n}$ represent instant accelerations along the $x$ and $y$ axis. 
As in the previous domain, goal states only require reaching specific positions. 
For the second domain, we have two variations of this domain: 2D LQR--based navigation domain with $n == 1$, a domain with a single vehicle, denoted as \textbf{2D--LQR--Navigation--SV}, and another variation with $n == 2$, a domain with multiple-vehicles, denoted as \textbf{2D--LQR--Navigation--MV}.

As for the navigation domains with \textit{non--linear} system dynamics, we use the same \textit{non--linear} navigation domain defined by Say~\etal~in~\cite{SayWZS:ijcai17}. 
This domain consists of a navigation domain in continuous space, in which the environment has higher slippage in the center that affects directly the way the agent moves in the environment. 
The transition function for this \textit{non--linear} navigation domain is defined as follows:
\begin{align}
    \label{eq:nav_2d_3d}
	x_{k+1} &= x_{k} + u_{k} \cdot 2 / (1 + exp(-2 \cdot \Delta d_{p}))-0.99 
\end{align}
\noindent where states $x_k \in \mathbb{R}^D$ represent the location of an agent with $D$ dimensions, control inputs $u_k \in \mathbb{R}^D$ represent actions to move an agent over the states axis with $D$ dimensions, and $\Delta d_{p}$ is the Euclidean distance between $x_k$ and the center of the navigation environment. 
Based on the \textit{non--linear} transition function defined in above in Equation~\ref{eq:nav_2d_3d}, the trajectories for this domain must minimize the following cost function
\begin{align}
    \label{eq:nav_J}
    {\displaystyle
    J = - \sum_{k-0}^{N-1} |x_{G} - x_{k}|}
\end{align}
\noindent in which the aiming of this cost function is to minimize the total Manhattan distance from the goal location $x_{G}$ and the current location $x_{k}$.
As the number of dimensions $D$ is arbitrary, we consider two types of domains, domains with 2 and 3 dimensions, and we denote these two types of navigation domains as \textbf{2D--NAV} and \textbf{3D--NAV}, respectively. 

\subsection{Learning and Recognition Datasets}\label{subsection:Datasets_NM}

\sigla{RDDL}{Relational Dynamic Influence Diagram Language}
To build the datasets and learn the system dynamics for the domains discussed previously, we generated $500$ different tasks (\idest, pairs of states $x_0$ and $x_G$) for the LQR--based domains, and $1000$ different tasks for the non--linear navigation domains. 
We set the horizon $H = 100$ for the LQR--based domains, resulting in three different datasets with $50,000$ transitions each. 
As for the non--linear navigation domains (2D and 3D), we set the horizon $H = 20$, resulting in $20,000$ transitions for each domain. 
To generate the trajectories for each of the tasks, we first encoded the FHOC problems for each of the domains using the \textsc{RDDL} (Relational Dynamic Influence Diagram Language) domain description language~\cite{sanner:11:rddl}. 
This domain description language enabled us to use the suite of hybrid planners recently
proposed by Bueno \etal.~\cite{bueno:19:aaai}, which we refer to as $\textsc{Tf-Plan}$. These planners rely on state-of-the-art gradient-based optimization techniques that obtain the gradients directly from the symbolic \textsc{RDDL} encoding, and have been shown to offer good performance on linear and non--linear dynamics with complex non--linear cost functions. From the extracted trajectories, we build the training data for each of these domains using the states $x_{k}$, the control inputs $u_{k}$, and the resulting states $x_{k+1}$ from $x_{k}$ and $u_{k}$.

For goal recognition, we use a benchmark consisting of $30$ recognition problems for 
\textbf{1D--LQR--Navigation}, and benchmarks with $30$ problems for \textbf{2D--LQR--Navigation}
setting $n$ to $1$ and $2$. 
For the non--linear navigation domains, we use $10$ recognition problems for both \textbf{2D--NAV} and \textbf{3D--NAV}. 
We ensure that the initial states and hypothetical candidate goals are significantly different from the ones used to learn the \textit{nominal models}.
Each recognition problem considers at most $5$ goals hypotheses $\mathcal{G}$, and observations $Obs$ comprise either $5$ or $10$ states,
\idest, 5\% or 10\% of observability when $H = 100$, for the LQR--based domains, and $1$, $2$, $6$, or $10$ states (respectively, 5\%, 10\%, 30\%, or 50\%), when $H = 20$, for the non--linear navigation domains.
To generate $Obs$, we used \textsc{Tf-Plan} to find a best trajectory from each initial state ${\mathcal I}$ to the hidden intended goal $G^* \in {\cal G}$. 
All states in the trajectories found have the same probability of appearing in any given $Obs$.

Figure~\ref{fig:GR_LQR_2D} shows an example of a goal recognition problem for the \textbf{2D--LQR--Navigation} domain with a single vehicle. 
This example illustrates an example of a goal recognition problem with three candidate goals (represented by X's), an initial state (represented by a triangle), and three observed states (represented by hexagons). 
Consider that the intended goal is the candidate goal in the middle (Candidate Goal 1), and from this, we note that, based on the information provided by this goal recognition, it is possible to see that it is not trivial to say which goal is the intended one. 
This example shows the task of recognition goals over LQR--based domains in continuous space is not trivial, especially when having an approximate transition function to compute the trajectories for the possible goals considering the observations.

\begin{figure}[h!]
  \centering
  \includegraphics[width=0.7\linewidth]{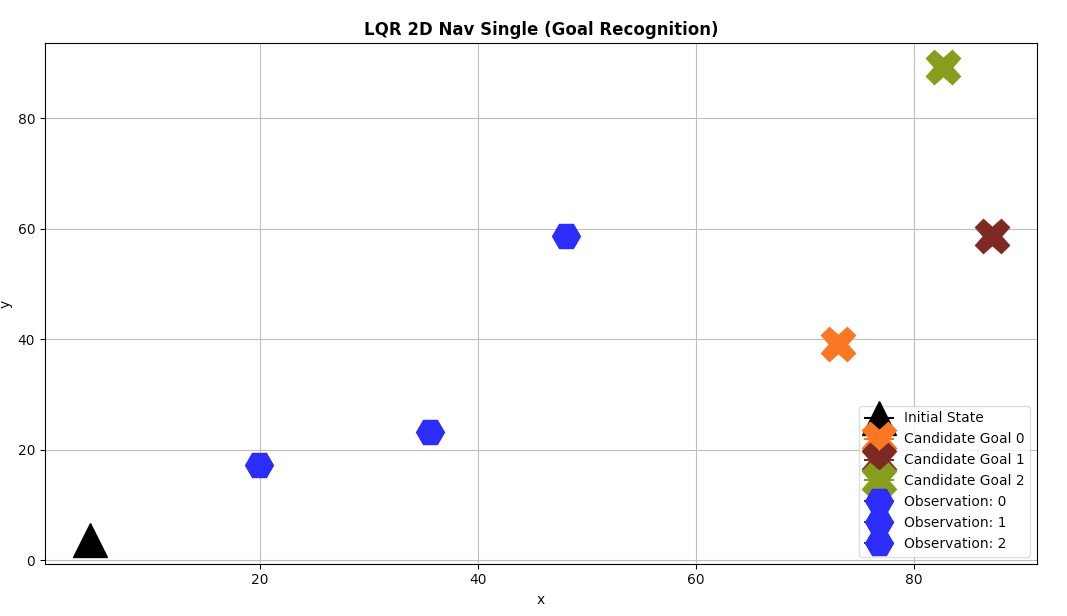}
  \caption{An example of a goal recognition problem for the 2D--LQR--Navigation domain with a single vehicle.}
  \label{fig:GR_LQR_2D}
\end{figure}

\subsection{Learning Results}\label{subsection:Learning_Results}

Like in Section~\ref{section:DNN_NominalModels}, we learn the system dynamics of models (transition function) from data using the learning approach proposed by Say et al.~in~\cite{SayWZS:ijcai17}.
For the training stage, we configured the DNN proposed by~\cite{SayWZS:ijcai17} to use the same hyper-parameters to obtain \textit{nominal models} for all domains, namely, $1$ hidden layer, a batch size of $128$ transitions, and we set the learning rate to $0.01$ and dropout rate to $0.1$.
The training stage has stopped for all domains after $300$ epochs. 
We note that we used exactly the same DNN configuration to learn the system dynamics for all domains we described in the previous section.

Table~\ref{tab:results_learning} shows the Mean Squared Error (MSE)\footnote{The Mean Squared Error (MSE) aims to measure the average squared difference between the estimated value and the actual value.} of the best out of
$10$ trials after $300$ epochs of training for all domains.
From the errors reported in Table~\ref{tab:results_learning} we conclude that using off-the-shelf the learning approach of Say et al.~\cite{SayWZS:ijcai17} results in nominal models of very high quality, \emph{as judged by the loss function} they propose.
The MSE in Table \ref{tab:results_learning} indicates we obtain accurate \textit{nominal models} using the off-the-shelf learning approach of Say et al.~\cite{SayWZS:ijcai17}. 
Next, in Section~\ref{subsection:GoalRecognition_Results}, we show how our goal recognition approaches perform over the obtained \textit{nominal models}.

\begin{table}[h!]
\fontfamily{cmr}\selectfont	
\centering
\begin{tabular}{ll}
\toprule
\hline
\textit{Domain}    & \textit{MSE} \\ \hline
1D--LQR--Navigation    & $4.5$ $\cdot$ $10^{-5}$ \\ \hline
2D--LQR--Navigation-SV    & $1.7$ $\cdot$ $10^{-4}$ \\ \hline
1D--LQR--Navigation-MV & $9.6$ $\cdot$ $10^{-6}$   \\ \hline
2D--NAV & $6.8$ $\cdot$ $10^{-6}$   \\ \hline
3D--NAV & $5.1$ $\cdot$ $10^{-5}$   \\ \hline
\bottomrule
\end{tabular}
\caption{Mean Squad Error (MSE) for all domains we used.}
\label{tab:results_learning}
\end{table}

\subsection{Goal Recognition Results}\label{subsection:GoalRecognition_Results}

For recognizing goals over \textit{actual models}, we use the implementation of the $\textsc{Tf-Plan}$ planner used in~\cite{bueno:19:aaai} that takes as input a domain model formalized in \textsc{RDDL}.
For \textit{nominal models}, we used the implementation of $\textsc{Tf-Plan}$ in~\cite{WuSS:nips17} that takes as input a domain model represented as a DNN.
For both planners we set the learning rate to $0.01$, batch size equals to $128$, and the number of epochs to $300$.

To evaluate our goal recognition approaches over both \textit{actual} and \textit{nominal models}, we use some of the metrics already used in the literature in goal recognition~\cite{RamirezG_IJCAI2009,PereiraNirMeneguzzi_AAAI2017,AAAI2018_PereiraMeneguzzi}.
These metrics are the measures of True Positive Rate (TPR) and False Positive Rate (FPR).
\textit{TPR} is given by the number of true positive results ($1$ when $G^{*}$ maximizes $P(G \mid Obs)$, $0$
otherwise) over the sum of true positive results and false positive results, \idest, the number of
candidate goals maximizing $P(G \mid Obs)$. 
A higher \textit{TPR} indicates better performance, as it measures how often the true intended goal is calculated reliably.
\textit{FPR} is the average number of candidate goals $G \neq G^{*}$ that maximize $P(G \mid Obs)$, measuring how often goals other than the true intended one are found to be as good as or better explanation for $Obs$ than $G^{*}$. 
We also use the \textit{Top-$k$} metric, typically used in machine learning to evaluate classifiers, setting $k$ to $2$, to measure the frequency in which $G^{*}$ was among the top $k$ candidate goals as ranked by $P(G \mid Obs)$, and complements the two previous measures.

We separate the goal recognition results in two different tables: in Table \ref{tab:recognition_results}, we show the results for the LQR--based domains, the domains in which the transition function is linear, whereas, in Table~\ref{tab:recognition_results-non--linear}, we show the results for the non--linear navigation domains.
In these tables, we analyze the performance of the goal recognition approaches presented in Sections~\ref{section:GR_NominalModels_Mirroring} (\nominalmirroring)
and~\ref{section:GR_NominalModels_Counterfactual} ($\Delta(Obs,G)$) in three different settings, from left to right: 
(1) online goal recognition (\textsc{Online}), considering the response of the goal recognition algorithm for \textit{each} judgment point corresponding to an observation in $Obs$; 
(2) offline goal recognition (\textsc{Offline}), when we consider only the  \textit{last} judgment point (i.e., all observed states in $Obs$); and 
(3) considering only the \textit{first} judgment point (\textsc{1st Observation}), \idest, $o_1 \in Obs$.
Note that, in these tables, we aggregate and summarize the average results for all evaluated domains, and in~\ref{appendixC:goalrecognition_nominalmodels}, we provide extensive experimental evaluation for all domain models separately, for both linear and non--linear domains. 

\begin{table}[t!]
\setlength\tabcolsep{1.7pt}
\fontfamily{cmr}\selectfont
\fontsize{12}{13}\selectfont
\centering
\begin{tabular}{llllllllllllllllll}
\toprule
		 \multicolumn{17}{c}{Linear LQR--Based Domains} \\
\toprule
\hline
& &	& \phantom{a} & &
\multicolumn{3}{c}{\sc Online} & \phantom{a} & &
\multicolumn{3}{c}{\sc Offline} & &
\multicolumn{3}{c}{\sc 1st Observation} \\
\cmidrule{6-8} \cmidrule{11-13} \cmidrule{15-17}
\textit{Approach} & \textit{M} &  \textit{Obs} (\%) &&  \textit{N} &  \textit{Top-2} &  \textit{TPR} &  \textit{FPR} &&  \textit{N} &  \textit{Top-2} &  \textit{TPR} &  \textit{FPR} && \textit{Top-2} &  \textit{TPR} &  \textit{FPR} \\
\midrule
\nominalmirroring 		&     A &          5 &&  450 &    0.87 &   0.77 &   0.05 &&   90 &    0.97 &   0.93 &   0.01 &&  0.67 & 0.44 & 0.12 \\
$\Delta(Obs,G)$ 		&     A &          5 &&  450 &    0.49 &   0.24 &   0.16 &&   90 &    0.49 &   0.30 &   0.15 &&  0.49 & 0.26 & 0.16 \\ 

\nominalmirroring 		&     A &         10 &&  900 &    0.90 &   0.78 &   0.05 &&   90 &    0.98 &   0.96 &   0.01 &&  0.64 & 0.32 & 0.15 \\
$\Delta(Obs,G)$ 		&     A &         10 &&  900 &    0.45 &   0.26 &   0.16 &&   90 &    0.44 &   0.28 &   0.15 &&  0.47 & 0.28 & 0.15 \\

\midrule

\nominalmirroring 		&    N &          5 &&  450 &    0.66 &   0.46 &   0.12 &&   90 &    0.83 &   0.67 &   0.07 &&  0.44 & 0.24 & 0.17 \\
$\Delta(Obs,G)$ 		&    N &          5 &&  450 &    0.45 &   0.22 &   0.17 &&   90 &    0.43 &   0.26 &   0.16 &&  0.51 & 0.18 & 0.18 \\

\nominalmirroring 		&    N &         10 &&  900 &    0.71 &   0.49 &   0.11 &&   90 &    0.87 &   0.72 &   0.06 &&  0.41 & 0.26 & 0.16 \\
$\Delta(Obs,G)$ 		&    N &         10 &&  900 &    0.45 &   0.26 &   0.16 &&   90 &    0.42 &   0.21 &   0.17 &&  0.52 & 0.32 & 0.15 \\
\hline
\bottomrule
\end{tabular}
\caption{Experimental results of our recognition approaches over both \textit{actual} and \textit{nominal} models for the LQR--based domains. $M$ represents the model type (A represents \textit{actual models}, and N represents \textit{nominal models}), \% $Obs$ is observation level, and $N$ is the total number of observed states. Note that the average number of goal hypothesis $|\mathcal{G}|$ in the datasets is 5, and the planning horizon $H$ for these datasets is $100$.}
\label{tab:recognition_results}
\end{table}

As for the results for the linear (LQR--based) domains in Table~\ref{tab:recognition_results}, we note that under the parameters used for the planners, \nominalmirroring~clearly dominates $\Delta(Obs,G)$ in all settings by a wide margin, except for the results when considering the first observation over \textit{nominal models}. 
Seeking an explanation for the poor performance of $\Delta(Obs,G)$, we dug deeper into the experimental data in order to find how often \nominalmirroring~outperformed $\Delta(Obs,G)$ and vice versa. 
Interestingly, we found that \nominalmirroring~outperforms $\Delta(Obs,G)$, according to the \textit{Top--2} measure, in $37$\% of the judgment points considered, $\Delta(Obs,G)$ is superior in $8.9$\% of the cases and both approaches are in agreement and correct in $42.1$\% of cases. 
This suggests that $\Delta(Obs,G)$ could be sensitive to one of the parameters used to calculate the trajectories. 
Over this ablation study, we detected that the number of epochs is the key parameter, as it directly affects how far from optimal are the trajectories found. 
We also observed that varying the number of epochs had counter-intuitive results, as the approximations to the optimal values of $J^{+}$ and $J^{-}$ do not get better or worse in a linear fashion. 
Instead, we often observed costs improve (or worsen) for either cost functions at different rates, sometimes changing the sign of $\Delta(Obs,G)$. 
To analyze the impact of the number of epochs over the $\Delta(Obs,G)$, we ran and tested the $\Delta(Obs,G)$ over a limited number of instances, setting the number of epochs to $3,000$, and we observed a significant improvement which brought it to be in agreement with the performance of \nominalmirroring~if not sometimes superior. 
Of course, this entailed an increase of run times by roughly an order of magnitude. 
Thus, this leads us to conclude that the relatively good results of $\Delta(Obs,G)$ in Table~\ref{tab:recognition_results} are due to the fact that $J^{+}$ and $J^{-}$ are closer to the convex ideal in Equation~\ref{eq:lqr_J}, as they include less non--linear terms $h(x,o)$, so \textsc{Tf-Plan} is less likely to get trapped in a local minima with adverse results for recognition accuracy early on.

\begin{table}[t!]
\setlength\tabcolsep{1.7pt}
\fontfamily{cmr}\selectfont
\fontsize{12}{13}\selectfont
\centering
\begin{tabular}{llllllllllllllllll}
\toprule
		 \multicolumn{17}{c}{Non--Linear Navigation Domains } \\
\toprule
\hline
& &	& \phantom{a} & &
\multicolumn{3}{c}{\sc Online} & \phantom{a} & &
\multicolumn{3}{c}{\sc Offline} & &
\multicolumn{3}{c}{\sc 1st Observation} \\
\cmidrule{6-8} \cmidrule{11-13} \cmidrule{15-17}
\textit{Approach} & \textit{M} &  \textit{Obs} (\%) &&  \textit{N} &  \textit{Top-2} &  \textit{TPR} &  \textit{FPR} &&  \textit{N} &  \textit{Top-2} &  \textit{TPR} &  \textit{FPR} && \textit{Top-2} &  \textit{TPR} &  \textit{FPR} \\
\midrule
\nominalmirroring 		&     A &          5 &&   20 &    0.85 &   0.85 &   0.04 &&   20 &    0.85 &   0.85 &   0.04 &&  0.85 & 0.85 & 0.04 \\
$\Delta(Obs,G)$ 		&     A &          5 &&   20 &    0.80 &   0.65 &   0.09 &&   20 &    0.80 &   0.65 &   0.09 &&  0.80 & 0.65 & 0.09 \\

\nominalmirroring 		&     A &         10 &&   40 &    0.90 &   0.78 &   0.06 &&   20 &    1.00 &   0.90 &   0.03 &&  0.80 & 0.65 & 0.09 \\
$\Delta(Obs,G)$ 		&     A &         10 &&   40 &    0.55 &   0.38 &   0.16 &&   20 &    0.55 &   0.30 &   0.17 &&  0.55 & 0.45 & 0.14 \\

\nominalmirroring 		&     A &         30 &&  120 &    0.89 &   0.75 &   0.06 &&   20 &    1.00 &   1.00 &   0.00 &&  0.75 & 0.55 & 0.11 \\
$\Delta(Obs,G)$ 		&     A &         30 &&  120 &    0.55 &   0.33 &   0.17 &&   20 &    0.55 &   0.35 &   0.16 &&  0.55 & 0.35 & 0.16 \\

\nominalmirroring 		&     A &         50 &&  200 &    0.88 &   0.73 &   0.07 &&   20 &    1.00 &   1.00 &   0.00 &&  0.75 & 0.35 & 0.16 \\
$\Delta(Obs,G)$ 		&     A &         50 &&  200 &    0.57 &   0.32 &   0.17 &&   20 &    0.85 &   0.40 &   0.15 &&  0.50 & 0.25 & 0.19 \\

\midrule

\nominalmirroring 		&    N &          5 &&   20 &    0.85 &   0.70 &   0.07 &&   20 &    0.85 &   0.70 &   0.07 &&  0.85 & 0.70 & 0.07 \\
$\Delta(Obs,G)$ 		&    N &          5 &&   20 &    0.75 &   0.40 &   0.15 &&   20 &    0.75 &   0.40 &   0.15 &&  0.75 & 0.40 & 0.15 \\

\nominalmirroring 		&    N &         10 &&   40 &    0.72 &   0.55 &   0.11 &&   20 &    0.80 &   0.65 &   0.09 &&  0.65 & 0.45 & 0.14 \\
$\Delta(Obs,G)$ 		&    N &         10 &&   40 &    0.55 &   0.30 &   0.17 &&   20 &    0.60 &   0.40 &   0.15 &&  0.50 & 0.20 & 0.20 \\

\nominalmirroring 		&    N &         30 &&  120 &    0.76 &   0.61 &   0.10 &&   20 &    1.00 &   1.00 &   0.00 &&  0.50 & 0.25 & 0.19 \\
$\Delta(Obs,G)$ 		&    N &         30 &&  120 &    0.59 &   0.28 &   0.18 &&   20 &    0.75 &   0.40 &   0.15 &&  0.50 & 0.15 & 0.21 \\

\nominalmirroring 		&    N &         50 &&  200 &    0.81 &   0.65 &   0.09 &&   20 &    1.00 &   1.00 &   0.00 &&  0.50 & 0.25 & 0.19 \\
$\Delta(Obs,G)$ 		&    N &         50 &&  200 &    0.54 &   0.27 &   0.18 &&   20 &    0.70 &   0.55 &   0.11 &&  0.40 & 0.15 & 0.21 \\
\hline
\bottomrule
\end{tabular}
\caption{Experimental results of our recognition approaches over both \textit{actual} and \textit{nominal} models for the non--linear navigation domains (2D--NAV and 3D--NAV). $M$ represents the model type (A represents \textit{actual models}, and N represents \textit{nominal models}), \% $Obs$ is observation level, and $N$ is the total number of observed states. Note that the average number of goal hypothesis $|\mathcal{G}|$ in these datasets is 4, and the planning horizon $H$ for these datasets is $20$.}
\label{tab:recognition_results-non--linear}
\end{table}

In comparison to the results for the linear LQR--based domain models (Table~\ref{tab:recognition_results}), we can see that the results for the non--linear domains are better for both our recognition approaches over all settings, as shown in Table~\ref{tab:recognition_results-non--linear}. 
It is also possible to see that, again, \nominalmirroring~dominates~$\Delta(Obs,G)$ in all settings, but not as much as it is for the linear LQR--based domains. 
For non--linear domains, \nominalmirroring~is outperforms $\Delta(Obs,G)$, when using the \textit{Top--2} measure, in $8.8$\% of the judgment points considered. 
Moreover, $\Delta(Obs,G)$ outperforms~\nominalmirroring in $3.5$\% of the cases, and both recognition approaches are in agreement and correct in $30.1$\% of cases. 
An interesting aspect regarding the results for non--linear domains is about performance compared to the linear LQR--based domain models. 
We note that the main difference between the recognition datasets for linear and non--linear domains is the length of the planning horizon $H$, which is $100$ for the linear LQR--based domains, and $20$ for the non--linear navigation domains. 
This may lead us to conclude that the length of the horizon affects the planning process when extracting trajectories, by accumulating and propagating error along trajectories, especially for nominal models.

With respect to recognition time, the average time per goal
recognition problem for \nominalmirroring~over the linear LQR--based datasets is~$\approx$~1,100 seconds, whereas for $\Delta(Obs,G)$ is~$\approx$~1,600 seconds. 
As for the recognition time of our approaches when dealing with non--linear domains, the average time per problem is~$\approx$~1,900 seconds for \nominalmirroring, and~$\approx$~2,500 seconds for~$\Delta(Obs,G)$. 
Note that the (linear and non--linear) domains and problems we used in our experiments are non-trivial real-world domains, and all of them have continuous state--space and actions, in which the planning process usually takes substantial time for extracting optimal trajectories.

\section{Chapter Remarks}\label{section:GR_NominalModels_Remarks}

Model--based goal and plan recognition is a real-world, non--trivial and challenging application of causal reasoning, and, in this chapter, we adapt past approaches to model--based goal recognition as different implementations of Halpern's \emph{but-for} test of sufficient causality. 
We also show that learning techniques can be used to generate predictions which are good enough to enable the generation of meaningful counterfactuals and by extension ``true'' causal reasoning~\cite{pearl:09:causality}.


Future work on goal recognition over \textit{nominal models} should investigate three key questions. 
First, to what extent the proposed recognition approaches can handle increasing variance for the random variable $w$. 
Second, we need to determine whether it is possible to modify the loss function used for training the nominal models in a way that takes into account the accumulated error along trajectories, rather than just the errors in predicting the next state. 
Last, as discussed in Section~\ref{subsection:GoalRecognition_Results}, cost--based goal 
recognition is very sensitive to planners converging to unhelpful local minima, which seems to be an inherent characteristic of stochastic optimization algorithms. 
These recognition approaches could also be evaluated using planners that rely on Differential Dynamic Programming (DDP)~\cite{mitrovic:10:adaptive,yamaguchi:16:icra}, which \textit{may} converge faster to better local minima of the cost function.

In this chapter, we developed novel approaches to goal recognition that expand the applicability of model--based goal and plan recognition by replacing carefully engineered models for carefully curated datasets. 
The approaches developed in this chapter are also examples of how to exploit latent synergies between \textit{Planning}, \emph{Optimal Control}, \textit{Optimization}, and \textit{Machine Learning}, as we integrate algorithms, techniques, and concepts to address in novel ways a high-level, transversal problem relevant to many fields in \textit{Artificial Intelligence}.
%
%
\chapter{Related Work}\label{chapter:RelatedWork}

In this chapter, we survey and review the most significant work on \textit{Goal and Plan Recognition} and \textit{Planning} that are closely related to the contributions presented in this thesis.
In Section~\ref{section:GoalPlanRecognitionAsPlanning}, we present model--based approaches to goal and plan recognition that directly rely on \textit{Planning} techniques. After, in Section~\ref{section:PlanningImperfectDomainModels}, we describe the approaches in the literature that deal with incomplete domain information for recognizing goals and plans. 
Finally, in Section~\ref{section:PlanRecognitionIncomplePlanLibraries}, we describe existing work on planning over \textit{imperfect domain models}, namely, planning approaches that deal with either incomplete discrete domain models, or approximate hybrid (continuous and discrete) domain models.



\section{Goal and Plan Recognition as Planning}\label{section:GoalPlanRecognitionAsPlanning}




Over the past years, the task of goal and plan recognition as planning has received much attention in the \textit{Automated Planning} community, resulting in several remarkable contributions in recent years.
One of the first planning--based approach for recognizing goals and plans was developed by Hong~\cite{HongGoalRecognition_2001}. 
Hong develops an approach that extends the concept of planning graph~\cite{BlumFastPlanning_95}, proposing a similar structure that represents every possible path (\exemp, state transitions that connect facts and actions) from an initial state to a goal state, and calling this structure a goal graph. 
As actions are observed during a plan execution, a goal graph is constructed, in which facts that represent recognized goals are linked to a goal level. 
This work is one the first work to address the task of goal recognition without using plan--libraries, showing that it is possible to be fast and accurate for recognizing goals without explicitly defining the plans that achieve the goals.

Later, Ram{\'{\i}}rez and Geffner~\cite{RamirezG_IJCAI2009} introduce the problem of \textit{Plan Recognition as Planning}, by using planning domain models to describe the agents' behavior and planning techniques to perform the recognition task.
For recognizing goals and plans, Ram{\'{\i}}rez and Geffner use modified (optimal and sub--optimal) planning algorithms to determine the distance to every goal in a set of candidate goals given a sequence of observations. 
In this work, Ram{\'{\i}}rez and Geffner develop two approaches, specifically, they consider optimal or sub--optimal plans, in which goals that have become impossible being removed from the set of candidate goals. 
Once candidate goals have been eliminated, they are never reconsidered during the recognition process. 
Ram{\'{\i}}rez and Geffner~\cite{RamirezG_IJCAI2009} also work with an assumption of partial observability in that only a sub-sequence of the plan is available as evidence to the  recognizer. 
In their subsequent work, Ram{\'{\i}}rez and Geffner~\cite{RamirezG_AAAI2010} develop a robust probabilistic framework for goal and plan recognition by using off-the-shelf planners, providing a posterior probability distribution over goals, given an observation sequence as evidence.

In~\cite{PattisonGoalRecognition_2010}, Pattison and Long propose AUTOGRAPH (AUTOmatic Goal Recognition with A Planning Heuristic), a probabilistic heuristic-based goal recognition approach over planning domains. 
Much like the work of Ram{\'{\i}}rez and Geffner~\cite{RamirezG_IJCAI2009}, AUTOGRAPH uses heuristic estimation and domain analysis to determine which goal(s) a plan execution of an observed agent is pursuing. 
However, unlike most work on goal and plan recognition as planning, the set of candidate goals is not given as part of the recognition problem, the intended goal is inferred based on a planning domain model, an initial state, and a sequence of observations. 

As one of the very first to address the task of \textit{Plan Recognition as Planning} with multiple agents, in~\cite{MAPR_STRIPS_Zhuo_2012}, Zhuo~\etal~develop a recognition approach in which the team behavior model is defined as a planning domain definition (\idest, every agent behavior is based on the planning domain definition), and the task of plan recognition analyzes a partial team observation trace for recognizing team plans. 
To perform the recognition process, Zhuo~\etal~first translate the multi-agent plan recognition as a satisfiability problem, and then solve the recognition problem by using a weighted MAX-SAT solver. 
Zhuo~\etal~also show a comparison between their previous plan--library based approach~\cite{ZhuoL_MAPR_11} and the planning domain definition based approach.

Dealing explicitly with ambiguity in goal and plan recognition is a very complex task. In~\cite{GoalRecognitionDesign_Keren2014}, Keren~\etal~develop an alternate view of the goal recognition problem, and rather than developing new goal recognition algorithms, they develop novel techniques that modify the domain model in order to facilitate the goal recognition process. Specifically, the approaches of Keren~\etal~\cite{GoalRecognitionDesign_Keren2014,GoalRecognitionDesign_Keren2015,GoalRecognitionDesign_Keren2016}~aim to attempt to reduce the number of non-unique plans for each candidate goal in a set of candidate goals, and thus, simplifying the process of goal recognition by redesigning the planning domain model.

E.-Martín~\etal~\cite{NASA_GoalRecognition_IJCAI2015} propose a planning-based goal recognition approach that propagates cost and interaction information in a plan graph, and uses this information to estimate goal probabilities over the set of candidate goals and the observation sequence. We note that the approach of E.-Martín~\etal~\cite{NASA_GoalRecognition_IJCAI2015} is the first one in the literature that obviates calling a planner to perform the recognition task, resulting in a very fast approach to goal recognition.

Sohrabi~\etal~\cite{Sohrabi_IJCAI2016} extend the probabilistic framework of Ram{\'{\i}}rez and Geffner~\cite{RamirezG_AAAI2010}, and developed a novel probabilistic recognition approach that deals explicitly with unreliable and spurious observations (\idest, noisy or missing observations), and recognizes both goals and plans. 
The probabilistic approaches of Sohrabi~\etal~\cite{Sohrabi_IJCAI2016} use multiple high-quality plans (using a planner that generates multiple plans with high quality) to produce a probabilistic distribution over the goals. 
In this paper, the authors show that, for some domains, the use of multiple high-quality plans along with this novel probabilistic framework yields better results than using only one plan~\cite{RamirezG_AAAI2010}.

Vered, Kaminka, and Biham~\etal~\cite{Mor_ACS_16}~introduce the concept of \textit{Mirroring} to develop an online goal recognition approach for continuous domain models. 
Based on this work, Vered~\etal~\cite{MorEtAl_AAMAS18} develop an online goal recognition approach that combines the concept of \textit{Mirroring} and landmarks, showing this combination can improve not only the recognition time, but also the accuracy for recognizing goals in the online fashion. 
In~\cite{Kaminka_18_AAAI}, Kaminka~\etal~propose a new probabilistic framework for plan recognition approach over both continuous and discrete domains, in which the core of this framework is the concept of \textit{Mirroring}.

Masters and Sardi{\~{n}}a~\cite{Masters_IJCAI2017,MastersS_JAIR_19} propose a fast and accurate goal recognition approach that works strictly in the context of path-planning, providing a novel probabilistic framework for goal recognition in path planning, which is basically a revised and improved version of the probabilistic framework of Ram{\'{\i}}rez and Geffner~\cite{RamirezG_AAAI2010}. This novel probabilistic framework for path-planning shows that it is possible to compute the probability distribution over the goals much simpler and faster than the one proposed by Ram{\'{\i}}rez and Geffner~\cite{RamirezG_AAAI2010}, considering only a single observation, namely the current state.
In their most recent work, Masters and Sardi{\~{n}}a~\cite{MastersS_AAMAS_19} improve their previous probabilistic approach to deal with both rational and irrational agent behavior during the recognition process.

In previous work~\cite{PereiraMeneguzzi_ECAI2016,PereiraNirMeneguzzi_AAAI2017}, we develop landmark--based approaches for goal recognition as planning. 
Such approaches are recognition heuristics that strictly rely on the concept of landmarks.
Their first heuristic approach performs the recognition task by computing the ratio between the number of achieved landmarks and the total number of landmarks for a given candidate goal, and then, the candidate goal (s) with the highest heuristic value is (are) considered as the most likely intended one.
The second heuristic approach uses the concept of \textit{landmark uniqueness value}, representing the information value of the landmark for a particular candidate goal when compared to landmarks for all candidate goals. 
Thus, the heuristic estimation provided by this second heuristic is the ratio between the sum of the uniqueness value of the achieved landmarks and the sum of the uniqueness value of all landmarks of a candidate goal. 
Most recently, in their extended work, in~\cite{PereiraOM_AIJ_2020}, they show through several experiments that these landmark--based heuristics are the fastest ones in the literature. 

Freedman~\etal~\cite{freedman2018towards} proposed an approach to perform probabilistic plan recognition along the lines of the work of Ram{\'{\i}}rez and Geffner~\cite{RamirezG_AAAI2010}, in which, instead of calling a full-fledged planner for each candidate goal, it takes advantage of a multiple-goal heuristic search algorithm~\cite{davidov2006multiple} to search for all goals simultaneously, avoiding repeatedly expanding the same nodes in the search tree. 
This approach has not been implemented and evaluated yet, the authors have provided only theoretical concepts regarding this approach.

Unlike the heuristic approaches we developed in Chapter~\ref{chapter:GR_IncompleteDomains}, most recent planning-based recognition approaches~\cite{RamirezG_IJCAI2009,RamirezG_AAAI2010,Sohrabi_IJCAI2016,Mor_ACS_16} use a planner to recognize goals and plans from observations, calling a planner at least $2 \times \mathcal{G}$ times during the recognition process. 
Conversely, E.-Martín~\etal~\cite{NASA_GoalRecognition_IJCAI2015} and Pereira~\etal~\cite{PereiraNirMeneguzzi_AAAI2017} are similar to our heuristics approaches because these approaches avoid the use of automated planners during the goal and plan recognition process, and use only planning information extracted from planning instances, \idest, planning-graphs and landmarks, respectively.  
Keren~\etal~\cite{GoalRecognitionDesign_Keren2014} developed an approach that assumes planning domain models are not fixed, and it changes (re-designs) the domain definition to facilitate the task of goal recognition in planning domain models. 
However, these approaches differ from ours because they only deal with complete (even if modified) domain models, and most of them transform/compile the goal/plan recognition problem into a planning problem to be solved by a planner.
Such a transformation or compilation process may not necessarily work with incomplete STRIPS domain models, given the very large number of potential models. 
We note that the approach of E.-Martín~\etal~\cite{NASA_GoalRecognition_IJCAI2015} could work in incomplete domain models with some adaptations (\exemp, by ignoring all \emph{possible} preconditions and effects), though this approach would likely be less accurate than our approaches (as shown in~\cite{PereiraOM_AIJ_2020} for complete and correct domain models) because it does not deal intentionally with possible preconditions and effects, while our recognition approaches do.

\newpage
\section{Planning over Imperfect Domain Models}\label{section:PlanningImperfectDomainModels}


There has been comparatively little research in the \textit{Automated Planning} literature to deal explicitly with inaccurate and imperfect domains models. 
One of the first planning approaches to address incomplete information in discrete domain models is the work of Garland and Lesh~\cite{GarlandLesh_AAAI2002}. 
In this work, Garland and Lesh~\cite{GarlandLesh_AAAI2002} develop a planning approach for incomplete domain models, allowing the use of annotations (possible preconditions and effects) to specify incomplete actions in the domain description. 
Their planning approach analyzes a set of extracted plans that identifies critical faults (facts that may cause plan failure) in these action sequences, returning the plan with the best quality (\idest, the plan with the minimal number of critical faults). 
Weber and Brycen~\cite{WeberBryce_ICAPS_2011} use the same annotations for incomplete domains, and develop a planner called \textsc{DeFault}, which aims to search for plans by minimizing their risks to fail for achieving goals.
\textsc{DeFault} uses a heuristic approach based on the Fast-Forward (FF) heuristic~\cite{FFHoffmann_2001}, breaking ties using a novel heuristic that counts failure models, called \textsc{Prime Implicant} heuristic.
Most recently, Nguyen~\etal~\cite{Nguyen_AIJ_2017} develop two approaches for planning in incomplete domain models, the \textsc{PISA} and $\mathcal{C}$\textsc{PISA} planners~\cite{PlanningIncomplete_NguyenK_2014}. 
\textsc{PISA} is a planner that uses a stochastic local search to synthesize robust plans in incomplete planning domains. 
$\mathcal{C}$\textsc{PISA} extends the techniques from \textsc{PISA} incorporating Bayesian learning to enhance the planning process. \textsc{PISA} and $\mathcal{C}$\textsc{PISA} outperform \textsc{DeFault} for planning in most incomplete domain models. 
The incomplete domain formalism we use for the task of goal recognition over incomplete domains in (as presented in Chapter~\ref{chapter:GR_IncompleteDomains}) is based on the work presented above~\cite{GarlandLesh_AAAI2002,WeberBryce_ICAPS_2011,PlanningIncomplete_NguyenK_2014,Nguyen_AIJ_2017}. 

\sigla{MILP}{Mixed-Integer Linear Program}
\sigla{BLP}{Binary Linear Programming}
Recent \textit{Deep Learning} techniques have shown to be very effective to learn linear and non-linear transition functions from data. 
Say~\etal~\cite{SayWZS:ijcai17} use state-of-the-art deep learning techniques to approximate the transition function of hybrid (mixed discrete and continuous) domain models, based on datasets that contain plan traces represented as state transitions. 
For planning, Say~\etal~developed a Mixed-Integer Linear Program (MILP) based planner that works in two states: (1) it encodes the learned transition function and a hybrid domain model into a MILP Program; and (2) given this MILP encoding, 
Say~\etal~use an off-the-shelf MILP solver to find plans for a given planning horizon. 
Subsequently, Wu, Say, and Sanner~\cite{WuSS:nips17} use the same approach to approximate the transition function of hybrid domain models, but unlike the work of~Say~\etal~\cite{SayWZS:ijcai17}, they use pure learning techniques to develop their planning approach. 
Namely, they develop a planning approach based on Tensorflow~\cite{Tensorflow_2015} and a gradient descent optimization (RMSProp\footnote{Developed by Geoff Hinton (in Lecture 6 of his Coursera Class), RMSprop is an unpublished optimization algorithm designed for Neural Networks.}). 
Over extensive experimental results, they show that the resulting learning approach is very competitive in comparison to the MILP based planner on several linear and non-linear hybrid planning domains. 
Recently, in~\cite{SayS:ijcai18}, Say and Sanner develop two alternative approaches to the ones presented in~\cite{SayWZS:ijcai17,WuSS:nips17}. 
Unlike the previous approaches, the recent approaches of Say and Sanner~\cite{SayS:ijcai18}~use modern \textit{Machine Learning} approaches to learn the transition function of hybrid domain models, whereas for planning, they develop two approaches: the first one compiles the learned transition function and a hybrid domain model into a Boolean Satisfiability problem, and use SAT solver to find plans for a given horizon, while the second one compiles the problem into a Binary Linear Programming (BLP) formulation, and then use a BLP solver for the planning process.

As we presented in Chapter~\ref{chapter:GR_NominalModels}, the core of our goal recognition approaches over \textit{nominal models} is inspired by the work of Say~\etal~\cite{SayWZS:ijcai17} and Wu, Say, and Sanner~\cite{WuSS:nips17}. 
To learn and approximate transition functions from datasets, we use the learning approach of Say~\etal~\cite{SayWZS:ijcai17}, and for planning over nominal models, we make use of the Tensorflow planner developed by Wu, Say, and Sanner in~\cite{WuSS:nips17}.

\section{Plan Recognition with Incomplete Domain Information}\label{section:PlanRecognitionIncomplePlanLibraries}

To the best of our knowledge, the earlier work on goal and plan recognition that deal explicitly with \textit{incomplete domain models} are that of Lee and McCartney~\cite{lee1998partial} and Kerkez and Cox~\cite{kerkez2002case}, and most recently, the work of Zhuo~\cite{Zhuo_EtAl_TIST_2019}. 
The main characteristic that these approaches~\cite{lee1998partial,kerkez2002case,Zhuo_EtAl_TIST_2019} have in common with our approaches to goal recognition over incomplete domains (Chapter~\ref{chapter:GR_IncompleteDomains}), is the use of incomplete information in the domain model description.

Lee and McCartney~\cite{lee1998partial} developed a plan recognition approach that uses stochastic models (Hidden Markov Models) to model possible ambiguities in the agent behavior, and learning techniques to learn actions and properties of the model based on an incomplete agent behavior model from partial observation, which are stored as a history of interactions in a dataset. 
To describe the agent behavior model, the authors use a graph structure similar to a plan--library, but with incomplete information, and in this incomplete behavior model, such incomplete information represents the set of unknown properties and actions of the agent behavior model.

Unlike the approach of Lee and McCartney~\cite{lee1998partial} that uses learning techniques to fill the incomplete part of the domain model, the plan recognition approach of Kerkez and Cox~\cite{kerkez2002case} takes as input an incomplete plan--library and deals with incomplete domain information by using an automated planner. 
More specifically, the approach of Kerkez and Cox uses a planner to fill and complete an incomplete plan--library from the observations, and then recognizes the observed agent's goal using a mapping technique, matching the resulting plan--library with the observations. 

Zhuo~\cite{Zhuo_EtAl_TIST_2019} develops a multi-agent plan recognition approach that uses as domain knowledge the combination of an incomplete action description model and a set of incomplete team plans (\idest, a set of possible team plans that the agents can perform to achieve their goals).
For recognizing multi-agent team plans, the approach works in two stages: (1) the approach transforms the multi-agent recognition problem as a satisfiability problem, encoding the problem by using soft and hard constraints; and (2) based on the encoded constraints, the approach then solves the recognition problem by using a weighted MAX-SAT solver.


We argue that the plan recognition approaches~\cite{lee1998partial,kerkez2002case,Zhuo_EtAl_TIST_2019} described above are quite different from the heuristic recognition approaches over incomplete domains we developed and presented in Chapter~\ref{chapter:GR_IncompleteDomains} in several key aspects, and we single out these differences as follows. 
Firstly, these recognition approaches in~\cite{lee1998partial,kerkez2002case,Zhuo_EtAl_TIST_2019} use incomplete plan--libraries or incomplete team plans to represent the agent behavior model, whereas our heuristic approaches only use incomplete planning domain models.
Plan--libraries are usually encoded manually and laborious to model, requiring a description of a set of plans in order to know how to achieve the set of possible goals. Even using incomplete information, the task of modeling plan--libraries can be quite complex, requiring much design effort and domain knowledge. 
Moreover, plans that are not defined in the plan--libraries cannot be recognized during the recognition process.
We note the task of modeling incomplete planning domain models requires much less effort, requiring only a description of the set of predicates (properties) and actions of the environment. 
Secondly, another key difference is that we use no learning approach to learn or fill in the incomplete part of the domain model, and deal explicitly with the incomplete domain information. 
Finally, we use no planner or any other kind of solver for recognizing goals over incomplete domain information and only use the information provided by the landmark extraction process. 


We have not provided any comparison against these approaches because the formalisms they use for incomplete domains are incompatible with the formalism we use in our work. 
However, a comparison against the work of Zhuo~\cite{Zhuo_EtAl_TIST_2019} would be possible if we modify his work to cope only with a single agent, generate a set of incomplete plans for a single agent for achieving the possible candidate goals, and adapt our work to his formalism for incomplete domain models. 
We argue that, even with such adaptations and modifications, the comparison would not be fair, since Zhuo's approach assumes plans both in the plan--library and in the observations have a fixed size and contain information about exactly what are times of the missing observations, which our work does not assume. 




\section{Chapter Remarks}


We conclude this chapter by noting that, we have not surveyed all approaches to goal and plan recognition in the literature, and instead, we chose to focus on presenting the most relevant approaches to \textit{Plan Recognition as Planning}. 
The literature of goal and plan recognition is vast, and contains several types of significant approaches that are not directly related to \textit{Automated Planning}, for instance, goal and plan recognition approaches that rely on plan--libraries~\cite{AvrahamiZilberbrandK_IJCAI2005,PR_Mirsky_2016,MIRSKY_2018_AIJ} and context-free grammars~\cite{Geib_PPR_AIJ2009}. 
Thus, we state that the main focus of this chapter is presenting existing work that is based on planning techniques and incomplete domain information.

%
%

\chapter{Conclusions}\label{chapter:Conclusions}

In this thesis, we introduced new formalizations for goal recognition problems that allow for \textit{imperfections} over two distinct types of domains models, \idest, \textit{incomplete discrete domain models} that have \textit{possible}, rather than \textit{known}, preconditions and effects in action descriptions, and \textit{approximate continuous domain models}, where the transition function is \textit{approximate} and not well-defined. 
We developed novel goal recognition approaches that can cope with these two types of imperfect domain models, and we have empirically shown that such approaches are accurate when dealing with imperfect domains in several recognition settings. 
In Section~\ref{chapter:conclusions:contrib}, we summarize and discuss the main contributions of this thesis, and after that, in Sections~\ref{chapter:conclusions:issues}~and~\ref{chapter:conclusions:futurework}, we discuss, respectively, the open issues and limitations of our approaches, as well as the avenues that we can for improving our recognition approaches. 

\section{Contributions}\label{chapter:conclusions:contrib}

We now outline the two main contributions of this thesis, as follows.

\begin{enumerate}
	\item Our first contribution, presented in Chapter~\ref{chapter:GR_IncompleteDomains}, is related to the task of goal recognition over \textit{incomplete discrete domain models}, and resulting in the following the specific contributions.
		\begin{itemize}
			\item \emph{A new problem formalization for goal recognition over incomplete domain models} (Section~\ref{subsection:GR_IncompleteDomains_Formalism}), combining the standard formalization of goal recognition of Ramírez and Geffner~\cite{RamirezG_IJCAI2009,RamirezG_AAAI2010} with the formalization of incomplete domain models introduced by Nguyen~\etal~\cite{PlanningIncomplete_NguyenK_2014,Nguyen_AIJ_2017}. 
			This new problem formalization allows the use of incomplete domain models for recognizing goals, relaxing the need for complete and correct discrete domain models;
			\item \emph{A novel landmark extraction algorithm that deals with incomplete domain models} (Section~\ref{section:ExtractingLandmarksInIncompleteDomains}), adapted from~\cite{Hoffmann2004_OrderedLandmarks}, enabling us to use new notions of landmarks from incomplete planning instance, and therefore, obtaining more information for recognizing goals over incomplete domains;
			\item \emph{New notions of landmarks for incomplete domain models} (Section~\ref{section:ExtractingLandmarksInIncompleteDomains}, formally defined in Definitions~\ref{def:DefiniteLandmark}, \ref{def:PossibleLandmark}, and~\ref{def:OverlookedLandmark}), namely, \textit{definite}, \textit{possible}, and \textit{overlooked landmarks}, that we use to develop enhanced recognition heuristics~\cite{PereiraNirMeneguzzi_AAAI2017}; and
			\item \emph{Enhanced landmark--based heuristics for goal recognition over incomplete domain models} (Sections~\ref{subsec:goal_completion_heuristic}~and~\ref{subsec:uniqueness_heuristic}), developed based on the new notions of landmarks over incomplete domain models. 
			Experiments over thousands of goal recognition problems (in fifteen incomplete planning domain models) show that our enhanced recognition approaches are fast and accurate when dealing with incomplete domains at all variations of observability and percentage of domain incompleteness.
		\end{itemize}
	
	\item The second contribution of this thesis is about the problem of recognizing goals over \textit{approximate continuous domain models}, presented in Chapter~\ref{chapter:GR_NominalModels}. We now enumerate our specific contributions towards solving this problem.
		\begin{itemize}
			\item \emph{A new problem formalization for goal recognition over nominal models} (Section~\ref{section:GR_NominalModels_Formalism}), that extends the formalization of Ramírez and Geffner~\cite{RamirezG_IJCAI2009,RamirezG_AAAI2010} by reasoning about the agent behavior using FHOC problems, in which the transition function is not well-defined but approximate. We note that this new formalization relaxes the need for using known and well-defined transition functions, allowing the representation of transition functions as ``black boxes'', which can be learned from data using learning techniques~\cite{SayWZS:ijcai17};
			\item \emph{A goal recognition approach based on the concept of Mirroring} (Section~\ref{section:GR_NominalModels_Mirroring}), that adopts the probabilistic framework of Kaminka~\etal~\cite{Kaminka_18_AAAI} and improves the efficiency of the original \textit{Mirroring} approach~\cite{Mor_ACS_16}. 
			In Section~\ref{section:GR_NominalModels_ExperimentsEvaluation}, we show that this approach is accurate when dealing with \textit{linear} and \textit{non-linear} domain models at very low levels of observability;
			\item \emph{A goal recognition approach based on cost-differences} (Section~\ref{section:GR_NominalModels_Counterfactual}), that follows the well-known probabilistic approach of Ramírez and Geffner~in~\cite{RamirezG_AAAI2010}. 
			Our recognition approach uses the concept of artificial potential fields, like~\cite{RamirezG_AAAI2010}, but computes the cost-difference based on the values of modified cost functions. Experiments and evaluation showed that the latter approach is not as accurate as the former, possibly because the planner we use converges to unhelpful local minima when extracting the trajectories.
		\end{itemize}
\end{enumerate}

\section{Open Issues and Limitations}\label{chapter:conclusions:issues}

The contributions of this thesis are based on over two distinct types of domains models and techniques, and consequently, the issues and limitations of such contributions are not related.
In this section, we discuss the issues and limitations of our approaches to goal recognition over \textit{imperfect domain models}, and identify possible approaches to overcome such issues and limitations in future work.

As our ablation study in Section~\ref{section:GR_AblationStudy} shows, the main limitation of our enhanced heuristic approaches to goal recognition over \textit{incomplete domain models} is dealing with low observability. 
Our approaches are sensitive to the amount of landmark information, and as a result, they are not accurate at low levels of observability when increasing the percentage of domain incompleteness, \idest, for 10\% and 30\% of observability when the percentage of domain incompleteness is more than 60\%. 
he set of \textit{possible preconditions} may provide a new source of information during the landmark extraction, since our landmark extraction algorithm only explores the set of \textit{possible add effects} by ignoring the possible preconditions to build an ORPG. 
By doing so, one can potentially obtain more landmarks, and as a result, have more information to use in our heuristic approaches.

The main limitation of our approaches to goal recognition over \textit{nominal models} is that they rely on good trajectories to be accurate. 
Therefore, such recognition approaches are sensitive to the trajectories provided by the planner. 
We see in Section~\ref{section:GR_NominalModels_ExperimentsEvaluation} that, the results of one of our recognition approaches (Section~\ref{section:GR_NominalModels_Counterfactual}) have been directly affected by the gradient-based optimization planner~\cite{bueno:19:aaai,WuSS:nips17} we used in our experiments. 
We carefully analyzed the extracted trajectories for this approach, and we identified that for most problems this planner is converging to local minima. 
A possible solution to this would be using a different planner that relies on other optimization techniques, for instance, a planner that exploits Dynamic Programming (DDP)~\cite{mitrovic:10:adaptive,yamaguchi:16:icra}.

\newpage
\section{Future Work}\label{chapter:conclusions:futurework}

Besides addressing the issues and limitations we pointed out above, as future work, we aim to explore multiple avenues to extend the goal recognition approaches we presented in this thesis. 
With respect to our work on goal recognition over \textit{incomplete discrete domain models}, we intend to explore two potential ideas to extend this work.
First, we intend to use a propagated RPG to reason about \textit{impossible incomplete domain models}, much like in~\cite{WeberBryce_ICAPS_2011}, in which the authors use such information to build a planning heuristic for planning over incomplete domain models.
Second, we aim to explore recent work that could be used as part of a complete methodology to develop domains includes an approach to acquire and infer information from domains with incomplete information based plan traces.
In this work, Zhuo~\etal~\cite{RefiningSTRIPSIncomplete_Zhuo_2013} developed an approach to refine incomplete domain models based on plan traces.


In order to improve our work on goal recognition over \textit{nominal models}, we intend to investigate modern probabilistic frameworks for goal recognition from the literature that cope with Euclidean space, such as the one introduced by Masters and Sardi{\~{n}}a~in~\cite{Masters_IJCAI2017,MastersS_JAIR_19}. 
An interesting extension for this work would be addressing another imperfect aspect in this type of model, such as \textit{approximate cost functions}. 
This involves learning the \textit{cost function} from data, much like we did to learn the transition function (Section~\ref{section:DNN_NominalModels}), but using a different learning technique, such as Linear Regression~\cite{Zhang04solvinglarge}.
We have conducted some preliminary tests in this regard, by learning the cost function from data, and using it along with \textit{nominal models}. 
Specifically, we modified the implementation of $\textsc{Tf-Plan}$ in~\cite{WuSS:nips17} to extract trajectories over models in which both the transition and cost functions are approximate. 
Our preliminary tests showed that for most problems the planner is overestimating the states in the trajectories. 
To overcome this issue, we intend to use the learned approximate functions along with another planner, and then evaluate our approaches over this extended setting, in which both the transition and cost functions are approximate. 



\bibliographystyle{ppgcc-num}

\bibliography{crossref,ppgcc-thesis}

\begin{thebibliography}{100}
\pretolerance=2500
\hyphenpenalty=10000
\tolerance=200
\emergencystretch=15em
\newcommand{\enquote}[1]{``#1''}
\providecommand{\urlprefix}{\bblcaptured}
\providecommand{\selectlanguage}[1]{\relax}
\newcommand{\Capitalize}[1]{\uppercase{#1}}
\newcommand{\capitalize}[1]{\expandafter\Capitalize#1}
\providecommand{\bibAnnoteFile}[1]{%
  \IfFileExists{#1}{\begin{quotation}\noindent\textsc{Key:} #1\\
  \textsc{Annotation:}\ \input{#1}\end{quotation}}{}}
\providecommand{\bibAnnote}[2]{%
  \begin{quotation}\noindent\textsc{Key:} #1\\
  \textsc{Annotation:}\ #2\end{quotation}}
\providecommand{\eprint}[2][]{#2}

\bibitem{Tensorflow_2015}
Abadi, M.; Agarwal, A.; Barham, P.; Brevdo, E.; Chen, Z.; Citro, C.; Corrado,
  G.~S.; Davis, A.; Dean, J.; Devin, M.; Ghemawat, S.; Goodfellow, I.~J.; Harp,
  A.; Irving, G.; Isard, M.; Jia, Y.; J{\'{o}}zefowicz, R.; Kaiser, L.; Kudlur,
  M.; Levenberg, J.; Man{\'{e}}, D.; Monga, R.; Moore, S.; Murray, D.~G.; Olah,
  C.; Schuster, M.; Shlens, J.; Steiner, B.; Sutskever, I.; Talwar, K.; Tucker,
  P.~A.; Vanhoucke, V.; Vasudevan, V.; Vi{\'{e}}gas, F.~B.; Vinyals, O.;
  Warden, P.; Wattenberg, M.; Wicke, M.; Yu, Y.; Zheng, X. \enquote{Tensorflow:
  Large-scale machine learning on heterogeneous distributed systems},
  \textit{Computing Research Repository (CoRR)}, \bblvol{} abs/1603.04467,
  Mar~2016, \bblpp{} 1--19.
\bibAnnoteFile{Tensorflow_2015}

\bibitem{Amado_Demo_ICAPS_2019}
Amado, L.; Pereira, R.~F.; Aires, J.~P.; Magnaguagno, M.; Granada, R.; Licks,
  G.~P.; Meneguzzi, F. \enquote{Latrec: Recognizing goals in latent space}.
  \capitalize{\bblin{}}: Proceedings of the System Demonstrations and Exhibits
  at the International Conference on Automated Planning and Scheduling (ICAPS),
  2019, \bblpp{} 1--2.
\bibAnnoteFile{Amado_Demo_ICAPS_2019}

\bibitem{Amado2018}
Amado, L.; Pereira, R.~F.; Aires, J.~P.; Magnaguagno, M.; Granada, R.;
  Meneguzzi, F. \enquote{Goal recognition in latent space}.
  \capitalize{\bblin{}}: Proceedings of the International Joint Conference on
  Neural Networks (IJCNN), 2018, \bblpp{} 1--8.
\bibAnnoteFile{Amado2018}

\bibitem{Armentano_AIJ_2007}
Armentano, M.~G.; Amandi, A. \enquote{Plan recognition for interface agents},
  \textit{Artificial Intelligence Review}, \bblvol{}~28--2, Aug~2007, \bblpp{}
  131--162.
\bibAnnoteFile{Armentano_AIJ_2007}

\bibitem{AAAI2018_MasataroAsai_PlanningLatSpace}
Asai, M.; Fukunaga, A. \enquote{{Classical Planning in Deep Latent Space:
  Bridging the Subsymbolic-Symbolic Boundary}}. \capitalize{\bblin{}}:
  Proceedings of the Conference of the Association for the Advancement of
  Artificial Intelligence (AAAI), 2018, \bblpp{} 6094--6101.
\bibAnnoteFile{AAAI2018_MasataroAsai_PlanningLatSpace}

\bibitem{AvrahamiZilberbrandK_IJCAI2005}
Avrahami{-}Zilberbrand, D.; Kaminka, G.~A. \enquote{{Fast and Complete Symbolic
  Plan Recognition}}. \capitalize{\bblin{}}: Proceedings of the International
  Joint Conference on Artificial Intelligence (IJCAI), 2005, \bblp{} 653–658.
\bibAnnoteFile{AvrahamiZilberbrandK_IJCAI2005}

\bibitem{DoritGalAAAI07}
Avrahami-Zilberbrand, D.; Kaminka, G.~A. \enquote{Incorporating observer biases
  in keyhole plan recognition (efficiently!)}. \capitalize{\bblin{}}:
  Proceedings of the Conference of the Association for the Advancement of
  Artificial Intelligence (AAAI), 2007, \bblp{} 944–949.
\bibAnnoteFile{DoritGalAAAI07}

\bibitem{baker:09:cognition}
Baker, C.~L.; Joshua B.~Tenenbaum, J.~B.; Saxe, R. \enquote{Action
  understanding as inverse planning}, \textit{Cognition}, \bblvol{} 113--3,
  Jul~2009, \bblpp{} 329--349.
\bibAnnoteFile{baker:09:cognition}

\bibitem{bemporad:2002:lqr}
Bemporad, A.; Morari, M.; Dua, V.; N.~Pistikopoulos, E. \enquote{The explicit
  linear quadratic regulator for constrained systems}, \textit{Automatica},
  \bblvol{}~38, Jan~2002, \bblpp{} 3--20.
\bibAnnoteFile{bemporad:2002:lqr}

\bibitem{Bertsekas_DP_17}
Bertsekas, D.~P. \enquote{Dynamic Programming and Optimal Control}. Athena
  Scientific, 2017, 4th \bbledn{}, 520p.
\bibAnnoteFile{Bertsekas_DP_17}

\bibitem{LQR_BETETO2018422}
Beteto, M.~A.; Assunção, E.; Teixeira, M.~C.; Silva, E.~R.; Buzachero, L.~F.;
  Caun, R.~P. \enquote{{New Design of Robust LQR-State Derivative Controllers
  via LMIs}}, \textit{International Federation of Automatic Control},
  \bblvol{}~51--25, Nov~2018, \bblpp{} 422--427.
\bibAnnoteFile{LQR_BETETO2018422}

\bibitem{BlumFastPlanning_95}
Blum, A.~L.; Furst, M.~L. \enquote{{Fast Planning Through Planning Graph
  Analysis}}, \textit{Journal of Artificial Intelligence Research},
  \bblvol{}~90, Feb~1997, \bblpp{} 281--300.
\bibAnnoteFile{BlumFastPlanning_95}

\bibitem{borrelli:17:predictive}
Borrelli, F.; Bemporad, A.; Morari, M. \enquote{Predictive control for linear
  and hybrid systems}. Cambridge University Press, 2017, 1st \bbledn{}, 440p.
\bibAnnoteFile{borrelli:17:predictive}

\bibitem{ReachabilityBryceK_2007}
Bryce, D.; Kambhampati, S. \enquote{{A Tutorial on Planning Graph Based
  Reachability Heuristics}}, \textit{{AI} Magazine}, \bblvol{}~28--1, Mar~2007,
  \bblpp{} 47--83.
\bibAnnoteFile{ReachabilityBryceK_2007}

\bibitem{bueno:19:aaai}
Bueno, T.~P.; Barros, L.; Maua, D.~D.; Sanner, S. \enquote{Deep reactive
  policies for planning in stochastic nonlinear domains}.
  \capitalize{\bblin{}}: Proceedings of the Conference of the Association for
  the Advancement of Artificial Intelligence (AAAI), 2019, \bblpp{} 7530--7537.
\bibAnnoteFile{bueno:19:aaai}

\bibitem{calafiore:14:optimization}
Calafiore, G.~C.; El-Ghaoui, L. \enquote{Optimization Models}. Cambridge
  University Press, 2014, 1st \bbledn{}, 650p.
\bibAnnoteFile{calafiore:14:optimization}

\bibitem{davidov2006multiple}
Davidov, D.; Markovitch, S. \enquote{Multiple-goal heuristic search},
  \textit{Journal of Artificial Intelligence Research}, \bblvol{}~26, Aug~2006,
  \bblpp{} 417--451.
\bibAnnoteFile{davidov2006multiple}

\bibitem{dennett:1983}
Dennett, D. \enquote{Intentional systems in cognitive ethology: The
  "panglossian paradigm defended"}, \textit{Behavioral and Brain Sciences},
  \bblvol{}~6, Sep~1983, \bblpp{} 343--390.
\bibAnnoteFile{dennett:1983}

\bibitem{NASA_GoalRecognition_IJCAI2015}
E.{-}Mart{\'{\i}}n, Y.; R.{-}Moreno, M.~D.; Smith, D.~E. \enquote{{A Fast Goal
  Recognition Technique Based on Interaction Estimates}}.
  \capitalize{\bblin{}}: Proceedings of the International Joint Conference on
  Artificial Intelligence (IJCAI), 2015, \bblp{} 761–768.
\bibAnnoteFile{NASA_GoalRecognition_IJCAI2015}

\bibitem{FernandezGonzalez18_Scotty}
Fern{\'{a}}ndez{-}Gonz{\'{a}}lez, E.; Williams, B.~C.; Karpas, E.
  \enquote{Scottyactivity: Mixed discrete-continuous planning with convex
  optimization}, \textit{Journal of Artificial Intelligence Research},
  \bblvol{}~62, Jul~2018, \bblpp{} 579--664.
\bibAnnoteFile{FernandezGonzalez18_Scotty}

\bibitem{STRIPSFikes1971}
Fikes, R.~E.; Nilsson, N.~J. \enquote{{STRIPS}: A new approach to the
  application of theorem proving to problem solving}, \textit{Artificial
  Intelligence}, \bblvol{}~2--3, Sep~1971, \bblpp{} 189--208.
\bibAnnoteFile{STRIPSFikes1971}

\bibitem{freedman2018towards}
Freedman, R.~G.; Fung, Y.~R.; Ganchin, R.; Zilberstein, S. \enquote{Towards
  quicker probabilistic recognition with multiple goal heuristic search}.
  \capitalize{\bblin{}}: The Workshop on Plan, Activity, and Intent Recognition
  (PAIR) at the Conference of the Association for the Advancement of Artificial
  Intelligence (AAAI), 2018, \bblpp{} 1--8.
\bibAnnoteFile{freedman2018towards}

\bibitem{GarlandLesh_AAAI2002}
Garland, A.; Lesh, N. \enquote{Plan evaluation with incomplete action
  descriptions}. \capitalize{\bblin{}}: Proceedings of the Conference of the
  Association for the Advancement of Artificial Intelligence (AAAI), 2002,
  \bblpp{} 461--467.
\bibAnnoteFile{GarlandLesh_AAAI2002}

\bibitem{Bonet_Planning_13}
Geffner, H.; Bonet, B. \enquote{A Concise Introduction to Models and Methods
  for Automated Planning}. Morgan \& Claypool, 2013, 1st \bbledn{}, 141p.
\bibAnnoteFile{Bonet_Planning_13}

\bibitem{ProblemsWithElderCare_AAAI2002}
Geib, C.~W. \enquote{{Problems with Intent Recognition for Elder Care}}.
  \capitalize{\bblin{}}: Proceedings of the Conference of the Association for
  the Advancement of Artificial Intelligence (AAAI), 2002, \bblpp{} 13--17.
\bibAnnoteFile{ProblemsWithElderCare_AAAI2002}

\bibitem{GeibPlanRecognitionIntrusionDect_DARPA2001}
Geib, C.~W.; Goldman, R.~P. \enquote{{Plan Recognition in Intrusion Detection
  Systems}}. \capitalize{\bblin{}}: Proceedings of the DARPA Information
  Survivability Conference and Exposition (DISCEX), 2001, \bblpp{} 46--55.
\bibAnnoteFile{GeibPlanRecognitionIntrusionDect_DARPA2001}

\bibitem{Geib_PPR_AIJ2009}
Geib, C.~W.; Goldman, R.~P. \enquote{{A Probabilistic Plan Recognition
  Algorithm Based on Plan Tree Grammars}}, \textit{Artificial Intelligence},
  \bblvol{} 173--11, Jul~2009, \bblpp{} 1101--1132.
\bibAnnoteFile{Geib_PPR_AIJ2009}

\bibitem{AutomatedPlanning_Book2011}
Ghallab, M.; Nau, D.~S.; Traverso, P. \enquote{{Automated Planning - Theory and
  Practice.}} Elsevier, 2004, 1st \bbledn{}, 635p.
\bibAnnoteFile{AutomatedPlanning_Book2011}

\bibitem{AutomatedPlanning_Book2016}
Ghallab, M.; Nau, D.~S.; Traverso, P. \enquote{{Automated Planning and
  Acting}}. Elsevier, 2016, 1st \bbledn{}, 368p.
\bibAnnoteFile{AutomatedPlanning_Book2016}

\bibitem{goodfellow:16:dl}
Goodfellow, I.; Bengio, Y.; Courville, A. \enquote{Deep Learning}. MIT Press,
  2016, 1st \bbledn{}, 775p.
\bibAnnoteFile{goodfellow:16:dl}

\bibitem{Granada2017}
Granada, R.; Pereira, R.~F.; Monteiro, J.; Barros, R.; Ruiz, D.; Meneguzzi, F.
  \enquote{{Hybrid Activity and Plan Recognition for Video Streams}}.
  \capitalize{\bblin{}}: The Workshop on Plan, Activity, and Intent Recognition
  (PAIR) at the Conference of the Association for the Advancement of Artificial
  Intelligence (AAAI), 2017, \bblpp{} 1--8.
\bibAnnoteFile{Granada2017}

\bibitem{LQR_2013}
Hajiyev, C. \enquote{{LQR Controller with Kalman Estimator Applied to UAV
  Longitudinal Dynamics}}, \textit{Positioning}, \bblvol{}~04, Jan~2013,
  \bblpp{} 36--41.
\bibAnnoteFile{LQR_2013}

\bibitem{halpern:16:causality}
Halpern, J.~Y. \enquote{Actual Causality}. The MIT Press, 2016, 1st \bbledn{},
  229p.
\bibAnnoteFile{halpern:16:causality}

\bibitem{FFHoffmann_2001}
Hoffmann, J.; Nebel, B. \enquote{{The FF Planning System: Fast Plan Generation
  Through Heuristic Search}}, \textit{Journal of Artificial Intelligence
  Research}, \bblvol{}~14, May~2001, \bblpp{} 253--302.
\bibAnnoteFile{FFHoffmann_2001}

\bibitem{Hoffmann2004_OrderedLandmarks}
Hoffmann, J.; Porteous, J.; Sebastia, L. \enquote{{Ordered Landmarks in
  Planning}}, \textit{Journal of Artificial Intelligence Research},
  \bblvol{}~22--1, Nov~2004, \bblpp{} 215--278.
\bibAnnoteFile{Hoffmann2004_OrderedLandmarks}

\bibitem{HongGoalRecognition_2001}
Hong, J. \enquote{{Goal recognition through goal graph analysis}},
  \textit{Journal of Artificial Intelligence Research}, \bblvol{}~15, Jul~2001,
  \bblpp{} 1--30.
\bibAnnoteFile{HongGoalRecognition_2001}

\bibitem{LQR_1998}
{Jae Weon Choi}; {Young Bong See}; {Wan Suk Yoo}; {Man Hyung Lee}.
  \enquote{{LQR approach using Eigenstructure assignment with an active
  suspension control application}}. \capitalize{\bblin{}}: Proceedings of the
  IEEE International Conference on Control Applications, 1998, \bblpp{}
  1235--1239.
\bibAnnoteFile{LQR_1998}

\bibitem{Kambhampati_AAAI07}
Kambhampati, S. \enquote{Model-lite planning for the web age masses: The
  challenges of planning with incomplete and evolving domain models}.
  \capitalize{\bblin{}}: Proceedings of the Conference of the Association for
  the Advancement of Artificial Intelligence (AAAI), 2007, \bblp{} 1601–1604.
\bibAnnoteFile{Kambhampati_AAAI07}

\bibitem{Kaminka_18_AAAI}
Kaminka, G.~A.; Vered, M.; Agmon, N. \enquote{Plan recognition in continuous
  domains}. \capitalize{\bblin{}}: Proceedings of the Conference of the
  Association for the Advancement of Artificial Intelligence (AAAI), 2018,
  \bblpp{} 6202--6210.
\bibAnnoteFile{Kaminka_18_AAAI}

\bibitem{GoalRecognitionDesign_Keren2014}
Keren, S.; Gal, A.; Karpas, E. \enquote{{Goal Recognition Design}}.
  \capitalize{\bblin{}}: Proceedings of the International Conference on
  Automated Planning and Scheduling (ICAPS), 2014, \bblpp{} 1--8.
\bibAnnoteFile{GoalRecognitionDesign_Keren2014}

\bibitem{GoalRecognitionDesign_Keren2015}
Keren, S.; Gal, A.; Karpas, E. \enquote{{Goal Recognition Design for
  Non-Optimal Agents}}. \capitalize{\bblin{}}: Proceedings of the Conference of
  the Association for the Advancement of Artificial Intelligence (AAAI), 2015,
  \bblpp{} 3298--3304.
\bibAnnoteFile{GoalRecognitionDesign_Keren2015}

\bibitem{GoalRecognitionDesign_Keren2016}
Keren, S.; Gal, A.; Karpas, E. \enquote{{Goal Recognition Design with
  Non-Observable Actions}}. \capitalize{\bblin{}}: Proceedings of the
  Conference of the Association for the Advancement of Artificial Intelligence
  (AAAI), 2016, \bblpp{} 3152--3158.
\bibAnnoteFile{GoalRecognitionDesign_Keren2016}

\bibitem{kerkez2002case}
Kerkez, B.; Cox, M.~T. \enquote{{Case-Based Plan Recognition with Incomplete
  Plan Libraries}}. \capitalize{\bblin{}}: Proceedings of the Conference of the
  Association for the Advancement of Artificial Intelligence (AAAI) Fall
  Symposium on Intent Inference, 2002, \bblpp{} 52--54.
\bibAnnoteFile{kerkez2002case}

\bibitem{HM_Landmarks_2010}
Keyder, E.; Richter, S.; Helmert, M. \enquote{Sound and complete landmarks for
  and/or graphs}. \capitalize{\bblin{}}: Proceedings of the European Conference
  on Artificial Intelligence (ECAI), 2010, \bblp{} 335–340.
\bibAnnoteFile{HM_Landmarks_2010}

\bibitem{Kong_Tomi_NominalM_2013}
Kong, K.; Tomizuka, M. \enquote{Nominal model manipulation for enhancement of
  stability robustness for disturbance observer-based control systems},
  \textit{International Journal of Control, Automation and Systems},
  \bblvol{}~11, Jan~2013, \bblp{} 12–20.
\bibAnnoteFile{Kong_Tomi_NominalM_2013}

\bibitem{lee1998partial}
Lee, J.-J.; McCartney, R. \enquote{{Partial Plan Recognition with Incomplete
  Information}}. \capitalize{\bblin{}}: Proceedings of International Conference
  on Multi Agent Systems, 1998, \bblpp{} 445--446.
\bibAnnoteFile{lee1998partial}

\bibitem{ljung1998system}
Ljung, L. \enquote{System identification}. \capitalize{\capitalize{\bblin{}}:
  \textit{Signal Analysis and Prediction}}, Springer, 1998, \bblpp{} 163--173.
\bibAnnoteFile{ljung1998system}

\bibitem{Masters_IJCAI2017}
Masters, P.; Sardi{\~{n}}a, S. \enquote{{Cost-Based Goal Recognition for
  Path-Planning}}. \capitalize{\bblin{}}: Proceedings of the International
  Conference on Autonomous Agents and Multiagent Systems (AAMAS), 2017,
  \bblpp{} 750--758.
\bibAnnoteFile{Masters_IJCAI2017}

\bibitem{MastersS_JAIR_19}
Masters, P.; Sardi{\~{n}}a, S. \enquote{Cost-based goal recognition in
  navigational domains}, \textit{Journal of Artificial Intelligence Research},
  \bblvol{}~64, Feb~2019, \bblpp{} 197--242.
\bibAnnoteFile{MastersS_JAIR_19}

\bibitem{MastersS_AAMAS_19}
Masters, P.; Sardi{\~{n}}a, S. \enquote{Goal recognition for rational and
  irrational agents}. \capitalize{\bblin{}}: Proceedings of the 18th
  International Conference on Autonomous Agents and MultiAgent Systems (AAMAS),
  2019, \bblpp{} 440--448.
\bibAnnoteFile{MastersS_AAMAS_19}

\bibitem{PDDLMcdermott1998}
McDermott, D.; Ghallab, M.; Howe, A.; Knoblock, C.; Ram, A.; Veloso, M.; Weld,
  D.; Wilkins, D. \enquote{{PDDL} $-$ {The Planning Domain Definition
  Language}}. \capitalize{\bblin{}}: Proceedings of the International
  Conference on Artificial Intelligence Planning Systems (AIPS), 1998, \bblpp{}
  1--8.
\bibAnnoteFile{PDDLMcdermott1998}

\bibitem{PR_EXP_Mirsky2017}
Mirsky, R.; Gal, Y.~K.; Shieber, S.~M. \enquote{{CRADLE: An Online Plan
  Recognition Algorithm for Exploratory Domains}}, \textit{ACM Transactions on
  Intelligent Systems and Technology}, \bblvol{}~8--3, Apr~2017, \bblpp{}
  45:1--45:22.
\bibAnnoteFile{PR_EXP_Mirsky2017}

\bibitem{Mirsky_UISP17}
Mirsky, R.; Gal, Y.~K.; Tolpin, D. \enquote{Session analysis using plan
  recognition}. \capitalize{\bblin{}}: The Workshop on User Interfaces and
  Scheduling and Planning at the International Conference on Automated Planning
  and Scheduling (ICAPS), 2017, \bblpp{} 1--7.
\bibAnnoteFile{Mirsky_UISP17}

\bibitem{MIRSKY_2018_AIJ}
Mirsky, R.; Stern, R.; Gal, K.; Kalech, M. \enquote{Sequential plan
  recognition: An iterative approach to disambiguating between hypotheses},
  \textit{Artificial Intelligence}, \bblvol{} 260, Jul~2018, \bblpp{} 51--73.
\bibAnnoteFile{MIRSKY_2018_AIJ}

\bibitem{PR_Mirsky_2016}
Mirsky, R.; Stern, R.; Gal, Y.~K.; Kalech, M. \enquote{{Sequential Plan
  Recognition}}. \capitalize{\bblin{}}: Proceedings of the International Joint
  Conference on Artificial Intelligence (IJCAI), 2016, \bblpp{} 401--407.
\bibAnnoteFile{PR_Mirsky_2016}

\bibitem{mitrovic:10:adaptive}
Mitrovic, D.; Klanke, S.; Vijayakumar, S. \enquote{Adaptive optimal feedback
  control with learned internal dynamics models}, \textit{From Motor Learning
  to Interaction Learning in Robots}, \bblvol{} 264, Jan~2010, \bblpp{} 65--84.
\bibAnnoteFile{mitrovic:10:adaptive}

\bibitem{montufar:14:nips}
Montufar, G.~F.; Pascanu, R.; Cho, K.; Bengio, Y. \enquote{On the number of
  linear regions of deep neural networks}. \capitalize{\bblin{}}: Proceedings
  of the Annual Conference on Neural Information Processing Systems (NIPS),
  2014, \bblpp{} 1--9.
\bibAnnoteFile{montufar:14:nips}

\bibitem{Sirdi_2007_CSC}
M'Sirdi, N.; Rabhi, A.; Naamane, A. \enquote{{A Nominal Model for Vehicle
  Dynamics and Estimation of Input Forces and Tire Friction}}.
  \capitalize{\bblin{}}: International Conference on Control Systems and
  Computer Science (CSC), 2007, \bblpp{} 1--7.
\bibAnnoteFile{Sirdi_2007_CSC}

\bibitem{nair:10:icml}
Nair, V.; Hinton, G.~E. \enquote{Rectified linear units improve restricted
  boltzmann machines}. \capitalize{\bblin{}}: Proceedings of the International
  Conference on Machine Learning (ICML), 2010, \bblp{} 807–814.
\bibAnnoteFile{nair:10:icml}

\bibitem{Nguyen_AIJ_2017}
Nguyen, T.; Sreedharan, S.; Kambhampati, S. \enquote{{Robust Planning with
  Incomplete Domain Models}}, \textit{Artificial Intelligence}, \bblvol{} 245,
  Apr~2017, \bblpp{} 134 -- 161.
\bibAnnoteFile{Nguyen_AIJ_2017}

\bibitem{PlanningIncomplete_NguyenK_2014}
Nguyen, T.~A.; Kambhampati, S. \enquote{{A Heuristic Approach to Planning with
  Incomplete {STRIPS} Action Models}}. \capitalize{\bblin{}}: Proceedings of
  the International Conference on Automated Planning and Scheduling (ICAPS),
  2014, \bblp{} 190–198.
\bibAnnoteFile{PlanningIncomplete_NguyenK_2014}

\bibitem{DDNs_Functions_2000}
{Patra}, J.~C.; {Pal}, R.~N.; {Chatterji}, B.~N.; {Panda}, G.
  \enquote{Identification of nonlinear dynamic systems using functional link
  artificial neural networks}, \textit{IEEE Transactions on Systems, Man, and
  Cybernetics}, \bblvol{}~29--2, Apr~1999, \bblpp{} 254--262.
\bibAnnoteFile{DDNs_Functions_2000}

\bibitem{PattisonGoalRecognition_2010}
Pattison, D.; Long, D. \enquote{{Domain Independent Goal Recognition.}}
  \capitalize{\bblin{}}: Proceedings of the Starting AI Researcher Symposium
  (STAIRS), 2010, \bblpp{} 1--10.
\bibAnnoteFile{PattisonGoalRecognition_2010}

\bibitem{pearl:09:causality}
Pearl, J. \enquote{Causality: Models, Reasoning and Inference}. Cambridge
  University Press, 2009, 1st \bbledn{}, 464p.
\bibAnnoteFile{pearl:09:causality}

\bibitem{PereiraMeneguzzi_ECAI2016}
Pereira, R.~F.; Meneguzzi, F. \enquote{{Landmark-Based Plan Recognition}}.
  \capitalize{\bblin{}}: Proceedings of the European Conference on Artificial
  Intelligence (ECAI), 2016, \bblpp{} 1706--1707.
\bibAnnoteFile{PereiraMeneguzzi_ECAI2016}

\bibitem{Pereira_Meneguzzi_PRDatasets_2017}
Pereira, R.~F.; Meneguzzi, F. \enquote{{Goal and Plan Recognition Datasets
  using Classical Planning Domains}}. (Accessed July 2019), 2017.
\bibAnnoteFile{Pereira_Meneguzzi_PRDatasets_2017}

\bibitem{AAAI2018_PereiraMeneguzzi}
Pereira, R.~F.; Meneguzzi, F. \enquote{{Goal Recognition in Incomplete Domain
  Models}}. \capitalize{\bblin{}}: Proceedings of Association for the
  Advancement of Artificial Intelligence (AAAI), 2018, \bblpp{} 8127--8128.
\bibAnnoteFile{AAAI2018_PereiraMeneguzzi}

\bibitem{PAIR18_PereiraMeneguzzi}
Pereira, R.~F.; Meneguzzi, F. \enquote{{Goal Recognition in Incomplete STRIPS
  Domain Models}}. \capitalize{\bblin{}}: The Workshop on Plan, Activity, and
  Intent Recognition (PAIR) at the Conference of the Association for the
  Advancement of Artificial Intelligence (AAAI), 2018, \bblpp{} 1--8.
\bibAnnoteFile{PAIR18_PereiraMeneguzzi}

\bibitem{PereiraOrenMeneguzzi_AAMAS2017}
Pereira, R.~F.; Oren, N.; Meneguzzi, F. \enquote{{Detecting Commitment
  Abandonment by Monitoring Sub-Optimal Steps During Plan Execution}}.
  \capitalize{\bblin{}}: Proceedings of the Conference on Autonomous Agents and
  MultiAgent Systems (AAMAS), 2017, \bblpp{} 1685--1687.
\bibAnnoteFile{PereiraOrenMeneguzzi_AAMAS2017}

\bibitem{PereiraNirMeneguzzi_AAAI2017}
Pereira, R.~F.; Oren, N.; Meneguzzi, F. \enquote{{Landmark-Based Heuristics for
  Goal Recognition}}. \capitalize{\bblin{}}: Proceedings of the Conference of
  the Association for the Advancement of Artificial Intelligence (AAAI), 2017,
  \bblpp{} 3622--3628.
\bibAnnoteFile{PereiraNirMeneguzzi_AAAI2017}

\bibitem{PAIR17_PereiraOrenMeneguzzi}
Pereira, R.~F.; Oren, N.; Meneguzzi, F. \enquote{{Monitoring Plan Optimality
  using Landmarks and Domain-Independent Heuristics}}. \capitalize{\bblin{}}:
  The Workshop on Plan, Activity, and Intent Recognition (PAIR) at the
  Conference of the Association for the Advancement of Artificial Intelligence
  (AAAI), 2017, \bblpp{} 1--8.
\bibAnnoteFile{PAIR17_PereiraOrenMeneguzzi}

\bibitem{PereiraOM_AIJ_2020}
Pereira, R.~F.; Oren, N.; Meneguzzi, F. \enquote{Landmark-based approaches for
  goal recognition as planning}, \textit{Artificial Intelligence}, \bblvol{}
  279, Feb~2020, \bblpp{} 1--32.
\bibAnnoteFile{PereiraOM_AIJ_2020}

\bibitem{PereiraOM_TIST_2020}
Pereira, R.~F.; Oren, N.; Meneguzzi, F. \enquote{Using sub-optimal plan
  detection to identify commitment abandonment in discrete environments},
  \textit{ACM Transactions on Intelligent Systems and Technology},
  \bblvol{}~11, Feb~2020, \bblpp{} 1--26.
\bibAnnoteFile{PereiraOM_TIST_2020}

\bibitem{PereiraPM_ICAPS_19}
Pereira, R.~F.; Pereira, A.~G.; Meneguzzi, F. \enquote{Landmark-enhanced
  heuristics for goal recognition in incomplete domain models}.
  \capitalize{\bblin{}}: Proceedings of the International Conference on
  Automated Planning and Scheduling (ICAPS), 2019, \bblpp{} 329--337.
\bibAnnoteFile{PereiraPM_ICAPS_19}

\bibitem{PereiraVMR_IJCAI19}
Pereira, R.~F.; Vered, M.; Meneguzzi, F.; Ram{\'{\i}}rez, M. \enquote{Online
  probabilistic goal recognition over nominal models}. \capitalize{\bblin{}}:
  Proceedings of the International Joint Conference on Artificial Intelligence
  (IJCAI), 2019, \bblpp{} 5547--5553.
\bibAnnoteFile{PereiraVMR_IJCAI19}

\bibitem{WellmanTraffic_2013}
Pynadath, D.~V.; Wellman, M.~P. \enquote{{Accounting for Context in Plan
  Recognition, with Application to Traffic Monitoring}}, \textit{Computing
  Research Repository (CoRR)}, \bblvol{} abs/1302.4980, Aug~2013, \bblpp{}
  472--481.
\bibAnnoteFile{WellmanTraffic_2013}

\bibitem{RamirezG_IJCAI2009}
Ram{\'{\i}}rez, M.; Geffner, H. \enquote{{Plan Recognition as Planning}}.
  \capitalize{\bblin{}}: Proceedings of the International Joint Conference on
  Artificial Intelligence (IJCAI), 2009, \bblpp{} 1778--1783.
\bibAnnoteFile{RamirezG_IJCAI2009}

\bibitem{RamirezG_AAAI2010}
Ram{\'{\i}}rez, M.; Geffner, H. \enquote{{Probabilistic Plan Recognition Using
  Off-the-Shelf Classical Planners}}. \capitalize{\bblin{}}: Proceedings of the
  Conference of the Association for the Advancement of Artificial Intelligence
  (AAAI), 2010, \bblp{} 1121–1126.
\bibAnnoteFile{RamirezG_AAAI2010}

\bibitem{rasmussen:02:gaussian}
Rasmussen, C.~E.; Williams, C. K.~I. \enquote{Gaussian Processes for Machine
  Learning}. MIT Press, 2006, 1st \bbledn{}, 245p.
\bibAnnoteFile{rasmussen:02:gaussian}

\bibitem{LandmarksRichter_2008}
Richter, S.; Helmert, M.; Westphal, M. \enquote{{Landmarks Revisited}}.
  \capitalize{\bblin{}}: Proceedings of the Conference of the Association for
  the Advancement of Artificial Intelligence (AAAI), 2008, \bblpp{} 975--982.
\bibAnnoteFile{LandmarksRichter_2008}

\bibitem{RichterLPG_2010}
Richter, S.; Westphal, M. \enquote{{The LAMA Planner: Guiding Cost-based
  Anytime Planning with Landmarks}}, \textit{Journal of Artificial Intelligence
  Research}, \bblvol{}~39--1, Jan~2010, \bblpp{} 127--177.
\bibAnnoteFile{RichterLPG_2010}

\bibitem{AIModernApproachRussell_2009}
Russell, S.; Norvig, P. \enquote{{Artificial intelligence: A Modern Approach}}.
  Prentice Hall, 2010, 3 \bbledn{}, 1132p.
\bibAnnoteFile{AIModernApproachRussell_2009}

\bibitem{sanner:11:rddl}
Sanner, S. \enquote{{Relational Dynamic Influence Diagram Language (RDDL):
  Language Description}}, \bbltechrep{}, Australian National University, 2011,
  25p.
\bibAnnoteFile{sanner:11:rddl}

\bibitem{SayS:ijcai18}
Say, B.; Sanner, S. \enquote{Planning in factored state and action spaces with
  learned binarized neural network transition models}. \capitalize{\bblin{}}:
  Proceedings of the International Joint Conference on Artificial Intelligence
  (IJCAI), 2018, \bblpp{} 4815--4821.
\bibAnnoteFile{SayS:ijcai18}

\bibitem{SayWZS:ijcai17}
Say, B.; Wu, G.; Zhou, Y.~Q.; Sanner, S. \enquote{Nonlinear hybrid planning
  with deep net learned transition models and mixed-integer linear
  programming}. \capitalize{\bblin{}}: Proceedings of the International Joint
  Conference on Artificial Intelligence (IJCAI), 2017, \bblpp{} 750--756.
\bibAnnoteFile{SayWZS:ijcai17}

\bibitem{SchmidtSG_78}
Schmidt, C.~F.; Sridharan, N.~S.; Goodson, J.~L. \enquote{{The Plan Recognition
  Problem: An Intersection of Psychology and Artificial Intelligence}},
  \textit{Journal of Artificial Intelligence Research}, \bblvol{}~11--1-2,
  May~1978, \bblpp{} 45--83.
\bibAnnoteFile{SchmidtSG_78}

\bibitem{Sohrabi_IJCAI2016}
Sohrabi, S.; Riabov, A.~V.; Udrea, O. \enquote{{Plan Recognition as Planning
  Revisited}}. \capitalize{\bblin{}}: Proceedings of the International Joint
  Conference on Artificial Intelligence (IJCAI), 2016, \bblpp{} 3258--3264.
\bibAnnoteFile{Sohrabi_IJCAI2016}

\bibitem{ActivityIntentPlanRecogition_Book2014}
Sukthankar, G.; Goldman, R.~P.; Geib, C.; Pynadath, D.~V.; Bui, H.~H.
  \enquote{{Plan, Activity, and Intent Recognition: Theory and Practice}}.
  Elsevier, 2014, 1st \bbledn{}, 424p.
\bibAnnoteFile{ActivityIntentPlanRecogition_Book2014}

\bibitem{Sutton_2018_RLI}
Sutton, R.~S.; Barto, A.~G. \enquote{Reinforcement Learning: An Introduction}.
  USA: A Bradford Book, 2018, 1st \bbledn{}, 552p.
\bibAnnoteFile{Sutton_2018_RLI}

\bibitem{tassa:12:iros}
Tassa, Y.; Erez, T.; Todorov, E. \enquote{Synthesis and stabilization of
  complex behaviours through online trajectory optimization}.
  \capitalize{\bblin{}}: Proceedings of the International Conference on
  Intelligent Robots and Systems (IROS), 2012, \bblpp{} 4906--4913.
\bibAnnoteFile{tassa:12:iros}

\bibitem{UzanDSG_PR_2015}
Uzan, O.; Dekel, R.; Seri, O.; Gal, Y.~K. \enquote{{Plan Recognition for
  Exploratory Learning Environments Using Interleaved Temporal Search}},
  \textit{AI Magazine}, \bblvol{}~36--2, Aug~2015, \bblpp{} 10--21.
\bibAnnoteFile{UzanDSG_PR_2015}

\bibitem{Mor_ACS_16}
Vered, M.; Kaminka, G.~A.; Biham, S. \enquote{Online goal recognition through
  mirroring: Humans and agents}. \capitalize{\bblin{}}: Proceedings of the
  Annual Conference on Advances in Cognitive Systems (ACS), 2016, \bblpp{}
  1--12.
\bibAnnoteFile{Mor_ACS_16}

\bibitem{MorEtAl_AAMAS18}
Vered, M.; Pereira, R.~F.; Magnaguagno, M.; Meneguzzi, F.; Kaminka, G.~A.
  \enquote{{Towards Online Goal Recognition Combining Goal Mirroring and
  Landmarks}}. \capitalize{\bblin{}}: Proceedings of the International
  Conference on Autonomous Agents and Multiagent Systems (AAMAS), 2018,
  \bblpp{} 2112--2114.
\bibAnnoteFile{MorEtAl_AAMAS18}

\bibitem{Vicent_ActionLandmarks_2005}
Vidal, V.; Geffner, H. \enquote{{Solving Simple Planning Problems with More
  Inference and No Search}}. \capitalize{\bblin{}}: Proceedings of the
  Conference on Principles and Practice of Constraint Programming (CP), 2005,
  \bblpp{} 682--696.
\bibAnnoteFile{Vicent_ActionLandmarks_2005}

\bibitem{warren:89:icra}
Warren, C.~W. \enquote{Global path planning using artificial potential fields}.
  \capitalize{\bblin{}}: Proceedings of the International Conference on
  Robotics and Automation (ICRA), 1989, \bblpp{} 316--321.
\bibAnnoteFile{warren:89:icra}

\bibitem{WeberBryce_ICAPS_2011}
Weber, C.; Bryce, D. \enquote{{Planning and Acting in Incomplete Domains}}.
  \capitalize{\bblin{}}: Proceedings of the International Conference on
  Automated Planning and Scheduling (ICAPS), 2011, \bblpp{} 1--8.
\bibAnnoteFile{WeberBryce_ICAPS_2011}

\bibitem{WuSS:nips17}
Wu, G.; Say, B.; Sanner, S. \enquote{Scalable planning with tensorflow for
  hybrid nonlinear domains}. \capitalize{\bblin{}}: Proceedings of the Annual
  Conference on Neural Information Processing Systems (NIPS), 2017, \bblpp{}
  6273--6283.
\bibAnnoteFile{WuSS:nips17}

\bibitem{yamaguchi:16:icra}
Yamaguchi, A.; Atkeson, C.~G. \enquote{Neural networks and differential dynamic
  programming for reinforcement learning problems}. \capitalize{\bblin{}}:
  Proceedings of the International Conference on Robotics and Automation
  (ICRA), 2016, \bblpp{} 5434--5441.
\bibAnnoteFile{yamaguchi:16:icra}

\bibitem{Zhang04solvinglarge}
Zhang, T. \enquote{Solving large scale linear prediction problems using
  stochastic gradient descent algorithms}. \capitalize{\bblin{}}: Proceedings
  of the International Conference on Machine Learning (ICML), 2004, \bblpp{}
  919--926.
\bibAnnoteFile{Zhang04solvinglarge}

\bibitem{Landmarks_Zhugivan_2003}
Zhu, L.; Givan, R. \enquote{Landmark extraction via planning graph
  propagation}. \capitalize{\bblin{}}: Proceedings of the Doctoral Consortium
  at the International Conference on Automated Planning and Scheduling (ICAPS),
  2003, \bblpp{} 1--7.
\bibAnnoteFile{Landmarks_Zhugivan_2003}

\bibitem{Zhuo_EtAl_TIST_2019}
Zhuo, H.~H. \enquote{{Recognizing Multi-Agent Plans When Action Models and Team
  Plans Are Both Incomplete}}, \textit{ACM Transactions on Intelligent Systems
  and Technology}, \bblvol{}~10--3, May~2019, \bblpp{} 1--24.
\bibAnnoteFile{Zhuo_EtAl_TIST_2019}

\bibitem{ZhuoL_MAPR_11}
Zhuo, H.~H.; Li, L. \enquote{Multi-agent plan recognition with partial team
  traces and plan libraries}. \capitalize{\bblin{}}: Proceedings of the
  International Joint Conference on Artificial Intelligence (IJCAI), 2011,
  \bblpp{} 484--489.
\bibAnnoteFile{ZhuoL_MAPR_11}

\bibitem{RefiningSTRIPSIncomplete_Zhuo_2013}
Zhuo, H.~H.; Nguyen, T.~A.; Kambhampati, S. \enquote{Refining incomplete
  planning domain models through plan traces}. \capitalize{\bblin{}}:
  Proceedings of the International Joint Conference on Artificial Intelligence
  (IJCAI), 2013, \bblpp{} 2451--2458.
\bibAnnoteFile{RefiningSTRIPSIncomplete_Zhuo_2013}

\bibitem{MAPR_STRIPS_Zhuo_2012}
Zhuo, H.~H.; Yang, Q.; Kambhampati, S. \enquote{{Action-Model Based Multi-agent
  Plan Recognition}}. \capitalize{\bblin{}}: Proceedings of the Annual
  Conference on Neural Information Processing Systems (NIPS), \bblpp{}
  377--385.
\bibAnnoteFile{MAPR_STRIPS_Zhuo_2012}

\end{thebibliography}


\appendix
%
%
\chapter{Landmark-Based Heuristics for Goal Recognition}\label{appendixA:goalrecognition_heuristics}

In this Appendix, we develop goal recognition heuristics that rely on planning landmarks over \textit{complete and correct domains models}. We next show how we build such heuristics, presenting examples and theoretical properties. Note that we build the goal recognition heuristics presented in Chapter~\ref{chapter:GR_IncompleteDomains} using the concepts developed in this Appendix.

\section*{Computing Achieved Landmarks in Observations}\label{subsec:computingAchievedLandmarks}

An essential part of our heuristic approaches to goal recognition is the ability to track and compute the evidence of achieved fact landmarks in the observations. 
To do so, we compute the evidence of achieved fact landmarks in preconditions and effects of observed actions during a plan execution using the \textsc{ComputeAchievedLandmarks} function shown in Algorithm~\ref{alg:ComputeAchievedLandmarks}. 
This algorithm takes as input an initial state $\mathcal{I}$, a set of candidate goals $\mathcal{G}$, a sequence of observed actions $Obs$, and a map $\mathcal{L}_{\mathcal{G}}$ containing candidate goals and their extracted fact landmarks (provided by the \textsc{ExtractLandmarks}\footnote{This landmark extraction algorithm takes as input a planning domain definition $\Xi = \langle \Sigma, \mathcal{A}\rangle$, an initial state $\mathcal{I}$, and a set of candidate goals $\mathcal{G}$ or a single goal $G$. In case the input is a set of candidate goals $\mathcal{G}$, this function outputs a map $\mathcal{L}_{\mathcal{G}}$ that associates candidate goals to their respective ordered fact landmarks (\idest, a set of landmarks with an order relation).} function which computes fact landmarks given a planning domain). 
Note that Algorithm~\ref{alg:ComputeAchievedLandmarks} can be easily modified to allow it to deal with observations as states, so instead of analyzing preconditions and effects of actions, we compare the observations directly to computed landmarks.

Algorithm~\ref{alg:ComputeAchievedLandmarks} iterates over the set of candidate goals $\mathcal{G}$ (Line~\ref{alg:line:IterateCandidateGoals}) selecting the fact landmarks $\mathcal{L}_{G}$ of each goal $G$ in $\mathcal{L}_{\mathcal{G}}$ in Line~\ref{alg:line:GetLandmarks} and computes the fact landmarks that are in the initial state in Line~\ref{alg:line:CheckAchievedLandmarksInitialState}.
With this information, the algorithm iterates over the observed actions $Obs$ to compute the achieved fact landmarks of $G$ in Lines~\ref{alg:line:IterateObservations} to~\ref{alg:line:AddAchievedLandmarksGoal}. 
For each observed action $o$ in $Obs$, the algorithm computes all fact landmarks of $G$ that are either in the preconditions or effects of $o$ in Line~\ref{alg:line:CheckAchievedLandmarks}. 
As we deal with partial observations in a plan execution some executed actions may be missing from the observation sequence, thus whenever we identify a fact landmark, we also infer that its predecessors must have been achieved in Line~\ref{alg:line:CheckAchievedLandmarksPredecessors}. 
For example, consider that the set of fact landmarks to achieve a goal from a state is represented by the following ordered facts: \pred{(at A)} $\prec$ \pred{(at B)} $\prec$ \pred{(at C)} $\prec$ \pred{(at D)}, and we observe just one action during a plan execution, and this observed action contains the fact landmark \pred{(at C)} as an effect. 
From this observed action, we can infer that the predecessors of \pred{(at C)} must have been achieved before this observation (\idest, \pred{(at A)} and \pred{(at B)}). Therefore, we also include them as achieved landmarks. 
At the end of each iteration over an observed action $o$, the algorithm stores the set of achieved landmarks of $G$ in $\mathcal{AL}_{G}$ in Line \ref{alg:line:AddAchievedLandmarksGoal}. 
Finally, after computing the evidence of achieved landmarks in the observations for a candidate goal $G$, the algorithm stores the set of achieved landmarks $\mathcal{AL}_{G}$ of $G$ in $\Lambda_{\mathcal{G}}$ (Line~\ref{alg:line:AddAchievedLandmarksIntoMap}) and returns a map $\Lambda_{\mathcal{G}}$ containing all candidate goals and their respective achieved fact landmarks (Line~\ref{alg:line:ReturnAchievedLandmarks}). Example~\ref{exemp:computingAchievedLandmarks} illustrates the execution of Algorithm~\ref{alg:ComputeAchievedLandmarks} to compute achieved landmarks from the observations of our running example.

\floatname{algorithm}{Algorithm}
\begin{algorithm}[h!]
    \caption{Compute Achieved Landmarks in Observations.}
    \textbf{Input:} $\mathcal{I}$ \textit{initial state}, $\mathcal{G}$ \textit{set of candidate goals}, $Obs$ \textit{observations}, and $\mathcal{L}_{\mathcal{G}}$ \textit{goals and their extracted landmarks}.
    \\\textbf{Output:} \textit{A map of goals to their achieved landmarks.}
	\label{alg:ComputeAchievedLandmarks}
    \begin{algorithmic}[1]
        \Function{ComputeAchievedLandmarks}{$\mathcal{I}, \mathcal{G}, Obs, \mathcal{L}_{\mathcal{G}}$}
        \State $\Lambda_{\mathcal{G}} \gets \langle \rangle$ 
        \For{\textbf{each} goal $G$ in $\mathcal{G}$}\label{alg:line:IterateCandidateGoals} \Comment{\textit{Map goals $\mathcal{G}$ to their respective achieved landmarks}.}
			\State $\mathcal{L}_{G} \gets$ fact landmarks of $G$ s.t $\langle G, \mathcal{L}_{G}\rangle$ in $\mathcal{L}_{\mathcal{G}}$\label{alg:line:GetLandmarks} 
			\State $\mathcal{L_{\mathcal{I}}} \gets$ all fact landmarks $L \in \mathcal{I}$\label{alg:line:CheckAchievedLandmarksInitialState}
			\State $\mathcal{L} \gets \emptyset$
			\For{\textbf{each} observed action $o$ in $Obs$}\label{alg:line:IterateObservations}
				\State $\mathcal{L} \gets$ $\{ L \in \mathcal{L}_{G} | L \in \mathit{pre}(o) \cup \mathit{eff}(o)^{+} \land L \notin \mathcal{L} \}$\label{alg:line:CheckAchievedLandmarks}
				\State $\mathcal{L}_{\prec} \gets$ predecessors $L_{\prec}$ of all $L \in \mathcal{L}$, s.t $L_{\prec} \notin  \mathcal{L}$ \label{alg:line:CheckAchievedLandmarksPredecessors} 
				\State $\mathcal{AL}_{G} \gets \mathcal{AL}_{G} \cup \lbrace \mathcal{L_{\mathcal{I}}} \cup \mathcal{L} \cup \mathcal{L}_{\prec}\rbrace$\label{alg:line:AddAchievedLandmarksGoal}
			\EndFor
			\State $\Lambda_{\mathcal{G}}(G) \gets \mathcal{AL}_{G}$ \Comment{\textit{Achieved landmarks of $G$.}}\label{alg:line:AddAchievedLandmarksIntoMap}
		\EndFor
		\State \textbf{return} $\Lambda_{\mathcal{G}}$\label{alg:line:ReturnAchievedLandmarks}
        \EndFunction
    \end{algorithmic}
\end{algorithm} 

\begin{example}\label{exemp:computingAchievedLandmarks}
Consider the \textsc{Blocks-World} example from {\normalfont Figure~\ref{fig:GoalRecognition-BlocksWorld}}, and the following observed actions: {\normalfont \pred{(unstack E A)}} and {\normalfont \pred{(stack E D)}}.  From these observed actions, the candidate goal {\normalfont \pred{RED}}, and the set of fact landmarks of this candidate goal {\normalfont (Figure~\ref{fig:RED-AchievedLandmarks})}, our algorithm computes that the following fact landmarks have been achieved: 

{
\normalfont
\begin{itemize}
	\item $\mathcal{AL}_{\pred{RED}}=\lbrace$\pred{[(clear R)]}, \pred{[(on E D)]}, \\\pred{[(clear R) (ontable R) (handempty)]}, \\ \pred{[(on E A) (clear E) (handempty)]}, \\\pred{[(clear D) (holding E)]}, \\\pred{[(on D B) (clear D) (handempty)]}$\rbrace$
\end{itemize}
}

In the preconditions of {\normalfont \pred{(unstack E A)}} the algorithm computes {\normalfont \pred{[(on E A) (clear E) (handempty)]}}. 
Subsequently, in the preconditions and effects of {\normalfont \pred{(stack E D)}} the algorithm computes {\normalfont \pred{[(clear D) (holding E)]}} and {\normalfont \pred{[(on E D)]}}, while it computes the other achieved landmarks for the word {\normalfont \pred{RED}} from the initial state.
{\normalfont Figure~\ref{fig:RED-AchievedLandmarks}} shows the set of achieved landmarks for the word {\normalfont \pred{RED}} in gray.
\end{example}

\begin{figure}[th!]
  \centering
  \includegraphics[width=0.8\linewidth]{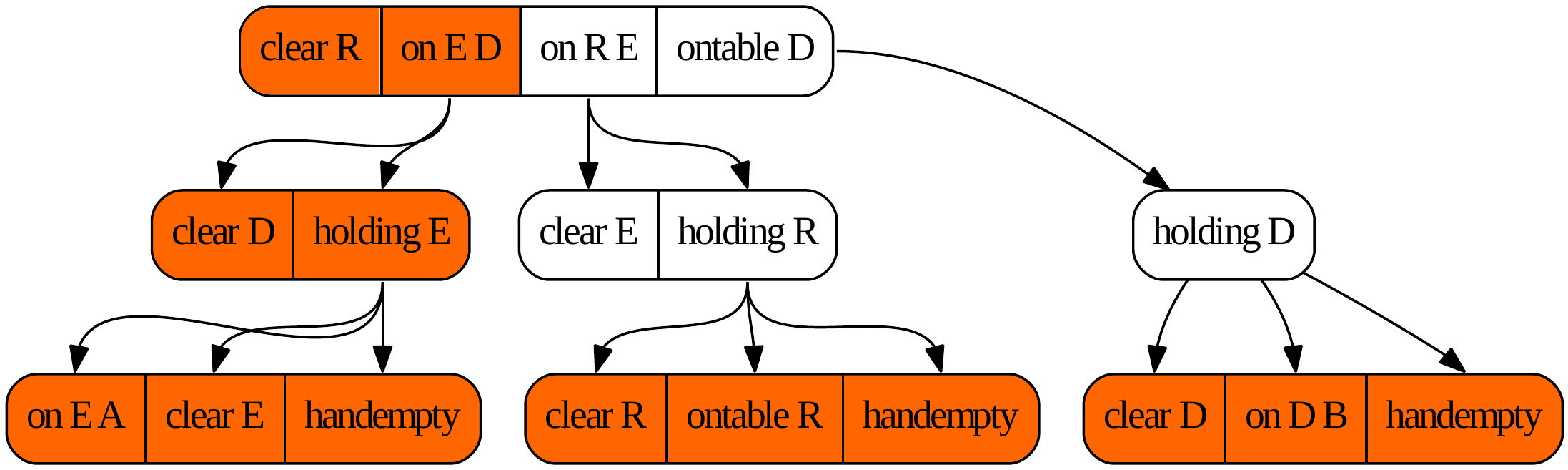}
  \caption{Ordered fact landmarks extracted for the word \pred{RED}. Fact landmarks that must be true together are represented by connected boxes. Connected boxes in grey represent achieved fact landmarks. Edges represent prerequisites between landmarks.}
  \label{fig:RED-AchievedLandmarks}
\end{figure}

The complexity of computing achieved landmarks in observations (Algorithm~\ref{alg:ComputeAchievedLandmarks}) with the process of extracting landmarks ($EL$) is: $O(EL + |\mathcal{G}|\cdot|Obs|\cdot|\mathcal{L}_{\mathcal{G}}|)$, where $\mathcal{G}$ is the set of candidate goals, $Obs$ is the observation sequence, and $\mathcal{L}_{\mathcal{G}}$ is the extracted landmarks for $\mathcal{G}$. 
The complexity of our approach is dominated by the complexity of $EL$, and thus, given a suitable implementation of $EL$, our approach has polynomial complexity.

\section*{Landmark-Based Goal Completion Heuristic}\label{subsec:goalCompletionHeuristic}
We now describe a goal recognition heuristic that estimates the percentage of completion of a goal based on the number of landmarks that have been detected, and are required to achieve that goal~\cite{PereiraNirMeneguzzi_AAAI2017}. 
This estimate represents the percentage of sub-goals in a goal that have been accomplished based on the evidence of achieved fact landmarks in the observations. 
We note that a candidate goal is composed of sub-goals comprised of the atomic facts that are part of a conjunction of facts in the goal definition.

Our recognition heuristic estimates the percentage of completion towards a goal by using the set of achieved landmarks computed by Algorithm~\ref{alg:ComputeAchievedLandmarks} (\textsc{ComputeAchievedLandmarks}).
Namely, this heuristic operates by aggregating the percentage of completion of each sub-goal into an overall percentage of completion for all facts of a goal. 
We denote this heuristic as $\mathit{h_{gc}}$, and it is formally defined by Equation~\ref{eq:heuristic}, where $\mathcal{AL}_{g}$ is the number of achieved landmarks from observations of every sub-goal $g$ of a goal $G$ in $\mathcal{AL}_{G}$, and $\mathcal{L}_{g}$ represents the number of necessary landmarks to achieve every sub-goal $g$ of $G$ in $\mathcal{L}_{G}$.

\begin{equation}
\label{eq:heuristic}
h_{gc}(G, \mathcal{AL}_{G}, \mathcal{L}_{G}) = \left(\frac{\sum_{g \in G} \frac{|\mathcal{AL}_{g} \in \mathcal{AL}_{G} |}{|\mathcal{L}_{g} \in \mathcal{L}_{G}|}}{ |G| }\right)
\end{equation}

Thus, heuristic $\mathit{h_{gc}}$ estimates the completion of a goal $G$ by calculating the ratio between the sum of the percentage of completion for every sub-goal $g \in G$, \idest, $\sum_{g \in G} \frac{|\mathcal{AL}_{g} \in \mathcal{AL}_{G} |}{|\mathcal{L}_{g} \in \mathcal{L}_{G}|}$, and the size $|G|$ of the set of sub-goals, that is, the number of sub-goals in $G$. 

Algorithm~\ref{alg:RecognizeHeuristicCompletion} describes how to recognize goals using the $\mathit{h_{gc}}$ heuristic and takes as input a goal recognition problem $T_{GR}$, as well as a threshold value $\theta$. 
The $\theta$ threshold gives us flexibility to avoid eliminating candidate goals whose percentage of goal completion are close to the highest completion value.
In Line~\ref{alg:Algo2:ExtractLandmarks}, the algorithm uses the \textsc{ExtractLandmarks} function to extract fact landmarks for all candidate goals. 
By taking as input the initial state $\mathcal{I}$, the observations $O$, and the extracted landmarks $\mathcal{L}_{\mathcal{G}}$, in Line~\ref{alg:Algo2:ComputeAchievedLandmarks}, our algorithm first computes the set of achieved landmarks $\Lambda_{\mathcal{G}}$ for every candidate goal using Algorithm~\ref{alg:ComputeAchievedLandmarks}. 
Finally, the algorithm uses the heuristic $\mathit{h_{gc}}$ to estimate goal completion for every candidate $G$ in $\mathcal{G}$, and as output (Line~\ref{alg:Algo2:ReturnHeuristicCompletion}), the algorithm returns those candidate goals with the highest estimated value within the threshold $\theta$. Example~\ref{exemp:goalCompletionHeuristic} shows how heuristic $\mathit{h_{gc}}$ estimates the completion of a candidate goal. 

\floatname{algorithm}{Algorithm}
\begin{algorithm}[h!]
    \caption{Recognize goals using the Goal Completion Heuristic $\mathit{h_{gc}}$.} 
    \textbf{Input:} $\Xi$ \textit{planning domain definition}, $\mathcal{I}$ \textit{initial state}, $\mathcal{G}$ \textit{set of candidate goals}, $Obs$ \textit{observations}, and $\theta$ \emph{threshold}.
    \\\textbf{Output:} \textit{Recognized goal(s).}
    \label{alg:RecognizeHeuristicCompletion}
    \begin{algorithmic}[1]
        \Function{Recognize}{$\Xi, \mathcal{I}, \mathcal{G}, Obs, \theta$}
        		\State $\mathcal{L}_{\mathcal{G}} \gets$ \textsc{ExtractLandmarks}($\Xi, \mathcal{I}, \mathcal{G}$)\label{alg:Algo2:ExtractLandmarks}
        		\State $\Lambda_{\mathcal{G}} \gets$ \textsc{ComputeAchievedLandmarks}($\mathcal{I}, \mathcal{G}, Obs, \mathcal{L}_{\mathcal{G}}$)\label{alg:Algo2:ComputeAchievedLandmarks}
        		\State $\mathit{maxh} \gets \displaystyle \max_{G' \in \mathcal{G}} h_{gc}(G',\Lambda_{\mathcal{G}}(G'), \mathcal{L}_{\mathcal{G}}(G'))$
        		\State \textbf{return} {all $G$ s.t $G \in \mathcal{G}$ and \newline {\phantom{return }} $\mathit{h_{gc}}(G, \Lambda_{\mathcal{G}}(G), \mathcal{L}_{\mathcal{G}}(G)) \geq (\mathit{maxh} - \theta)$}\label{alg:Algo2:ReturnHeuristicCompletion}
        \EndFunction
    \end{algorithmic}
\end{algorithm}

\begin{example}\label{exemp:goalCompletionHeuristic}
As an example of how heuristic $\mathit{h_{gc}}$ estimates goal completion of a candidate goal, recall the \textsc{Blocks-World} example from {\normalfont Figure~\ref{fig:GoalRecognition-BlocksWorld}}. Consider that among these candidate goals {\normalfont (\pred{RED}, \pred{BED},} and {\normalfont \pred{SAD})} the correct hidden goal is {\normalfont \pred{RED}}, and we observe the following partial sequence of actions: {\normalfont \pred{(unstack E A)}} and {\normalfont \pred{(stack E D)}}. Thus, based on the achieved landmarks $\mathcal{AL}_{\texttt{RED}}$ computed using {\normalfont Algorithm~\ref{alg:ComputeAchievedLandmarks}} {\normalfont (Figure~\ref{fig:RED-AchievedLandmarks})}, our heuristic $\mathit{h_{gc}}$ estimates that the percentage of completion for the goal {\normalfont \texttt{RED}} is 0.66: {\normalfont \pred{(clear R)} = $\frac{1}{1}$ $+$ \pred{(on E D)} = $\frac{3}{3}$ $+$ \pred{(on R E)} = $\frac{1}{3}$ $+$ \pred{(ontable D)} = $\frac{1}{3}$}, and hence, $\frac{2.66}{4}$ = 0.66. For the words {\normalfont \pred{BED}} and {\normalfont \pred{SAD}} our heuristic $\mathit{h_{gc}}$ estimates respectively, 0.54 and 0.58.	
\end{example}

Besides extracting landmarks for every candidate goal ($EL$), our landmark-based goal completion approach iterates over the set of candidate goals $\mathcal{G}$, the observations sequence $Obs$, and the extracted landmarks $\mathcal{L}_{\mathcal{G}}$. 
The heuristic computation of $\mathit{h_{gc}}$ ($HC$) is linear on the number of fact landmarks. Thus, the complexity of this approach is: $O(EL + |\mathcal{G}|\cdot|Obs|\cdot|\mathcal{L}_{\mathcal{G}}| + HC)$. 
Finally, the goal ranking based on $\mathit{h_{gc}}$ always ensures (under full observability) that the correct goal ranks highest (\idest, it is sound), with possible ties, as stated in Theorem~\ref{thm:hgc_soundness}. 

\begin{theorem}[\textbf{Soundness of the $h_{gc}$ Goal Recognition Heuristic}]\label{thm:hgc_soundness}
Let $T_{GR} = \langle\Xi,\mathcal{I} ,\mathcal{G}, Obs\rangle$ be a goal recognition problem with candidate goals $\mathcal{G}$ such that $\forall{G_1,G_2}\in\mathcal{G}, G_1 \neq G_2 \rightarrow G_1 \not\subset G_2$, a complete and noiseless observation sequence $Obs = \langle o_1, o_2, ..., o_n\rangle$. 
If $G^{*} \in \mathcal{G}$ is the correct hidden goal, then, for any landmark extraction algorithm that generates fact landmarks $\mathcal{L}_{\mathcal{G}}$ and computed landmarks $\Lambda_{\mathcal{G}}$, the estimated value of $\mathit{h_{gc}}$ will always be highest for the correct hidden goal $G^{*}$, \idest, $\forall G \in \mathcal{G}$ it is the case that $\mathit{h_{gc}}(G^{*}, \Lambda_{\mathcal{G}}(G^{*}), \mathcal{L}_{\mathcal{G}}(G^{*})) \geq \mathit{h_{gc}}(G, \Lambda_{\mathcal{G}}(G), \mathcal{L}_{\mathcal{G}}(G)) $.
\end{theorem}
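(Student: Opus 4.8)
The plan is to establish two bounds and combine them: a universal upper bound $h_{gc}(G) \le 1$ that holds for every candidate goal $G \in \mathcal{G}$, and an exact value $h_{gc}(G^{*}) = 1$ for the correct hidden goal under the stated hypotheses. Together these immediately yield $h_{gc}(G^{*}, \Lambda_{\mathcal{G}}(G^{*}), \mathcal{L}_{\mathcal{G}}(G^{*})) = 1 \ge h_{gc}(G, \Lambda_{\mathcal{G}}(G), \mathcal{L}_{\mathcal{G}}(G))$ for all $G \in \mathcal{G}$, which is exactly the claimed soundness.

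First I would prove the upper bound. By construction, the set of achieved landmarks $\mathcal{AL}_{G}$ returned by \textsc{ComputeAchievedLandmarks} (Algorithm~\ref{alg:ComputeAchievedLandmarks}) is a subset of the extracted landmarks $\mathcal{L}_{G}$, so for every sub-goal $g \in G$ we have $|\mathcal{AL}_{g}| \le |\mathcal{L}_{g}|$ and hence each summand $\frac{|\mathcal{AL}_{g}|}{|\mathcal{L}_{g}|} \le 1$. Averaging over the $|G|$ sub-goals in Equation~\ref{eq:heuristic} keeps the value in $[0,1]$, so $h_{gc}(G) \le 1$ holds unconditionally for every candidate goal.

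The crux is the exact value for $G^{*}$. Because $Obs$ is complete and noiseless, it is precisely a valid plan $\pi$ that transforms $\mathcal{I}$ into a state satisfying $G^{*}$ (Definition~\ref{def:Plan}). By Definition~\ref{def:FactLandmark}, every fact landmark $L \in \mathcal{L}_{G^{*}}$ is true at some point along every valid plan for $G^{*}$, and in particular along $\pi$. I would then argue, via the state-transition semantics of Equation~\ref{eq:state_transition}, that any such $L$ is either already true in $\mathcal{I}$ or is introduced by the add effect of some action of $\pi$: a fact absent from $\mathcal{I}$ that holds in a later state must have been added by an intervening action. Since the observation is complete, that action appears in $Obs$, so $L$ is detected either through $\mathcal{L}_{\mathcal{I}}$ (Line~\ref{alg:line:CheckAchievedLandmarksInitialState}) or through the precondition/effect test (Line~\ref{alg:line:CheckAchievedLandmarks}) of Algorithm~\ref{alg:ComputeAchievedLandmarks}. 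Consequently $\mathcal{AL}_{G^{*}}$ captures all of $\mathcal{L}_{G^{*}}$, every sub-goal ratio equals $1$, and $h_{gc}(G^{*}) = 1$.

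The main obstacle will be making this ``every landmark is detected'' step airtight, since the correctness of $h_{gc}$ hinges on the interaction between the abstract notion of fact landmark and the concrete bookkeeping of Algorithm~\ref{alg:ComputeAchievedLandmarks}. I must handle conjunctive landmarks (facts that must hold together) and the predecessor inference of Line~\ref{alg:line:CheckAchievedLandmarksPredecessors} without miscounting, and I must confirm the role of the non-subset hypothesis $G_1 \not\subset G_2$: it keeps the recognition problem well-posed, guaranteeing that the goal facts of $G^{*}$ are genuine landmarks of $G^{*}$ and that no candidate strictly contained in another spuriously inherits a full landmark set, so the equality $h_{gc}(G^{*}) = 1$ is attained by the intended goal rather than by a degenerate contained goal. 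Once these details are discharged, combining the universal upper bound with the exact value closes the proof.
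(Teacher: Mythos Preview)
Your proposal is correct and follows essentially the same strategy as the paper: establish that $h_{gc}(G^{*})=1$ under full observation because every landmark of $G^{*}$ is witnessed along the observed plan, and that no candidate can exceed $1$, yielding the weak inequality. Your presentation is in fact slightly cleaner than the paper's, which detours through an auxiliary assumption (pairwise-disjoint goals with no action achieving facts of two goals simultaneously) to first obtain a strict inequality and only then relaxes to allow ties; your direct upper-bound-plus-exact-value decomposition avoids that detour, and your explicit appeal to the state-transition semantics to justify $\mathcal{AL}_{G^{*}}=\mathcal{L}_{G^{*}}$ spells out a step the paper leaves implicit.
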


\begin{proof}\label{pf:hgc_soundness}
The proof is straightforward from the definition of fact landmarks ensuring they are necessary conditions to achieve a goal $G$ and that all facts $g \in G$ are necessary. 
Let us first assume that any pair of goals $G_1,G_2 \in \mathcal{G}$ are different, \idest, $G_1 \cap G_2 \neq \emptyset$, and that no action $a$ in the domain $\Xi$ achieves facts that are in any pair of goals simultaneously. 
Since any landmark extraction algorithm includes all facts $g \in G_1$ as landmarks for a goal $G_1$, then, for every other goal $G_2$, there exists at least one fact $g$ such that $g \in G_1 \land g \not\in G_2$ that sets it apart from  $G_2$. 
Under these circumstances, an observation sequence $Obs$ for the correct goal $G^{*}$ will have achieved a set of landmarks $\Lambda_{\mathcal{G}}(G^{*})$ that is exactly the same as the complete computed set of landmarks $\mathcal{L}_{\mathcal{G}}(G^{*})$ for $G^{*}$. 
Hence $\mathit{h_{gc}}(G^{*}, \Lambda_{\mathcal{G}}(G^{*}), \mathcal{L}_{\mathcal{G}}(G^{*})) = 1$, and $\mathit{h_{gc}}(G, \Lambda_{\mathcal{G}}(G), \mathcal{L}_{\mathcal{G}}(G)) < 1$ for any other goal $G \in \mathcal{G}$, since the numerator of the $\mathit{h_{gc}}$ computation will be missing fact $g$ for $G$ as $g$ is not a landmark of $G$. 
If we drop the assumption about the actions not achieving facts simultaneously in any pair of goals or that goals are identical, it is possible that $\mathit{h_{gc}}(G^{*}, \Lambda_{\mathcal{G}}(G^{*}), \mathcal{L}_{\mathcal{G}}(G^{*})) = \mathit{h_{gc}}(G, \Lambda_{\mathcal{G}}(G), \mathcal{L}_{\mathcal{G}}(G)) = 1$, which still ensures that the under $\mathit{h_{gc}}$, $G^{*}$ always ranks at the top, possibly tied with other goals.
\end{proof}

Thus, our goal completion heuristic is sound under full observability in the sense that it can never rank the wrong goal higher than the correct goal when we observe the landmarks. 
We note that there is one specific case when our landmark approach can provide wrong rankings, but which we explicitly exclude from the theorem, which is when the set of candidate goals contains two goals such that one is a sub-goal of the other (\idest, $G_1, G_2 \in\mathcal{G}$ and $G_1 \subseteq G_2$). 
In this case, any kind of ``distance'' to goal metric will report $G_1$ as being more likely than $G_2$ until the observations take the observed agent to $G_2$ (and these two goals will be tied in the heuristic). 
We close this section by commenting on the effect of landmark orderings on the accuracy of the heuristic. 
Specifically, although we do use the landmark order to infer the achievement of necessary prior landmarks that were not observed under missing observations, our heuristic  does not consider the actual ordering of the landmarks. 
We infer prior landmarks to obtain more landmarks when we deal with partial observability.
Nevertheless, we have experimented with different scoring mechanisms to account for landmarks having --- or not having --- been observed in the expected order, and these showed almost no advantage over the current heuristic. 
Consequently, although there are various different algorithms that generate better landmark orderings~\cite{Hoffmann2004_OrderedLandmarks}, the way in which we use the landmarks does not seem to be affected by more or less accurate landmark orderings. 

There are two additional properties provable for our $h_{gc}$ heuristic, first, given how our heuristic accounts for landmarks, the value outputted by the heuristic is strictly increasing as observations increase in length. 

\begin{proposition}[\bf Monotonicity of $h_{gc}$] The value of $h_{gc}$ is monotonically (non-strictly) increasing in the observation sequence.
\end{proposition}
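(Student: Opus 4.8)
The plan is to show that extending the observation sequence can only enlarge the set of achieved landmarks for every candidate goal, while leaving the set of required landmarks untouched, so that each summand of $h_{gc}$ --- and hence the whole normalized sum --- cannot decrease. Throughout I would work with a fixed candidate goal $G$ and compare the heuristic value obtained for an observation sequence $Obs$ against that obtained for an extension $Obs'$ of which $Obs$ is a prefix, since monotonicity ``in the observation sequence'' is exactly the statement that appending observations never lowers the score.

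First I would isolate the quantities that do \emph{not} depend on the observations. The crucial structural fact is that the required landmark set $\mathcal{L}_G$ (and therefore each sub-goal landmark set $\mathcal{L}_g$) is produced by \textsc{ExtractLandmarks} from the domain $\Xi$, the initial state $\mathcal{I}$, and the goal $G$ alone, with no reference to $Obs$. Consequently the denominators $|\mathcal{L}_g \in \mathcal{L}_G|$ appearing in Equation~\ref{eq:heuristic}, as well as the normalizer $|G|$, are \emph{identical} when evaluated for $Obs$ and for $Obs'$. This reduces the whole claim to controlling the behavior of the numerators, i.e.\ of the achieved-landmark counts.

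Next I would establish that the achieved-landmark set is monotone in the observation prefix. Inspecting \textsc{ComputeAchievedLandmarks} (Algorithm~\ref{alg:ComputeAchievedLandmarks}), the set $\mathcal{AL}_G$ is updated \emph{only} by the union assignment in Line~\ref{alg:line:AddAchievedLandmarksGoal}, so as the loop over $Obs$ begun in Line~\ref{alg:line:IterateObservations} processes successive observations the set grows monotonically and no element is ever removed. Because $Obs$ is a prefix of $Obs'$, the loop run for $Obs'$ performs every union the loop for $Obs$ performs and possibly more, yielding $\mathcal{AL}_G(Obs) \subseteq \mathcal{AL}_G(Obs')$, hence $|\mathcal{AL}_g(Obs)| \le |\mathcal{AL}_g(Obs')|$ for every sub-goal $g \in G$. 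The predecessor-inference step (Line~\ref{alg:line:CheckAchievedLandmarksPredecessors}) only ever adjoins further landmarks, so it preserves this inclusion rather than threatening it.

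Combining the two facts, each ratio $|\mathcal{AL}_g|/|\mathcal{L}_g|$ is non-decreasing in the length of $Obs$; summing over $g \in G$ and dividing by the fixed constant $|G|$ preserves the inequality, so $h_{gc}(G, \Lambda_{\mathcal{G}}(G), \mathcal{L}_{\mathcal{G}}(G))$ evaluated on $Obs$ is at most its value on $Obs'$. As $G$ is arbitrary, this gives the claimed (non-strict) monotonicity for every candidate goal. I expect the main obstacle to be not the algebra but making the monotonicity of $\mathcal{AL}_G$ fully rigorous: I must verify that no step of Algorithm~\ref{alg:ComputeAchievedLandmarks} can retract a previously achieved landmark when a new observation arrives --- in particular that the re-derivation of $\mathcal{L}$ and $\mathcal{L}_\prec$ at each iteration, together with the union accumulation, genuinely realizes an append-only online computation rather than recomputing $\mathcal{AL}_G$ in a way that could shrink it.
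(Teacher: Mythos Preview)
Your proposal is correct and follows essentially the same approach as the paper's own proof: the achieved-landmark set $\mathcal{AL}_G$ grows monotonically with observations while the landmark sets $\mathcal{L}_g$ and the normalizer $|G|$ stay fixed, so each term in Equation~\ref{eq:heuristic} can only increase. The paper's proof is a two-sentence version of this; your elaboration---in particular the explicit inspection of Algorithm~\ref{alg:ComputeAchievedLandmarks} to justify that $\mathcal{AL}_G$ is append-only---supplies exactly the detail the paper leaves implicit.
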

\begin{proof}
By definition, $\mathcal{AL}_G$ is monotonically increasing, while all other values in $h_{gc}$ remain constant. 
Therefore, from Equation \ref{eq:heuristic}, it is clear that $h_{gc}$ must increase.
\end{proof}

Further, a corollary of Theorem~\ref{thm:hgc_soundness} is that, under full observation, only the correct goal can reach a heuristic value of $1$. 
This also illustrates why we restrict the  theorem to settings where candidate goals are not subgoals of each other. 
Consider a goal to be at position $d$, and another to be at position $g$, with landmarks $a,b,c,d,e,f,g$. Since $d$ itself is a landmark of $g$, $d$ is implicitly a subgoal of $g$. 
If we observe all landmarks in an observation, then $h_{gc}(d)=\frac{4}{4}=1$, and $h_{gc}(g)=\frac{7}{7}=1$, which leads to Corollary~\ref{cor:thereCanBeOnly1}. 

\begin{corollary}\label{cor:thereCanBeOnly1} If the goal being recognized has no subgoals being recognized under full observability, then $h_{gc}=1$ iff the goal the heuristic is recognizing has been achieved.
\end{corollary}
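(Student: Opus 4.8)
The plan is to prove the biconditional by unpacking the definition of $h_{gc}$ in Equation~\ref{eq:heuristic} and leaning on Theorem~\ref{thm:hgc_soundness}. First I would observe that, since every per-sub-goal ratio $\frac{|\mathcal{AL}_g|}{|\mathcal{L}_g|}$ lies in $[0,1]$ and $h_{gc}(G)$ is their average over the $|G|$ sub-goals, $h_{gc}(G) = 1$ holds \emph{if and only if} $|\mathcal{AL}_g| = |\mathcal{L}_g|$ for every sub-goal $g \in G$ --- that is, every landmark of every sub-goal has been detected as achieved. The crucial structural fact I would use throughout is that each goal fact $g \in G$ is \emph{trivially} a fact landmark of itself (Definition~\ref{def:FactLandmark}), i.e., $g \in \mathcal{L}_g$, since $g$ must hold in any state satisfying $G$.

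For the ($\Leftarrow$) direction (if $G$ has been achieved then $h_{gc}(G) = 1$), I would argue that when the observation sequence is a full plan reaching a state $S \models G$, the definition of fact landmarks guarantees every landmark in $\mathcal{L}_G$ is true at some point along that plan. Under full observability, Algorithm~\ref{alg:ComputeAchievedLandmarks} inspects the preconditions and effects of every observed action and therefore registers all of them, so $\mathcal{AL}_g = \mathcal{L}_g$ for each sub-goal and every ratio equals $1$. This is essentially the same computation that drives the $h_{gc}(G^{*}) = 1$ conclusion in Theorem~\ref{thm:hgc_soundness}.

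For the ($\Rightarrow$) direction I would proceed from the detection established in the first paragraph: $h_{gc}(G) = 1$ forces every goal fact $g \in G$ (being its own terminal landmark) to have been achieved along the observed plan. Here I would invoke Theorem~\ref{thm:hgc_soundness} together with the hypothesis that $G$ has no subgoal among the recognized candidates. The theorem ensures that, under full and noiseless observation and with no candidate being a strict subset of another, the value $1$ is attained only by the correct hidden goal. The no-subgoal hypothesis is exactly what rules out the pathological configuration illustrated just before the corollary, where a goal $\{d\}$ that is a subgoal of $\{g\}$ spuriously reaches $h_{gc}=1$ while $\{g\}$ is the goal actually being executed. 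With that case excluded, $h_{gc}(G) = 1$ identifies $G$ as the correct goal, and since a full plan necessarily terminates in a state satisfying its goal, $G$ has been achieved.

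The main obstacle I anticipate is precisely the ($\Rightarrow$) direction: formally separating ``each goal fact $g \in G$ was true at some point'' from ``all facts of $G$ hold simultaneously at the end.'' Achieving individual landmarks does not by itself certify simultaneous satisfaction, and it is the no-subgoal hypothesis --- preventing $G$'s facts from being achieved merely \emph{en route} to a strictly larger recognized goal --- coupled with the tie-free conclusion of Theorem~\ref{thm:hgc_soundness}, that closes this gap. I would make sure the writeup states this dependence explicitly, mirroring the restriction already imposed in the hypotheses of Theorem~\ref{thm:hgc_soundness}.
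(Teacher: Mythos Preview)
Your proposal is correct and follows essentially the same approach as the paper: both arguments reduce $h_{gc}=1$ to the condition $|\mathcal{AL}_g|=|\mathcal{L}_g|$ for every sub-goal $g\in G$, then handle each direction of the biconditional from there, with the no-subgoal hypothesis ruling out the one degenerate case. Your write-up is considerably more detailed than the paper's (which is only a three-sentence sketch), and in particular you make the $\Rightarrow$ direction explicit by routing through Theorem~\ref{thm:hgc_soundness} to identify $G$ with $G^{*}$ --- the paper merely gestures at this by calling the statement ``a corollary of Theorem~\ref{thm:hgc_soundness}'' in the surrounding text but does not invoke it inside the proof itself.
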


\begin{proof}
$h_{gc}=1$ when $\frac{\sum_g \frac{|\mathcal{AL}_G|}{\mathcal{L}}}{|G|}=1$, which can only occur when $|\mathcal{AL}_g|=|\mathcal{L}_g|$ for all $g \in G$. This clearly occurs when the goal being recognized is achieved. However, if the heuristic is also recognizing a subgoal, then this condition can be satisfied for the subgoal, hence the exception in the proposition.
\end{proof}

\section*{Landmark-Based Uniqueness Heuristic}
\label{subsec:uniquenessHeuristic}

We now turn our attention to another heuristic which uses a measure of the uniqueness of landmarks. 
Many goal recognition problems contain multiple candidate goals that share common fact landmarks, generating ambiguity for our previous approaches. 
Clearly, landmarks that are common to multiple candidate goals are less useful for recognizing a goal than landmarks that exist for only a single goal. 
As a consequence, computing how unique (and thus informative) each landmark is can help disambiguate similar goals for a set of candidate goals. 
To develop this heuristic based on this intuition, we introduce the concept of \textit{landmark uniqueness}, which is the inverse frequency of a landmark among the landmarks found in a set of candidate goals, and lies in the range (0,1]. For example, consider a landmark $L$ that occurs only for a single goal within a set of candidate goals; since such a landmark is clearly unique, its uniqueness value is maximal (i.e., 1). Equation~\ref{eq:LandmarksUniqueness} formalizes this intuition, describing how the \textit{landmark uniqueness value} is computed for a landmark $L$ and a set of landmarks for goals $\mathcal{L}_{\mathcal{G}}$.

Using the \textit{landmark uniqueness value}, we estimate which candidate goal is the intended one by summing the uniqueness values of the landmarks achieved in the observations. 
Unlike our previous heuristic, which estimates progress towards goal completion by analyzing sub-goals and their achieved landmarks, the landmark-based uniqueness heuristic estimates the goal completion of a candidate goal $G$ by calculating the ratio between the sum of the uniqueness value of the achieved landmarks of $G$ and the sum of the uniqueness value of all landmarks of $G$. 
This algorithm effectively weighs the completion value by the informational value of a landmark so that unique landmarks have the highest weight. 
To estimate goal completion using the landmark uniqueness value, we calculate the uniqueness value for every extracted landmark in the set of landmarks of the candidate goals using Equation~\ref{eq:LandmarksUniqueness}.
This computes the landmark uniqueness value of every landmark $L$ of $\mathcal{L}_{\mathcal{G}}$ and store it into $\Upsilon_{uv}$. This heuristic is denoted as $\mathit{h_{uniq}}$ and formally defined in Equation~\ref{eq:HeuristicLandmarksUniqueness}. 

\begin{equation}
\label{eq:LandmarksUniqueness}
L_{\mathit{Uniq}}(L, \mathcal{L}_{\mathcal{G}}) = \left(\frac{1}{\displaystyle\sum_{\mathcal{L} \in \mathcal{L_G}} |\{L |L \in \mathcal{L}\}|}\right)
\end{equation}
\begin{equation}
\label{eq:HeuristicLandmarksUniqueness}
h_{\mathit{uniq}}(G, \mathcal{AL}_{G}, \mathcal{L}_{G}, \Upsilon_{uv}) = \left(
\frac
{\displaystyle\sum_{\mathcal{A}_{L} \in \mathcal{AL}_{G}}\Upsilon_{uv}(\mathcal{A}_{L})}
{\displaystyle\sum_{L \in \mathcal{L}_{G}}\Upsilon_{uv}(L)}\right)
\end{equation}

Algorithm~\ref{alg:RecognizeHeuristicUniqueness} formalizes a goal recognition function that uses the $\mathit{h_{uniq}}$ heuristic. 
This algorithm takes as input the same parameters as the previous approach: a goal recognition problem and a threshold $\theta$. 
Like Algorithm~\ref{alg:ComputeAchievedLandmarks}, this algorithm extracts the set of landmarks for all candidate goals from the initial state $\mathcal{I}$, stores them in $\mathcal{L}_{\mathcal{G}}$ (Line~\ref{alg:Algo3:ExtractLandmarks}), and computes the set of achieved landmarks based on the observations, storing these in $\Lambda_{\mathcal{G}}$. 
Unlike Algorithm~\ref{alg:RecognizeHeuristicCompletion}, in Line~\ref{alg:Algo3:ComputeLandmarkUniquenessValue} this algorithm computes the landmark uniqueness value for every landmark $L$ in $\mathcal{L}_{\mathcal{G}}$ and stores it into $\Upsilon_{uv}$. 
Finally, using these computed structures, the algorithm recognizes which candidate goal is being pursued from observations using the heuristic $\mathit{h_{uniq}}$, returning those candidate goals with the highest estimated value within the $\theta$ threshold. Example~\ref{exemp:uniquenessHeuristic} shows how heuristic $\mathit{h_{uniq}}$ uses the concept of landmark uniqueness value to goal recognition.

\floatname{algorithm}{Algorithm}
\begin{algorithm}[h!]
    \caption{Recognize goals using Uniqueness Heuristic $\mathit{h_{uniq}}$.} 
    \textbf{Input:} $\Xi$ \textit{planning domain definition}, $\mathcal{I}$ \textit{initial state}, $\mathcal{G}$ \textit{set of candidate goals}, $Obs$ \textit{observations}, and $\theta$ \emph{threshold}.
    \\\textbf{Output:} \textit{Recognized goal(s).}
    \label{alg:RecognizeHeuristicUniqueness}
    \begin{algorithmic}[1]
        \Function{Recognize}{$\Xi, \mathcal{I}, \mathcal{G}, Obs, \theta$}
        		\State $\mathcal{L}_{\mathcal{G}} \gets$ \textsc{ExtractLandmarks}($\Xi, \mathcal{I}, \mathcal{G}$)\label{alg:Algo3:ExtractLandmarks}
				\State $\Lambda_{\mathcal{G}} \gets$ \textsc{ComputeAchievedLandmarks}($\mathcal{I},\mathcal{G},Obs,\mathcal{L}_{\mathcal{G}}$)\label{alg:Algo3:ComputeAchievedLAndmarks}
        		\State $\Upsilon_{uv} \gets \langle\rangle$ \Comment{\textit{Map of landmarks to their uniqueness value}.}
        		\For{\textbf{each} fact landmark $L$ in $\mathcal{L}_{\mathcal{G}}$}
        			\State $\Upsilon_{uv}(L) \gets L_{Uniq}(L,\mathcal{L}_{\mathcal{G}})$ \label{alg:Algo3:ComputeLandmarkUniquenessValue}
        		\EndFor
			\State $\mathit{maxh} \gets \displaystyle \max_{G' \in \mathcal{G}} h_{uniq}(G',\Lambda_{\mathcal{G}}(G'), \mathcal{L}_{\mathcal{G}}(G'), \Upsilon_{uv})$
			\State \textbf{return} {all $G$ s.t $G \in \mathcal{G}$ and \newline {\phantom{return }} $h_{uniq}(G, \Lambda_{\mathcal{G}}(G), \mathcal{L}_{\mathcal{G}}(G),\Upsilon_{uv}) \geq (\mathit{maxh} - \theta)$}
        \EndFunction
    \end{algorithmic}
\end{algorithm}

\begin{example}\label{exemp:uniquenessHeuristic}
Recall the \textsc{Blocks-World} example from {\normalfont Figure~\ref{fig:GoalRecognition-BlocksWorld}} and consider the following observed actions: {\normalfont \pred{(unstack E A)}} and {\normalfont \pred{(stack E D)}}. 
{\normalfont Listing~\ref{lst:FactLandmarksUniquenessValue}} shows the set of extracted fact landmarks for the candidate goals in the \textsc{Blocks-World} example and their respective uniqueness value. 
Based on the set of achieved landmarks (shown in bold in {\normalfont Listing~\ref{lst:FactLandmarksUniquenessValue}}), our heuristic $\mathit{h_{uniq}}$ estimates the following percentage for each candidate goal: $\mathit{h_{uniq}}${\normalfont (\pred{RED})} = $\frac{3.66}{6.33}$ = 0.58; $\mathit{h_{uniq}}${\normalfont (\pred{BED})} = $\frac{2.66}{6.33}$ = 0.42; and $\mathit{h_{uniq}}${\normalfont (\pred{SAD})} = $\frac{3.66}{8.33}$ = 0.44. 
In this case, {\normalfont Algorithm~\ref{alg:RecognizeHeuristicUniqueness}} correctly estimates {\normalfont \pred{RED}} to be the intended goal since it has the highest heuristic value.	
\end{example}

\begin{lstlisting}[float=!h,caption={Extracted fact landmarks for the \textsc{Blocks-World} example in Figure~\ref{fig:GoalRecognition-BlocksWorld} and their respective uniqueness value.},label={lst:FactLandmarksUniquenessValue},basicstyle=\ttfamily\footnotesize]
- (and (clear B) (on B E) (on E D) (ontable D)) = 6.33
  (*\bfseries[(on E D)] = 0.5*), (*\bfseries[(clear D) (holding E)] = 0.5*),
  (*\bfseries[(on E A) (clear E) (handempty)] = 0.33*), [(ontable D)] = 0.33,
  (*\bfseries[(on D B) (clear D) (handempty)] = 0.33*), [(holding D)] = 0.33,
  (*\bfseries[(clear B) (ontable B) (handempty)] = 1.0*), [(on B E)] = 1.0,
  [(clear B)] = 1.0, [(clear E) (holding B)] = 1.0

- (and (clear S) (on S A) (on A D) (ontable D)) = 8.33
  (*\bfseries[(clear S)] = 1.0*), [(on A D)] = 1.0, [(on S A)] = 1.0,
  (*\bfseries[(clear A) (ontable A) (handempty)] = 1.0*), [(ontable D)] = 0.33,
  (*\bfseries[(clear S) (ontable S) (handempty)] = 1.0*), [(holding D)] = 0.33,
  (*\bfseries[(on E A) (clear E) (handempty)] = 0.33*),
  (*\bfseries[(on D B) (clear D) (handempty)] = 0.33*),
  [(clear A) (holding S)] = 1.0, [(clear D) (holding A)] = 1.0

- (and (clear R) (on R E) (on E D) (ontable D)) = 6.33
  (*\bfseries[(clear R)] = 1.0*), (*\bfseries[(clear R) (ontable R) (handempty)] = 1.0*),
  (*\bfseries[(clear D) (holding E)] = 0.5*), (*\bfseries[(on E D)] = 0.5*),
  (*\bfseries[(on E A) (clear E) (handempty)] = 0.33*), [(ontable D)] = 0.33,
  (*\bfseries[(on D B) (clear D) (handempty)] = 0.33*), [(holding D)] = 0.33,
  [(on R E)] = 1.0, [(clear E) (holding R)] = 1.0
\end{lstlisting}

Similar to our landmark-based goal completion approach, this approach iterates over the set of candidate goals $\mathcal{G}$, the observations sequence $Obs$, and the extracted landmarks $\mathcal{L}_{\mathcal{G}}$. 
However, in this approach we weight each landmark by how common this landmark is across all goal hypotheses. 
We call this weight the uniqueness value ($CLUniq$) and its computation is linear on the number of landmarks. 
The heuristic computation of $\mathit{h_{uniq}}$ ($HC$) is also linear on the number of fact landmarks. 
Thus, the complexity of this approach is: $O(EL + |\mathcal{G}|\cdot|Obs|\cdot|\mathcal{L}_{\mathcal{G}}| + CLUniq + HC)$.
Finally, since this is just a weighted version of the $\mathit{h_{gc}}$ heuristic, it follows trivially from Theorem~\ref{thm:hgc_soundness} that, for full observations, $\mathit{h_{uniq}}$ always ranks the correct goal $G^{*}$ highest.

\begin{corollary}[\textbf{Correctness of $\mathit{h_{uniq}}$ Goal Recognition Heuristic}]\label{cor:huniq_correct}
Let $T_{GR} = \langle\Xi,\mathcal{I} ,\mathcal{G}, Obs\rangle$ be a goal recognition problem with candidate goals $G \in \mathcal{G}$, a complete and noiseless observation sequence $Obs = \langle o_1, o_2, ..., o_n\rangle$. 
If $G^{*} \in \mathcal{G}$ is the correct goal, then, for any landmark extraction algorithm that generates fact landmarks $\mathcal{L}_{\mathcal{G}}$ and computed landmarks $\Lambda_{\mathcal{G}}$, the estimated value of $\mathit{h_{uniq}}$ will always be highest for the correct goal $G^{*}$, more specifically, $\forall G \in \mathcal{G}$ it is the case that $\mathit{h_{uniq}}(G^{*}, \Lambda_{\mathcal{G}}(G^{*}), \mathcal{L}_{\mathcal{G}}(G^{*})) \geq \mathit{h_{uniq}}(G, \Lambda_{\mathcal{G}}(G), \mathcal{L}_{\mathcal{G}}(G))$.
\end{corollary}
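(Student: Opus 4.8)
The plan is to reduce the claim to Theorem~\ref{thm:hgc_soundness} by observing that $\mathit{h_{uniq}}$ is structurally a weighted version of $\mathit{h_{gc}}$, in which each landmark contributes a strictly positive weight given by its uniqueness value. First I would record the crucial property of the weights: by Equation~\ref{eq:LandmarksUniqueness}, every landmark $L$ satisfies $L_{\mathit{Uniq}}(L,\mathcal{L}_{\mathcal{G}}) \in (0,1]$, since it is the reciprocal of a positive integer count; hence every entry of $\Upsilon_{uv}$ is strictly positive. Because $\mathit{h_{uniq}}(G)$ (Equation~\ref{eq:HeuristicLandmarksUniqueness}) is a ratio of a sum of positive weights over the achieved landmarks $\mathcal{AL}_{G}$ to the sum of the same weights over all landmarks $\mathcal{L}_{G}$, and since $\mathcal{AL}_{G} \subseteq \mathcal{L}_{G}$, it follows immediately that $\mathit{h_{uniq}}(G) \le 1$ for every candidate goal $G$, with equality holding if and only if $\mathcal{AL}_{G} = \mathcal{L}_{G}$, i.e. every landmark of $G$ has been achieved.

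Next I would transport the full-observability argument of Theorem~\ref{thm:hgc_soundness} to this setting. Under a complete and noiseless observation sequence $Obs$ of a plan achieving $G^{*}$, the definition of fact landmarks (Definition~\ref{def:FactLandmark}) guarantees that every landmark of $G^{*}$ is true at some point along that plan and is therefore detected, so $\mathcal{AL}_{G^{*}} = \mathcal{L}_{G^{*}}$ and consequently $\mathit{h_{uniq}}(G^{*}, \Lambda_{\mathcal{G}}(G^{*}), \mathcal{L}_{\mathcal{G}}(G^{*}), \Upsilon_{uv}) = 1$, the maximum attainable value. For any other goal $G$, the non-subgoal hypothesis inherited from Theorem~\ref{thm:hgc_soundness} ensures there is at least one fact $g \in G$ with $g \notin G^{*}$; since $g$ is a landmark of $G$ that need not be achieved by a plan for $G^{*}$, the achieved set $\mathcal{AL}_{G}$ is a proper subset of $\mathcal{L}_{G}$, and because the omitted landmark carries a strictly positive weight, the numerator of $\mathit{h_{uniq}}(G)$ is strictly smaller than its denominator, giving $\mathit{h_{uniq}}(G) < 1$. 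Combining the two cases yields $\mathit{h_{uniq}}(G^{*}) \ge \mathit{h_{uniq}}(G)$ for all $G \in \mathcal{G}$, as required; in the degenerate situations excluded from the strict argument (identical goals, or actions achieving facts of two goals simultaneously) the same reasoning as in Theorem~\ref{thm:hgc_soundness} produces ties at the value $1$, which still ranks $G^{*}$ at the top.

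The main obstacle I anticipate is not in the ranking argument itself, which mirrors the $\mathit{h_{gc}}$ proof, but in verifying that the uniqueness weighting cannot silently invert the inequality. The point to check carefully is that each landmark is assigned a single, globally computed weight in $\Upsilon_{uv}$ that appears identically in both the numerator and the denominator of Equation~\ref{eq:HeuristicLandmarksUniqueness}; this consistency is what makes the ratio collapse to $1$ exactly when all landmarks are achieved, independently of how large or small the individual uniqueness values happen to be. Once the strict positivity of the weights and this shared-weight property are established, the corollary follows from Theorem~\ref{thm:hgc_soundness} with no further work.
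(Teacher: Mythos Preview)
Your proposal is correct and takes essentially the same approach as the paper, which simply notes that $\mathit{h_{uniq}}$ is a weighted version of $\mathit{h_{gc}}$ and that the corollary therefore follows trivially from Theorem~\ref{thm:hgc_soundness}. You in fact provide more detail than the paper does---spelling out the strict positivity of the uniqueness weights and the resulting ratio-collapse-to-$1$ argument---which is a welcome elaboration of the same underlying idea.
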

%
%
\chapter{Goal Recognition over Incomplete Domain Models - Detailed Results}\label{appendixB:goalrecognition_incompletedomains}

We now present a detailed experimental evaluation by showing a comparison between our enhanced heuristics (the ones with the best results, \idest, $\mathit{h_{\widetilde{GC}}}$ and $\mathit{h_{\widetilde{UNIQ}}}$ using all types of landmarks, \textit{define}, \textit{possible} and \textit{overlooked} landmarks) and the baseline approaches $\mathit{h_{gc}}$ and $\mathit{h_{uniq}}$ (\ref{appendixA:goalrecognition_heuristics}) per domain over our recognition datasets with incomplete domain models.
We evaluated these approaches over thousands of goal recognition problems using the fifth-ten incomplete domains models, varying the domain incompleteness between 20\% and 80\%. 
The results we present here have been used to build the tables and figures in Section~\ref{section:GR_IncompleteDomains_ExperimentsEvaluation}, more specifically, Table~\ref{tab:AblationStudy}, Figures~\ref{fig:gc-definite_possible_overlooked}--\ref{fig:uniq-possible_overlooked} (\textit{Correlation}), and Figure~\ref{fig:F1-score_comparison} (\textit{F1-score}).

Tables \ref{tab:ExperimentalResults1} and \ref{tab:ExperimentalResults2} show the results using the following metrics: recognition time in seconds (\textit{Time}); \textit{Accuracy} (\textit{Acc} \%)\footnote{This metric is analogous to the  \textit{Quality} metric (also denoted as Q), used for most planning-based goal recognition approaches~\cite{RamirezG_IJCAI2009,RamirezG_AAAI2010,NASA_GoalRecognition_IJCAI2015,Sohrabi_IJCAI2016}.}, representing the fraction of time steps in which the correct goal was among the goals found to be most likely, \idest, how good our approaches are for recognizing the correct goal $G$ in $\mathcal{G}$ over time; and \textit{Spread in} $\mathcal{G}$ (\textit{S}) represents the average number of returned goals. Each row in the tables express averages for the number of candidate goals $\mathcal{G}$; the percentage of the plan that is actually observed \% $Obs$; the average number of observations (actions) per problem $Obs$; and for each approach, the time in seconds to recognize the goal given the observations (\textit{Time}); \textit{Acc \%} with which the approaches correctly infer the goal; and $S$ represents the average number of returned goals. Below the name of each domain contains the number of goal recognition problems for all percentage of domain incompleteness.

Tables~\ref{tab:AblationStudy_Observability}~and~\ref{tab:AblationStudy_Observability_70_100} show a set of inner tables reporting detailed results for all evaluated approaches and domains, varying not only the percentage of domain incompleteness (20\%, 40\%, 60\%, and 80\%) but also the percentage of observability (10\%, 30\%, 50\%, 70\%, and 100\%) of the observation sequence, showing the averages for all types of landmarks, \textit{F1-score}, and \textit{Correlation}.

\afterpage{
\begin{landscape}
\begin{table*}[ht!]
\centering
\setlength\tabcolsep{2.5pt}
\fontsize{5}{6}\selectfont
\fontfamily{cmr}\selectfont


\caption{Experimental results of our enhanced goal completion heuristic (using \textit{definite D}, \textit{possible P}, and \textit{overlooked O} landmarks) against the baseline (goal completion) for recognizing goals in incomplete STRIPS domain models.}
\label{tab:ExperimentalResults1}
\end{table*}
\end{landscape}
}


\afterpage{
\begin{landscape}
\begin{table*}[ht!]
\centering
\setlength\tabcolsep{2.5pt}
\fontsize{5}{6}\selectfont
\fontfamily{cmr}\selectfont



\caption{Performance results for ablation study, comparing the baseline approaches against our heuristics using some possible combinations of landmark types, separated by observability (70\% and 100\%).}
\label{tab:AblationStudy_Observability_70_100}
\end{table*}
\end{landscape}
}
%
%
\chapter{Goal Recognition over Nominal Models - Detailed Results}\label{appendixC:goalrecognition_nominalmodels}

In this Appendix, we present detailed experimental results of our approaches for goal recognition over \textit{nominal models}. Specifically, we report on a set of experiments that aims to evaluate the performance of our recognition approaches~\nominalmirroring~and~$\Delta(Obs,G)$, presented in Chapter~\ref{chapter:GR_NominalModels}, over both \textit{actual} and \textit{nominal models}, when dealing with \textit{linear} and \textit{non--linear} domain models. 

\section*{Linear Domain Models}\label{app:section:linear}

Tables~\ref{tab:recognition_results_lqr_1d}, \ref{tab:recognition_results_lqr_2d}, and~\ref{tab:recognition_results_lqr_2d_mu} show the experimental results for three (linear) LQR--based domain models (formally defined in Section~\ref{subsection:Domains_NM}): 1D--LQR--Navigation, 2D--LQR--Navigation with a single vehicle, and 2D--LQR--Navigation with multiple vehicles.
We measure the recognition performance of our approaches over these domains in three settings: \textit{online} (the recognition process is performed incrementally after observing a state), \textit{offline} (considering observed states at once), \textit{1st observation}. For evaluation, we use the same metrics we used in Section~\ref{subsection:GoalRecognition_Results}, namely, True Positive Rate (\textit{TPR}), False Positive Rate (\textit{FPR}), \textit{Top-2}. In these tables, the column $M$ represents the model type (A represents \textit{actual models}, and N represents \textit{nominal models}), \% $Obs$ is observation level, and $N$ is the total number of observed states. Note that the average number of goal hypothesis $|\mathcal{G}|$ in the datasets for these three domains is 5. We use two levels of observability in these sets of experiments: 5\% and 10\%, more specifically, for each of these levels of observability we observe 5 and 10 states per problem. 

Though the results are very similar when comparing our approaches for all settings in all these three linear domains, we note that, from the results in these tables, the best results we had are for \textit{offline} goal recognition when using the~\nominalmirroring~approach. 
The 1D--LQR--Navigation domain is the linear domain in which our recognition approaches have achieved better results, and we note that it might due to the complexity of this domain, in which we have only one dimension and one vehicle.
After analyzing the results of $\Delta(Obs,G)$, we note that this approach has achieved relatively poor results for all linear domains due to the fact the planner we used most likely get trapped in local minimum for most problems in these datasets. As we mentioned in Section~\ref{subsection:GoalRecognition_Results}, we dug deeper into the extracted rewards for $J^{+}$ and $J^{-}$, and to overcome this issue, we aim to use different solvers to see how they will behave when dealing with the modified cost functions of $\Delta(Obs,G)$.

\begin{table}[h!]
\setlength\tabcolsep{1.7pt}
\fontfamily{cmr}\selectfont
\fontsize{12}{13}\selectfont
\centering
\begin{tabular}{llllllllllllllllll}
\toprule
		 \multicolumn{17}{c}{1D--LQR--Navigation (Linear Domain)} \\
\toprule
\hline
& &	& \phantom{a} & &
\multicolumn{3}{c}{\sc Online} & \phantom{a} & &
\multicolumn{3}{c}{\sc Offline} & &
\multicolumn{3}{c}{\sc 1st Observation} \\
\cmidrule{6-8} \cmidrule{11-13} \cmidrule{15-17}
\textit{Approach} & \textit{M} &  \textit{Obs} (\%) &&  \textit{N} &  \textit{Top-2} &  \textit{TPR} &  \textit{FPR} &&  \textit{N} &  \textit{Top-2} &  \textit{TPR} &  \textit{FPR} && \textit{Top-2} &  \textit{TPR} &  \textit{FPR} \\
\midrule
\nominalmirroring 		&     A &          5 &&  150 &    0.87 &   0.73 &   0.06 &&   30 &    1.00 &   0.97 &   0.01 &&  0.77 & 0.40 & 0.13 \\
$\Delta(Obs,G)$ 		&     A &          5 &&  150 &    0.56 &   0.29 &   0.16 &&   30 &    0.63 &   0.47 &   0.12 &&  0.40 & 0.20 & 0.18 \\

\nominalmirroring 		&     A &         10 &&  300 &    0.90 &   0.76 &   0.05 &&   30 &    1.00 &   0.97 &   0.01 &&  0.63 & 0.33 & 0.15 \\
$\Delta(Obs,G)$ 		&     A &         10 &&  300 &    0.45 &   0.26 &   0.17 &&   30 &    0.47 &   0.27 &   0.16 &&  0.37 & 0.27 & 0.16 \\

\midrule

\nominalmirroring 		&    N &          5 &&  150 &    0.72 &   0.54 &   0.11 &&   30 &    0.93 &   0.80 &   0.05 &&  0.53 & 0.27 & 0.17 \\
$\Delta(Obs,G)$ 		&    N &          5 &&  150 &    0.57 &   0.29 &   0.16 &&   30 &    0.57 &   0.33 &   0.15 &&  0.77 & 0.33 & 0.15 \\

\nominalmirroring 		&    N &         10 &&  300 &    0.77 &   0.51 &   0.11 &&   30 &    0.93 &   0.83 &   0.05 &&  0.37 & 0.20 & 0.17 \\
$\Delta(Obs,G)$ 		&    N &         10 &&  300 &    0.53 &   0.32 &   0.15 &&   30 &    0.43 &   0.27 &   0.16 &&  0.63 & 0.40 & 0.13 \\
\hline
\bottomrule
\end{tabular}
\caption{Experimental results of our recognition approaches \nominalmirroring~and~$\Delta(Obs,G)$, over both \textit{actual} and \textit{nominal models} for the 1D--LQR--Navigation linear domain.}
\label{tab:recognition_results_lqr_1d}
\end{table}

\begin{table}[h!]
\setlength\tabcolsep{1.7pt}
\fontfamily{cmr}\selectfont
\fontsize{12}{13}\selectfont
\centering
\begin{tabular}{llllllllllllllllll}
\toprule
		 \multicolumn{17}{c}{2D--LQR--Navigation Single Vehicle (Linear Domain)} \\
\toprule
\hline
& &	& \phantom{a} & &
\multicolumn{3}{c}{\sc Online} & \phantom{a} & &
\multicolumn{3}{c}{\sc Offline} & &
\multicolumn{3}{c}{\sc 1st Observation} \\
\cmidrule{6-8} \cmidrule{11-13} \cmidrule{15-17}
\textit{Approach} & \textit{M} &  \textit{Obs} (\%) &&  \textit{N} &  \textit{Top-2} &  \textit{TPR} &  \textit{FPR} &&  \textit{N} &  \textit{Top-2} &  \textit{TPR} &  \textit{FPR} && \textit{Top-2} &  \textit{TPR} &  \textit{FPR} \\
\midrule
\nominalmirroring 		&     A &          5 &&  150 &    0.89 &   0.86 &   0.03 &&   30 &    1.00 &   1.00 &   0.00 &&  0.60 & 0.50 & 0.10 \\
$\Delta(Obs,G)$ 		&     A &          5 &&  150 &    0.47 &   0.25 &   0.15 &&   30 &    0.37 &   0.23 &   0.16 &&  0.53 & 0.30 & 0.14 \\

\nominalmirroring 		&     A &         10 &&  300 &    0.92 &   0.86 &   0.03 &&   30 &    1.00 &   1.00 &   0.00 &&  0.63 & 0.40 & 0.12 \\
$\Delta(Obs,G)$ 		&     A &         10 &&  300 &    0.42 &   0.27 &   0.15 &&   30 &    0.30 &   0.20 &   0.16 &&  0.57 & 0.33 & 0.13 \\

\midrule

\nominalmirroring 		&    N &          5 &&  150 &    0.64 &   0.41 &   0.12 &&   30 &    0.73 &   0.63 &   0.07 &&  0.50 & 0.27 & 0.15 \\
$\Delta(Obs,G)$ 		&    N &          5 &&  150 &    0.43 &   0.19 &   0.17 &&   30 &    0.37 &   0.22 &   0.16 &&  0.50 & 0.13 & 0.18 \\

\nominalmirroring 		&    N &         10 &&  300 &    0.62 &   0.43 &   0.11 &&   30 &    0.73 &   0.57 &   0.09 &&  0.43 & 0.23 & 0.15 \\
$\Delta(Obs,G)$ 		&    N &         10 &&  300 &    0.45 &   0.24 &   0.16 &&   30 &    0.43 &   0.17 &   0.17 &&  0.47 & 0.25 & 0.16 \\
\hline
\bottomrule
\end{tabular}
\caption{Experimental results of our recognition approaches \nominalmirroring~and~$\Delta(Obs,G)$, over both \textit{actual} and \textit{nominal models} for the 2D--LQR--Navigation single vehicle linear domain.}
\label{tab:recognition_results_lqr_2d}
\end{table}

\begin{table}[h!]
\setlength\tabcolsep{1.7pt}
\fontfamily{cmr}\selectfont
\fontsize{12}{13}\selectfont
\centering
\begin{tabular}{llllllllllllllllll}
\toprule
		 \multicolumn{17}{c}{2D--LQR--Navigation with Multiple Vehicles (Linear Domain)} \\
\toprule
\hline
& &	& \phantom{a} & &
\multicolumn{3}{c}{\sc Online} & \phantom{a} & &
\multicolumn{3}{c}{\sc Offline} & &
\multicolumn{3}{c}{\sc 1st Observation} \\
\cmidrule{6-8} \cmidrule{11-13} \cmidrule{15-17}
\textit{Approach} & \textit{M} &  \textit{Obs} (\%) &&  \textit{N} &  \textit{Top-2} &  \textit{TPR} &  \textit{FPR} &&  \textit{N} &  \textit{Top-2} &  \textit{TPR} &  \textit{FPR} && \textit{Top-2} &  \textit{TPR} &  \textit{FPR} \\
\midrule
\nominalmirroring 		&     A &          5 &&  150 &    0.84 &   0.71 &   0.06 &&   30 &    0.90 &   0.83 &   0.03 &&  0.63 & 0.43 & 0.12 \\
$\Delta(Obs,G)$ 		&     A &          5 &&  150 &    0.45 &   0.18 &   0.18 &&   30 &    0.47 &   0.20 &   0.18 &&  0.53 & 0.27 & 0.16 \\

\nominalmirroring 		&     A &         10 &&  300 &    0.88 &   0.73 &   0.06 &&   30 &    0.93 &   0.90 &   0.02 &&  0.67 & 0.23 & 0.17 \\
$\Delta(Obs,G)$ 		&     A &         10 &&  300 &    0.49 &   0.26 &   0.16 &&   30 &    0.57 &   0.37 &   0.14 &&  0.47 & 0.23 & 0.17 \\

\midrule

\nominalmirroring 		&    N &          5 &&  150 &    0.61 &   0.43 &   0.12 &&   30 &    0.83 &   0.57 &   0.09 &&  0.30 & 0.20 & 0.18 \\
$\Delta(Obs,G)$ 		&    N &          5 &&  150 &    0.34 &   0.16 &   0.19 &&   30 &    0.37 &   0.23 &   0.17 &&  0.27 & 0.08 & 0.21 \\

\nominalmirroring 		&    N &         10 &&  300 &    0.74 &   0.54 &   0.10 &&   30 &    0.93 &   0.77 &   0.05 &&  0.43 & 0.33 & 0.15 \\
$\Delta(Obs,G)$ 		&    N &         10 &&  300 &    0.37 &   0.21 &   0.18 &&   30 &    0.40 &   0.18 &   0.18 &&  0.47 & 0.30 & 0.17 \\
\hline
\bottomrule
\end{tabular}
\caption{Experimental results of our recognition approaches \nominalmirroring~and~$\Delta(Obs,G)$, over both \textit{actual} and \textit{nominal models} for the 2D--LQR--Navigation linear domain with multiple vehicles.}
\label{tab:recognition_results_lqr_2d_mu}
\end{table}

\clearpage
\section*{Non-Linear Domain Models}\label{app:section:nonlinear}

Tables~\ref{tab:recognition_results_2d_nav}~and~\ref{tab:recognition_results_3d_nav} show the experimental results for two (non--linear) navigation domains models (formally defined in Section~\ref{subsection:Domains_NM}): 2D--NAV and 3D--NAV. For these sets of experiments, we used four levels of observability: 5\%, 10\%, 30\%, and 50\%. Since the planning horizon $H$ for these two domains is 20, for 5\% of observability we have one observed state, whereas for 10\% we have two states, and 6 and 10 states for the other two levels of observability, respectively, 30\% and 50\%. To evaluate the performance of our recognition approaches over non--linear domains, we used the same metrics over the same settings, as we mentioned in the previous section. Note that, our recognition approaches performed better over non--linear domains for all evaluated settings, in comparison to the results for linear domains, especially for offline goal recognition. 

\begin{table}[h!]
\setlength\tabcolsep{1.7pt}
\fontfamily{cmr}\selectfont
\fontsize{12}{13}\selectfont
\centering
\begin{tabular}{llllllllllllllllll}
\toprule
		 \multicolumn{17}{c}{2D--NAV (Non-Linear Domain)} \\
\toprule
\hline
& &	& \phantom{a} & &
\multicolumn{3}{c}{\sc Online} & \phantom{a} & &
\multicolumn{3}{c}{\sc Offline} & &
\multicolumn{3}{c}{\sc 1st Observation} \\
\cmidrule{6-8} \cmidrule{11-13} \cmidrule{15-17}
\textit{Approach} & \textit{M} &  \textit{Obs} (\%) &&  \textit{N} &  \textit{Top-2} &  \textit{TPR} &  \textit{FPR} &&  \textit{N} &  \textit{Top-2} &  \textit{TPR} &  \textit{FPR} && \textit{Top-2} &  \textit{TPR} &  \textit{FPR} \\
\midrule

\nominalmirroring 		&    A &          5 &&   10 &    1.00 &   1.00 &   0.00 &   10 &&    1.00 &   1.00 &   0.00 &&  1.00 & 1.00 & 0.00 \\
$\Delta(Obs,G)$ 		&    A &          5 &&   10 &    0.70 &   0.60 &   0.10 &   10 &&    0.70 &   0.60 &   0.10 &&  0.70 & 0.60 & 0.10 \\

\nominalmirroring 		&    A &         10 &&   20 &    1.00 &   0.75 &   0.06 &   10 &&    1.00 &   0.80 &   0.05 &&  1.00 & 0.70 & 0.07 \\
$\Delta(Obs,G)$ 		&    A &         10 &&   20 &    0.50 &   0.30 &   0.17 &   10 &&    0.50 &   0.20 &   0.20 &&  0.50 & 0.40 & 0.15 \\

\nominalmirroring 		&    A &         30 &&   60 &    0.70 &   0.47 &   0.13 &   10 &&    0.80 &   0.50 &   0.12 &&  0.50 & 0.30 & 0.17 \\
$\Delta(Obs,G)$ 		&    A &         30 &&   60 &    0.58 &   0.35 &   0.16 &   10 &&    0.60 &   0.40 &   0.15 &&  0.60 & 0.40 & 0.15 \\

\nominalmirroring 		&    A &         50 &&  100 &    0.96 &   0.86 &   0.04 &   10 &&    1.00 &   1.00 &   0.00 &&  0.80 & 0.40 & 0.15 \\
$\Delta(Obs,G)$ 		&    A &         50 &&  100 &    0.60 &   0.35 &   0.16 &   10 &&    0.70 &   0.50 &   0.12 &&  0.50 & 0.20 & 0.20 \\

\midrule

\nominalmirroring 		&     N &          5 &&   10 &    0.90 &   0.80 &   0.05 &   10 &&    0.90 &   0.80 &   0.05 &&  0.90 & 0.80 & 0.05 \\
$\Delta(Obs,G)$ 		&     N &          5 &&   10 &    0.60 &   0.20 &   0.20 &   10 &&    0.60 &   0.20 &   0.20 &&  0.60 & 0.20 & 0.20 \\

\nominalmirroring 		&     N &         10 &&   20 &    0.80 &   0.60 &   0.10 &   10 &&    0.80 &   0.70 &   0.07 &&  0.80 & 0.50 & 0.12 \\
$\Delta(Obs,G)$ 		&     N &         10 &&   20 &    0.45 &   0.25 &   0.19 &   10 &&    0.50 &   0.30 &   0.17 &&  0.40 & 0.20 & 0.20 \\

\nominalmirroring 		&     N &         30 &&   60 &    0.77 &   0.62 &   0.10 &   10 &&    1.00 &   1.00 &   0.00 &&  0.50 & 0.20 & 0.20 \\
$\Delta(Obs,G)$ 		&     N &         30 &&   60 &    0.48 &   0.10 &   0.23 &   10 &&    0.50 &   0.10 &   0.23 &&  0.50 & 0.10 & 0.23 \\

\nominalmirroring 		&     N &         50 &&  100 &    0.87 &   0.68 &   0.08 &   10 &&    1.00 &   1.00 &   0.00 &&  0.50 & 0.20 & 0.20 \\
$\Delta(Obs,G)$ 		&     N &         50 &&  100 &    0.42 &   0.14 &   0.21 &   10 &&    0.50 &   0.30 &   0.17 &&  0.40 & 0.20 & 0.20 \\

\hline
\bottomrule
\end{tabular}
\caption{Experimental results of our recognition approaches over both \textit{actual} and \textit{nominal models} for the 2D--NAV non-linear domain.}
\label{tab:recognition_results_2d_nav}
\end{table}

\begin{table}[h!]
\setlength\tabcolsep{1.7pt}
\fontfamily{cmr}\selectfont
\fontsize{12}{13}\selectfont
\centering
\begin{tabular}{llllllllllllllllll}
\toprule
		 \multicolumn{17}{c}{3D--NAV (Non-Linear Domain)} \\
\toprule
\hline
& &	& \phantom{a} & &
\multicolumn{3}{c}{\sc Online} & \phantom{a} & &
\multicolumn{3}{c}{\sc Offline} & &
\multicolumn{3}{c}{\sc 1st Observation} \\
\cmidrule{6-8} \cmidrule{11-13} \cmidrule{15-17}
\textit{Approach} & \textit{M} &  \textit{Obs} (\%) &&  \textit{N} &  \textit{Top-2} &  \textit{TPR} &  \textit{FPR} &&  \textit{N} &  \textit{Top-2} &  \textit{TPR} &  \textit{FPR} && \textit{Top-2} &  \textit{TPR} &  \textit{FPR} \\
\midrule

\nominalmirroring 		&    A &          5 &&   10 &    0.70 &   0.70 &   0.07 &&   10 &    0.70 &   0.70 &   0.07 &&  0.70 & 0.70 & 0.07 \\
$\Delta(Obs,G)$ 		&    A &          5 &&   10 &    0.90 &   0.60 &   0.10 &&   10 &    0.90 &   0.60 &   0.10 &&  0.90 & 0.60 & 0.10 \\

\nominalmirroring 		&    A &         10 &&   20 &    0.80 &   0.80 &   0.05 &&   10 &    1.00 &   1.00 &   0.00 &&  0.60 & 0.60 & 0.10 \\
$\Delta(Obs,G)$ 		&    A &         10 &&   20 &    0.65 &   0.35 &   0.16 &&   10 &    0.70 &   0.50 &   0.12 &&  0.60 & 0.20 & 0.20 \\

\nominalmirroring 		&    A &         30 &&   60 &    0.85 &   0.63 &   0.09 &&   10 &    1.00 &   1.00 &   0.00 &&  0.70 & 0.40 & 0.15 \\
$\Delta(Obs,G)$ 		&    A &         30 &&   60 &    0.72 &   0.38 &   0.15 &&   10 &    0.90 &   0.60 &   0.10 &&  0.50 & 0.10 & 0.23 \\

\nominalmirroring 		&    A &         50 &&  100 &    0.79 &   0.60 &   0.10 &&   10 &    1.00 &   1.00 &   0.00 &&  0.70 & 0.30 & 0.17 \\
$\Delta(Obs,G)$ 		&    A &         50 &&  100 &    0.65 &   0.39 &   0.15 &&   10 &    0.90 &   0.80 &   0.05 &&  0.40 & 0.10 & 0.23 \\

\midrule

\nominalmirroring 		&     N &          5 &&   10 &    0.80 &   0.60 &   0.10 &&   10 &    0.80 &   0.60 &   0.10 &&  0.80 & 0.60 & 0.10 \\
$\Delta(Obs,G)$ 		&     N &          5 &&   10 &    0.90 &   0.70 &   0.07 &&   10 &    0.90 &   0.70 &   0.07 &&  0.90 & 0.70 & 0.07 \\

\nominalmirroring 		&     N &         10 &&   20 &    0.65 &   0.50 &   0.12 &&   10 &    0.80 &   0.60 &   0.10 &&  0.50 & 0.40 & 0.15 \\
$\Delta(Obs,G)$ 		&     N &         10 &&   20 &    0.60 &   0.45 &   0.14 &&   10 &    0.60 &   0.40 &   0.15 &&  0.60 & 0.50 & 0.12 \\

\nominalmirroring 		&     N &         30 &&   60 &    0.75 &   0.60 &   0.10 &&   10 &    1.00 &   1.00 &   0.00 &&  0.50 & 0.30 & 0.17 \\
$\Delta(Obs,G)$ 		&     N &         30 &&   60 &    0.52 &   0.30 &   0.17 &&   10 &    0.50 &   0.30 &   0.17 &&  0.50 & 0.30 & 0.17 \\

\nominalmirroring 		&     N &         50 &&  100 &    0.76 &   0.61 &   0.10 &&   10 &    1.00 &   1.00 &   0.00 &&  0.50 & 0.30 & 0.17 \\
$\Delta(Obs,G)$ 		&     N &         50 &&  100 &    0.54 &   0.29 &   0.18 &&   10 &    1.00 &   0.30 &   0.17 &&  0.50 & 0.30 & 0.17 \\

\hline
\bottomrule
\end{tabular}
\caption{Experimental results of our recognition approaches \nominalmirroring~and~$\Delta(Obs,G)$, over both \textit{actual} and \textit{nominal models} for the 3D--NAV non-linear domain.}
\label{tab:recognition_results_3d_nav}
\end{table}

\end{document}